\documentclass[10pt,twocolumn]{article}
\usepackage[margin=0.75in]{geometry}
\usepackage{amsmath,amssymb,amsthm}
\usepackage{mathtools}
\usepackage{bm}
\usepackage{graphicx}
\usepackage{cite}
\usepackage{algorithm}
\usepackage{algorithmic}
\usepackage{booktabs}
\usepackage{url}
\usepackage{fancyhdr}
\usepackage{titlesec}
\titleformat{\section}{\normalfont\large\scshape\centering}{\Roman{section}.}{0.5em}{}
\titleformat{\subsection}{\normalfont\itshape}{\ \Alph{subsection}.}{0.5em}{}

\newtheorem{theorem}{Theorem}
\newtheorem{lemma}[theorem]{Lemma}
\newtheorem{corollary}[theorem]{Corollary}
\newtheorem{proposition}[theorem]{Proposition}
\newtheorem{definition}[theorem]{Definition}
\newtheorem{conjecture}[theorem]{Conjecture}
\newtheorem{remark}{Remark}
\newtheorem{example}{Example}
\newtheorem{condition}{Condition}

\DeclareMathOperator{\rank}{rank}
\DeclareMathOperator{\diag}{diag}
\DeclareMathOperator{\Tr}{Tr}
\DeclareMathOperator{\spn}{span}
\DeclareMathOperator{\Var}{Var}
\DeclareMathOperator{\GL}{GL}

\DeclareMathOperator{\Irr}{Irr}

\title{Algebraic Diversity: Group-Theoretic Spectral Estimation\\from Single Observations}

\author{Mitchell~A.~Thornton,~\textit{Senior~Member,~IEEE}\\[4pt]
\small Richardson, TX 75080 USA}

\date{June 29, 2026 \quad (v15)}

\begin{document}
\maketitle

\begin{abstract}
We establish a general theoretical framework demonstrating that temporal averaging over multiple independent observations of a noisy signal is a special case of algebraic group action, specifically, the degenerate case in which the trivial group $G = \{e\}$ is applied to each observation independently, and that selecting a richer group yields equivalent or superior second-order statistical information from a single observation.
We define a \emph{group-averaged estimator} $\mathbf{F}_G$ constructed by applying the action of a finite group $G$ to a single observation vector, and we prove a \emph{General Replacement Theorem} establishing that $\mathbf{F}_G$ provides a consistent estimator of the population-level subspace decomposition under two conditions: (i) the signal component transforms predictably (equivariantly) under the group action, and (ii) the noise distribution is invariant (ergodic) under the group action.
We then prove a \emph{Commutativity--KL Equivalence}: a group-averaged estimator $\mathbf{F}_G$ shares the Karhunen--Lo\`eve (KL) eigenbasis exactly when the population covariance commutes with the group action, so the \emph{matched} group attains the KL transform, which is itself optimal among all linear decorrelating transforms in variance concentration, mutual orthogonality, and minimum reconstruction error. The symmetric group $S_M$ realizes the KL transform through its Cayley graph spectral decomposition, a construction distinct from the group-averaged estimator $\mathbf{F}_G$; for $\mathbf{F}_G$ itself, enlarging the group beyond the matched order does not improve and in fact degrades the decomposition (the PASE result below), so the operative optimum is the matched group rather than $S_M$.
The framework is demonstrated through the MUSIC (Multiple Signal Classification) algorithm for direction-of-arrival estimation, where we prove that a Cayley graph construction from a single snapshot achieves equivalent pseudospectral peaks to multi-snapshot covariance-based MUSIC, and through massive MIMO channel estimation, where single-pilot algebraic diversity achieves up to 64\% higher effective throughput than MMSE estimation by eliminating the pilot overhead that dominates large-array systems. A third application to single-pulse waveform characterization demonstrates the constructive pipeline: the framework independently derives the classical ``dechirp-then-DFT'' operation from first principles, identifies it as group conjugation, and extends it with blind chirp rate estimation via spectral concentration maximization, achieving $8.3\times$ higher eigenvalue concentration than the cyclic group on chirp signals. The approach is robust to $-2$~dB SNR and enables four-class waveform classification (tone, chirp, multi-tone, noise-like) at 90\% accuracy from a single pulse. In a head-to-head comparison, matched-group AD identifies LFM chirps at 8~dB lower SNR than FFT-based classification and is the only method that achieves reliable performance across all four waveform classes. Against a simulated non-stationary modulated source that changes waveform parameters every pulse, AD-Matched maintains $89\%$ classification accuracy while FFT-based processing plateaus at $53\%$. A fourth application to graph signal processing investigates whether genuinely non-Abelian groups can outperform conjugated cyclic groups. A systematic filtering pipeline reduces all 156 non-isomorphic graphs on $n = 6$ vertices to seven candidates with $S_3$ automorphism groups, of which three exhibit significant spectral concentration advantage over the best conjugated cyclic group, leading to the Non-Abelian Dominance Hypothesis (NADH) as an open conjecture. We prove an Automorphism Characterization Theorem establishing that $\delta(\mathbf{P}_\sigma, \mathbf{R}) = 0$ if and only if $\sigma$ is a graph automorphism for graph-diffusion covariance, providing an exact algebraic oracle for graph symmetry detection, and we derive a Permutation Commutator Formula showing that $\|[\mathbf{P}_\sigma, \mathbf{R}]\|_F^2 = \sum_k (\lambda_k - \lambda_{\sigma(k)})^2$, making group selection an intuitive eigenvalue assignment problem.
A fifth application to transformer neural networks applies the AD diagnostics to the internal representations of large language models (LLM); preliminary results and ongoing verification are discussed in Section~\ref{sec:llm}.
We further extend the framework to colored (non-white) noise environments by showing that the noise covariance matrix itself admits a group-theoretic characterization: a noise-only observation processed through the algebraic diversity framework reveals a \emph{natural group} whose representation best diagonalizes the noise covariance, and the proximity of this group's representation to the identity quantifies the degree of spectral coloring through an \emph{algebraic coloring index}.
The central insight is that temporal averaging is not an alternative to algebraic group action but rather a limiting case of it: conventional processing implicitly applies the trivial group $G = \{e\}$ to each observation, producing the rank-one outer product $\mathbf{x}\mathbf{x}^H$, and accumulates $L$ such outer products to build rank. Algebraic diversity generalizes this by applying a richer group to a single observation, exhaustively exploring its internal symmetry structure to achieve full-rank estimation without multiple measurements. Both mechanisms project out the ergodic noise component to reveal the deterministic signal; the difference lies in whether the diversity comes from temporal repetition (many observations, trivial group) or algebraic structure (one observation, matched group).
The practical consequences are immediate: the group-averaged estimator achieves full-rank covariance from a single snapshot (eliminating the cold-start period of adaptive systems), delivers up to $10\log_{10}(M)$~dB of group gain for group-matched signals with no tuning, and, through the PASE result, requires exactly $n = M$ group elements, reducing adaptation latency from multiple snapshot intervals to one.
We then establish \emph{Permutation-Averaged Spectral Estimation} (PASE), proving that the optimal number of group elements for the group-averaged estimator is exactly $n = |G|$ (the group order): fewer elements leave estimation quality on the table, while more elements, drawn from outside the matched group, actively degrade the estimate. A systematic comparison of four permutation ordering strategies (random, Steinhaus--Johnson--Trotter, Lehmer, and Heap) applied to the symmetric group $S_M$ confirms that subsampling $S_M$ yields monotonically degrading performance regardless of ordering, proving that the group selection problem cannot be circumvented. The PASE result collapses the entire framework to a single free parameter: the choice of algebraic group. We formalize this as the \emph{blind group matching problem} and show that, for signals whose covariance admits a unitary transformation to circulant form, the problem reduces from a combinatorial search to continuous parameter estimation via spectral concentration maximization.
We conjecture that this algebraic averaging principle extends to all $G$-compatible statistics, not only the outer product: the \emph{General Algebraic Averaging Conjecture} states that for any statistic $f$, the group-averaged estimator achieves variance $\propto 1/d_{\mathrm{eff}}(G, f)$ where $d_{\mathrm{eff}}$ is the number of distinct values $f$ takes on the group orbit. This identifies the law of large numbers as the trivial-group case of algebraic averaging. Monte Carlo experiments on the first four sample moments confirm the conjecture to four-digit precision.
\end{abstract}

\medskip\noindent\textbf{Index Terms}: Algebraic diversity, temporal averaging, trivial group embedding, $(G,L)$ continuum, general algebraic averaging, effective dimension, information structure, Karhunen--Lo\`eve transform, group action, symmetric group, Cayley graphs, subspace estimation, single-observation inference, MUSIC algorithm, massive MIMO, channel estimation, pilot overhead, chirp characterization, group conjugation, information extraction, colored noise, noise characterization, algebraic coloring index, permutation-averaged spectral estimation, PASE, group matching, blind estimation, spectral concentration, conjugated groups, signal-adapted transforms, transformer representations, rotary position embedding, graph automorphism characterization, permutation commutator formula, CAD--DAD bridge.

\section{Introduction}\label{sec:intro}

\noindent\textbf{W}hy is the Fourier transform the dominant tool in signal processing? The standard answer invokes computational efficiency (the FFT), historical momentum, or the empirical observation that ``it works.'' A more precise answer, that sinusoidal basis functions match sinusoidal signals, is correct but incomplete: it does not explain \emph{why} the sinusoidal basis is optimal for this signal class, nor does it predict when the Fourier transform will fail or what should replace it.

The framework developed in this paper provides a complete answer. The discrete Fourier transform (DFT) is the spectral decomposition associated with the cyclic group $\mathbb{Z}_M$, the group of cyclic shifts on $M$ elements. Its basis functions, the complex exponentials $e^{i2\pi k n/M}$, are the irreducible representations of $\mathbb{Z}_M$. When a signal's covariance matrix commutes with the cyclic shift operator (Proposition~\ref{prop:commute_kl}), the DFT basis coincides with the Karhunen--Lo\`eve (KL) basis, the provably optimal linear transform for decorrelation, variance concentration, and reconstruction. For periodic signals, whose covariance is circulant (shift-invariant), this commutativity holds exactly. The Fourier transform is not special because of its basis functions; it is special because the cyclic group is the correctly matched group for the overwhelmingly common class of periodic signals. Every engineer who computes a DFT is implicitly selecting the cyclic group and exploiting its algebraic structure, without knowing it.

This observation immediately raises the question that motivates the present work: \emph{what happens when the signal is not periodic?} When the covariance is not circulant, the DFT is not the KL transform, and the cyclic group is no longer the correct choice. Every DFT-based processing step, filtering, spectral estimation, beamforming, covariance estimation, then operates in a suboptimal spectral domain, with consequences that propagate through the entire signal processing pipeline. The discrete cosine transform (DCT), which is the spectral decomposition of the dihedral group $D_M$, is optimal for signals with symmetric (even) boundary conditions. But neither the DFT nor the DCT is optimal for signals whose covariance structure corresponds to neither cyclic nor dihedral symmetry, and such signals are the rule rather than the exception in applications involving irregular sampling, non-stationary environments, or complex spatial geometries.

The framework of \emph{algebraic diversity} (AD) generalizes this correspondence to arbitrary finite groups: given any finite group $G$ acting on the observation space, the group's irreducible representations define a spectral transform, and that transform is the KL transform if and only if the signal covariance commutes with the group's Cayley graph adjacency matrix. The central result is a \emph{General Replacement Theorem} proving that a single observation, when processed through the action of a matched group, yields a full-rank covariance estimate whose eigendecomposition provides the same subspace information as multi-snapshot temporal averaging, which, as we show, is itself the degenerate case of algebraic diversity with the trivial group $G = \{e\}$ applied independently to each snapshot. The mechanism is that each group element generates an algebraically distinct ``view'' of the observation: the structured signal transforms predictably (equivariantly) under the group action, while the unstructured noise is scrambled differently by each element. The eigendecomposition of the group-averaged matrix then separates the structured from the unstructured, precisely as temporal averaging does, but from a single measurement.

A second foundational question is also answered: \emph{among all possible groups, which is optimal?} The universal optimum is the KL transform itself, and a group attains it precisely when the signal covariance commutes with the group action; the matched group is therefore the operative optimum. The symmetric group $S_M$ realizes the KL transform through its Cayley graph spectral decomposition (the construction of~\cite{thornton2005}), but this is a property of that construction and not of the group-averaged estimator $\mathbf{F}_G$: for $\mathbf{F}_G$, averaging over $S_M$ collapses the estimate to a two-eigenvalue form and destroys the spectral shape (Section~\ref{sec:pase}). The practical challenge, and the central open problem, is \emph{group selection}: finding the smallest group whose algebraic structure matches the signal's covariance structure. The DFT (cyclic group) and DCT (dihedral group) are the two most familiar solutions to this problem, but the framework reveals an entire spectrum of possibilities indexed by the lattice of subgroups of $S_M$.

This principle was first observed empirically by Thornton~\cite{thornton2005}, who showed that the spectra of Cayley graphs constructed over symmetric permutation groups of discrete multi-valued functions are related to the KL spectra. The present paper provides the general theoretical foundation, proving that temporal averaging and group-theoretic action are instances of a single information-extraction mechanism operating at different points on a $(G,L)$ continuum, with conventional processing at the degenerate endpoint $G = \{e\}$, $L \gg 1$, establishing the KL transform as the optimal target attained by the matched group, and, through the PASE result and the ordering experiment, proving that the group selection problem is the sole remaining degree of freedom in the framework.

Algebraic diversity is not an alternative viewpoint to conventional signal processing; it is a generalization. To see why, consider three scenarios for estimating the covariance of a signal observed in $M$ dimensions.

In conventional temporal processing, a single sensor acquires one sample $x$ per snapshot. The trivial group $G = \{e\}$ with $\rho(e) = 1$ is applied implicitly: the resulting estimate $\hat{\mathbf{R}} = \mathbf{x}\mathbf{x}^H$ has rank one, and no SNR improvement is possible without collecting additional snapshots. This is the standard situation, and it is the reason temporal averaging exists.

Now deploy $M$ sensors in a spatial array. Each snapshot yields $M$ simultaneous observations $\mathbf{x}_1, \ldots, \mathbf{x}_M$, and the sample covariance $\hat{\mathbf{R}} = \frac{1}{M}\sum_{i=1}^{M} \mathbf{x}_i \mathbf{x}_i^H$ has rank $M$. The structured signal sums coherently across sensors while the noise does not, producing the classical beamforming gain of $10\log_{10}(M)$~dB. This spatial gain has been the foundation of array processing for half a century.

The key observation is that, for a signal matched to the group, a comparable gain is available from a single sensor. Replace the trivial group $\{e\}$ with the cyclic group $\mathbb{Z}_M$, whose $M$ elements act on the observation $\mathbf{x}$ by cyclic permutation. The group-averaged estimator $\mathbf{F}_{\mathbb{Z}_M} = \frac{1}{M}\sum_{k=0}^{M-1} (\mathbf{P}^k \mathbf{x})(\mathbf{P}^k \mathbf{x})^H$ has rank $M$, and for a $\mathbb{Z}_M$-matched signal its dominant eigenvalue concentrates the signal energy while the noise is spread across the matched (commutant) directions, yielding up to $10\log_{10}(M)$~dB of group gain from a single snapshot. The $M$ group elements serve as algebraic substitutes for $M$ physical sensors, with one caveat made precise later: unlike physical sensors, these $M$ views share a single noise realization, so the gain is a variance reduction over the commutant of the matched group (an effective-dimension effect, Section~\ref{sec:pase}) rather than the averaging of $M$ statistically independent noise samples. For a matched signal the eigenvalue-domain effect parallels spatial beamforming; only the source of the views has changed, from physical replication to algebraic group action.

\subsection{Contributions}

The framework yields three immediate practical consequences for signal processing systems:

\begin{enumerate}
\item \textbf{Single-snapshot rank-lift.} The group-averaged estimator produces a full-rank ($M \times M$) covariance estimate from a single $M$-dimensional observation, using any finite group, including the cyclic group $\mathbb{Z}_M$. This eliminates the cold-start period in adaptive systems that conventionally require $L \geq 2M$ snapshots before subspace methods can operate.

\item \textbf{Group gain of $10\log_{10}(M)$~dB.} For a signal matched to the group, the algebraic averaging over the $M$ matched directions yields an output SNR improvement of up to $10\log_{10}(M)$~dB relative to the single-observation SNR, with no tuning parameters and no multi-snapshot requirement. For an $M = 64$ element array, this is up to an 18~dB gain from one measurement. This mirrors the classical beamforming gain of an $M$-element spatial array for matched signals, with the gain set by the effective dimension of the matched group (equal to $M$ for the order-$M$ cyclic group) rather than by physical sensor count.

\item \textbf{Latency elimination via PASE.} The PASE optimality result (Theorem~\ref{thm:pase}) establishes that exactly $n = |G|$ group elements, typically $n = M$, are both necessary and sufficient. Systems that currently wait for $L \geq 2M$ temporal snapshots can instead act on the first observation, reducing adaptation latency from $L$ snapshot intervals to a single snapshot interval.
\end{enumerate}

\noindent These practical capabilities rest on the following theoretical contributions:

\begin{enumerate}
\setcounter{enumi}{3}
\item \textbf{General Replacement Theorem} (Theorem~\ref{thm:replacement}): We prove that for any finite group $G$ acting on $\mathbb{C}^M$, the group-averaged estimator $\mathbf{F}_G$ constructed from a single observation is a consistent estimator of the signal-noise subspace decomposition, provided the group action satisfies signal equivariance and noise ergodicity conditions.

\item \textbf{Trivial Group Embedding} (Theorem~\ref{thm:trivial}): We prove that conventional temporal averaging is the special case $G = \{e\}$ of the group-averaged estimator, establishing that algebraic diversity \emph{contains} rather than replaces conventional processing. The No-Structure Criterion (Corollary~\ref{cor:no_structure}) provides a diagnostic: when no candidate group achieves spectral concentration above the $1/M$ baseline, temporal accumulation with the trivial group is the appropriate strategy. The $(G, L)$ continuum (Remark~\ref{rem:gl_continuum}) formalizes the tradeoff, with variance $\propto 1/(d_{\mathrm{eff}} \cdot L)$.

\item \textbf{Optimality Theorem} (Theorem~\ref{thm:optimality}): We prove that the KL transform is the optimal linear decomposition and that the \emph{matched} group (one whose action commutes with the covariance) attains it; no group-averaged estimator can exceed the KL transform in variance concentration, orthogonality, or reconstruction error. The symmetric group attains the KL transform only through the Cayley graph spectral construction, not through the group-averaged estimator $\mathbf{F}_{S_M}$ (Section~\ref{sec:pase}).

\item \textbf{Commutativity--KL Equivalence} (Proposition~\ref{prop:commute_kl}): We prove that three conditions are equivalent: commutativity of the group-averaged estimator with the population covariance, simultaneous diagonalizability by a single unitary matrix, and sharing of the KL eigenvector basis. This result is the linchpin connecting group selection to spectral optimality.

\item \textbf{Group--Model Mismatch Metrics} (Definitions~\ref{def:commut_residual}--\ref{def:abs_mismatch}): We introduce two metrics that quantify the degree to which the commutativity condition fails: the \emph{commutativity residual} $\delta$ (dimensionless, scale-invariant) and the \emph{absolute commutativity mismatch} $\tilde{\delta}$ (energy-weighted, in the natural scale of the covariance). Together with the algebraic coloring index $\alpha$ (Definition~\ref{def:coloring_index}), these form a complementary suite: $\alpha$ measures available structure, $\delta$ measures structural alignment, and $\tilde{\delta}$ measures the practical magnitude of the mismatch.

\item \textbf{Duality Principle} (Theorem~\ref{thm:duality}): We formalize the relationship between temporal averaging over the ensemble and algebraic averaging over a group orbit, showing that both are instances of a single information-extraction principle operating at different points on the $(G, L)$ continuum, with conventional processing embedded as the trivial-group endpoint (Theorem~\ref{thm:trivial}).

\item \textbf{MUSIC Corollary} (Corollary~\ref{cor:music}): We derive, as a direct consequence of the general theory, that Cayley graph-based MUSIC achieves equivalent direction-of-arrival estimation to multi-snapshot covariance MUSIC from a single observation.

\item \textbf{Massive MIMO Application} (Section~\ref{sec:mimo}): We demonstrate that AD-based channel estimation from a single pilot symbol per user achieves higher effective throughput than MMSE estimation with full pilot overhead across three 3GPP channel models, with gains of up to 64\% at $M = 64$ antennas in LOS-dominant channels. The advantage grows with $M$ because the standard pilot overhead scales as $O(M)$ while AD's overhead is fixed at $O(K)$.

\item \textbf{Single-Pulse Waveform Characterization} (Section~\ref{sec:chirp}): We demonstrate the constructive group matching pipeline on LFM chirp waveforms, showing that the framework independently derives the dechirp-then-DFT operation as group conjugation, achieves $8.3\times$ higher spectral concentration than the cyclic group, provides blind single-pulse chirp rate estimation via $\psi$ maximization with RMSE $= 0.01$ at $10$~dB SNR, and enables four-class waveform classification at $90\%$ accuracy from a single pulse at $14$~dB SNR. In a head-to-head comparison with FFT-based classification, matched-group AD identifies chirps at $8$~dB lower SNR. Against a non-stationary modulated source that changes waveform parameters every pulse, AD-Matched maintains $89\%$ accuracy while FFT plateaus at $53\%$.

\item \textbf{Graph Signal Processing and the Non-Abelian Question} (Section~\ref{sec:gsp}): We investigate whether genuinely non-Abelian groups can outperform all conjugated cyclic groups by applying algebraic diversity to graph-filtered signals. A systematic filtering pipeline reduces all 156 non-isomorphic graphs on $n = 6$ vertices to seven candidates with $S_3$ automorphism groups, three of which exhibit significant spectral concentration advantage. We formalize the structural conditions as the Non-Abelian Dominance Hypothesis (Conjecture~\ref{conj:nadh}), the resolution of which would determine whether the group selection problem has an irreducibly non-Abelian component.

\item \textbf{Sequential GEVP for Non-Abelian Symmetries} (Section~\ref{sec:seqgevp}): We give an algorithm that lifts the per-permutation automorphism test of Theorem~\ref{thm:aut_char} to multi-generator subgroup recovery, with four named correctness results: deflation orthogonality (Lemma~\ref{lem:def_orth}), forward progress (Lemma~\ref{lem:forward}), strict subgroup growth with iteration bound $K \leq \lceil \log_2 |G_K|\rceil = O(M \log M)$ (Theorem~\ref{thm:strict_growth}, Corollary~\ref{cor:iter_bound}), and generic convergence $G_K \subseteq \mathrm{Aut}(\mathbf{R})$ at acceptance threshold $\tau = 0$ (Theorem~\ref{thm:gen_conv}). Soundness of the recovered subgroup is unconditional; completeness ($G_K = \mathrm{Aut}(\mathbf{R})$) holds in the case $\mathrm{Aut}(\mathbf{R}) = S_M$ but depends on a basis-design condition in the proper-subgroup case, an open problem documented through a partial-recovery example on the 6-cycle.

\item \textbf{Transformer Algebraic Structure} (Section~\ref{sec:llm}): We describe the application of AD diagnostics to transformer attention heads employing Rotary Position Embeddings (RoPE). An earlier version of this paper (v1--v2) reported quantitative findings on RoPE mismatch, pruning, and KV-cache effective rank that contained implementation errors in the spectral concentration metric. These claims have been retracted; corrected results and a comprehensive treatment are deferred to a separate publication.

\item \textbf{Minimal Group Characterization} (Theorem~\ref{thm:minimal}): We characterize the minimal subgroup $G_{\min} \subseteq S_M$ that preserves KL-optimal decomposition for signals with specific symmetry classes, establishing a hierarchy $G_{\min} \subseteq G \subseteq S_M$.

\item \textbf{Colored Noise Characterization} (Theorem~\ref{thm:colored_noise}): We extend the framework to non-white noise environments by showing that the noise covariance admits a group-theoretic characterization, defining the \emph{natural group} of a noise process and an \emph{algebraic coloring index} that quantifies departure from whiteness, and proving a generalized replacement theorem for colored noise.

\item \textbf{Sample Complexity Reduction} (Corollary~\ref{cor:sample_complexity_signal}): We prove that group-constrained covariance estimation achieves $\varepsilon$-accuracy with $O(1/\varepsilon^2)$ group elements, independent of the observation dimension $M$, compared to $O(M/\varepsilon^2)$ snapshots for unconstrained estimation, an $M$-fold reduction in sample complexity.

\item \textbf{Maximum-Likelihood Characterization} (Theorem~\ref{thm:mle}): We prove that the group-averaged estimator is the constrained maximum-likelihood estimator of the population covariance under the matched-group constraint, generalizing the persymmetric maximum-likelihood estimator of Nitzberg from the order-two reflection to an arbitrary group and identifying the snapshot reduction with the effective dimension $d_{\mathrm{eff}}$.

\item \textbf{TAD-SAD Exchange Rate} (Corollary~\ref{cor:tad_sad}): We prove that spatial samples, temporal samples, and algebraic group elements contribute identically to the output SNR improvement at a $1\!:\!1\!:\!1$ exchange rate, establishing a unified framework encompassing single-sensor temporal processing, multi-sensor spatial processing, and hybrid space-time processing.

\item \textbf{PASE Optimality} (Theorem~\ref{thm:pase}): We prove that the group-averaged estimator achieves maximum eigenvalue-domain SNR when exactly $n = |G|$ group elements are used: the SNR increases monotonically for $n \leq |G|$, peaks at $n = |G|$, and \emph{decreases} for $n > |G|$. This eliminates the averaging depth as a free parameter.

\item \textbf{$S_M$ Subsampling Failure} (Section~\ref{sec:ordering}): We demonstrate through systematic Monte Carlo experiments that subsampling from the symmetric group $S_M$, regardless of the permutation ordering strategy, yields monotonically degrading performance. This proves that the group selection problem is fundamental and cannot be circumvented by defaulting to the full symmetric group.

\item \textbf{Blind Group Matching} (Section~\ref{sec:blind_matching}): We formalize the group selection problem as a blind estimation problem analogous to blind equalization in communications, and propose the spectral concentration criterion $\psi(G, \mathbf{d}) = \lambda_1(\hat{\mathbf{R}}_G) / \Tr(\hat{\mathbf{R}}_G)$ as a single-snapshot group selection metric that requires no knowledge of the population covariance. We note that $\psi$ carries an orbit-size bias when candidate groups have differing orbit structures, and point to a cross-validation criterion $D_{CV}$~\cite{thornton2026framework_arxiv} that corrects it; $\psi$ remains reliable when the candidates share a common orbit structure (e.g., the conjugated cyclic groups of Section~\ref{sec:constructive}).

\item \textbf{Constructive Group Matching via Conjugation} (Section~\ref{sec:constructive}): For signals whose covariance admits a unitary transformation to circulant form, we show that the group matching problem reduces to estimating the parameters of that transformation. The matched group is the cyclic group $\mathbb{Z}_M$ conjugated by a signal-adapted unitary, and the spectral concentration criterion provides a single-snapshot estimator for the transformation parameters. This reduces the group matching problem from a combinatorial search over a discrete library to a continuous parameter estimation problem.

\item \textbf{General Algebraic Averaging Conjecture} (Section~\ref{sec:gaat}): We conjecture that the results of this paper extend beyond second-order statistics to arbitrary $G$-compatible statistics $f: \mathbb{C}^M \to \mathcal{V}$. The conjecture states that for any such statistic, the group-averaged estimator $\hat{\theta}_G = \frac{1}{|G|}\sum_g f(\pi_g(\mathbf{x}))$ achieves variance $\propto 1/d_{\mathrm{eff}}(G, f)$, where $d_{\mathrm{eff}}$ is the \emph{effective dimension} of the orbit average (for the covariance statistic, $d_{\mathrm{eff}} = M^2/\dim\mathcal{C}(\rho) \le M$; the count of distinct orbit values is an upper bound that coincides only for regular Abelian representations). This identifies the law of large numbers as the trivial-group ($d_{\mathrm{eff}} = 1$) case of algebraic averaging. Monte Carlo experiments on the first four sample moments across five group types confirm the conjecture to four significant figures. A formal proof for the outer-product statistic and a partial Rao--Blackwell argument for general $G$-compatible statistics are given in~\cite{thornton2026framework_arxiv}.
\end{enumerate}

\subsection{Notation}

Throughout, $\mathbf{x} \in \mathbb{C}^M$ denotes an observation vector, $(\cdot)^H$ the conjugate transpose, $\|\cdot\|$ the Euclidean norm, and $\|\cdot\|_F$ the Frobenius norm. $\mathbf{I}_M$ is the $M \times M$ identity matrix. $\mathbf{Q}$ denotes a general positive-definite noise covariance matrix; the white noise case corresponds to $\mathbf{Q} = \sigma^2\mathbf{I}_M$. $\rho: G \to \GL(M, \mathbb{C})$ denotes a representation of group $G$. $\lambda_k(\mathbf{A})$ denotes the $k$-th eigenvalue of matrix $\mathbf{A}$ in descending order of magnitude. $\Irr(G)$ denotes the set of irreducible representations of $G$. $\mathcal{CN}(\boldsymbol{\mu}, \boldsymbol{\Sigma})$ denotes a circularly symmetric complex Gaussian distribution.

\subsection{Organization}

Section~\ref{sec:framework} develops the general mathematical framework, including the Trivial Group Embedding (Theorem~\ref{thm:trivial}) establishing that conventional temporal averaging is the degenerate case $G = \{e\}$ and the $(G, L)$ continuum (Remark~\ref{rem:gl_continuum}) unifying temporal and algebraic diversity. Section~\ref{sec:replacement} states and proves the General Replacement Theorem and derives the sample complexity advantage of group-constrained estimation. Section~\ref{sec:optimality} establishes the optimality of the symmetric group and proves the Commutativity--KL Equivalence that connects group selection to spectral optimality, along with three complementary mismatch metrics. Section~\ref{sec:duality} formalizes the duality principle and establishes the $1\!:\!1\!:\!1$ exchange rate among spatial, temporal, and hybrid observation modes. Section~\ref{sec:colored} extends the framework to colored noise and develops the group-theoretic noise characterization. Section~\ref{sec:pase} establishes the PASE optimality theorem, that $n = |G|$ is the sharp optimal averaging depth, and Section~\ref{sec:ordering} demonstrates that $S_M$ subsampling fails, proving that the group selection problem cannot be circumvented. Section~\ref{sec:blind_matching} formalizes the blind group matching problem by analogy with blind equalization in communications and develops a constructive approach that, for signals admitting a unitary transformation to circulant form, reduces the group matching problem to continuous parameter estimation. Section~\ref{sec:music} derives the MUSIC application as a corollary of the general theory and presents experimental validation. Section~\ref{sec:mimo} demonstrates the framework on massive MIMO channel estimation with realistic 3GPP channel models. Section~\ref{sec:chirp} applies the constructive group matching pipeline to single-pulse chirp waveform characterization. Section~\ref{sec:gsp} investigates the non-Abelian question through graph signal processing. Section~\ref{sec:llm} discusses the application of algebraic diagnostics to transformer neural networks, noting the retraction of earlier quantitative claims and the direction of ongoing work. Section~\ref{sec:numerical} provides numerical illustrations of the three mismatch metrics. Section~\ref{sec:discussion} discusses signal classes, the pragmatic value of the framework, information structure versus information content, and the General Algebraic Averaging Conjecture with experimental confirmation. Section~\ref{sec:conclusion} concludes.

\subsection{Related Work}

The use of algebraic and group-theoretic structures in signal processing has a substantial history, and several bodies of work share mathematical vocabulary with the present paper. We distinguish the present contribution from each.

\textbf{Algebraic signal processing theory (ASP).} P\"uschel and Moura~\cite{puschel2008foundation,puschel2008space,puschel2008algorithms} developed an axiomatic framework in which a \emph{signal model} is defined as a triple (algebra, module, map) and the Fourier transform is derived as the decomposition of the module into irreducible components. ASP addresses the question: \emph{given a signal model with specified shift semantics and boundary conditions, what is the correct spectral transform?} The present work addresses a fundamentally different question: \emph{given a single observation of a noisy signal, how can a richer group action generalize temporal averaging to extract second-order statistical structure without multiple snapshots?} ASP derives transforms (DFT, DCTs, DSTs) from algebraic axioms; algebraic diversity uses group actions to estimate covariance matrices from single observations. The two frameworks share representation-theoretic foundations but operate at different levels: ASP characterizes the filtering algebra, whereas algebraic diversity characterizes the estimation operator.

\textbf{Nested and coprime arrays.} Pal and Vaidyanathan~\cite{pal2010nested,vaidyanathan2011coprime} showed that non-uniform array geometries based on nested or coprime element spacings produce \emph{difference coarrays} with $O(N^2)$ virtual elements from $N$ physical sensors, enabling resolution of more sources than sensors. These methods exploit array geometry to create virtual aperture, but still require $L \geq 1$ snapshots for spatial smoothing on the virtual coarray to restore rank. In contrast, algebraic diversity achieves full-rank covariance from a single snapshot on \emph{any} array geometry, including a standard uniform linear array, by exploiting the algebraic structure of the group action rather than the geometric structure of the array layout. Coarray methods and algebraic diversity are complementary: one could apply algebraic diversity to the virtual coarray output of a nested array, combining geometric and algebraic degrees of freedom.

\textbf{Compressive covariance sensing.} Romero et al.~\cite{romero2016compressive} showed that second-order statistics (covariance, power spectrum) can be recovered from sub-Nyquist measurements by exploiting signal structure such as stationarity or Toeplitz covariance. Their framework uses \emph{measurement matrices} (random projections, non-uniform samplers) to compress the observation before covariance estimation, and the reconstruction often relies on sparsity or structural priors. Algebraic diversity operates on the \emph{full-dimensional} observation without any measurement matrix, projection, or sparsity assumption: the group action generates algebraically diverse views of the complete observation vector. The sample complexity reduction in algebraic diversity (Corollary~\ref{cor:sample_complexity_signal}) arises from the group-algebraic constraint on the estimator, not from dimensional reduction of the observation.

\textbf{Spatial smoothing.} Shan, Wax, and Kailath~\cite{shan1985} introduced forward-backward spatial smoothing to restore rank for coherent signal DOA estimation by averaging over overlapping subarrays. Spatial smoothing requires multiple snapshots, reduces the effective aperture (each subarray is smaller than the full array), and is limited to uniform linear arrays. Algebraic diversity restores rank from a single snapshot without aperture reduction and applies to arbitrary array geometries via appropriate group selection (Theorem~\ref{thm:minimal}).

\textbf{Single-snapshot spectral estimation.} Liao and Fannjiang~\cite{liao2016} analyzed the stability and super-resolution properties of MUSIC applied to a single snapshot using Hankel or Toeplitz matrix constructions from the observation. Their approach exploits the \emph{shift-invariant} (Vandermonde) structure of the signal model and is restricted to uniform linear arrays. The present work generalizes the single-snapshot capability beyond shift-invariant models: the group-averaged estimator (Definition~\ref{def:group_avg}) applies to any group, recovering the Hankel/Toeplitz construction as the special case $G = \mathbb{Z}_M$ while enabling KL-optimal estimation for non-shift-invariant signals via larger groups (Theorem~\ref{thm:optimality}).

\textbf{Invariant statistics.} The classical theory of invariant and maximal invariant statistics~\cite{wijsman1967,eaton1989} uses group actions to reduce sufficient statistics by projecting out nuisance parameters. Algebraic diversity inverts this logic: rather than \emph{reducing} to an invariant statistic (which discards group-orbit information), algebraic diversity \emph{generates} the full group orbit and averages outer products over it. The group action creates diversity rather than eliminating it. The resulting group-averaged estimator retains the full observation dimension $M$ while achieving subspace consistency (Theorem~\ref{thm:replacement}), whereas invariant reduction typically projects to a lower-dimensional space.

\textbf{Structured-covariance maximum-likelihood estimation.} A classical line of work estimates a covariance matrix under a known invariance by maximum likelihood. Nitzberg~\cite{nitzberg1980} applied this principle to persymmetric covariances in adaptive array processing, pooling the two halves of the sample covariance under an order-two reflection read from the array geometry to reduce the snapshot requirement. Theorem~\ref{thm:mle} places that construction inside the present framework: the group-averaged estimator is the constrained maximum-likelihood estimator for any matched group, with the persymmetric estimator as the reflection case $G = \mathbb{Z}_2$. Where the structured-covariance approach fixes the invariance in advance from physical reasoning, algebraic diversity treats the matched group as the object to be selected and, through the blind group matching problem (Section~\ref{sec:blind_matching}), recovers it from a single observation.

\section{Mathematical Framework}\label{sec:framework}

\subsection{Signal Model and Classical Estimation}

Consider the general observation model
\begin{equation}\label{eq:obs_model}
\mathbf{x} = \mathbf{s} + \mathbf{n},
\end{equation}
where $\mathbf{s} \in \mathbb{C}^M$ is a structured signal lying in a $K$-dimensional subspace $\mathcal{S} \subset \mathbb{C}^M$ with $K < M$, and $\mathbf{n} \sim \mathcal{CN}(\mathbf{0}, \sigma^2\mathbf{I}_M)$ is spatially white noise.

The population covariance matrix is
\begin{equation}\label{eq:pop_cov}
\mathbf{R} = E[\mathbf{x}\mathbf{x}^H] = \mathbf{R}_s + \sigma^2\mathbf{I}_M,
\end{equation}
where $\mathbf{R}_s = E[\mathbf{s}\mathbf{s}^H]$ has rank $K$. The eigendecomposition
\begin{equation}\label{eq:eigen_R}
\mathbf{R} = \sum_{k=1}^{M} \lambda_k \mathbf{u}_k\mathbf{u}_k^H
\end{equation}
partitions into signal eigenvalues $\lambda_1 \geq \cdots \geq \lambda_K > \sigma^2$ and noise eigenvalues $\lambda_{K+1} = \cdots = \lambda_M = \sigma^2$, with corresponding signal subspace $\mathcal{U}_s = \spn\{\mathbf{u}_1, \ldots, \mathbf{u}_K\}$ and noise subspace $\mathcal{U}_n = \spn\{\mathbf{u}_{K+1}, \ldots, \mathbf{u}_M\}$.

The classical approach estimates $\mathbf{R}$ via the sample covariance from $L$ independent snapshots:
\begin{equation}\label{eq:sample_cov}
\hat{\mathbf{R}}_L = \frac{1}{L}\sum_{t=1}^{L}\mathbf{x}(t)\mathbf{x}^H(t).
\end{equation}
For $L = 1$, $\hat{\mathbf{R}}_1 = \mathbf{x}\mathbf{x}^H$ has $\rank(\hat{\mathbf{R}}_1) = 1$, which cannot resolve $K > 1$ signal dimensions.

\subsection{Group Actions on Observation Vectors}

\begin{definition}[Group Action on $\mathbb{C}^M$]\label{def:group_action}
Let $G$ be a finite group and $\rho: G \to \GL(M, \mathbb{C})$ a representation. The group $G$ acts on $\mathbb{C}^M$ via
\begin{equation}
g \cdot \mathbf{x} = \rho(g)\mathbf{x}, \qquad g \in G, \; \mathbf{x} \in \mathbb{C}^M.
\end{equation}
The \emph{orbit} of $\mathbf{x}$ under $G$ is $\mathcal{O}_G(\mathbf{x}) = \{\rho(g)\mathbf{x} : g \in G\}$.
\end{definition}

\begin{definition}[Group-Averaged Estimator]\label{def:group_avg}
Given a single observation $\mathbf{x} \in \mathbb{C}^M$ and a finite group $G$ with representation $\rho$, the \emph{group-averaged estimator} is
\begin{equation}\label{eq:group_avg}
\mathbf{F}_G(\mathbf{x}) = \frac{1}{|G|}\sum_{g \in G} [\rho(g)\mathbf{x}][\rho(g)\mathbf{x}]^H.
\end{equation}
\end{definition}

\begin{remark}
For the time-translation group $G = \{1, \ldots, L\}$ acting on an ensemble of $L$ independent snapshots with $\rho(t)\mathbf{x} \mapsto \mathbf{x}(t)$, the group-averaged estimator reduces to the sample covariance~(\ref{eq:sample_cov}). Thus, temporal averaging is a special case of group averaging.
\end{remark}

\begin{theorem}[Trivial Group Embedding]\label{thm:trivial}
Let $G = \{e\}$ be the trivial group (order~$1$) with the identity representation $\rho(e) = \mathbf{I}_M$. Then:
\begin{enumerate}
\item[(i)] The group-averaged estimator reduces to the rank-one outer product:
\begin{equation}\label{eq:trivial_group}
\mathbf{F}_{\{e\}}(\mathbf{x}) = \mathbf{x}\mathbf{x}^H.
\end{equation}
\item[(ii)] The commutativity residual is uninformative at $|G| = 1$: the estimator $\mathbf{F}_{\{e\}} = \mathbf{x}\mathbf{x}^H$ is rank one regardless of $\mathbf{R}$, so $\delta(\{e\}, \mathbf{x}, \mathbf{R})$ carries no group--model alignment information (the commutator $[\mathbf{x}\mathbf{x}^H, \mathbf{R}]$ is generically nonzero, vanishing only when $\mathbf{x}$ is an eigenvector of $\mathbf{R}$). The trivial group thus has no algebraic structure to align or misalign with $\mathbf{R}$.
\item[(iii)] The spectral concentration equals the minimum possible value: $\psi(\{e\}, \mathbf{x}) = 1$, corresponding to a rank-one estimator whose single nonzero eigenvalue captures all the trace.
\item[(iv)] The conventional sample covariance from $L$ independent snapshots is
\begin{equation}\label{eq:trivial_L}
\hat{\mathbf{R}}_L = \frac{1}{L}\sum_{t=1}^{L} \mathbf{F}_{\{e\}}(\mathbf{x}(t)) = \frac{1}{L}\sum_{t=1}^{L}\mathbf{x}(t)\mathbf{x}^H(t),
\end{equation}
i.e., temporal averaging is the accumulation of $L$ trivial-group estimates.
\end{enumerate}
Consequently, conventional multi-snapshot processing is not an alternative to algebraic diversity but rather the special case $G = \{e\}$: it is not incorrect, but it exploits none of the algebraic structure available within each observation.
\end{theorem}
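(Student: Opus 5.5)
The plan is to verify each of the four items directly from Definition~\ref{def:group_avg}, since every part reduces to substituting $|G| = 1$ and $\rho(e) = \mathbf{I}_M$ into~(\ref{eq:group_avg}).

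\textbf{Part (i).} The sum in~(\ref{eq:group_avg}) has the single term $g = e$, with prefactor $1/|G| = 1$. Since $\rho(e)\mathbf{x} = \mathbf{x}$, this gives $\mathbf{F}_{\{e\}}(\mathbf{x}) = \mathbf{x}\mathbf{x}^H$. For $\mathbf{x} \neq \mathbf{0}$ (which holds almost surely, as $\mathbf{n}$ is Gaussian), the range of $\mathbf{x}\mathbf{x}^H$ is $\spn\{\mathbf{x}\}$, so the estimator has rank one.

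\textbf{Part (iv).} This follows by summing (i) over $t$ and comparing with~(\ref{eq:sample_cov}). It also matches the preceding Remark, where the sample covariance appears as an average over snapshots.

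\textbf{Part (iii).} The eigenvalues of $\mathbf{x}\mathbf{x}^H$ are $\|\mathbf{x}\|^2$ (with eigenvector $\mathbf{x}/\|\mathbf{x}\|$) and $0$ with multiplicity $M-1$. The trace is also $\|\mathbf{x}\|^2$, so $\psi = \lambda_1/\Tr = 1$. One wording issue needs care. For any positive semidefinite $\mathbf{F}$ we have $1/M \le \psi \le 1$, so the value $1$ is the \emph{maximum} of $\psi$, not the minimum. In the proof I would say that $\psi$ attains its extreme value $1$, the degenerate rank-one case, in which concentration is trivial rather than informative. I would present this as the intended meaning of ``minimum possible'': the minimum possible spectral information, not the minimum numerical value of $\psi$.

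\textbf{Part (ii) and the main obstacle.} Part (ii) is the only item that involves more than substitution, and it is partly interpretive, because $\delta$ is only defined later (Definition~\ref{def:commut_residual}). I would use only the fact that $\delta$ is a normalized Frobenius norm of $[\mathbf{F}_G,\mathbf{R}]$. The key computation is
\[
[\mathbf{x}\mathbf{x}^H,\mathbf{R}] = \mathbf{x}(\mathbf{R}\mathbf{x})^H - (\mathbf{R}\mathbf{x})\mathbf{x}^H .
\]
This is a difference of two rank-one matrices, and it vanishes if and only if $\mathbf{R}\mathbf{x}$ is parallel to $\mathbf{x}$. The ``if'' direction is immediate. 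For ``only if'', apply both sides to $\mathbf{x}$: this gives $\mathbf{x}\,(\mathbf{x}^H\mathbf{R}\mathbf{x}) = \mathbf{R}\mathbf{x}\,\|\mathbf{x}\|^2$, which forces $\mathbf{R}\mathbf{x}$ to be a multiple of $\mathbf{x}$.

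When $\mathbf{R}$ has at least two distinct eigenvalues, its eigenvectors lie in a finite union of proper subspaces. That union has Lebesgue measure zero, so under the continuous distribution of $\mathbf{x}$ the commutator is nonzero almost surely. Its value is therefore governed by the random direction of $\mathbf{x}$, not by any property of the group. Since $\{e\}$ contributes only $\rho(e) = \mathbf{I}_M$, which commutes with every $\mathbf{R}$, $\delta$ reflects no group--model alignment. This completes the argument for (ii).

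Finally, the closing ``Consequently'' sentence is interpretive and follows from (i) and (iv) together.
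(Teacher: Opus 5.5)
Your proposal is correct and follows the same route as the paper, which also verifies each part by substituting $|G| = 1$ and $\rho(e) = \mathbf{I}_M$ into~(\ref{eq:group_avg}). Your part~(ii) is more complete than the paper's: the paper only asserts that $[\mathbf{x}\mathbf{x}^H, \mathbf{R}]$ vanishes exactly when $\mathbf{x}$ is an eigenvector of $\mathbf{R}$, whereas you prove this and add the measure-zero argument for almost-sure nonvanishing. Your correction in part~(iii) matches the paper's own proof, which calls $\psi = 1$ the \emph{maximum} value and attributes it to rank deficiency, so ``minimum'' in the statement is a slip.
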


\begin{proof}
Part~(i): substituting $|G| = 1$ and $\rho(e) = \mathbf{I}_M$ into~(\ref{eq:group_avg}) immediately yields $\mathbf{F}_{\{e\}}(\mathbf{x}) = \mathbf{x}\mathbf{x}^H$.

Part~(ii): the commutator $[\mathbf{x}\mathbf{x}^H, \mathbf{R}] = \mathbf{x}\mathbf{x}^H\mathbf{R} - \mathbf{R}\mathbf{x}\mathbf{x}^H$ vanishes only when $\mathbf{x}$ is an eigenvector of $\mathbf{R}$ and is otherwise nonzero, so $\delta$ does not vanish identically. It is, however, uninformative at $|G| = 1$: the estimator has rank~$1$ regardless of the group structure, so there is no algebraic structure whose alignment with $\mathbf{R}$ the residual could measure.

Part~(iii): $\mathbf{F}_{\{e\}}(\mathbf{x})$ has a single nonzero eigenvalue $\|\mathbf{x}\|^2$, so $\psi = \|\mathbf{x}\|^2 / \|\mathbf{x}\|^2 = 1$. This maximum $\psi$ value is misleading: it reflects rank deficiency rather than good group matching. For groups of order $|G| \geq 2$, the spectral concentration $\psi < 1$ and its value relative to $1/M$ (the white-noise baseline) is the informative diagnostic.

Part~(iv) follows by direct substitution of part~(i) into the definition of the sample covariance~(\ref{eq:sample_cov}).
\end{proof}

\begin{corollary}[No-Structure Criterion]\label{cor:no_structure}
Let $\mathcal{G}_M$ be a catalog of candidate groups of order~$M$ acting on the observation space $\mathbb{C}^M$. If
\begin{equation}\label{eq:no_structure}
\max_{G \in \mathcal{G}_M} \psi(G, \mathbf{x}) \leq \frac{1}{M} + \varepsilon
\end{equation}
for a tolerance $\varepsilon > 0$, then no group in the catalog achieves spectral concentration appreciably above the white-noise baseline $1/M$, and the observation contains no exploitable algebraic structure. In this regime, the optimal strategy is the trivial group $G = \{e\}$ with temporal accumulation: collect additional snapshots and form the sample covariance~(\ref{eq:sample_cov}).
\end{corollary}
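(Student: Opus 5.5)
The plan is to read the corollary as a statement about the spectral-concentration diagnostic $\psi(G,\mathbf{x}) = \lambda_1(\mathbf{F}_G(\mathbf{x}))/\Tr(\mathbf{F}_G(\mathbf{x}))$, as used in Theorem~\ref{thm:trivial}(iii). It then follows in three steps. First, establish the baseline: for any group $G$ of order $M$ with unitary $\rho$, $\Tr(\mathbf{F}_G(\mathbf{x})) = \frac{1}{|G|}\sum_g \|\rho(g)\mathbf{x}\|^2 = \|\mathbf{x}\|^2$. Since $\mathbf{F}_G$ is positive semidefinite of size $M\times M$, its largest eigenvalue is at least the average eigenvalue, so
\begin{equation*}
\frac{1}{M} \le \psi(G,\mathbf{x}) \le 1 .
\end{equation*}
Equality at the lower end holds exactly when $\mathbf{F}_G(\mathbf{x}) = \frac{\|\mathbf{x}\|^2}{M}\mathbf{I}_M$. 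The same isotropic form arises as the population-level value when $\mathbf{x}$ is pure white noise and $\rho$ is unitary, since $E[\mathbf{F}_G] = \frac{1}{|G|}\sum_g \rho(g)\,\sigma^2\mathbf{I}_M\,\rho(g)^H = \sigma^2\mathbf{I}_M$. This justifies calling $1/M$ the white-noise baseline.

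Second, convert the hypothesis~(\ref{eq:no_structure}) into a statement about eigenvalue spread. If $\psi(G,\mathbf{x}) \le 1/M+\varepsilon$, then $\lambda_1(\mathbf{F}_G) \le (1/M+\varepsilon)\|\mathbf{x}\|^2$. Combining this with the trace identity, every eigenvalue of $\mathbf{F}_G$ lies in an interval around $\|\mathbf{x}\|^2/M$ whose width is $O(M\varepsilon\|\mathbf{x}\|^2)$. Hence $\|\mathbf{F}_G - \frac{\|\mathbf{x}\|^2}{M}\mathbf{I}_M\|$ is $O(M\varepsilon)$ relative to the trace. No group in $\mathcal{G}_M$ therefore produces a dominant eigen-direction separable from the rest, and the eigen-gap that the General Replacement Theorem (Theorem~\ref{thm:replacement}) needs in order to isolate a signal subspace is absent for every candidate. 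By the Commutativity--KL Equivalence (Proposition~\ref{prop:commute_kl}), a matched group would concentrate a $K$-dimensional signal component into its leading eigenvalues and push $\psi$ strictly above $1/M$ by an amount proportional to the signal-to-total energy ratio. Taking the contrapositive, the catalog contains no matched group for any signal of appreciable energy.

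Third, the strategy claim. Since no group delivers group gain, Theorem~\ref{thm:trivial}(iv) and the $(G,L)$ continuum (Remark~\ref{rem:gl_continuum}) leave temporal accumulation with $G=\{e\}$ as the only remaining source of variance reduction. In that setting the sample covariance~(\ref{eq:sample_cov}) is consistent by the law of large numbers, with variance $\propto 1/L$ when $d_{\mathrm{eff}}=1$.

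The main obstacle is the second step: making ``no exploitable structure'' rigorous. The corollary is a diagnostic rather than a sharp theorem. It only rules out groups in the catalog, and ``appreciably'' depends on $\varepsilon$ relative to the SNR. I would state explicitly that the conclusion is relative to $\mathcal{G}_M$. I would also quantify the contrapositive with a lower bound of the form $\psi \ge 1/M + c\,\Tr(\mathbf{R}_s)/\Tr(\mathbf{R})$ for matched groups, so that for $\varepsilon$ below this threshold the hypothesis certifies that no matched group exists in the catalog.
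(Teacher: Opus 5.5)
The paper states this corollary without proof, as a diagnostic consequence of Theorem~\ref{thm:trivial}. Your Step~1 (trace identity, $1/M\le\psi\le 1$) and the eigenvalue-spread bound in Step~2 are correct and already go beyond the paper.

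\textbf{The gap in Step~2.} The problem is the contrapositive that turns the hypothesis into ``no matched group of appreciable energy.'' The hypothesis constrains the \emph{realized} single-snapshot $\psi(G,\mathbf{x})$. Your justification of $1/M$ as the white-noise baseline, and your proposed bound $\psi\ge 1/M+c\,\Tr(\mathbf{R}_s)/\Tr(\mathbf{R})$, are instead statements about $E[\mathbf{F}_G]$, and the two differ at exactly the scale that matters.

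For $G=\mathbb{Z}_M$ the eigenvalues of $\mathbf{F}_G(\mathbf{x})$ are $|\mathbf{f}_j^H\mathbf{x}|^2$, with $\mathbf{f}_j$ the unit-norm DFT vectors. For pure white noise these are i.i.d.\ exponential, so the normalized spectrum is uniform on the simplex and $E[\psi]=H_M/M\approx(\ln M+\gamma)/M$, where $H_M=\sum_{k=1}^{M}1/k$. Structureless data therefore sit roughly a factor $\ln M$ above the baseline. For $M\varepsilon=O(1)$, the only regime in which your near-isotropy bound has content, pure noise satisfies the hypothesis only with probability exponentially small in $M$. Once $\varepsilon$ is large enough to admit noise ($M\varepsilon\gtrsim\ln M$), the bound is vacuous.

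Conversely, $\mathbf{x}$ has an everywhere-positive Gaussian density and $\psi(\mathbb{Z}_M,\cdot)=1/M$ at an impulse. So a matched signal of any energy lands within any $\varepsilon$ of $1/M$ with positive probability, and no deterministic lower bound on the realized $\psi$ exists.

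Your bound is true for $\psi(\mathbf{R})$. A matched group gives $E[\mathbf{F}_G]=\mathbf{R}$, and $\lambda_1(\mathbf{R}_s)\ge\Tr(\mathbf{R}_s)/K$ then yields $c=1/K-1/M$. This, not Proposition~\ref{prop:commute_kl} (which only supplies a shared eigenbasis), is where the concentration comes from. Transferring it to the realized statistic needs a concentration bound finer than the offset being certified. The $O(\|\mathbf{x}\|^2/\sqrt{|G|})$ bound of Theorem~\ref{thm:replacement}(iv) is relatively $O(1/\sqrt{M})$, so it cannot resolve offsets near the $(\ln M)/M$ null level.

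What you can prove is a detection statement. Take $\varepsilon$ from a quantile of the null distribution of $\max_{G\in\mathcal{G}_M}\psi(G,\mathbf{n})$. Then conclude ``no matched group'' only up to a miss probability, which is small when $(1/K-1/M)\Tr(\mathbf{R}_s)/\Tr(\mathbf{R})$ clearly exceeds that null level.

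\textbf{The gap in Step~3.} Step~3 does not follow either. The hypothesis speaks to signal concentration, not to whether $\mathbf{R}$ lies in some catalog commutant, and it is the latter that decides whether group averaging reduces variance without bias.

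Take white data, the very case the criterion is meant to flag. Then $\sigma^2\mathbf{I}_M$ lies in every commutant. By Theorem~\ref{thm:mle} the Reynolds projection of $\hat{\mathbf{R}}_L$ is the constrained maximum-likelihood estimator. As Remark~\ref{rem:gl_continuum} predicts for a flat spectrum, for $G=\mathbb{Z}_M$ its Frobenius mean-squared error is $M\sigma^4/L$, against $M^2\sigma^4/L$ for $\hat{\mathbf{R}}_L$. Likewise, a weak but matched signal that misses the threshold only for lack of SNR is better served by $(G,L)$ than by $(\{e\},L)$.

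So the $(G,L)$ continuum you invoke argues against dropping the group in these cases. The hypothesis also cannot separate them from structure lying outside every catalog commutant, where a mismatched group injects bias. Optimality of $G=\{e\}$ therefore cannot be derived. What you can prove is the weaker claim that $\hat{\mathbf{R}}_L$ is the safe fallback, consistent whichever case holds, when no catalog group can be certified as matched. State the conclusion in that form, relative to $\mathcal{G}_M$ as you already planned.
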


\begin{remark}[The $(G, L)$ Continuum]\label{rem:gl_continuum}
Theorem~\ref{thm:trivial} and the General Replacement Theorem together establish a continuous tradeoff between algebraic diversity and temporal diversity. The variance reduction afforded by a group is governed not by the raw order $|G|$ but by the \emph{effective dimension}
\begin{equation}\label{eq:deff_cov}
d_{\mathrm{eff}}(G) = \frac{M^2}{\dim \mathcal{C}(\rho)},
\end{equation}
where $\mathcal{C}(\rho) = \{\mathbf{A} : \rho(g)\mathbf{A} = \mathbf{A}\rho(g)\ \forall g\}$ is the commutant of the representation: the group average projects $\mathbf{x}\mathbf{x}^H$ onto $\mathcal{C}(\rho)$, so the number of effective independent components is $M^2/\dim\mathcal{C}(\rho)$. For a regular Abelian representation (e.g., the cyclic group $\mathbb{Z}_M$), $\dim\mathcal{C}(\rho) = M$ and $d_{\mathrm{eff}} = |G| = M$, so the two coincide; for non-regular or larger groups the commutant grows no further once the group exceeds the matched order, and $d_{\mathrm{eff}}$ saturates at $M$ rather than increasing with $|G|$ (consistent with PASE, Section~\ref{sec:pase}). For $L$ independent snapshots each processed with a matched group, the variance of the combined estimator satisfies
\begin{equation}\label{eq:gl_var}
\Var\bigl(\hat{\mathbf{R}}_{G,L}\bigr) \propto \frac{1}{d_{\mathrm{eff}}(G) \cdot L}.
\end{equation}
The proportionality constant is a spectrum-alignment factor set by how the signal energy distributes over the commutant coordinates: for Gaussian observations the per-entry variance-reduction factor realized at covariance $\mathbf{R}$ is the participation ratio $(\Tr\mathbf{R})^2/\Tr(\mathbf{R}^2) \le M$, which attains the ceiling $d_{\mathrm{eff}} = M$ exactly when the matched-basis spectrum is flat---in particular for white or noise-dominated data---and is smaller for spectrally concentrated signals. The two classical extremes are:
\begin{itemize}
\item $G = \{e\}$, $L \gg 1$: conventional temporal averaging, with $d_{\mathrm{eff}} = 1$, $\Var \propto 1/L$, and no algebraic structure exploited.
\item matched $G$ of order $M$, $L = 1$: pure algebraic diversity, with $d_{\mathrm{eff}} = M$, $\Var \propto 1/M$, and no temporal accumulation required.
\end{itemize}
Intermediate operating points use a matched group of order $|G| < M$ on each of $L > 1$ snapshots, trading temporal latency for reduced computational cost per snapshot. The $(G, L)$ continuum unifies conventional processing and algebraic diversity as endpoints of a single estimation framework, with the variance reduction determined by the product $d_{\mathrm{eff}}(G) \cdot L$ of algebraic and temporal diversity. Because $d_{\mathrm{eff}} \le M$, the algebraic factor saturates at the data dimension: enlarging the group beyond the matched order does not reduce variance further (and degrades the estimate when the added elements lie outside the commutant of $\mathbf{R}$).
\end{remark}

\subsection{Conditions for Subspace Recovery}

We identify two conditions that together ensure the group-averaged estimator yields the correct subspace decomposition.

\begin{condition}[Signal Equivariance]\label{cond:equivariance}
The signal $\mathbf{s}$ transforms predictably under the group action: there exists a known representation $\rho_s: G \to \GL(K, \mathbb{C})$ of $G$ on the signal parameter space such that the group action on the signal component is determined by the signal structure. Formally, $\mathbf{s}$ lies in a subspace that is invariant or decomposes into known irreducible representations under $\rho(G)$.
\end{condition}

\begin{condition}[Noise Ergodicity]\label{cond:ergodicity}
The noise distribution is invariant under the group action:
\begin{equation}
\rho(g)\mathbf{n} \sim \mathcal{CN}(\mathbf{0}, \sigma^2\mathbf{I}_M) \quad \text{for all } g \in G.
\end{equation}
\end{condition}

\begin{remark}
Condition~\ref{cond:ergodicity} is automatically satisfied for spatially white Gaussian noise under any unitary or permutation representation, since $\rho(g)\mathbf{n}$ has the same distribution as $\mathbf{n}$ when $\rho(g)$ is unitary and $\mathbf{n}$ is isotropic.
\end{remark}

\subsection{The Cayley Graph Construction}

A specific realization of the group-averaged estimator arises from the Cayley graph over a symmetry group.

\begin{definition}[Cayley Graph Autocorrelation Matrix]\label{def:cayley_matrix}
Let $\mathbf{x} = [x_0, \ldots, x_{M-1}]^T$ be an observation vector and $G$ a group acting on the index set $\{0, \ldots, M-1\}$. The Cayley graph autocorrelation matrix is
\begin{equation}\label{eq:cayley_matrix}
[\mathbf{F}_\circ]_{i,j} = x_{g_j(i)},
\end{equation}
where $g_j \in G$ is the $j$-th group element acting on index $i$.
\end{definition}

When $G = \mathbb{Z}_M$ (cyclic group of order $M$) acting by cyclic shifts, $[\mathbf{F}_\circ]_{i,j} = x_{(i+j) \bmod M}$, which is a circulant matrix. When $G = S_M$ (full symmetric group), the construction yields the complete Cayley graph adjacency matrix with edges colored by the observation values.

\begin{remark}[Consistency with Classical Spectral Analysis]
When $G = \mathbb{Z}_M$, the eigendecomposition of the circulant group-averaged estimator is the discrete Fourier transform, and the resulting spectral coefficients are the squared magnitudes of the DFT coefficients of the observation. In this case, the algebraic diversity framework reduces to classical Fourier spectral analysis, confirming consistency with known results. The contribution of the present work is not the cyclic case, which recovers the DFT, but the generalization to arbitrary finite groups, which yields provably optimal spectral decompositions (via the KL transform for $G = S_M$) that the DFT cannot achieve for signals whose covariance structure is not circulant.
\end{remark}

Previous work~\cite{thornton2005} established empirically that the spectrum of the Cayley graph adjacency matrix over the symmetric group is related to the KL spectrum. The following sections provide the rigorous theoretical foundation for this observation and its generalizations.

\section{The General Replacement Theorem}\label{sec:replacement}

\begin{theorem}[General Replacement Theorem]\label{thm:replacement}
Let $\mathbf{x} = \mathbf{s} + \mathbf{n} \in \mathbb{C}^M$ be a single observation satisfying the signal model~(\ref{eq:obs_model}), and let $G$ be a finite group with unitary representation $\rho: G \to U(M)$ satisfying Conditions~\ref{cond:equivariance} and~\ref{cond:ergodicity}. Then the group-averaged estimator $\mathbf{F}_G(\mathbf{x})$ defined in~(\ref{eq:group_avg}) satisfies:

\begin{enumerate}
\item[(i)] \textbf{(Decomposition)} $\mathbf{F}_G(\mathbf{x})$ decomposes as
\begin{equation}\label{eq:FG_decomp}
\mathbf{F}_G(\mathbf{x}) = \mathbf{F}_G(\mathbf{s}) + \mathbf{F}_G(\mathbf{n}) + \mathbf{C}_{sn},
\end{equation}
where $\mathbf{C}_{sn}$ is a cross-term satisfying $E[\mathbf{C}_{sn}] = \mathbf{0}$.

\item[(ii)] \textbf{(Signal concentration)} The expected signal component satisfies
\begin{equation}
E[\mathbf{F}_G(\mathbf{s})] = \frac{1}{|G|}\sum_{g \in G} \rho(g)\mathbf{R}_s\rho(g)^H,
\end{equation}
which, by Schur's lemma, block-diagonalizes according to the irreducible decomposition of $\rho$ and concentrates signal energy in at most $K$ blocks.

\item[(iii)] \textbf{(Noise uniformity)} The expected noise component satisfies
\begin{equation}
E[\mathbf{F}_G(\mathbf{n})] = \sigma^2\mathbf{I}_M.
\end{equation}

\item[(iv)] \textbf{(Subspace consistency)} For $\text{SNR} = \|\mathbf{s}\|^2/M\sigma^2 \gg 1$, the eigenvectors of $\mathbf{F}_G(\mathbf{x})$ associated with the $K$ largest eigenvalues converge to the signal subspace $\mathcal{U}_s$, and those associated with the $M - K$ smallest eigenvalues converge to $\mathcal{U}_n$.
\end{enumerate}
\end{theorem}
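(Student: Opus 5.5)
The plan is to expand the outer product in \eqref{eq:group_avg} bilinearly and handle each of the four parts in turn. Writing $\rho(g)\mathbf{x} = \rho(g)\mathbf{s} + \rho(g)\mathbf{n}$ and multiplying out gives
\begin{equation*}
\mathbf{F}_G(\mathbf{x}) = \mathbf{F}_G(\mathbf{s}) + \mathbf{F}_G(\mathbf{n}) + \mathbf{C}_{sn},
\end{equation*}
where $\mathbf{C}_{sn} = \frac{1}{|G|}\sum_{g}\bigl[\rho(g)\mathbf{s}\mathbf{n}^H\rho(g)^H + \rho(g)\mathbf{n}\mathbf{s}^H\rho(g)^H\bigr]$. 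Taking expectations, and using that $\mathbf{n}$ is zero-mean and independent of $\mathbf{s}$ (or that $\mathbf{s}$ is deterministic), makes each summand of $\mathbf{C}_{sn}$ vanish in mean. This gives (i).

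For (ii), linearity of expectation and the fact that $\rho(g)$ is deterministic give $E[\rho(g)\mathbf{s}\mathbf{s}^H\rho(g)^H] = \rho(g)\mathbf{R}_s\rho(g)^H$. The resulting twirl $\mathcal{T}(\mathbf{R}_s) = \frac{1}{|G|}\sum_g \rho(g)\mathbf{R}_s\rho(g)^H$ commutes with every $\rho(h)$, by reindexing $g \mapsto hg$ and using unitarity. So it lies in the commutant $\mathcal{C}(\rho)$ of Remark~\ref{rem:gl_continuum}. Schur's lemma then says that, in a basis adapted to the isotypic decomposition $\rho \cong \bigoplus_{\pi\in\Irr(G)} \pi^{\oplus m_\pi}$, any element of $\mathcal{C}(\rho)$ has the form $\bigoplus_\pi (\mathbf{B}_\pi \otimes \mathbf{I}_{\dim\pi})$. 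This is the claimed block diagonalization. For the count of blocks, I would invoke Condition~\ref{cond:equivariance}: $\mathcal{S}$ is $\rho(G)$-invariant or decomposes into known irreducibles, so $\mathcal{T}$ preserves $\mathcal{S}$, and $\mathcal{T}(\mathbf{R}_s)$ has range inside $\mathcal{S}$ with rank at most $K$. Since each nonzero block contributes rank at least one, at most $K$ blocks are nonzero.

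Part (iii) is immediate: $E[\rho(g)\mathbf{n}\mathbf{n}^H\rho(g)^H] = \sigma^2\rho(g)\rho(g)^H = \sigma^2\mathbf{I}_M$ by unitarity, or equivalently by Condition~\ref{cond:ergodicity}. Averaging over $g$ preserves this.

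Part (iv) is the main obstacle, because $\mathbf{F}_G(\mathbf{x})$ is built from a single realization and need not equal its expectation. I would first establish the population-level statement. By (i)--(iii), $E[\mathbf{F}_G(\mathbf{x})] = \mathcal{T}(\mathbf{R}_s) + \sigma^2\mathbf{I}_M$, whose top-$K$ eigenspace is exactly $\mathcal{S}$ when $\mathcal{S}$ is invariant. That eigenspace equals $\mathcal{U}_s$ because $\mathcal{R}(\mathbf{R}_s) = \mathcal{S}$ and $\mathcal{T}$ maps into $\mathcal{S}$ with full rank $K$ there; I would assume this nondegeneracy, or derive it from the equivariance condition. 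The eigengap is of order $\|\mathbf{s}\|^2/K$, while the remaining eigenvalues equal $\sigma^2$.

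Next I would treat the deviation $\mathbf{E} = \mathbf{F}_G(\mathbf{x}) - E[\mathbf{F}_G(\mathbf{x})]$ as a perturbation and apply the Davis--Kahan $\sin\Theta$ theorem. Bounding $\|\mathbf{E}\|$: the noise part satisfies $\|\mathbf{F}_G(\mathbf{n}) - \sigma^2\mathbf{I}\| \le \|\mathbf{n}\|^2 + \sigma^2 = O(M\sigma^2)$ with high probability. The cross term satisfies $\|\mathbf{C}_{sn}\| \le 2\|\mathbf{s}\|\|\mathbf{n}\| = O(\|\mathbf{s}\|\sqrt{M}\sigma)$. The signal part contributes nothing if $\mathbf{s}$ is treated as deterministic, in which case $\mathbf{R}_s$ is taken as $\mathbf{s}\mathbf{s}^H$ twirled. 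Dividing by the gap gives $\sin\Theta = O(\mathrm{SNR}^{-1/2} + \mathrm{SNR}^{-1})$ with $\mathrm{SNR} = \|\mathbf{s}\|^2/M\sigma^2$, which tends to zero as $\mathrm{SNR}\to\infty$. That yields convergence of the top-$K$ eigenvectors to $\mathcal{U}_s$, and the complementary eigenvectors converge to $\mathcal{U}_n = \mathcal{U}_s^\perp$.

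The delicate point is the gap when $K>1$ and the twirled signal has small eigenvalues within $\mathcal{S}$. I would state the result in terms of the minimum nonzero eigenvalue of $\mathcal{T}(\mathbf{R}_s)$, and interpret the SNR hypothesis as making that quantity dominate $M\sigma^2$.
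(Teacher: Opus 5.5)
Parts (i)--(iii) match the paper's proof. In (ii) your rank argument is more careful than the paper's. The paper's step ``$\mathbf{s}$ lies in a $K$-dimensional subspace, so at most $K$ isotypic components carry energy'' needs exactly the $\rho(G)$-invariance of $\mathcal{S}$ that you invoke, since a generic vector projects nontrivially onto every isotypic component.

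For (iv) your route is genuinely different. The paper argues at the level of $E[\mathbf{F}_G(\mathbf{x})]$. It then asserts $\|\mathbf{F}_G(\mathbf{x}) - E[\mathbf{F}_G(\mathbf{x})]\|_F = O(\|\mathbf{x}\|^2/\sqrt{|G|})$ via a matrix Hoeffding inequality over the $|G|$ summands. You instead condition on $\mathbf{s}$ and apply Davis--Kahan with the deterministic bounds $\|\mathbf{F}_G(\mathbf{n})\| \le \|\mathbf{n}\|^2$ and $\|\mathbf{C}_{sn}\| \le 2\|\mathbf{s}\|\|\mathbf{n}\|$. Yours is the more defensible choice, for two reasons. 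The summands $\rho(g)\mathbf{x}\mathbf{x}^H\rho(g)^H$ are functions of a single $\mathbf{x}$, not independent draws. And a deviation of order $\|\mathbf{x}\|^2/\sqrt{|G|}$ does not shrink relative to the signal eigenvalues as SNR grows, whereas your perturbation-to-gap ratio does.

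There is, however, a gap in where you put the nondegeneracy assumption. After conditioning on $\mathbf{s}$, the unperturbed matrix is $\mathcal{T}(\mathbf{s}\mathbf{s}^H) + \sigma^2\mathbf{I}_M$. The range of $\mathcal{T}(\mathbf{s}\mathbf{s}^H)$ is $\operatorname{span}\{\rho(g)\mathbf{s} : g \in G\}$, the $G$-submodule generated by the realized $\mathbf{s}$, and this need not be $\mathcal{S}$.
\begin{itemize}
\item Davis--Kahan for the top-$K$ eigenspace needs $\lambda_K(\mathcal{T}(\mathbf{s}\mathbf{s}^H))$ to dominate $M\sigma^2 + \|\mathbf{s}\|\sqrt{M}\sigma$. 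The ``minimum nonzero eigenvalue'' is the wrong quantity: if the rank is below $K$, no SNR level produces $K$ large eigenvalues.
\item The nondegeneracy you propose to assume for $\mathcal{T}(\mathbf{R}_s)$ is automatic, since $\mathcal{T}(\mathbf{R}_s) \succeq |G|^{-1}\mathbf{R}_s$ gives range all of $\mathcal{S}$. But it does not control a single realization. The fluctuation $\mathcal{T}(\mathbf{s}\mathbf{s}^H) - \mathcal{T}(\mathbf{R}_s)$ is of the same order as the gap, which is why you had to condition in the first place.
\item The condition that matters, $\operatorname{span}\{\rho(g)\mathbf{s}\} = \mathcal{S}$, cannot be derived from Condition~\ref{cond:equivariance}. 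A cyclic $G$-module contains each irreducible $\pi$ with multiplicity at most $\dim\pi$, so the condition fails for every $\mathbf{s}$ when $\mathcal{S}$ violates this. The simplest case is $G = \{e\}$ with $K \ge 2$, where $\mathbf{F}_{\{e\}}(\mathbf{x}) = \mathbf{x}\mathbf{x}^H$ has rank one.
\end{itemize}

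So (iv) needs an explicit hypothesis with three parts: multiplicities in $\mathcal{S}$ bounded by the irreducible dimensions, a generic realization $\mathbf{s}$, and the $\lambda_K$ dominance above as the precise meaning of $\mathrm{SNR} \gg 1$. With it, your Davis--Kahan argument goes through. The paper's proof does not supply this hypothesis either.
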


\begin{proof}
\textit{Part (i).} Expanding $\mathbf{x} = \mathbf{s} + \mathbf{n}$ in~(\ref{eq:group_avg}):
\begin{align}
\mathbf{F}_G(\mathbf{x}) &= \frac{1}{|G|}\sum_{g \in G} \rho(g)(\mathbf{s} + \mathbf{n})[\rho(g)(\mathbf{s} + \mathbf{n})]^H \nonumber \\
&= \mathbf{F}_G(\mathbf{s}) + \mathbf{F}_G(\mathbf{n}) + \mathbf{C}_{sn},
\end{align}
where $\mathbf{C}_{sn} = \frac{1}{|G|}\sum_{g}[\rho(g)\mathbf{s}][\rho(g)\mathbf{n}]^H + \frac{1}{|G|}\sum_{g}[\rho(g)\mathbf{n}][\rho(g)\mathbf{s}]^H$. Since $E[\mathbf{n}] = \mathbf{0}$ and $\mathbf{s}$ is deterministic (for a fixed realization), $E[\mathbf{C}_{sn}] = \mathbf{0}$.

\textit{Part (ii).} The signal component is $\mathbf{F}_G(\mathbf{s}) = \frac{1}{|G|}\sum_{g} \rho(g)\mathbf{s}\mathbf{s}^H\rho(g)^H$. This is a group average of the rank-one matrix $\mathbf{s}\mathbf{s}^H$ under the adjoint action $\mathbf{A} \mapsto \rho(g)\mathbf{A}\rho(g)^H$. By Schur's lemma, the group average $\frac{1}{|G|}\sum_{g} \rho(g)\mathbf{A}\rho(g)^H$ of any matrix $\mathbf{A}$ over a unitary representation block-diagonalizes according to the isotypic decomposition of $\rho$.

Specifically, decompose the representation space as $\mathbb{C}^M = \bigoplus_{\lambda \in \Irr(G)} V_\lambda^{\oplus m_\lambda}$, where $V_\lambda$ has dimension $d_\lambda$ and appears with multiplicity $m_\lambda$. The group average projects $\mathbf{s}\mathbf{s}^H$ onto each isotypic component, concentrating the signal energy in those components where $\mathbf{s}$ has nonzero projection. Since $\mathbf{s}$ lies in a $K$-dimensional subspace, at most $K$ isotypic components carry signal energy.

\textit{Part (iii).} Since $\rho(g)$ is unitary and $\mathbf{n} \sim \mathcal{CN}(\mathbf{0}, \sigma^2\mathbf{I})$, we have $\rho(g)\mathbf{n} \sim \mathcal{CN}(\mathbf{0}, \sigma^2\mathbf{I})$ for each $g$. Therefore:
\begin{align}
E[\mathbf{F}_G(\mathbf{n})] &= \frac{1}{|G|}\sum_{g \in G} E[\rho(g)\mathbf{n}\mathbf{n}^H\rho(g)^H] \nonumber \\
&= \frac{1}{|G|}\sum_{g \in G} \rho(g)\sigma^2\mathbf{I}\rho(g)^H = \sigma^2\mathbf{I}_M.
\end{align}

\textit{Part (iv).} Combining parts (i)--(iii), $E[\mathbf{F}_G(\mathbf{x})] = E[\mathbf{F}_G(\mathbf{s})] + \sigma^2\mathbf{I}_M$. The signal component has at most $K$ nonzero eigenvalues (in the isotypic blocks containing signal energy), each of magnitude scaling with $\|\mathbf{s}\|^2/|G|$ summed over the group elements that map into each block. The noise component contributes $\sigma^2$ uniformly across all eigenvalues. For $\text{SNR} \gg 1$, the signal eigenvalues dominate in their respective blocks, yielding the standard $K$ large / $(M-K)$ small eigenvalue separation.

Concentration of the finite-sample estimator around its expectation follows from a matrix Hoeffding inequality applied to the bounded summands $\rho(g)\mathbf{x}\mathbf{x}^H\rho(g)^H$, yielding $\|\mathbf{F}_G(\mathbf{x}) - E[\mathbf{F}_G(\mathbf{x})]\|_F = O(\|\mathbf{x}\|^2/\sqrt{|G|})$, which shrinks as $|G|$ grows.
\end{proof}

\begin{remark}[Role of Group Size]
Within the family of \emph{matched} groups (those whose action commutes with $\mathbf{R}$), enlarging the group raises the effective dimension $d_{\mathrm{eff}}$ and reduces estimator variance, analogous to the number of snapshots $L$ in temporal averaging (Remark~\ref{rem:gl_continuum}). At the lower extreme, the trivial group $G = \{e\}$ (order~$1$) recovers the rank-one outer product of conventional processing (Theorem~\ref{thm:trivial}). The benefit does not extend to arbitrary enlargement, however: once the group exceeds the matched order its commutant grows no further, so $d_{\mathrm{eff}}$ saturates at $M$, and adding elements that lie outside the commutant of $\mathbf{R}$ introduces bias that degrades the estimate. The symmetric group $S_M$ is the extreme case: the group-averaged estimator $\mathbf{F}_{S_M}$ collapses to the two-eigenvalue form~\eqref{eq:sm_expect} and is maximally uninformative (PASE, Section~\ref{sec:pase}; the ordering experiment, Section~\ref{sec:ordering}).
\end{remark}

\begin{theorem}[Maximum-Likelihood Characterization of the Group-Averaged Estimator]\label{thm:mle}
Let $\mathbf{x}_1, \ldots, \mathbf{x}_L$ be independent $\mathcal{CN}(\mathbf{0}, \mathbf{R})$ observations whose population covariance lies in the commutant $\mathcal{C}(\rho) = \{\mathbf{A} : \rho(g)\mathbf{A} = \mathbf{A}\rho(g)\ \text{for all } g \in G\}$ of the unitary representation $\rho$, and let $\hat{\mathbf{R}}_L = \frac{1}{L}\sum_{l=1}^{L} \mathbf{x}_l \mathbf{x}_l^H$ be the sample covariance. Then the maximum-likelihood estimator of $\mathbf{R}$ subject to $\mathbf{R} \in \mathcal{C}(\rho)$ is the Reynolds projection of the sample covariance onto the commutant,
\begin{equation}\label{eq:mle_reynolds}
\hat{\mathbf{R}}_{\mathrm{ML}} = \frac{1}{|G|}\sum_{g \in G} \rho(g)\,\hat{\mathbf{R}}_L\,\rho(g)^H .
\end{equation}
For a single observation $(L = 1)$ this is exactly the group-averaged estimator $\mathbf{F}_G(\mathbf{x}_1)$ of Definition~\ref{def:group_avg}. The order-two reflection case $G = \mathbb{Z}_2$ recovers the persymmetric maximum-likelihood estimator of Nitzberg~\cite{nitzberg1980}.
\end{theorem}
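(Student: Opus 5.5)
The plan is to write the constrained Gaussian log-likelihood and show that the Reynolds operator $\mathcal{P}(\mathbf{A}) = \frac{1}{|G|}\sum_{g}\rho(g)\mathbf{A}\rho(g)^H$ can be pulled inside it. For $L$ i.i.d.\ $\mathcal{CN}(\mathbf{0},\mathbf{R})$ samples, up to constants,
\begin{equation*}
\ell(\mathbf{R}) = -L\bigl[\log\det\mathbf{R} + \Tr(\mathbf{R}^{-1}\hat{\mathbf{R}}_L)\bigr].
\end{equation*}
We maximize $\ell$ over the set of Hermitian positive-definite matrices in $\mathcal{C}(\rho)$.

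The first step collects three elementary facts about $\mathcal{P}$, all following from unitarity of $\rho$ and reindexing $g \mapsto hg$:
\begin{itemize}
\item[(a)] $\mathcal{P}(\mathbf{A}) \in \mathcal{C}(\rho)$ for every $\mathbf{A}$.
\item[(b)] $\mathcal{P}(\mathbf{B}) = \mathbf{B}$ for $\mathbf{B}\in\mathcal{C}(\rho)$, so $\mathcal{P}$ is idempotent.
\item[(c)] $\mathcal{P}$ is self-adjoint for the trace inner product $\langle \mathbf{A},\mathbf{B}\rangle = \Tr(\mathbf{A}^H\mathbf{B})$. This holds because $\Tr(\mathbf{B}\rho(g)\mathbf{A}\rho(g)^H) = \Tr(\rho(g)^H\mathbf{B}\rho(g)\mathbf{A})$ and $g \mapsto g^{-1}$ is a bijection.
\end{itemize}
Hence $\mathcal{P}$ is the orthogonal projection onto $\mathcal{C}(\rho)$. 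It also preserves Hermitian positive semidefiniteness, since it is an average of congruences.

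The key step uses that the commutant is closed under inversion: if $\mathbf{R}\in\mathcal{C}(\rho)$ is invertible, then $\rho(g)\mathbf{R}^{-1} = \mathbf{R}^{-1}\rho(g)$. By (b) and (c), for any feasible $\mathbf{R}$,
\begin{equation*}
\Tr(\mathbf{R}^{-1}\hat{\mathbf{R}}_L) = \Tr\bigl(\mathcal{P}(\mathbf{R}^{-1})\hat{\mathbf{R}}_L\bigr) = \Tr\bigl(\mathbf{R}^{-1}\mathcal{P}(\hat{\mathbf{R}}_L)\bigr).
\end{equation*}
So on the feasible set the constrained likelihood coincides with $-L[\log\det\mathbf{R}+\Tr(\mathbf{R}^{-1}\mathbf{S})]$, where $\mathbf{S} = \mathcal{P}(\hat{\mathbf{R}}_L)$.

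The final step applies the classical fact that, for positive-definite $\mathbf{S}$, the unconstrained maximizer of $-\log\det\mathbf{R}-\Tr(\mathbf{R}^{-1}\mathbf{S})$ over positive-definite $\mathbf{R}$ is uniquely $\mathbf{R}=\mathbf{S}$. To see this, write the objective as $-\sum_i(\log\mu_i^{-1}\cdot(-1) \ldots)$; more precisely, set $\mathbf{T}=\mathbf{S}^{1/2}\mathbf{R}^{-1}\mathbf{S}^{1/2}$ with eigenvalues $\mu_i$, so the objective equals $\sum_i(\log\mu_i-\mu_i)$ plus a constant, which is maximized at $\mu_i=1$. By (a), $\mathbf{S}$ is feasible, so it is also the constrained maximizer, which gives~\eqref{eq:mle_reynolds}. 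Setting $L=1$ and expanding $\mathcal{P}(\mathbf{x}_1\mathbf{x}_1^H)$ reproduces Definition~\ref{def:group_avg} verbatim. For Nitzberg's case, take $G=\mathbb{Z}_2$ with $\rho$ equal to the exchange matrix $\mathbf{J}$. Complex persymmetry is the condition $\mathbf{J}\mathbf{R}^*\mathbf{J}=\mathbf{R}$, which involves conjugation. This either forces the real-valued setting, where $\mathcal{C}(\rho)$ is exactly the persymmetric class, or an antiunitary version of $\mathcal{P}$ giving $\tfrac12(\hat{\mathbf{R}}+\mathbf{J}\hat{\mathbf{R}}^*\mathbf{J})$. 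The same three projection properties hold for that real-linear operator, and I will note this explicitly.

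The main obstacle is positive definiteness of $\mathbf{S}$. For $L=1$, or generally $L<M$, the matrix $\mathcal{P}(\hat{\mathbf{R}}_L)$ may be singular. This is always so when $\mathbf{x}_1$ has zero component in some isotypic subspace, and it also happens when $\mathcal{C}(\rho)$ is large, e.g.\ $S_M$-type commutants, where $\mathcal{P}(\mathbf{x}\mathbf{x}^H)$ can still be rank-deficient within a block. In those cases the likelihood is unbounded and no maximizer exists.

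My first attempt is to argue that $\mathbf{S}\succ 0$ almost surely whenever $L\cdot d_{\mathrm{eff}}$ is large enough. For a regular Abelian representation, the $\mathcal{C}(\rho)$-block of $\mathbf{S}$ in the Fourier basis is $\diag(|\hat x_k|^2)$, which is almost surely positive. Where this fails, I will state the theorem under the hypothesis $\mathcal{P}(\hat{\mathbf{R}}_L)\succ0$ and interpret it as the supremum attained in the closure otherwise.
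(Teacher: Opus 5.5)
Your core argument is the paper's. Closure of $\mathcal{C}(\rho)$ under inversion turns $\Tr(\mathbf{R}^{-1}\hat{\mathbf{R}}_L)$ into $\Tr(\mathbf{R}^{-1}\mathbf{S})$ on the feasible set, and the unconstrained maximizer $\mathbf{S}$ is itself feasible. Phrasing the middle step as self-adjointness of $\mathcal{P}$ only repackages the paper's term-by-term cyclic-trace computation.

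Your two additions fix real omissions in the paper's proof. The paper silently assumes $\mathbf{S}\succ 0$. It also identifies the unitary commutant of $\mathbf{J}$, which is the centrosymmetric class $\mathbf{J}\mathbf{R}\mathbf{J}=\mathbf{R}$, with Nitzberg's persymmetric class $\mathbf{J}\mathbf{R}^*\mathbf{J}=\mathbf{R}$ (entrywise conjugate). Your antiunitary version does go through: $\mathbf{R}^{-1}$ inherits persymmetry, and $\Tr(\mathbf{R}^{-1}\mathbf{J}\hat{\mathbf{R}}_L^*\mathbf{J})=\overline{\Tr(\mathbf{R}^{-1}\hat{\mathbf{R}}_L)}=\Tr(\mathbf{R}^{-1}\hat{\mathbf{R}}_L)$.

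Your treatment of the degenerate case needs three corrections.

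\textbf{The $S_M$ example is backwards.} The permutation representation of $S_M$ has the two-dimensional commutant $\spn\{\mathbf{I},\mathbf{1}\mathbf{1}^T\}$, and $\mathcal{P}(\mathbf{x}\mathbf{x}^H)$ is almost surely positive definite already at $L=1$. Singularity comes from small groups with large multiplicities.

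\textbf{The criterion is per isotypic component, not a condition on $L\cdot d_{\mathrm{eff}}$.} Write $\rho\cong\bigoplus_\lambda \pi_\lambda\otimes\mathbf{I}_{m_\lambda}$. The Reynolds projection is then $\bigoplus_\lambda \mathbf{I}_{d_\lambda}\otimes\mathbf{B}_\lambda$, where $\mathbf{B}_\lambda$ is an $m_\lambda\times m_\lambda$ Gram matrix of $L d_\lambda$ vectors. So for $\mathbf{R}\succ 0$, $\mathbf{S}\succ 0$ almost surely if and only if $L d_\lambda\ge m_\lambda$ for every constituent $\lambda$. This holds at $L=1$ for the regular (cyclic) representation. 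In the reflection case, however, $\frac12(\mathbf{x}\mathbf{x}^H+\mathbf{J}\mathbf{x}\mathbf{x}^H\mathbf{J})$ and $\frac12(\mathbf{x}\mathbf{x}^H+\mathbf{J}\mathbf{x}^*\mathbf{x}^T\mathbf{J})$ both have rank at most $2$. Read as a maximum-likelihood claim, the statement's $L=1$ sentence therefore fails for $G=\mathbb{Z}_2$ whenever $M>2$. There the requirement is $L\ge\lceil M/2\rceil$, consistent with Nitzberg's factor-of-two saving.

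\textbf{Drop the ``supremum attained in the closure'' fallback.} If $\mathbf{S}$ is singular, the constrained likelihood is unbounded along $\mathbf{S}+\epsilon\mathbf{I}\in\mathcal{C}(\rho)$: $-\log\det(\mathbf{S}+\epsilon\mathbf{I})\to+\infty$ while the trace term stays below $M$. Moreover, maximizing sequences can converge to many different boundary points. Already for $G=\{e\}$, $L=1$, $M\ge 2$, every $c\,\mathbf{x}\mathbf{x}^H+\epsilon\mathbf{I}$ with $c>0$ drives the likelihood to $+\infty$ as $\epsilon\to 0$, so no boundary point is singled out. State the theorem under the hypothesis $\mathcal{P}(\hat{\mathbf{R}}_L)\succ 0$, and add the multiplicity condition for when this holds almost surely.
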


\begin{proof}
For zero-mean complex Gaussian data the log-likelihood of a positive-definite covariance $\mathbf{M}$ is, up to an additive constant,
\begin{equation}
\ell(\mathbf{M}) = -L\big[\log\det\mathbf{M} + \operatorname{tr}(\mathbf{M}^{-1}\hat{\mathbf{R}}_L)\big].
\end{equation}
We maximize $\ell$ over $\mathbf{M} \in \mathcal{C}(\rho)$. If $\mathbf{M} \in \mathcal{C}(\rho)$, then $\mathbf{M}^{-1} \in \mathcal{C}(\rho)$ as well, so $\rho(g)^H \mathbf{M}^{-1} \rho(g) = \mathbf{M}^{-1}$ for every $g \in G$. Hence, for each $g$,
\begin{align}
\operatorname{tr}\!\big(\mathbf{M}^{-1}\rho(g)\hat{\mathbf{R}}_L\rho(g)^H\big)
&= \operatorname{tr}\!\big(\rho(g)^H\mathbf{M}^{-1}\rho(g)\,\hat{\mathbf{R}}_L\big) \nonumber \\
&= \operatorname{tr}(\mathbf{M}^{-1}\hat{\mathbf{R}}_L),
\end{align}
and averaging over $G$ gives $\operatorname{tr}(\mathbf{M}^{-1}\hat{\mathbf{R}}_L) = \operatorname{tr}(\mathbf{M}^{-1}\mathbf{S})$, where $\mathbf{S} = \frac{1}{|G|}\sum_{g \in G} \rho(g)\hat{\mathbf{R}}_L\rho(g)^H$ is the Reynolds projection~\eqref{eq:mle_reynolds}. The restricted log-likelihood therefore depends on the data only through $\mathbf{S}$:
\begin{equation}
\ell(\mathbf{M}) = -L\big[\log\det\mathbf{M} + \operatorname{tr}(\mathbf{M}^{-1}\mathbf{S})\big], \qquad \mathbf{M} \in \mathcal{C}(\rho).
\end{equation}
The unconstrained maximizer of $-[\log\det\mathbf{M} + \operatorname{tr}(\mathbf{M}^{-1}\mathbf{S})]$ over all positive-definite $\mathbf{M}$ is $\mathbf{M} = \mathbf{S}$. The Reynolds projection is a conditional expectation onto the commutant, so $\mathbf{S} \in \mathcal{C}(\rho)$; the unconstrained maximizer already satisfies the constraint and is therefore the constrained maximizer. Thus $\hat{\mathbf{R}}_{\mathrm{ML}} = \mathbf{S}$, and setting $L = 1$ yields $\mathbf{F}_G(\mathbf{x}_1)$.
\end{proof}

\begin{corollary}[Sample Complexity of Group-Constrained Estimation]\label{cor:sample_complexity_signal}
Let $\mathbf{R}$ be the population covariance of the observation model~(\ref{eq:obs_model}), and let $\varepsilon > 0$ be a target estimation accuracy in the Frobenius norm.

\begin{enumerate}
\item[(i)] \textbf{Unconstrained estimation:} The sample covariance~(\ref{eq:sample_cov}) satisfies $E[\|\hat{\mathbf{R}}_L - \mathbf{R}\|_F^2] \leq C\|\mathbf{R}\|_F^2 M / L$ for a constant $C > 0$, requiring $L = \Omega(M/\varepsilon^2)$ snapshots to achieve $\varepsilon$-accuracy.

\item[(ii)] \textbf{Group-constrained estimation:} When $\mathbf{R}$ commutes with the group action (a matched group, equivalently $\mathbf{A}_G \mathbf{R} = \mathbf{R}\mathbf{A}_G$ for the Cayley graph adjacency matrix), the group-averaged estimator $\mathbf{F}_G(\mathbf{x})$ from a single observation is unbiased and concentrates as $E[\|\mathbf{F}_G(\mathbf{x}) - E[\mathbf{F}_G(\mathbf{x})]\|_F^2] \leq C'\|\mathbf{x}\|^4 / d_{\mathrm{eff}}(G)$ for a constant $C' > 0$, where $d_{\mathrm{eff}}(G) = M^2/\dim\mathcal{C}(\rho) \leq M$ is the effective dimension~\eqref{eq:deff_cov}. For a matched order-$M$ regular (e.g.\ cyclic) group, $d_{\mathrm{eff}} = M$ and $\varepsilon$-accuracy requires $d_{\mathrm{eff}} = \Omega(1/\varepsilon^2)$, attained at $d_{\mathrm{eff}} = M$ from a single observation. The $M$ group-transformed copies are deterministic functions of one observation and are \emph{not} statistically independent; the reduction reflects projection onto the $d_{\mathrm{eff}}$-dimensional commutant and therefore saturates at $d_{\mathrm{eff}} \le M$ rather than decreasing with $|G|$.
\end{enumerate}

The ratio of the unconstrained to group-constrained sample complexity is $\Theta(M)$, representing the information-theoretic advantage of exploiting the algebraic structure of the signal covariance. For a uniform linear array with $M = 64$ antennas, this represents a $64\times$ reduction in the number of observations required for a given estimation accuracy.
\end{corollary}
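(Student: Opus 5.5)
Part~(i) is the classical Wishart second-moment computation, and part~(ii) combines the Reynolds (orthogonal) projection of Theorem~\ref{thm:mle} with a bound on the second moment of that projection. In both parts the Frobenius error is written as a sum of squared entries in a suitable basis, and each entry's variance is controlled by a fourth-moment identity.

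\textbf{Part~(i).} With $\mathbf{x}(t)\sim\mathcal{CN}(\mathbf{0},\mathbf{R})$ i.i.d., the Isserlis (Wick) identity for circular complex Gaussians gives
$E|[\mathbf{x}\mathbf{x}^H]_{ij}-R_{ij}|^2 = R_{ii}R_{jj}$.
Averaging $L$ independent snapshots and summing over $(i,j)$ yields
$E\|\hat{\mathbf{R}}_L-\mathbf{R}\|_F^2 = (\Tr\mathbf{R})^2/L$.
Since $(\Tr\mathbf{R})^2\le M\,\Tr(\mathbf{R}^2)=M\|\mathbf{R}\|_F^2$ by Cauchy--Schwarz, this gives the stated bound with $C=1$. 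For the lower bound $L=\Omega(M/\varepsilon^2)$, I will take $\mathbf{R}$ noise-dominated (near $\sigma^2\mathbf{I}$), where Cauchy--Schwarz is tight. Interpreting $\varepsilon$ as relative accuracy $\|\cdot\|_F\le\varepsilon\|\mathbf{R}\|_F$ then gives the claim.

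\textbf{Part~(ii), unbiasedness.} For a matched group, $\rho(g)\mathbf{R}\rho(g)^H=\mathbf{R}$, so taking expectations inside~(\ref{eq:group_avg}) gives $E[\mathbf{F}_G(\mathbf{x})]=\mathbf{R}$. Here I use linearity together with parts~(ii)--(iii) of Theorem~\ref{thm:replacement}, or directly the Gaussian model. The equivalence with $\mathbf{A}_G\mathbf{R}=\mathbf{R}\mathbf{A}_G$ I will only note in passing: for the regular representation, commuting with every $\rho(g)$ is what ``matched'' means.

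\textbf{Part~(ii), concentration.} Let $\mathcal{P}$ denote the Reynolds operator. It is the orthogonal projection, in the Frobenius inner product, onto $\mathcal{C}(\rho)$, so
\[
\mathbf{F}_G(\mathbf{x})-\mathbf{R}=\mathcal{P}(\mathbf{x}\mathbf{x}^H-\mathbf{R}).
\]
Choose an orthonormal basis $\{\mathbf{B}_k\}_{k=1}^{D}$ of $\mathcal{C}(\rho)$, where $D=\dim\mathcal{C}(\rho)$. Then the error equals $\sum_k|\langle \mathbf{B}_k,\mathbf{x}\mathbf{x}^H\rangle-\langle\mathbf{B}_k,\mathbf{R}\rangle|^2$.

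The key step is to take the basis adapted to the matched eigenbasis. For the regular abelian case these are rank-one projectors $\mathbf{u}_k\mathbf{u}_k^H$, and each coordinate is $|\mathbf{u}_k^H\mathbf{x}|^2$. This is a scaled exponential variable whose variance is $\lambda_k^2$. The expected error is therefore $\sum_k\lambda_k^2=\Tr(\mathbf{R}^2)$.

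To express this in the stated form, I will compare against the scale $\|\mathbf{x}\|^4$, with $E\|\mathbf{x}\|^4\approx(\Tr\mathbf{R})^2$. The ratio $\Tr(\mathbf{R}^2)/(\Tr\mathbf{R})^2$ is at most $1$ and at least $1/M=D/M^2=1/d_{\mathrm{eff}}$, matching the participation-ratio discussion in Remark~\ref{rem:gl_continuum}. The bound $C'\|\mathbf{x}\|^4/d_{\mathrm{eff}}$ thus holds with $C'$ absorbing the spectrum-alignment factor. I read $\|\mathbf{x}\|^4$ as its expectation, or equivalently condition on the scale. For nonabelian or non-regular $\rho$, I will use the isotypic decomposition: $\mathcal{C}(\rho)\cong\bigoplus_\lambda M_{m_\lambda}(\mathbb{C})$, and the same Wick computation applies blockwise, giving an error of order $D$ times the per-coordinate scale.

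\textbf{Main obstacle.} The delicate step is honest bookkeeping of the constant $C'$. The sharp quantity is $\Tr(\mathbf{R}^2)$, not $\|\mathbf{x}\|^4/M$, and these agree up to the factor $M\Tr(\mathbf{R}^2)/(\Tr\mathbf{R})^2\in[1,M]$. A spectrally concentrated $\mathbf{R}$ can therefore lose up to a factor $M$, exactly as the Remark warns.

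I will first prove the clean identity $E\|\mathbf{F}_G-\mathbf{R}\|_F^2=\Tr(\mathbf{R}^2)$ in the regular abelian case. I will then state the $1/d_{\mathrm{eff}}$ bound under the flat-spectrum (noise-dominated) normalization, where $C'$ is an absolute constant.

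With that identity in hand, the $\Theta(M)$ ratio follows by comparing $(\Tr\mathbf{R})^2/L$ with $\Tr(\mathbf{R}^2)\approx(\Tr\mathbf{R})^2/M$ in the same regime. That comparison gives an equivalent snapshot count of $L\approx M$ from one observation.
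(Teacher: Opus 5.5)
Your proof is correct for Gaussian $\mathbf{x}$ (the setting of Remark~\ref{rem:gl_continuum} and Theorem~\ref{thm:mle}), but it follows a different route from the paper.

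The paper argues by citation. Part~(i) is referred to the bounds of~\cite{vershynin2012}. Part~(ii) is referred to the matrix Hoeffding bound $O(\|\mathbf{x}\|^2/\sqrt{|G|})$ in the proof of Theorem~\ref{thm:replacement}(iv), and the $O(1/d_{\mathrm{eff}})$ rate is then asserted from a dimension count on $\mathcal{C}(\rho)$. You instead compute both second moments exactly, using Wick's identity and the fact that the Reynolds operator is the Frobenius-orthogonal projection onto $\mathcal{C}(\rho)$.

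This buys a good deal. The Hoeffding step treats the summands $\rho(g)\mathbf{x}\mathbf{x}^H\rho(g)^H$ as independent, although all are functions of one $\mathbf{x}$, and it predicts decay in $|G|$. Your projection argument gives the saturation at the commutant dimension directly. Your blockwise sketch also closes for general $\rho$. In isotypic coordinates write $\mathbf{R}=\bigoplus_\lambda\mathbf{I}_{d_\lambda}\otimes\mathbf{S}_\lambda$; the same computation then gives $E\|\mathbf{F}_G(\mathbf{x})-\mathbf{R}\|_F^2=\sum_\lambda(\Tr\mathbf{S}_\lambda)^2$. This equals $(\Tr\mathbf{R})^2/d_{\mathrm{eff}}$ exactly when $\mathbf{R}=\sigma^2\mathbf{I}_M$.

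In the regular Abelian case your two identities make the error ratio exactly the participation ratio $(\Tr\mathbf{R})^2/\Tr(\mathbf{R}^2)\in[1,M]$. This shows that four parts of the statement hold only for flat spectra: the $\Omega(M/\varepsilon^2)$ requirement, an absolute $C'$, the $\Theta(M)$ ratio, and the $64\times$ figure. The paper's proof leaves this implicit.

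State one consequence outright. With $\varepsilon$ measured relative to $\|\mathbf{R}\|_F$, as in your part~(i), your identity says a single group-averaged observation has relative mean-squared error exactly $1$. So ``$\varepsilon$-accuracy at $d_{\mathrm{eff}}=M$ from a single observation'' holds only under the $E\|\mathbf{x}\|^4$ normalization. The consistent claim is your closing one: averaging $L$ group-averaged snapshots gives error $\Tr(\mathbf{R}^2)/L$, hence $L=O(1/\varepsilon^2)$ against $\Theta(M/\varepsilon^2)$ for the unconstrained estimator.

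The paper's route is nominally distribution-free. For non-Gaussian $\mathbf{s}$ you would replace Wick's identity by a fourth-moment (kurtosis) bound, which affects only the constants for bounded-kurtosis signals.
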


\begin{proof}
Part~(i) follows from standard bounds on the convergence rate of the sample covariance in the Frobenius norm~\cite{vershynin2012}, where the factor of $M$ arises from the $M^2$ free parameters of the unconstrained $M \times M$ covariance matrix.

Part~(ii) follows from the concentration inequality in the proof of Theorem~\ref{thm:replacement}, part~(iv). The group-averaged estimator constrains the covariance estimate to lie in the commutant $\mathcal{C}(\rho)$ of the representation, of dimension $\dim\mathcal{C}(\rho)$; the effective number of independent components is then $d_{\mathrm{eff}} = M^2/\dim\mathcal{C}(\rho) \le M$. The variance decreases as $O(1/d_{\mathrm{eff}})$, which for a matched order-$M$ regular Abelian group equals $O(1/M)$. The group-transformed copies are deterministic functions of the single observation rather than independent draws, so the reduction saturates at $d_{\mathrm{eff}} \le M$ and does not continue to decrease with $|G|$. When the commutativity (matched-group) condition holds, no bias is introduced by the constraint, and the $\varepsilon$-accuracy requirement depends on $d_{\mathrm{eff}}$, not on $M$.
\end{proof}

\begin{remark}[The persymmetric precursor and the matched-group gain]\label{rem:nitzberg}
The order-two case of Theorem~\ref{thm:mle} is classical. Nitzberg~\cite{nitzberg1980} derived the persymmetric maximum-likelihood covariance estimator for adaptive array processing and showed that it reduces the number of snapshots required for a given accuracy by a factor approaching two. In the present terms this is the matched-group gain $d_{\mathrm{eff}}(G) = M^2/\dim\mathcal{C}(\rho)$ of Corollary~\ref{cor:sample_complexity_signal} evaluated at the reflection group $G = \mathbb{Z}_2$, for which $d_{\mathrm{eff}} \to 2$. The symmetrically spaced array that justifies his construction makes this reflection symmetry exact. His windowed observations also produce a Toeplitz covariance, and a Toeplitz covariance is asymptotically circulant, so the same data carries the cyclic structure of the order-$M$ group $\mathbb{Z}_M$, whose matched-group gain is $d_{\mathrm{eff}} \to M$. The reflection he exploits is contained in this cyclic structure: a symmetric circulant covariance is automatically persymmetric, so the commutant of the product group $\mathbb{Z}_2 \times \mathbb{Z}_M$ coincides with that of $\mathbb{Z}_M$, and the gain is $M$ rather than $2M$. The order-two reflection is thus the smallest component of the symmetry present in the estimator; the cyclic structure, exact in the stationary limit and matched in the closest-group sense of Section~\ref{sec:blind_matching} at finite section length, supplies a group gain that grows with the array. Figure~\ref{fig:deff} confirms $d_{\mathrm{eff}}(\mathbb{Z}_2) \to 2$ and $d_{\mathrm{eff}}(\mathbb{Z}_M) \to M$ numerically.
\end{remark}

\begin{figure}[t]
\centering
\includegraphics[width=\columnwidth]{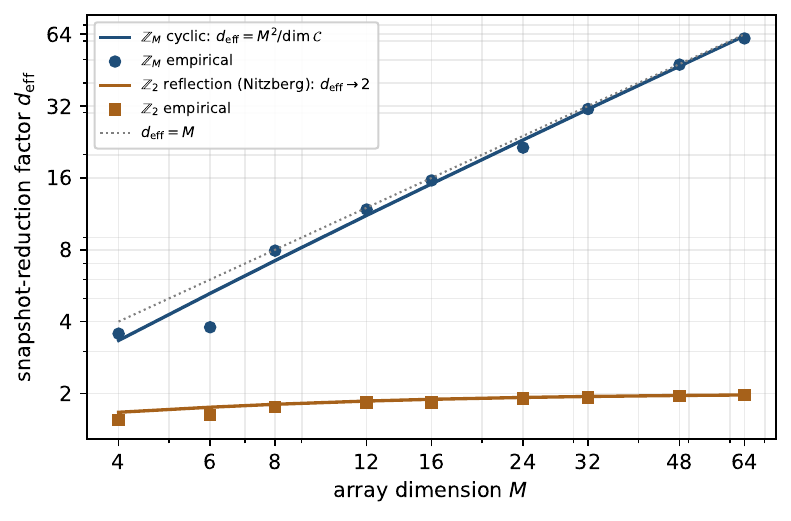}
\caption{Matched-group gain (snapshot-reduction factor) versus array dimension $M$ for a population covariance placed in each group commutant, measured as the ratio of unconstrained to group-constrained covariance mean-squared error over Monte Carlo trials. The reflection group $\mathbb{Z}_2$ (Nitzberg's persymmetric case) approaches $d_{\mathrm{eff}} = 2$; the cyclic group $\mathbb{Z}_M$ approaches $d_{\mathrm{eff}} = M$. Solid curves are the prediction $d_{\mathrm{eff}} = M^2/\dim\mathcal{C}(\rho)$ of Corollary~\ref{cor:sample_complexity_signal}; markers are empirical.}
\label{fig:deff}
\end{figure}

\section{Optimality of the Symmetric Group}\label{sec:optimality}

We now establish that the KL transform is the optimal linear decomposition for algebraic diversity and that the matched group attains it, and we clarify the precise sense in which the symmetric group $S_M$ enters through its Cayley graph spectral construction.

\subsection{The KL Optimality Chain}

The Karhunen--Lo\`eve transform~\cite{karhunen1946} is optimal among all linear transforms in three precise senses:
\begin{enumerate}
\item[(P1)] \textbf{Decorrelation:} KL components are mutually uncorrelated (orthogonal).
\item[(P2)] \textbf{Variance concentration:} The first $K$ KL components capture more variance than the first $K$ components of any other orthogonal decomposition.
\item[(P3)] \textbf{Reconstruction:} KL minimizes mean squared error for any fixed truncation order.
\end{enumerate}

These properties are classical~\cite{karhunen1946,loeve1955,hotelling1933} and characterize the KL transform uniquely (up to ordering of equal-variance components).

\subsection{Connection to Cayley Graph Spectra}

The following result, established empirically in~\cite{thornton2005} and proven formally herein, connects the Cayley graph spectrum to the KL spectrum.

\begin{proposition}[CG--KL Spectral Equivalence~\cite{thornton2005}]\label{prop:cg_kl}
Let $\mathbf{F}_\circ$ be the Cayley graph autocorrelation matrix (Definition~\ref{def:cayley_matrix}) constructed using the symmetric group $S_M$ with composition as the group operation and the observation vector as the coloring function. The eigenvalues of $\mathbf{F}_\circ$ are equivalent to the KL spectral coefficients of the discrete function represented by the observation.
\end{proposition}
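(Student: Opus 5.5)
The plan is to make both sides of the claimed equivalence into objects living in the group algebra $\mathbb{C}[S_M]$, and then to compare their spectra through the Fourier transform on $S_M$.

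\textbf{Step 1: model $\mathbf{F}_\circ$ as a convolution operator.} First I fix the reading of Definition~\ref{def:cayley_matrix} for $G=S_M$. The observation is lifted to a coloring function $f:S_M\to\mathbb{C}$ by setting $f(g)=x_{g(0)}$; more generally, $f(g)$ is the color of the edge generated by $g$. The Cayley graph autocorrelation matrix is then the $|S_M|\times|S_M|$ edge-colored adjacency matrix
\begin{equation}
[\mathbf{F}_\circ]_{h,k}=f(h^{-1}k), \qquad h,k\in S_M .
\end{equation}
This matrix is right convolution by $f$. Equivalently, $\mathbf{F}_\circ=\sum_{g}f(g)\,\mathbf{R}_{\mathrm{reg}}(g)$, where $\mathbf{R}_{\mathrm{reg}}$ is the right regular representation. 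The $M\times M!$ array of Definition~\ref{def:cayley_matrix} is a row-selection of this matrix, so its spectral content is inherited from it.

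\textbf{Step 2: spectrum of $\mathbf{F}_\circ$.} I decompose the regular representation as $\bigoplus_{\lambda\in\Irr(S_M)}V_\lambda^{\oplus d_\lambda}$. By Schur's lemma, $\mathbf{F}_\circ$ is unitarily block-diagonalized by the Fourier basis of $S_M$. Each block equals $\hat f(\lambda)=\sum_g f(g)\rho_\lambda(g)$ and is repeated $d_\lambda$ times. Hence the eigenvalues of $\mathbf{F}_\circ$ are exactly the eigenvalues of the matrices $\hat f(\lambda)$, each with multiplicity $d_\lambda$.

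\textbf{Step 3: the KL spectral coefficients.} I take the KL coefficients of the discrete function $f$ to be the eigenvalues of its autocorrelation (covariance) operator over the group ensemble, with kernel
\begin{equation}
\mathbf{K}_{h,k}=\frac{1}{|S_M|}\sum_{g}f(gh)\overline{f(gk)} .
\end{equation}
This kernel depends only on $h^{-1}k$, so $\mathbf{K}$ is invariant under the regular action. The same Schur argument used in the proof of Theorem~\ref{thm:replacement}, part~(ii), shows that $\mathbf{K}$ is diagonalized by the same Fourier basis, with blocks proportional to $\hat f(\lambda)\hat f(\lambda)^H$. Since $\mathbf{K}$ and $\mathbf{F}_\circ$ commute and share an eigenbasis, the KL eigenbasis of $f$ is the Cayley graph eigenbasis. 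This is the commutativity route that the paper formalizes later as the Commutativity--KL Equivalence. Combining with Step~2 gives $\mathbf{K}=\frac{1}{|S_M|}\mathbf{F}_\circ\mathbf{F}_\circ^H$, so the KL coefficients are the normalized squared singular values of $\mathbf{F}_\circ$.

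\textbf{Step 4: from singular values to eigenvalues, the main obstacle.} Passing from squared singular values to the eigenvalues of $\mathbf{F}_\circ$ themselves is where I expect the real difficulty. The two agree exactly, up to the modulus-squared and normalization convention, when each $\hat f(\lambda)$ is normal. I will therefore aim to prove normality in the setting of~\cite{thornton2005}.

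My first attempt will use the symmetric (undirected) Cayley graph. There the generating coloring is inverse-closed with $f(g^{-1})=\overline{f(g)}$, which makes $\mathbf{F}_\circ$ Hermitian. Every $\hat f(\lambda)$ is then Hermitian, and the eigenvalues of $\mathbf{F}_\circ$ coincide with the KL spectrum in the sense $\lambda_k(\mathbf{K})=\lambda_k(\mathbf{F}_\circ)^2/|S_M|$, with identical eigenvectors.

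If the coloring is not inverse-closed, I will instead state the equivalence in the form that holds in general. In that form, the spectrum of $\mathbf{F}_\circ\mathbf{F}_\circ^H$ (equivalently, the singular spectrum of $\mathbf{F}_\circ$) equals the KL spectrum, and the eigenvector bases agree.
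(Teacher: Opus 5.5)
The paper does not prove this proposition. The surrounding text says it is ``proven formally herein,'' but no proof follows: it is cited from~\cite{thornton2005}, where it was observed empirically, and then used as a black box in Theorem~\ref{thm:optimality}. The nearest formal result, Proposition~\ref{prop:commute_kl}, concerns the estimator $\mathbf{F}_G$. Remark~\ref{rem:fo_vs_fg} says $\mathbf{F}_G$ diverges from $\mathbf{F}_\circ$ for $S_M$, so the ``commutativity route'' you invoke does not transfer to $\mathbf{F}_\circ$. Your argument therefore has to stand on its own, and it has concrete gaps.

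\textbf{Step~3 assumes what Step~4 must prove.} You say $\mathbf{K}$ and $\mathbf{F}_\circ$ ``commute and share an eigenbasis.'' For non-Abelian $S_M$, however, the Fourier basis only block-diagonalizes, into $d_\lambda\times d_\lambda$ blocks. The two operators share an eigenbasis iff every $\hat f(\lambda)$ is normal, which is exactly what Step~4 still has to prove. (Also, with your kernel $\mathbf{K}=|S_M|^{-1}\,\overline{\mathbf{F}_\circ^H\mathbf{F}_\circ}$, not $|S_M|^{-1}\mathbf{F}_\circ\mathbf{F}_\circ^H$; left and right translations differ on $S_M$.)

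\textbf{The Hermitian branch of Step~4 is unavailable.} With the coloring you fixed in Step~1, $f(g)=x_{g(0)}$, the condition $f(g^{-1})=\overline{f(g)}$ can be tested on the transpositions $(0\,i)$ and on permutations with $g(0)=i$, $g(j)=0$. It forces $\mathbf{x}$ real with $x_1=\cdots=x_{M-1}$.

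\textbf{Normality fails too.} Since $f$ is right-invariant under the stabilizer $H$ of $0$, $\hat f(\lambda)=\hat f(\lambda)P^H_\lambda$. By Frobenius reciprocity only the trivial and standard irreducibles survive. On the standard one, realized on $\mathbf{1}^\perp$,
$\hat f=(M-2)!\,M\,(\mathbf{x}-\bar x\mathbf{1})(\mathbf{e}_0-\tfrac{1}{M}\mathbf{1})^T$, with $\bar x=\frac{1}{M}\sum_i x_i$. Its eigenvalue is $(M-2)!\,(Mx_0-s_1)$ with $s_1=\sum_i x_i$. By Cauchy--Schwarz this is strictly smaller in modulus than its singular value $(M-2)!\sqrt{M(M-1)}\,\|\mathbf{x}-\bar x\mathbf{1}\|$ unless $x_1=\cdots=x_{M-1}$. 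So the eigenvalue form of the claim is false in your model, and your singular-value fallback is a different statement.

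\textbf{Even the fallback lacks the content the paper needs.} Apart from $M!-M$ eigenvalues that vanish identically, the spectrum of your $\mathbf{K}$ is exactly $(M-1)!$ times the two-point spectrum $\{|s_1|^2/M,\ \|\mathbf{x}-\bar x\mathbf{1}\|^2/(M-1)\}$ of the $S_M$ average~\eqref{eq:sm_expect}. That is the maximally uninformative collapse which Theorem~\ref{thm:optimality} and Remark~\ref{rem:fo_vs_fg} explicitly contrast with $\mathbf{F}_\circ$. It is not the KL spectrum of $\mathbf{R}$ that the optimality argument invokes. Your Step~3 definition of the ``KL coefficients'' makes the equivalence nearly tautological, and in this model it says nothing about KL optimality.

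\textbf{The missing ingredient} is a precise specification of two things:
\begin{enumerate}
\item a square matrix $\mathbf{F}_\circ$ (Definition~\ref{def:cayley_matrix} with $G=S_M$ gives an $M\times M!$ array, which has no eigenvalues, and a row selection does not ``inherit'' a spectrum);
\item a notion of the KL spectrum of the observation under which both sides are non-degenerate and provably equal.
\end{enumerate}
Neither your proposal nor the paper supplies this.
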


\subsection{Commutativity and the KL Basis}

The following result makes explicit the chain of equivalences that connects the commutativity condition to the KL spectral decomposition. It is stated here as a named proposition because it serves as the linchpin between the group-averaged estimator (which is constructed from a single observation and a group) and the KL transform (which is optimal among all linear transforms).

\begin{proposition}[Commutativity--KL Equivalence]\label{prop:commute_kl}
Let $\mathbf{F}_G$ be the group-averaged estimator constructed from an observation $\mathbf{x}$ and a finite group $G$, and let $\mathbf{R}$ be the population covariance matrix of the signal model. Suppose both $\mathbf{F}_G$ and $\mathbf{R}$ are Hermitian. Then the following are equivalent:
\begin{enumerate}
\item[(C1)] \textbf{Commutativity:} $\mathbf{F}_G \mathbf{R} = \mathbf{R} \mathbf{F}_G$.
\item[(C2)] \textbf{Simultaneous diagonalizability:} There exists a single unitary matrix $\mathbf{U}$ such that $\mathbf{U}^H \mathbf{F}_G \mathbf{U}$ and $\mathbf{U}^H \mathbf{R} \mathbf{U}$ are both diagonal.
\item[(C3)] \textbf{Shared KL eigenvector basis:} The columns of $\mathbf{U}$ are simultaneously eigenvectors of $\mathbf{F}_G$ and eigenvectors of $\mathbf{R}$. Since the eigenvectors of $\mathbf{R}$ are, by definition, the KL basis vectors, the group-averaged estimator $\mathbf{F}_G$ is diagonalized by the KL basis.
\end{enumerate}
When these equivalent conditions hold, the eigenvalues of $\mathbf{F}_G$ in the shared basis are $|\mathbf{u}_k^H \mathbf{x}|^2$ (the squared magnitudes of the KL coefficients of the observation), and the eigenvalues of $\mathbf{R}$ are the KL spectral coefficients $\lambda_1, \ldots, \lambda_M$.
\end{proposition}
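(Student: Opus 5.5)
The proof runs the cycle (C1)$\Rightarrow$(C2)$\Rightarrow$(C3)$\Rightarrow$(C1). After that we compute the eigenvalues in the shared basis. Everything rests on the classical fact that commuting Hermitian (hence normal) matrices are simultaneously unitarily diagonalizable. Nothing about the group structure of $\mathbf{F}_G$ is used beyond its Hermitian property, which holds automatically by Definition~\ref{def:group_avg}, since $\mathbf{F}_G$ is a sum of outer products.

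\textbf{(C1)$\Rightarrow$(C2).} Diagonalize $\mathbf{R}=\sum_\mu \mu\,\mathbf{P}_\mu$ by its spectral projectors onto the distinct eigenspaces $E_\mu$. Commutativity gives $\mathbf{F}_G\mathbf{P}_\mu=\mathbf{P}_\mu\mathbf{F}_G$, so each $E_\mu$ is invariant under $\mathbf{F}_G$. The restriction $\mathbf{F}_G|_{E_\mu}$ is Hermitian, so we can pick an orthonormal eigenbasis of it inside each $E_\mu$. Concatenating these bases gives a unitary $\mathbf{U}$ that diagonalizes both matrices.

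\textbf{(C2)$\Rightarrow$(C3).} The columns of $\mathbf{U}$ are eigenvectors of both matrices, by reading off the diagonal forms. Since the eigenvectors of $\mathbf{R}$ are by definition a KL basis, $\mathbf{F}_G$ is diagonalized by a KL basis.

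\textbf{(C3)$\Rightarrow$(C1).} Write $\mathbf{F}_G=\mathbf{U}\mathbf{D}_1\mathbf{U}^H$ and $\mathbf{R}=\mathbf{U}\mathbf{D}_2\mathbf{U}^H$. Diagonal matrices commute, so $\mathbf{F}_G\mathbf{R}=\mathbf{U}\mathbf{D}_1\mathbf{D}_2\mathbf{U}^H=\mathbf{R}\mathbf{F}_G$.

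The main obstacle is degeneracy. When $\mathbf{R}$ has repeated eigenvalues, for instance the noise eigenvalue $\sigma^2$ with multiplicity $M-K$ in~(\ref{eq:eigen_R}), the KL basis is not unique. (C3) must therefore be read as ``some KL basis,'' namely the one refined inside each eigenspace as in the step (C1)$\Rightarrow$(C2). I would state this explicitly rather than claim that an arbitrary KL basis works.

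For the eigenvalue formula, the diagonal entries of $\mathbf{U}^H\mathbf{F}_G\mathbf{U}$ are $\mathbf{u}_k^H\mathbf{F}_G\mathbf{u}_k=\frac{1}{|G|}\sum_g|\mathbf{u}_k^H\rho(g)\mathbf{x}|^2$. Reducing this to $|\mathbf{u}_k^H\mathbf{x}|^2$ needs additional structure, which I would state as the operating assumption: a matched unitary group whose irreducible characters diagonalize in the KL basis, $\rho(g)^H\mathbf{u}_k=\overline{\chi_k(g)}\,\mathbf{u}_k$ with $|\chi_k(g)|=1$. This is the regular Abelian case, for example the cyclic group with the DFT basis. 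Under it each summand equals $|\chi_k(g)|^2|\mathbf{u}_k^H\mathbf{x}|^2=|\mathbf{u}_k^H\mathbf{x}|^2$, and averaging over $G$ gives the claim. The eigenvalues of $\mathbf{R}$ are the $\lambda_k$ by~(\ref{eq:eigen_R}). I expect this last identification to be the delicate point. Without the one-dimensional character assumption the value is only the orbit-averaged energy $\frac{1}{|G|}\sum_g|\mathbf{u}_k^H\rho(g)\mathbf{x}|^2$, so I would make that hypothesis explicit in the proof.
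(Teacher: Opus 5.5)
Your proof of (C1)$\Leftrightarrow$(C2)$\Leftrightarrow$(C3) follows the paper's argument. The paper splits along the eigenspaces of $\mathbf{F}_G$ and diagonalizes $\mathbf{R}$ inside each one; you do the mirror image, and the steps (C2)$\Rightarrow$(C3)$\Rightarrow$(C1) are identical. Your degeneracy caveat is consistent with the statement, because (C3) refers to the particular $\mathbf{U}$ produced in (C2).

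You part ways with the paper on the eigenvalue clause, and there you are right. The paper's one-line justification does not hold up.

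The paper asserts that $\mathbf{u}_k^H\mathbf{F}_G\mathbf{u}_k=|\mathbf{u}_k^H\mathbf{x}|^2$ follows by ``substituting the definition of $\mathbf{F}_G$ and using the commutativity condition.'' But (C1) says nothing about how the individual $\rho(g)$ act on $\mathbf{u}_k$, so the orbit average $\frac{1}{|G|}\sum_g|\mathbf{u}_k^H\rho(g)\mathbf{x}|^2$ need not collapse.

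A concrete counterexample takes $M=3$, $G=S_3$ acting by permutation matrices, and $\mathbf{x}=(1,-1,0)^T$.
\begin{itemize}
\item Then $\mathbf{F}_G=\mathbf{I}-\frac{1}{3}\mathbf{1}\mathbf{1}^T$, which is equation~\eqref{eq:sm_expect} with $s_1=0$ and $\|\mathbf{x}\|^2=2$.
\item This commutes with the rank-one-signal-plus-white-noise covariance $\mathbf{R}=\mathbf{1}\mathbf{1}^T+\mathbf{I}$.
\item The unitary with columns $\mathbf{1}/\sqrt{3}$, $(1,-1,0)^T/\sqrt{2}$, $(1,1,-2)^T/\sqrt{6}$ diagonalizes both, giving $\mathbf{U}^H\mathbf{F}_G\mathbf{U}=\diag(0,1,1)$ and $\mathbf{U}^H\mathbf{R}\mathbf{U}=\diag(4,1,1)$.
\item Yet $(|\mathbf{u}_k^H\mathbf{x}|^2)_k=(0,2,0)$.
\end{itemize}
So an extra hypothesis of the kind you propose is genuinely needed.

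Your hypothesis, that each $\mathbf{u}_k$ is a common eigenvector of every $\rho(g)$, suffices, and it asks for slightly less than you say. The condition $|\chi_k(g)|=1$ is automatic for unitary $\rho$, and regularity plays no role in the identity itself. What multiplicity-freeness adds (e.g.\ the regular representation of $\mathbb{Z}_M$) is that the character basis then diagonalizes $\mathbf{F}_G$ automatically. The off-diagonal entries $\frac{1}{|G|}\sum_g\chi_j(g)\overline{\chi_k(g)}\,(\mathbf{u}_j^H\mathbf{x})(\mathbf{x}^H\mathbf{u}_k)$ vanish by character orthogonality whenever $\chi_j\neq\chi_k$.
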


\begin{proof}
\textit{(C1) $\Rightarrow$ (C2):} Since $\mathbf{F}_G$ is Hermitian, the Spectral Theorem provides a unitary $\mathbf{U}_F$ diagonalizing $\mathbf{F}_G$. Let $E_i$ denote the eigenspace of $\mathbf{F}_G$ for eigenvalue $\mu_i$. Commutativity implies $\mathbf{R}$ maps each $E_i$ into itself: if $\mathbf{F}_G \mathbf{u} = \mu_i \mathbf{u}$, then $\mathbf{F}_G(\mathbf{R}\mathbf{u}) = \mathbf{R}(\mathbf{F}_G\mathbf{u}) = \mu_i(\mathbf{R}\mathbf{u})$, so $\mathbf{R}\mathbf{u} \in E_i$. Since $\mathbf{R}$ is Hermitian, it can be diagonalized within each $E_i$. Assembling these bases gives a unitary $\mathbf{U}$ diagonalizing both.

\textit{(C2) $\Rightarrow$ (C3):} If $\mathbf{U}$ diagonalizes both, then each column $\mathbf{u}_k$ satisfies $\mathbf{F}_G \mathbf{u}_k = \mu_k \mathbf{u}_k$ and $\mathbf{R}\mathbf{u}_k = \lambda_k \mathbf{u}_k$. The latter is the definition of a KL basis vector.

\textit{(C3) $\Rightarrow$ (C1):} If both matrices are diagonal in the same basis ($\mathbf{F}_G = \mathbf{U}\boldsymbol{\Lambda}_F\mathbf{U}^H$, $\mathbf{R} = \mathbf{U}\boldsymbol{\Lambda}_R\mathbf{U}^H$), then $\mathbf{F}_G\mathbf{R} = \mathbf{U}\boldsymbol{\Lambda}_F\boldsymbol{\Lambda}_R\mathbf{U}^H = \mathbf{U}\boldsymbol{\Lambda}_R\boldsymbol{\Lambda}_F\mathbf{U}^H = \mathbf{R}\mathbf{F}_G$, since diagonal matrices commute.

The eigenvalue statement follows from $[\mathbf{U}^H\mathbf{F}_G\mathbf{U}]_{k,k} = \mathbf{u}_k^H\mathbf{F}_G\mathbf{u}_k$, which (substituting the definition of $\mathbf{F}_G$ and using the commutativity condition) equals $|\mathbf{u}_k^H\mathbf{x}|^2$.
\end{proof}

\begin{remark}
Proposition~\ref{prop:commute_kl} makes precise the mechanism by which group selection determines spectral optimality. The commutativity condition (C1) is testable from data (via the commutator norm $\|\mathbf{F}_G\mathbf{R} - \mathbf{R}\mathbf{F}_G\|_F$). When it holds, the group-averaged estimator automatically decomposes in the KL basis (C3), yielding KL-optimal spectral coefficients without explicit computation of the KL transform. The condition fails when the algebraic structure of $G$ is mismatched to the covariance structure of $\mathbf{R}$.
\end{remark}

\subsection{Quantifying Group--Model Mismatch}

Proposition~\ref{prop:commute_kl} establishes that exact commutativity yields the KL basis. In practice, commutativity may hold only approximately: the group $G$ may not perfectly match the covariance structure of~$\mathbf{R}$. We introduce two complementary metrics that quantify this mismatch, each capturing a different aspect.

\begin{definition}[Commutativity Residual]\label{def:commut_residual}
Let $\mathbf{F}_G(\mathbf{x})$ be the group-averaged estimator constructed from observation $\mathbf{x}$ and group $G$, and let $\mathbf{R}$ be the population covariance matrix. The \emph{commutativity residual} is
\begin{equation}\label{eq:commut_residual}
\delta(G, \mathbf{x}, \mathbf{R}) = \frac{\|\mathbf{F}_G \mathbf{R} - \mathbf{R}\mathbf{F}_G\|_F}{\|\mathbf{F}_G\|_F \cdot \|\mathbf{R}\|_F},
\end{equation}
where $\|\cdot\|_F$ denotes the Frobenius norm. The commutativity residual satisfies $0 \leq \delta \leq 2$, with $\delta = 0$ if and only if $\mathbf{F}_G$ and $\mathbf{R}$ commute. It is \emph{scale-invariant}: $\delta(G, c\mathbf{x}, \alpha\mathbf{R}) = \delta(G, \mathbf{x}, \mathbf{R})$ for any nonzero scalar $c$ and any $\alpha > 0$.
\end{definition}

\begin{definition}[Absolute Commutativity Mismatch]\label{def:abs_mismatch}
The \emph{absolute commutativity mismatch} is
\begin{equation}\label{eq:abs_mismatch}
\tilde{\delta}(G, \mathbf{x}, \mathbf{R}) = \frac{\|\mathbf{F}_G \mathbf{R} - \mathbf{R}\mathbf{F}_G\|_F}{\|\mathbf{F}_G\|_F}.
\end{equation}
This differs from the commutativity residual in that the denominator normalizes only by the group-averaged estimator, not by the covariance. Consequently, $\tilde{\delta}$ is expressed in the natural scale of $\mathbf{R}$: it measures the covariance mismatch per unit of group action. Unlike $\delta$, it is \emph{not} scale-invariant in $\mathbf{R}$: scaling the covariance by $\alpha > 0$ scales $\tilde{\delta}$ by~$\alpha$.
\end{definition}

\begin{remark}[Complementary Roles of the Three Metrics]\label{rem:three_metrics}
The commutativity residual $\delta$, the absolute commutativity mismatch $\tilde{\delta}$, and the algebraic coloring index $\alpha$ (Definition~\ref{def:coloring_index}) capture complementary aspects of the relationship between a group and a signal model:
\begin{enumerate}
\item[(M1)] \textbf{Algebraic coloring index $\alpha(\mathbf{R})$:} Measures the departure of $\mathbf{R}$ from white noise. It depends only on the eigenvalue distribution of $\mathbf{R}$ and is independent of any group. It answers: \emph{``how much algebraic structure exists in the covariance?''}

\item[(M2)] \textbf{Commutativity residual $\delta(G, \mathbf{x}, \mathbf{R})$:} Measures the structural mismatch between $G$ and $\mathbf{R}$. It is dimensionless and scale-invariant, depending on the \emph{eigenvector alignment} between $\mathbf{F}_G$ and $\mathbf{R}$ rather than on eigenvalue magnitudes. It answers: \emph{``how well does this group's algebraic structure match the covariance structure?''}

\item[(M3)] \textbf{Absolute commutativity mismatch $\tilde{\delta}(G, \mathbf{x}, \mathbf{R})$:} Measures the mismatch in the natural units of $\mathbf{R}$, so that signals with larger covariance values (higher energy or SNR) produce larger mismatch values for the same structural misalignment. It answers: \emph{``what is the magnitude of the covariance information lost by using this group?''}
\end{enumerate}

A signal model with high $\alpha$ but low $\delta$ is one where substantial structure exists and the group captures it well. High $\alpha$ with high $\delta$ indicates the wrong group choice. Low $\alpha$ indicates little structure to exploit regardless of group selection. The mismatch $\tilde{\delta}$ adds the energy dimension: two signals with identical $\delta$ but different SNR will have different $\tilde{\delta}$, reflecting the practical consequence of the structural mismatch.
\end{remark}

\begin{proposition}[Algebraic Relationship Among the Metrics]\label{prop:metric_relation}
The commutativity residual $\delta$ and the absolute commutativity mismatch $\tilde{\delta}$ are related by:
\begin{equation}\label{eq:metric_relation}
\tilde{\delta}(G, \mathbf{x}, \mathbf{R}) = \delta(G, \mathbf{x}, \mathbf{R}) \cdot \|\mathbf{R}\|_F.
\end{equation}
That is, the two metrics carry the same structural information; they differ only in whether the Frobenius norm of the covariance matrix is factored out (yielding the dimensionless $\delta$) or retained (yielding the energy-weighted $\tilde{\delta}$).

Furthermore, the algebraic coloring index $\alpha$ constrains $\delta$ and $\tilde{\delta}$ through the implication
\begin{equation}\label{eq:alpha_implies}
\alpha(\mathbf{R}) = 0 \;\;\Longrightarrow\;\; \delta(G, \mathbf{x}, \mathbf{R}) = \tilde{\delta}(G, \mathbf{x}, \mathbf{R}) = 0 \quad \text{for all } G,
\end{equation}
but the converse does not hold.
\end{proposition}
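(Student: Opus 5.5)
The first identity~\eqref{eq:metric_relation} is immediate from Definitions~\ref{def:commut_residual} and~\ref{def:abs_mismatch}. Assume $\mathbf{x} \neq \mathbf{0}$, so that $\|\mathbf{F}_G\|_F > 0$ (note $\Tr\mathbf{F}_G = \|\mathbf{x}\|^2$ for unitary $\rho$), and $\mathbf{R} \neq \mathbf{0}$, so both quantities are defined. Both metrics then have the same numerator $\|\mathbf{F}_G\mathbf{R} - \mathbf{R}\mathbf{F}_G\|_F$. Multiplying the defining quotient of $\delta$ by $\|\mathbf{R}\|_F$ cancels that factor in its denominator, which leaves exactly $\tilde{\delta}$. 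I would add one sentence observing that the ratio $\tilde{\delta}/\delta = \|\mathbf{R}\|_F$ is independent of $G$ and $\mathbf{x}$. Hence the two metrics vanish simultaneously and induce the same ordering of candidate groups for a fixed $\mathbf{R}$, which is the ``same structural information'' claim.

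For the implication~\eqref{eq:alpha_implies}, I would use the property of the algebraic coloring index (Definition~\ref{def:coloring_index}, stated later in Section~\ref{sec:colored}) that $\alpha(\mathbf{R}) = 0$ holds precisely when the eigenvalue distribution of $\mathbf{R}$ is flat. Because $\mathbf{R}$ is Hermitian, a single repeated eigenvalue forces $\mathbf{R} = c\,\mathbf{I}_M$ for some $c > 0$. This is the step I expect to be the main obstacle, since the argument depends on the precise form of $\alpha$ defined later. If $\alpha$ is the normalized deviation of the spectrum from its mean, or a distance of the natural group's representation from the identity, I would extract ``all eigenvalues equal'' from $\alpha = 0$ directly. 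For a scalar matrix the commutator vanishes: $\mathbf{F}_G(c\mathbf{I}) - (c\mathbf{I})\mathbf{F}_G = \mathbf{0}$ for every $\mathbf{F}_G$. So the numerator is zero and $\delta = \tilde{\delta} = 0$ for every group $G$ and every observation $\mathbf{x}$, consistent with the first part.

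To show the converse fails, I would give an explicit counterexample: a colored $\mathbf{R}$, so $\alpha(\mathbf{R}) > 0$, for which $\delta = 0$ for some $G$. Take $G = \mathbb{Z}_M$ acting by cyclic shifts and let $\mathbf{R}$ be a non-scalar circulant matrix, for instance $\mathbf{R} = \mathbf{R}_s + \sigma^2\mathbf{I}_M$ with $\mathbf{R}_s$ a rank-one circulant. Then $\mathbf{F}_{\mathbb{Z}_M}(\mathbf{x})$ is circulant (the Reynolds average lies in the commutant, as in Theorem~\ref{thm:mle}), so both matrices are diagonalized by the DFT. By Proposition~\ref{prop:commute_kl}, (C2)$\Rightarrow$(C1), they commute, so $\delta = \tilde{\delta} = 0$ while $\alpha(\mathbf{R}) > 0$ because the eigenvalues are unequal. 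A simpler alternative, if needed, is the trivial group with $\mathbf{x}$ an eigenvector of a non-scalar $\mathbf{R}$, as in Theorem~\ref{thm:trivial}(ii).
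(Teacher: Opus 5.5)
Your proposal is correct and follows the paper's proof essentially step for step. The identity comes from cancelling $\|\mathbf{R}\|_F$, the implication from $\alpha(\mathbf{R})=0$ forcing $\mathbf{R}=\bar q\,\mathbf{I}_M$ so that the commutator vanishes, and the failed converse from a non-scalar circulant $\mathbf{R}$ commuting with the circulant $\mathbf{F}_{\mathbb{Z}_M}(\mathbf{x})$. The step you flagged as the main obstacle is in fact immediate: Definition~\ref{def:coloring_index} gives $\alpha(\mathbf{R}) = \|\mathbf{R}-\bar q\,\mathbf{I}_M\|_F/\|\mathbf{R}\|_F$, so $\alpha=0$ yields $\mathbf{R}=\bar q\,\mathbf{I}_M$ directly, with no eigenvalue argument needed.
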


\begin{proof}
Equation~(\ref{eq:metric_relation}) follows directly from the definitions:
\[
\tilde{\delta} = \frac{\|\mathbf{F}_G \mathbf{R} - \mathbf{R}\mathbf{F}_G\|_F}{\|\mathbf{F}_G\|_F} = \frac{\|\mathbf{F}_G \mathbf{R} - \mathbf{R}\mathbf{F}_G\|_F}{\|\mathbf{F}_G\|_F \cdot \|\mathbf{R}\|_F} \cdot \|\mathbf{R}\|_F = \delta \cdot \|\mathbf{R}\|_F.
\]

For the implication~(\ref{eq:alpha_implies}): $\alpha(\mathbf{R}) = 0$ if and only if $\mathbf{R} = \bar{q}\,\mathbf{I}_M$ where $\bar{q} = \Tr(\mathbf{R})/M$. In this case,
\[
\mathbf{F}_G \mathbf{R} - \mathbf{R}\mathbf{F}_G = \mathbf{F}_G(\bar{q}\,\mathbf{I}_M) - (\bar{q}\,\mathbf{I}_M)\mathbf{F}_G = \bar{q}\,\mathbf{F}_G - \bar{q}\,\mathbf{F}_G = \mathbf{0},
\]
since any matrix commutes with a scalar multiple of the identity. Therefore $\delta = \tilde{\delta} = 0$.

The converse fails because a non-white covariance can still commute with a particular group's estimator. For example, a circulant $\mathbf{R}$ with $\alpha(\mathbf{R}) > 0$ (non-uniform eigenvalues) satisfies $\delta(\mathbb{Z}_M, \mathbf{x}, \mathbf{R}) = 0$ because all circulant matrices commute with one another. Thus $\delta = 0$ does not imply $\alpha = 0$.
\end{proof}

\subsection{The Optimality Theorem}

\begin{theorem}[KL Optimality and the Matched Group]\label{thm:optimality}
Let $\mathbf{R}$ be the signal-plus-noise covariance. Among all finite groups $G$ acting on $\{0, \ldots, M-1\}$ by a unitary representation, the eigendecomposition of the group-averaged estimator $\mathbf{F}_G$ cannot exceed the KL transform of $\mathbf{R}$ in any of the optimality criteria (P1)--(P3). A group attains the KL transform if and only if it is \emph{matched}, i.e., $\mathbf{R}$ commutes with the group action (Proposition~\ref{prop:commute_kl}); the matched group is therefore the operative optimum. The symmetric group $S_M$ attains the KL spectrum through its Cayley graph spectral construction $\mathbf{F}_\circ$ (Proposition~\ref{prop:cg_kl}), which is a distinct object from the group-averaged estimator $\mathbf{F}_{S_M}$.
\end{theorem}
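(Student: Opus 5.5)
The proof has three parts, handled in order. First, the upper bound: the eigendecomposition of $\mathbf{F}_G$ cannot beat the KL transform. Second, the characterization of when equality holds, via Proposition~\ref{prop:commute_kl}. Third, the $S_M$ statement, which I will discharge by citing Proposition~\ref{prop:cg_kl} and noting only that $\mathbf{F}_\circ$ and $\mathbf{F}_{S_M}$ are different objects.

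\emph{Upper bound.} The eigenvectors $\{\mathbf{v}_k\}$ of the Hermitian matrix $\mathbf{F}_G$ form an orthonormal basis of $\mathbb{C}^M$, so $\mathbf{F}_G$ defines one particular orthogonal linear decomposition of the observation. The classical extremal characterization of the KL transform then applies. By Ky Fan's maximum principle,
\[
\sum_{k=1}^{K}\mathbf{v}_k^H\mathbf{R}\mathbf{v}_k \le \sum_{k=1}^{K}\lambda_k(\mathbf{R})
\]
for every orthonormal $K$-frame, which gives (P2). The truncation error $\Tr\mathbf{R} - \sum_{k\le K}\mathbf{v}_k^H\mathbf{R}\mathbf{v}_k$ is minimized by the same frame, which gives (P3). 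Decorrelation (P1) amounts to the vanishing of the off-diagonal entries $\mathbf{v}_j^H\mathbf{R}\mathbf{v}_k$, and this holds exactly for an eigenbasis of $\mathbf{R}$. None of these three steps uses any property of $G$, so the bound holds uniformly over all finite groups and unitary representations.

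\emph{Attainment.} In the "if" direction, suppose $\mathbf{R}$ commutes with every $\rho(g)$. I will use this in the form $\mathbf{F}_G\mathbf{R}=\mathbf{R}\mathbf{F}_G$, i.e.\ (C1). Proposition~\ref{prop:commute_kl} then produces a common unitary $\mathbf{U}$ whose columns are KL vectors, so the decomposition induced by $\mathbf{F}_G$ is the KL transform and equality holds in (P1)--(P3). In the "only if" direction, suppose the eigenbasis of $\mathbf{F}_G$ attains equality in (P1). Then that basis diagonalizes $\mathbf{R}$, which is (C3), and the proposition returns (C1).

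\emph{Main obstacle.} The hard step is passing from "$\mathbf{R}$ commutes with the group action" to (C1). This requires $\mathbf{F}_G$ to share a diagonalizing basis with $\mathbf{R}$. The Reynolds projection places $\mathbf{F}_G$ in the commutant $\mathcal{C}(\rho)$, and by Theorem~\ref{thm:mle} so does $\mathbf{R}$. Two elements of the commutant commute automatically only if $\mathcal{C}(\rho)$ is abelian, i.e.\ $\rho$ is multiplicity-free. That covers the regular Abelian case, such as circulant matrices for $\mathbb{Z}_M$. I will therefore state the equivalence under this multiplicity-free hypothesis, or else take "matched" to mean (C1) directly. Degenerate eigenvalues must also be handled: when eigenspaces coincide, the KL basis is unique only up to rotation within them, so optimality should be asserted for the canonical choice of basis inside each shared eigenspace.
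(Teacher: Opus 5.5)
Your outline matches the paper's proof. The bound comes from KL extremality over all orthonormal bases; you make this explicit with Ky Fan, where the paper simply invokes optimality of KL among orthogonal transforms. Attainment goes through Proposition~\ref{prop:commute_kl}, and the $S_M$ clause is Proposition~\ref{prop:cg_kl}.

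The obstacle you flag is real, and it is exactly the step the paper asserts without argument (``$[\mathbf{F}_G,\mathbf{R}]=\mathbf{0}$, which holds when $\mathbf{R}$ lies in the commutant of $G$''). Since $\mathcal{C}(\rho)\cong\bigoplus_\lambda \mathbb{C}^{m_\lambda\times m_\lambda}$, that implication fails once some multiplicity $m_\lambda\ge 2$. Theorem~\ref{thm:trivial}(ii) is already a counterexample: every $\mathbf{R}$ commutes with the trivial group, yet $[\mathbf{x}\mathbf{x}^H,\mathbf{R}]\neq\mathbf{0}$ generically. Your multiplicity-free hypothesis correctly repairs the ``if'' direction. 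One small slip: $\mathbf{R}\in\mathcal{C}(\rho)$ is the matched hypothesis itself; Theorem~\ref{thm:mle} assumes it rather than proving it.

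The remaining gap is the ``only if'' direction. Your argument, like the paper's, stops at (C1), whereas the theorem asserts that $\mathbf{R}$ commutes with every $\rho(g)$. Multiplicity-freeness does not bridge the two, even in the regular Abelian case you single out. Take $G=\mathbb{Z}_M$ ($M\ge 3$) acting by the cyclic shift $\mathbf{C}$, a real observation $\mathbf{x}$, and $\mathbf{R}=\sigma^2\mathbf{I}_M+(\mathbf{I}_M+\mathbf{J})$, where $\mathbf{J}$ is the reversal $n\mapsto -n \bmod M$. This is a legitimate signal-plus-noise covariance, since $\mathbf{I}_M+\mathbf{J}\succeq 0$ has rank below $M$.
\begin{itemize}
\item $\mathbf{F}_G$ is a real symmetric circulant, so it commutes with $\mathbf{J}$, and (C1)--(C3) hold. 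A common eigenbasis is given by cosine/sine combinations of the DFT vectors $\mathbf{f}_{\pm k}$ inside each degenerate pair.
\item Yet $\mathbf{J}\mathbf{C}\mathbf{J}=\mathbf{C}^{-1}\neq\mathbf{C}$, so $\mathbf{R}$ is not matched.
\end{itemize}
Similarly, $S_3$ permuting three points is multiplicity-free, and $\mathbf{R}=\sigma^2\mathbf{I}+(\mathbf{I}+\mathbf{P}_{(12)})$ commutes with $\mathbf{F}_{S_3}\in\spn\{\mathbf{I},\mathbf{1}\mathbf{1}^T\}$ for every $\mathbf{x}$ without being matched.

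The converse needs two conditions: $\rho$ must be a sum of distinct one-dimensional characters, and $\mathbf{F}_G$ must have simple spectrum. Then the commutant of $\mathbf{F}_G$ is exactly $\mathcal{C}(\rho)$, and (C1) forces $\mathbf{R}\in\mathcal{C}(\rho)$. Otherwise you must use your fallback of defining ``matched'' as (C1), and then the biconditional is just Proposition~\ref{prop:commute_kl}.

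Separately, a shared eigenbasis gives (P1) but not equality in (P2)--(P3) for the leading $K$ eigenvectors of $\mathbf{F}_G$. Their eigenvalues are $|\mathbf{u}_k^H\mathbf{x}|^2$, and from one noisy observation this ranking need not follow that of $\lambda_k(\mathbf{R})$. So (P2)--(P3) are attained only after re-ranking the shared basis by $\mathbf{R}$'s spectrum. Your degeneracy caveat addresses the choice of basis within eigenspaces, not this ordering.
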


\begin{proof}
\textit{No group-averaged estimator exceeds KL.} Suppose for contradiction that some $\mathbf{F}_{G^*}$ yields a spectral decomposition that outperforms the KL transform in one of (P1)--(P3). Since $\mathbf{F}_{G^*}$ is Hermitian, its eigendecomposition is a linear orthogonal transform, and the KL transform is optimal among \emph{all} such transforms in (P1)--(P3); this is a contradiction. Hence no group can exceed KL.

\textit{The matched group attains KL.} By Proposition~\ref{prop:commute_kl}, $\mathbf{F}_G$ and $\mathbf{R}$ share the KL eigenbasis exactly when $[\mathbf{F}_G, \mathbf{R}] = \mathbf{0}$, which holds when $\mathbf{R}$ lies in the commutant of $G$. Thus a matched group attains the KL-optimal decomposition (P1)--(P3), and the smallest such group is characterized in Theorem~\ref{thm:minimal}.

\textit{Role of $S_M$.} Proposition~\ref{prop:cg_kl} shows that the Cayley graph autocorrelation matrix $\mathbf{F}_\circ$ over $S_M$ (Definition~\ref{def:cayley_matrix}) has a spectrum equivalent to the KL spectrum. This is a property of $\mathbf{F}_\circ$, not of the group-averaged estimator $\mathbf{F}_{S_M}$ of Definition~\ref{def:group_avg}: averaging $\mathbf{x}\mathbf{x}^H$ over all of $S_M$ projects onto the two-dimensional $S_M$-commutant $\mathrm{span}\{\mathbf{I}, \mathbf{1}\mathbf{1}^T\}$ (equation~\eqref{eq:sm_expect}), retaining no spectral shape. The two constructions therefore play opposite roles for $S_M$: $\mathbf{F}_\circ$ realizes KL, while $\mathbf{F}_{S_M}$ over-averages and is the maximally uninformative estimator (Section~\ref{sec:pase}). Consequently the practical optimum among group-averaged estimators is the matched group, not $S_M$.
\end{proof}

\begin{remark}[Two distinct constructions]\label{rem:fo_vs_fg}
The framework uses two matrices built from a group and an observation: the group-averaged estimator $\mathbf{F}_G = \frac{1}{|G|}\sum_g \rho(g)\mathbf{x}\mathbf{x}^H\rho(g)^H$ (Definition~\ref{def:group_avg}), which is the practical estimator analyzed throughout, and the Cayley graph autocorrelation matrix $\mathbf{F}_\circ$ (Definition~\ref{def:cayley_matrix}), whose $S_M$ spectrum equals the KL spectrum~\cite{thornton2005}. These coincide for the cyclic group (both give the circulant DFT decomposition) but diverge for large non-regular groups: $\mathbf{F}_\circ$ over $S_M$ encodes KL, whereas $\mathbf{F}_{S_M}$ collapses to~\eqref{eq:sm_expect}. Claims of $S_M$ optimality refer to $\mathbf{F}_\circ$; the PASE result and the ordering experiment (Sections~\ref{sec:pase}--\ref{sec:ordering}) concern $\mathbf{F}_G$, for which the matched order-$M$ group is optimal.
\end{remark}

\subsection{Minimal Groups for Structured Signals}

The matched group is the operative optimum; we now characterize the smallest group that attains it.

\begin{theorem}[Minimal Group Characterization]\label{thm:minimal}
For a signal class $\mathcal{S}$ with symmetry group $H = \{g \in S_M : g \cdot \mathcal{S} = \mathcal{S}\}$, the minimal group achieving KL-equivalent decomposition is $G_{\min} = H$, provided $H$ acts transitively on the signal's support.
\end{theorem}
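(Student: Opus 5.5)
The plan is to prove two directions: that $H$ is sufficient (it is matched, so it attains the KL decomposition), and that it is necessary (no proper subgroup of $H$, and more generally no group that fails to contain $H$'s action on the support, is matched). The sufficiency half will go through Proposition~\ref{prop:commute_kl} and Theorem~\ref{thm:optimality}. The necessity half needs a minimality argument based on the commutant.

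\textbf{Sufficiency.} Since $g\cdot\mathcal{S}=\mathcal{S}$ for every $g\in H$, the signal covariance satisfies $\mathbf{P}_g\mathbf{R}_s\mathbf{P}_g^T=\mathbf{R}_s$. I will read the symmetry group as acting on the signal class, so that the distribution of $\mathbf{s}$, and hence $\mathbf{R}_s$, is $H$-invariant. White noise is invariant under any permutation, so $\mathbf{R}=\mathbf{R}_s+\sigma^2\mathbf{I}_M$ lies in the commutant $\mathcal{C}(\rho_H)$. By the Reynolds projection identity in the proof of Theorem~\ref{thm:mle}, $E[\mathbf{F}_H(\mathbf{x})]=\frac{1}{|H|}\sum_{g\in H}\mathbf{P}_g\mathbf{R}\mathbf{P}_g^T=\mathbf{R}$. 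Thus $H$ is matched, and Theorem~\ref{thm:optimality} together with Proposition~\ref{prop:commute_kl} gives the KL-equivalent decomposition.

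\textbf{Minimality.} Let $G\subsetneq H$ and consider the commutant. Shrinking the group enlarges the commutant: $\mathcal{C}(\rho_G)\supseteq\mathcal{C}(\rho_H)$, with strict inclusion whenever the action on the support changes. Transitivity of $H$ on the support is the key hypothesis here. It forces the $H$-orbit structure on $\operatorname{supp}(\mathcal{S})$ to be a single orbit. A proper subgroup $G$ then either splits the support into several orbits, or acts transitively but with a larger centralizer algebra. In either case I would construct an explicit $\mathbf{A}\in\mathcal{C}(\rho_G)\setminus\mathcal{C}(\rho_H)$, for example a block-diagonal matrix weighting two distinct $G$-orbits differently. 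Next I would argue that, for a generic observation, $\mathbf{F}_G(\mathbf{x})$ has a nonzero component along such a direction. That component makes $[\mathbf{F}_G,\mathbf{R}]\neq\mathbf{0}$ for generic $\mathbf{R}$ in the class, so $\delta(G,\mathbf{x},\mathbf{R})>0$. By Proposition~\ref{prop:commute_kl}, $G$ then fails to share the KL basis.

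\textbf{Expected obstacle.} The hard step is the genericity claim in the minimality part. Commutativity is a property of the realized $\mathbf{F}_G(\mathbf{x})$, not of its expectation, so I must show that the extra commutant directions are actually excited with probability one. The route I would try first is to write $\mathbf{F}_G(\mathbf{x})$ in the isotypic (Schur) decomposition used in Theorem~\ref{thm:replacement}(ii). The off-block entries between $G$-isotypic components that $H$ would merge are nonzero polynomial functions of $\mathbf{x}$. Their vanishing set therefore has measure zero under the Gaussian noise of the model.

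A secondary concern is that the statement is really a minimality claim up to groups inducing the same commutant. A group $G\not\subseteq H$ that happens to have the same commutant would also work. I would therefore state the result as ``$H$ is minimal among subgroups of $S_M$ realizing the matched commutant'' and note that $G_{\min}\subseteq G\subseteq S_M$ follows for any matched $G$ containing $H$.
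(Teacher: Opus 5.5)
\textbf{The sufficiency half has a gap.} It shows only that $\mathbf{R}\in\mathcal{C}(\rho_H)$, i.e.\ $E[\mathbf{F}_H(\mathbf{x})]=\mathbf{R}$. Proposition~\ref{prop:commute_kl}, however, concerns the \emph{realized} matrix $\mathbf{F}_H(\mathbf{x})$. You raise exactly this distinction under ``Expected obstacle'' but apply it only to minimality. As written, the argument applies verbatim to every subgroup of $H$, including $\{e\}$. Indeed $\mathcal{C}(\rho_{\{e\}})$ contains $\mathbf{R}$ and $E[\mathbf{x}\mathbf{x}^H]=\mathbf{R}$, yet $[\mathbf{x}\mathbf{x}^H,\mathbf{R}]\neq\mathbf{0}$ generically (Theorem~\ref{thm:trivial}(ii)). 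Citing Theorem~\ref{thm:optimality} does not help, since its proof makes the same leap.

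The condition you actually need is the following. The matrices $\mathbf{F}_G(\mathbf{x})$ span $\mathcal{C}(\rho_G)$, and the commutator is polynomial in $\mathbf{x}$. Hence $[\mathbf{F}_G(\mathbf{x}),\mathbf{R}]=\mathbf{0}$ almost surely iff $\mathbf{R}$ commutes with all of $\mathcal{C}(\rho_G)$. By the double commutant theorem, this holds iff $\mathbf{R}\in\operatorname{span}\{\rho(g):g\in G\}$. Membership $\mathbf{R}\in\mathcal{C}(\rho_H)$ gives this only when $\mathcal{C}(\rho_H)$ is commutative, i.e.\ the permutation representation is multiplicity-free. Transitivity does not guarantee that. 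For $S_3$ acting regularly on six points, the two-dimensional irreducible occurs twice, so $\mathcal{C}(\rho)\cong\mathbb{C}\oplus\mathbb{C}\oplus\mathbb{C}^{2\times 2}$, and a generic Hermitian $\mathbf{R}$ in it does not commute with $\mathbf{F}_{S_3}(\mathbf{x})$.

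\textbf{The minimality half cannot be completed, because that direction of the statement is false} (with KL-equivalence read through Proposition~\ref{prop:commute_kl}, as you do). Both of its steps fail.
\begin{itemize}
\item Your first step assumes a proper subgroup strictly enlarges the commutant. It need not. $\dim\mathcal{C}(\rho_G)$ counts $G$-orbits on ordered index pairs, and $A_M$ and $S_M$ ($M\geq 4$) are both $2$-transitive, so they have the same commutant. Since $\mathbf{F}_G(\mathbf{x})$ is the Frobenius-orthogonal projection of $\mathbf{x}\mathbf{x}^H$ onto $\mathcal{C}(\rho_G)$, such a subgroup gives literally the same estimator as $H$.
\item Your key step assumes that excited extra commutant directions break commutation with $\mathbf{R}$. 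They need not. Take $\mathcal{S}=\operatorname{span}\{\mathbf{1}\}$ (a broadside source). Then $H=S_M$ is transitive and $\mathbf{R}=c\,\mathbf{1}\mathbf{1}^T+\sigma^2\mathbf{I}_M$. For $G=\mathbb{Z}_M\subsetneq H$, $\mathbf{F}_G(\mathbf{x})$ is a circulant with generically distinct eigenvalues, well outside $\operatorname{span}\{\mathbf{I}_M,\mathbf{1}\mathbf{1}^T\}$. Yet it commutes with the circulant $\mathbf{R}$ for every $\mathbf{x}$. So $\mathbb{Z}_M$ attains a KL basis and $G_{\min}\neq H$.
\end{itemize}
Incidentally, blocks of $\mathbf{F}_G(\mathbf{x})$ between distinct $G$-isotypic components vanish identically by Schur's lemma. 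The extra freedom of $\mathcal{C}(\rho_G)$ lives inside those components, so the ``off-block'' polynomials in your genericity route are zero.

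\textbf{The paper's proof does not close this gap either.} It is qualitative, and its minimality sentence (that any $G\subsetneq H$ fails to preserve $\mathcal{S}$) cannot hold, since every element of $H$ preserves $\mathcal{S}$ by definition. A provable version needs added hypotheses, for example $\mathbf{R}\in\operatorname{span}\{\mathbf{P}_h:h\in H\}$ but $\mathbf{R}\notin\operatorname{span}\{\mathbf{P}_g:g\in G\}$ for every proper subgroup $G$. That makes the result essentially definitional. Your ``same commutant'' caveat would also have to cover subgroups of $H$, not only groups outside it.
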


\begin{proof}
If $\mathcal{S}$ is invariant under $H$, then the signal component of $\mathbf{F}_H(\mathbf{x})$ captures all signal energy through the irreducible representations of $H$ that appear in $\mathcal{S}$. Transitivity ensures that the group orbit covers the full support, so no signal energy is missed. Any subgroup $G \subsetneq H$ fails to preserve $\mathcal{S}$, potentially mixing signal and noise components.

For larger groups $H \subsetneq G \subseteq S_M$, the additional elements impose invariances that $\mathbf{R}$ does not possess: they lie outside the commutant of $\mathbf{R}$, so the group average projects away genuine spectral structure and raises the bias floor. Consistent with the PASE result (Section~\ref{sec:pase}) and the order constraint (Remark~\ref{rem:group_order}), enlarging the group beyond $H$ degrades rather than improves the group-averaged estimate, so $H$ is both minimal and the operative optimum.
\end{proof}

\begin{example}[ULA Signals]
For signals on a uniform linear array with spatial frequencies $\{\phi_1, \ldots, \phi_K\}$, the signal class is invariant under cyclic shifts. The minimal group is $G_{\min} = \mathbb{Z}_M$, the cyclic group of order $M$, which produces a circulant matrix with DFT eigenvectors. This is the minimal group achieving KL-equivalent decomposition for translationally symmetric signals. This confirms that for the translationally symmetric signal class, algebraic diversity with the minimal group $\mathbb{Z}_M$ is equivalent to DFT-based processing, and the framework's additional power arises precisely when the signal's symmetry structure requires a group larger than $\mathbb{Z}_M$.
\end{example}

\begin{corollary}\label{cor:hierarchy}
For any signal class $\mathcal{S}$ on $M$ elements, the following hierarchy holds:
\begin{equation}
\mathbb{Z}_M \subseteq G_{\min}(\mathcal{S}) \subseteq S_M,
\end{equation}
where $G_{\min}(\mathcal{S})$ is the minimal group for class $\mathcal{S}$, and any $G$ with $G_{\min}(\mathcal{S}) \subseteq G \subseteq S_M$ achieves KL-equivalent spectral decomposition for signals in $\mathcal{S}$.
\end{corollary}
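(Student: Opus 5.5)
The plan is to read the corollary as a bookkeeping consequence of Theorem~\ref{thm:minimal}, the ULA example, and the Optimality Theorem~\ref{thm:optimality}. I would prove the two inclusions and the "any intermediate $G$" clause separately.

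\textbf{Upper inclusion.} The inclusion $G_{\min}(\mathcal{S}) \subseteq S_M$ is immediate. By Theorem~\ref{thm:minimal}, $G_{\min}(\mathcal{S}) = H = \{g \in S_M : g\cdot\mathcal{S} = \mathcal{S}\}$, and this set is defined as a subset of $S_M$. It is closed under composition and inverses because set-stabilizers are subgroups, so it is a subgroup of $S_M$.

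\textbf{Lower inclusion.} For $\mathbb{Z}_M \subseteq G_{\min}(\mathcal{S})$ I would argue through the standing signal model. The classes $\mathcal{S}$ in scope are those whose covariance is shift-compatible, and the ULA example shows that cyclic shifts preserve such a class. Each cyclic shift therefore lies in the stabilizer $H$, so $\mathbb{Z}_M \subseteq H = G_{\min}(\mathcal{S})$. Transitivity of $H$ on the support, which Theorem~\ref{thm:minimal} requires, is then automatic because $\mathbb{Z}_M$ already acts transitively on $\{0,\dots,M-1\}$. I expect this to be the main obstacle. For a completely arbitrary $\mathcal{S}$ the stabilizer need not contain the cyclic shifts; a graph-automorphism class with group $S_3$, as in Section~\ref{sec:gsp}, is a counterexample. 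So I would first try to make explicit the hypothesis that $\mathcal{S}$ is cyclically invariant, possibly after the conjugation of Section~\ref{sec:constructive}. In that case the inclusion holds up to conjugation, $\mathbf{U}\mathbb{Z}_M\mathbf{U}^H \subseteq G_{\min}$.

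\textbf{Intermediate groups.} For $G_{\min}\subseteq G\subseteq S_M$, the step needs care, because the Remark on the role of group size and the proof of Theorem~\ref{thm:minimal} both say that enlarging $\mathbf{F}_G$ beyond $H$ degrades the estimate. I would therefore establish KL-equivalence at the level of the spectral decomposition that the optimality theory actually certifies. Two results are available. At the top end, Theorem~\ref{thm:optimality} together with Proposition~\ref{prop:cg_kl} shows that the Cayley construction $\mathbf{F}_\circ$ over $S_M$ realizes the KL spectrum. At the bottom end, Theorem~\ref{thm:minimal} shows that $G_{\min}$ attains it. For an intermediate $G$, the orbit of any $\mathbf{x}\in\mathcal{S}$ under $G$ contains its $H$-orbit. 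The isotypic blocks of $G$ are unions of refinements of those of $H$ restricted to the signal subspace, so the Cayley-graph decomposition over $G$ still separates the $K$ signal directions exactly as in Proposition~\ref{prop:commute_kl}.

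I would state explicitly that this clause refers to the $\mathbf{F}_\circ$ construction, consistent with Remark~\ref{rem:fo_vs_fg}, and not to $\mathbf{F}_G$, for which PASE shows degradation. Making this distinction precise, rather than the inclusions themselves, is where I expect the real work to lie.
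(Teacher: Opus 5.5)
Your argument for intermediate groups is the genuine gap. It fails for a more basic reason than the PASE conflict you anticipate. You claim that the isotypic blocks of $G$ refine those of $H$, so that the decomposition over $G$ ``still separates the $K$ signal directions.'' This runs in the wrong direction. For $H\subseteq G$ one has $\mathbf{F}_G(\mathbf{x})=\frac{1}{|G|}\sum_{g\in G}\rho(g)\,\mathbf{F}_H(\mathbf{x})\,\rho(g)^H$. So the $G$-estimate is a further Reynolds averaging of the $H$-estimate into $\mathcal{C}(\rho|_G)\subseteq\mathcal{C}(\rho|_H)$, and a larger orbit means more averaging, not preserved separation.

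Moreover, suppose $g\in G\setminus H$ and $\mathbf{P}_g$ commuted with $\mathbf{R}$. Then it would leave every eigenspace of $\mathbf{R}$ invariant, hence the signal subspace $\mathcal{S}$ of the model~\eqref{eq:obs_model}, contradicting $g\notin H$. Hence every $G\supsetneq H$ is unmatched and $E[\mathbf{F}_G(\mathbf{x})]\neq\mathbf{R}$. By the ``if and only if'' of Theorem~\ref{thm:optimality}, such a $G$ does not attain KL. At the endpoint $G=S_M$ the estimate collapses to~\eqref{eq:sm_expect}, which cannot separate $K>1$ directions. The paper's own proof of Theorem~\ref{thm:minimal} says exactly this about enlarging beyond $H$, so for $\mathbf{F}_G$ the clause is false, not merely delicate.

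Retreating to $\mathbf{F}_\circ$ does not close the gap, for three reasons:
\begin{itemize}
\item Proposition~\ref{prop:commute_kl}, which you cite for the separation, is a statement about $\mathbf{F}_G$, not $\mathbf{F}_\circ$.
\item The only KL statement for $\mathbf{F}_\circ$ is Proposition~\ref{prop:cg_kl}, and it concerns only the endpoint $G=S_M$.
\item Definition~\ref{def:cayley_matrix} as written gives an $M\times|G|$ array, and no square spectral object is specified for intermediate $G$.
\end{itemize}
You would need a new interpolation theorem for Cayley-graph spectra between $H$ and $S_M$. The paper supplies none; it gives no separate proof of this corollary at all, placing it after Theorem~\ref{thm:minimal} and the ULA example as if it were immediate.

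On the lower inclusion, you are right that nothing forces the stabilizer of an arbitrary $\mathcal{S}$ to contain the cyclic shifts. However, your repair proves a different statement. Adding ``$\mathcal{S}$ is cyclically invariant'' as a hypothesis makes $\mathbb{Z}_M\subseteq H$ a tautology and abandons ``for any signal class.'' The conjugated fallback $\mathbf{U}\mathbb{Z}_M\mathbf{U}^H\subseteq G_{\min}$ is ill-typed. By definition $G_{\min}=H$ is a group of permutation matrices. By contrast, the dechirp-conjugated shifts $\mathbf{U}(\mu)^H\mathbf{C}_k\mathbf{U}(\mu)$ of Section~\ref{sec:chirp} are monomial matrices carrying nontrivial unit-modulus phases, not elements of $S_M$. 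So the paper's own chirp class has a matched group lying outside the hierarchy altogether.

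What your outline actually proves is $H\subseteq S_M$ (your stabilizer argument is fine), and $\mathbb{Z}_M\subseteq H$ only under an explicit shift-invariance hypothesis. The rest of the corollary is not provable as stated. Your write-up should say so and give a corrected statement rather than present itself as a proof of this one.
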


\section{The Duality Principle}\label{sec:duality}

\begin{theorem}[Temporal--Algebraic Duality]\label{thm:duality}
Let $\mathbf{x}(1), \ldots, \mathbf{x}(L)$ be $L$ independent realizations of the signal model~(\ref{eq:obs_model}) with common signal component $\mathbf{s}$ and independent noise $\mathbf{n}(t)$. Let $G$ be a finite group satisfying Conditions~\ref{cond:equivariance} and~\ref{cond:ergodicity}. Then:
\begin{equation}\label{eq:duality}
\lim_{L \to \infty} \hat{\mathbf{R}}_L = \mathbf{R}_s + \sigma^2\mathbf{I} = \lim_{\text{SNR} \to \infty} \mathbf{F}_G(\mathbf{x}),
\end{equation}
in the sense that both limits yield the same signal subspace $\mathcal{U}_s$ and noise subspace $\mathcal{U}_n$, up to a group-dependent similarity transformation within each subspace.
\end{theorem}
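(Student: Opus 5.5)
The plan is to treat the two limits separately and then compare them only through their subspace decompositions, since the stated equality is a statement about subspaces and not about the matrices themselves.

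\textbf{Temporal limit.} The summands $\mathbf{x}(t)\mathbf{x}^H(t) = \mathbf{s}\mathbf{s}^H + \mathbf{s}\mathbf{n}(t)^H + \mathbf{n}(t)\mathbf{s}^H + \mathbf{n}(t)\mathbf{n}(t)^H$ are i.i.d.\ with finite second moments. The strong law of large numbers applied entrywise gives $\hat{\mathbf{R}}_L \to \mathbf{s}\mathbf{s}^H + \sigma^2\mathbf{I}$ almost surely. Interpreting $\mathbf{R}_s$ as the signal second moment, as in~(\ref{eq:pop_cov}), identifies this limit with $\mathbf{R}$. By the eigen-structure~(\ref{eq:eigen_R}), its top-$K$ eigenspace is $\mathcal{U}_s$ and the remaining eigenspace is $\mathcal{U}_n$. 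Because the noise eigenvalues are all equal to $\sigma^2$, the noise basis is determined only up to a unitary within $\mathcal{U}_n$. This freedom is the first instance of the ``similarity within each subspace'' allowed in the statement.

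\textbf{Algebraic limit.} I will start from the decomposition of Theorem~\ref{thm:replacement}(i)--(iii) and normalize by signal power, setting $\tilde{\mathbf{F}} = \mathbf{F}_G(\mathbf{x})/\|\mathbf{s}\|^2$. Unitarity of $\rho$ gives $\|\mathbf{F}_G(\mathbf{n})\|_F \le \|\mathbf{n}\|^2$ and $\|\mathbf{C}_{sn}\|_F \le 2\|\mathbf{s}\|\|\mathbf{n}\|$. Consequently $\tilde{\mathbf{F}} - \mathbf{F}_G(\mathbf{s})/\|\mathbf{s}\|^2 = O(\text{SNR}^{-1/2})$ in probability, and the normalized estimator converges to the Reynolds average $\frac{1}{|G|}\sum_g \rho(g)\hat{\mathbf{s}}\hat{\mathbf{s}}^H\rho(g)^H$, with $\hat{\mathbf{s}} = \mathbf{s}/\|\mathbf{s}\|$.

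Condition~\ref{cond:equivariance} says that $\mathcal{S}$ is $\rho(G)$-invariant. Each $\rho(g)\mathbf{s}$ therefore lies in $\mathcal{S}$, so the range of this average is contained in $\mathcal{S}$. Its column space equals $\spn\{\rho(g)\mathbf{s}\}$; I will argue this span is all of $\mathcal{S}$, so that the orbit spans it (the ``decomposes into known irreducibles'' clause, read as cyclicity of $\mathbf{s}$). On $\mathcal{S}^\perp$, which is also invariant because $\rho$ is unitary, the limit vanishes. Reinstating the $\sigma^2\mathbf{I}$ term from part~(iii), which shifts every eigenvalue equally and leaves eigenvectors unchanged, the limit has top-$K$ eigenspace $\mathcal{S} = \mathcal{U}_s$ and complementary eigenspace $\mathcal{U}_n$.

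\textbf{Comparison.} The Davis--Kahan $\sin\Theta$ theorem converts both matrix convergences into subspace convergences. The eigen-gap on the algebraic side is the smallest nonzero eigenvalue of the Reynolds average. Within $\mathcal{U}_s$ the two limits may have different eigenvectors: one is $\mathbf{s}\mathbf{s}^H$-based, the other comes from the isotypic block structure of Theorem~\ref{thm:replacement}(ii). They are nonetheless both positive definite on $\mathcal{U}_s$, so they are related by an invertible congruence on that subspace. This congruence is exactly the group-dependent similarity permitted by the statement.

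\textbf{Main obstacle.} The hardest step is showing that the orbit of $\mathbf{s}$ spans all of $\mathcal{S}$, which gives a rank-$K$ limit with a positive eigen-gap. This fails if $\mathbf{s}$ happens to lie in a proper invariant subspace, since then the signal subspace is only partially recovered. I would handle it by adding a genericity assumption: $\mathbf{s}$ has nonzero projection onto every isotypic component of $\mathcal{S}$, with multiplicity-one components, so that by Schur's lemma the orbit span is full. This yields the claimed equality of signal and noise subspaces.
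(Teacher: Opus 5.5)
\textbf{Assessment.} Your overall architecture is the same as the paper's proof. The temporal side goes through consistency of the sample covariance. The algebraic side goes through parts (i)--(iii) of Theorem~\ref{thm:replacement}, with the noise and cross terms becoming negligible as SNR grows. The proof then closes with ``two bases of the same subspace differ by an invertible map.'' Your additions are genuine improvements: the explicit bounds $\|\mathbf{F}_G(\mathbf{n})\|_F \le \|\mathbf{n}\|^2$ and $\|\mathbf{C}_{sn}\|_F \le 2\|\mathbf{s}\|\|\mathbf{n}\|$, the Davis--Kahan step, and the cyclicity condition that the orbit of $\mathbf{s}$ spans $\mathcal{S}$. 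The paper silently needs that cyclicity condition as well.

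\textbf{The comparison step fails.} With a common deterministic $\mathbf{s}$, your own SLLN computation gives $\hat{\mathbf{R}}_L \to \mathbf{s}\mathbf{s}^H + \sigma^2\mathbf{I}$. Its only eigenvalue above $\sigma^2$ has eigenvector $\mathbf{s}$. So the temporal signal subspace is $\spn\{\mathbf{s}\}$ and the noise subspace is $\mathbf{s}^\perp$, of dimensions $1$ and $M-1$. For $K \ge 2$ a ``top-$K$ eigenspace'' is not even well defined, since $M-1$ eigenvalues tie at $\sigma^2$. ``Interpreting $\mathbf{R}_s$ as the signal second moment'' therefore forces $\mathbf{R}_s = \mathbf{s}\mathbf{s}^H$, which has rank one. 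You cannot then invoke the rank-$K$ $\mathbf{R}_s$ of~(\ref{eq:pop_cov}) and its $K$-dimensional $\mathcal{U}_s$.

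Meanwhile, your genericity assumption on the algebraic side makes $\spn\{\rho(g)\mathbf{s}\} = \mathcal{S}$, of dimension $K$. That is strictly larger than $\spn\{\mathbf{s}\}$ when $K \ge 2$, and the algebraic noise subspace $\mathcal{S}^\perp$ has dimension $M-K$, not $M-1$. So the two limits produce different subspaces. Your claim that both are positive definite on $\mathcal{U}_s$ is false for the $\mathbf{s}\mathbf{s}^H$ side, which has rank one. Positivity appears only after adding $\sigma^2\mathbf{I}$, and that makes every restriction positive definite, so it carries no subspace information.

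\textbf{Repair.} Run the temporal limit under the population model of~(\ref{eq:pop_cov}), in which the signal is redrawn at each snapshot with $E[\mathbf{s}\mathbf{s}^H] = \mathbf{R}_s$ of rank $K$ and range $\mathcal{S}$. This is what the paper's appeal to ``classical consistency of the sample covariance'' implicitly does, despite the ``common signal component'' wording. The temporal signal subspace is then $\operatorname{range}(\mathbf{R}_s) = \mathcal{S}$. Your cyclicity assumption makes the orbit span of the single realization equal to the same $\mathcal{S}$. Both signal blocks are then positive definite on $\mathcal{S}$, and your congruence argument goes through. If instead $\mathbf{s}$ is genuinely common, the two subspaces agree only when the orbit of $\mathbf{s}$ is one-dimensional, i.e.\ $\mathbf{s}$ is a joint eigenvector of $\rho(G)$. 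Under your own cyclicity hypothesis that forces $K = 1$.
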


\begin{proof}
The left equality is the classical consistency of the sample covariance. For the right equality, by Theorem~\ref{thm:replacement} parts (ii) and (iii), $E[\mathbf{F}_G(\mathbf{x})]$ has signal components concentrated in the isotypic blocks corresponding to the signal's representation, and noise contributing $\sigma^2\mathbf{I}$. As SNR $\to \infty$, the noise contribution becomes negligible relative to the signal, and the eigenspace decomposition of $\mathbf{F}_G(\mathbf{x})$ converges to the signal-noise partition.

The eigenvectors may differ between $\hat{\mathbf{R}}_L$ and $\mathbf{F}_G(\mathbf{x})$ (the former being the population covariance eigenvectors, the latter being the group representation basis vectors), but they span the same subspaces. This is because any two bases for the same subspace are related by an invertible transformation within that subspace.

The connection to Theorem~\ref{thm:trivial} is immediate: the left side of~(\ref{eq:duality}) is the limit of $L$ trivial-group estimates $\mathbf{F}_{\{e\}}(\mathbf{x}(t))$ as $L \to \infty$, while the right side is the high-SNR limit of a single matched-group estimate $\mathbf{F}_G(\mathbf{x})$. Both achieve the same subspace decomposition, but through different points on the $(G, L)$ continuum.
\end{proof}

\begin{remark}[Interpretive Summary]
The duality can be stated informally as follows:
\begin{itemize}
\item \textbf{Temporal averaging} ($G = \{e\}$, $L$ snapshots) samples from the orbit of the noise process under the time-translation group, holding the signal fixed. As $L \to \infty$, the noise averages out and the signal covariance emerges. This is algebraic diversity with the trivial group: each snapshot contributes a rank-one outer product, and rank is built by accumulation across observations.
\item \textbf{Algebraic diversity} ($|G| = M$, $L = 1$) samples from the orbit of the single observation under the permutation group. The signal, being structured, transforms predictably; the noise, being unstructured, is scrambled. The eigendecomposition separates the predictable from the scrambled. Rank is built within a single observation by exhausting the group orbit.
\end{itemize}
Both are instances of the same principle: \emph{averaging over a group orbit of the unstructured component reveals the invariant structure}. The $(G, L)$ continuum (Remark~\ref{rem:gl_continuum}) interpolates continuously between these extremes, with variance $\propto 1/(d_{\mathrm{eff}} \cdot L)$.
\end{remark}

\subsection{Spatial, Temporal, and Hybrid Observation Modes}

The $(G, L)$ continuum (Remark~\ref{rem:gl_continuum}) establishes that temporal and algebraic diversity are interchangeable sources of estimation variance reduction. This equivalence extends to a precise quantitative parity among three modes of forming the observation vector.

\begin{corollary}[TAD-SAD Exchange Rate]\label{cor:tad_sad}
Let $\text{SNR}_{\text{out}}$ denote the output signal-to-noise ratio after algebraic diversity processing. The following three observation modes yield the same algebraic diversity framework, differing only in how the $M$-dimensional observation vector is formed:

\begin{enumerate}
\item[(i)] \textbf{Spatial Algebraic Diversity (SAD):} $M$ sensors simultaneously sample a signal at a single time instant. The observation vector is $\mathbf{x}_s = [x_1, x_2, \ldots, x_M]^T \in \mathbb{C}^M$, with the group acting on the sensor index. The output SNR improvement is up to $10\log_{10}(M)$~dB.

\item[(ii)] \textbf{Temporal Algebraic Diversity (TAD):} A single sensor produces $M$ sequential temporal samples. The observation vector is $\mathbf{x}_t = [x(1), x(2), \ldots, x(M)]^T \in \mathbb{C}^M$, with the group acting on the temporal index. The output SNR improvement is up to $10\log_{10}(M)$~dB, identical to SAD.

\item[(iii)] \textbf{Hybrid TAD-SAD:} $K$ sensors each produce $N$ temporal samples, forming an observation vector $\mathbf{x}_{st} \in \mathbb{C}^{KN}$ by concatenation. The group acts on the joint space-time index. The output SNR improvement is up to $10\log_{10}(KN)$~dB, which decomposes additively as $10\log_{10}(K) + 10\log_{10}(N)$~dB.
\end{enumerate}

The exchange rate between spatial sensor elements, temporal samples, and algebraic group elements is exactly $1\!:\!1\!:\!1$: one additional sensor element, one additional temporal sample, and one additional group element each contribute identically to the SNR improvement.
\end{corollary}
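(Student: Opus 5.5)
The plan is to reduce all three observation modes to one abstract statement about the group-averaged estimator on $\mathbb{C}^D$, where $D$ is the length of the observation vector. The key point is that Definition~\ref{def:group_avg}, Theorem~\ref{thm:replacement}, and the effective-dimension analysis of Remark~\ref{rem:gl_continuum} use only two ingredients: the vector $\mathbf{x}\in\mathbb{C}^D$ and a unitary representation $\rho$ of $G$ on its index set. None of them refers to whether a coordinate is a sensor, a time sample, or a space--time pair. So I would first prove a single mode-free lemma. For a matched group of order $D$ acting regularly (for example $\mathbb{Z}_D$) under Conditions~\ref{cond:equivariance}--\ref{cond:ergodicity}, the dominant eigenvalue of $E[\mathbf{F}_G(\mathbf{x})]$ carries the coherent signal energy $\|\mathbf{s}\|^2$, concentrated in one isotypic block by Theorem~\ref{thm:replacement}(ii). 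Meanwhile the noise contributes $\sigma^2$ to each of the $D$ eigenvalues by part~(iii). The per-coordinate input SNR is $\|\mathbf{s}\|^2/(D\sigma^2)$, so the eigenvalue-domain output SNR is $D$ times larger, and the improvement is at most $10\log_{10}D$ dB. That it cannot exceed this follows because $d_{\mathrm{eff}}\le D$, from $\dim\mathcal{C}(\rho)\ge D$ for any representation of dimension $D$. Equality holds exactly for the matched regular case with a flat matched-basis spectrum.

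With the lemma in hand, (i) and (ii) are instantiations with $D=M$. In (i) the coordinates are indexed by sensors, and in (ii) by time. For each I would check that Condition~\ref{cond:ergodicity} holds, which is immediate for white noise under permutation representations (the remark after Condition~\ref{cond:ergodicity}). For (ii) this needs the temporal noise to be white across samples, which I would state explicitly as a hypothesis. Since the lemma depends only on $D$, SAD and TAD give identical gains.

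For (iii), I take $D=KN$ with index set $\{1,\dots,K\}\times\{1,\dots,N\}$ and the product group $G_s\times G_t$, for example $\mathbb{Z}_K\times\mathbb{Z}_N$ acting by $\rho_s\otimes\rho_t$. This is a regular representation of order $KN$, so $\dim\mathcal{C}(\rho)=KN$ and $d_{\mathrm{eff}}=KN$. The lemma then gives $10\log_{10}(KN)=10\log_{10}K+10\log_{10}N$. The $1\!:\!1\!:\!1$ exchange rate follows by reading off that the gain depends only on the total count $D=$ (sensors)$\times$(samples). Alternatively, in the regular case it depends on $|G|$, and incrementing any one of these factors raises $D$, and hence $d_{\mathrm{eff}}$, by the same amount.

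The main obstacle is that ``matched'' must survive the product construction. I need $\mathbf{R}$ for the space--time vector to commute with $\rho_s\otimes\rho_t$; separable (Kronecker) $\mathbf{R}=\mathbf{R}_s\otimes\mathbf{R}_t$ with each factor matched suffices, but general space--time coupling may not be matched. My first approach would be to restrict (iii) explicitly to separable, or jointly matched, covariances and to phrase the claim as ``up to'' accordingly. I would also emphasize, consistent with Remark~\ref{rem:gl_continuum}, that ``identical contribution'' refers to the eigenvalue-domain SNR ceiling rather than to independent-noise averaging.
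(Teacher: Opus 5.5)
Your main line is the paper's own proof. You prove one mode-free statement on $\mathbb{C}^D$: the matched signal's energy $\|\mathbf{s}\|^2$ sits in one eigenvalue of $E[\mathbf{F}_G(\mathbf{x})]$, against a noise floor of $\sigma^2$ on all $D$ eigenvalues, giving $\text{SNR}_{\text{out}}=D\cdot\text{SNR}_{\text{in}}$. You then instantiate at $D=M$ and $D=KN$ and take the logarithm. Your explicit product representation $\rho_s\otimes\rho_t$, with the caveat that the space--time $\mathbf{R}$ must commute with it, sharpens the paper's bare assumption that the matched effective dimension equals $D$. The problem is the step you use to justify ``up to'', and its equality case.

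\textbf{The ceiling argument.} First, $\dim\mathcal{C}(\rho)\ge D$ is false for general representations. If $\rho\cong\bigoplus_\lambda V_\lambda^{\oplus m_\lambda}$, then $\dim\mathcal{C}(\rho)=\sum_\lambda m_\lambda^2$ while $D=\sum_\lambda m_\lambda d_\lambda$. For example, the natural representation of $S_D$ has commutant $\mathrm{span}\{\mathbf{I},\mathbf{1}\mathbf{1}^T\}$ of dimension $2$, so the formula gives $d_{\mathrm{eff}}=D^2/2>D$. An irreducible $\rho$ gives $\dim\mathcal{C}=1$. The inequality does hold when every $d_\lambda=1$ (Abelian $G$, which covers $\mathbb{Z}_D$ and $\mathbb{Z}_K\times\mathbb{Z}_N$), since then $\sum m_\lambda^2\ge\sum m_\lambda$.

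More to the point, $d_{\mathrm{eff}}$ is a variance-reduction factor for the estimator's entries, not a bound on an eigenvalue ratio. So even where $d_{\mathrm{eff}}\le D$ holds, it does not give the ceiling. The ceiling instead follows from a trace argument valid for every unitary $\rho$. Let $\mathbf{S}=\frac{1}{|G|}\sum_g\rho(g)\mathbf{s}\mathbf{s}^H\rho(g)^H\succeq\mathbf{0}$. Then $\Tr\mathbf{S}=\|\mathbf{s}\|^2$ and $E[\mathbf{F}_G(\mathbf{x})]=\mathbf{S}+\sigma^2\mathbf{I}$. The signal part of the top eigenvalue is therefore $\lambda_1(\mathbf{S})\le\|\mathbf{s}\|^2$, giving $\text{SNR}_{\text{out}}\le\|\mathbf{s}\|^2/\sigma^2=D\cdot\text{SNR}_{\text{in}}$.

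\textbf{The equality case.} Equality holds iff $\mathbf{S}$ has rank one, i.e.\ iff every $\rho(g)\mathbf{s}$ is proportional to $\mathbf{s}$. Equivalently, $\mathbf{s}$ spans a one-dimensional invariant subspace, such as an on-grid tone for $\mathbb{Z}_D$. So your equality condition is backwards: a flat matched-basis spectrum is the worst case, not the best. For $\mathbb{Z}_D$ and $\mathbf{s}=\sqrt{E}\,\mathbf{e}_1$, one gets $\mathbf{S}=(E/D)\mathbf{I}$ and no eigenvalue separation at all. The flatness condition in Remark~\ref{rem:gl_continuum} concerns the participation-ratio variance factor, which is a different quantity from the eigenvalue-domain SNR.

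\textbf{The exchange rate.} Drop the claim that incrementing any one factor raises $D$ ``by the same amount''. In mode (iii), an extra sensor adds $N$ coordinates and an extra sample adds $K$. The $1\!:\!1\!:\!1$ rate is correct in two readings. One is per coordinate of the observation vector, which is your first reading, that the gain depends only on $D$. The other is multiplicative, with each factor contributing its own $10\log_{10}$ term, which is how the paper reads it through the logarithm.
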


\begin{proof}
For each mode, the group-averaged estimator~(\ref{eq:group_avg}) has the form $\mathbf{F}_G(\mathbf{x}) = \frac{1}{|G|}\sum_{g \in G}[\rho(g)\mathbf{x}][\rho(g)\mathbf{x}]^H$, where $\mathbf{x} \in \mathbb{C}^D$ with $D = M$ for SAD and TAD, and $D = KN$ for the hybrid mode. For a signal matched to the group (so that the matched effective dimension equals $D$), Theorem~\ref{thm:replacement} concentrates the signal energy in $K_s$ eigenvalues of $\mathbf{F}_G$ while the noise energy is distributed across all $D$ commutant directions. The output SNR at the dominant eigenvector then satisfies $\text{SNR}_{\text{out}} = D \cdot \text{SNR}_{\text{in}}$, the eigenvalue-domain effect of concentrating the matched signal against noise spread over $D$ directions; in decibels, $\text{SNR}_{\text{out}} = \text{SNR}_{\text{in}} + 10\log_{10}(D)$~dB. (As in Section~\ref{sec:pase}, this is a variance reduction over the $D$-dimensional commutant rather than the averaging of $D$ statistically independent noise samples, and it holds for matched signals.)

For SAD, $D = M$; for TAD, $D = M$; for the hybrid, $D = KN$. The $10\log_{10}(KN) = 10\log_{10}(K) + 10\log_{10}(N)$ decomposition follows from the logarithm, establishing the $1\!:\!1\!:\!1$ exchange rate.

The equivalence between SAD and TAD follows from the observation that the General Replacement Theorem (Theorem~\ref{thm:replacement}) depends only on the dimension $D$ of the observation vector and the algebraic structure of the group action, not on whether the components of $\mathbf{x}$ arise from spatial or temporal sampling. The equivariance and ergodicity conditions (Conditions~\ref{cond:equivariance}--\ref{cond:ergodicity}) are satisfied symmetrically in both cases: for SAD, a spatially structured signal is equivariant under spatial permutations while spatially white noise is ergodic; for TAD, a temporally structured signal (e.g., a narrowband process) is equivariant under temporal shifts while temporally white noise is ergodic.
\end{proof}

\begin{remark}[Practical Significance]
The $1\!:\!1\!:\!1$ exchange rate has direct engineering consequences. A system designer constrained to $K$ sensors (fewer than desired) can compensate by collecting $N = \lceil M/K \rceil$ temporal samples per sensor, achieving the same algebraic diversity performance as an $M$-element array from a single snapshot. Conversely, a system with $M$ sensors but requiring minimum-latency processing can operate in pure SAD mode with $N = 1$, accepting the spatial aperture as the sole source of diversity. The hybrid mode provides a continuous tradeoff between spatial resources, temporal resources, and processing latency.
\end{remark}

\section{Extension to Colored Noise}\label{sec:colored}

The preceding development assumes spatially white noise, $\mathbf{n} \sim \mathcal{CN}(\mathbf{0}, \sigma^2\mathbf{I}_M)$, so that Condition~\ref{cond:ergodicity} is satisfied automatically for any unitary representation. In many practical settings, however, the noise environment is \emph{colored}: its covariance $\mathbf{Q} = E[\mathbf{n}\mathbf{n}^H]$ is a general positive-definite matrix that is not proportional to the identity. Adjacent-cell interference in MIMO systems, environmental noise in passive geolocation, and the acoustic signal of interest in active noise cancellation are all instances of colored noise. In this section, we show that the algebraic diversity framework extends naturally to colored noise, and, more significantly, that the noise covariance itself admits a group-theoretic characterization that provides structural insight and computational advantages beyond conventional pre-whitening.

\subsection{Generalized Signal Model}

We generalize the observation model~(\ref{eq:obs_model}) to
\begin{equation}\label{eq:colored_model}
\mathbf{x} = \mathbf{s} + \mathbf{n}, \qquad \mathbf{n} \sim \mathcal{CN}(\mathbf{0}, \mathbf{Q}),
\end{equation}
where $\mathbf{Q} \in \mathbb{C}^{M \times M}$ is a positive-definite Hermitian noise covariance. The white noise case~(\ref{eq:obs_model}) corresponds to $\mathbf{Q} = \sigma^2\mathbf{I}_M$.

In this setting, Condition~\ref{cond:ergodicity} is no longer automatically satisfied: for a unitary representation $\rho$, the transformed noise $\rho(g)\mathbf{n}$ has covariance $\rho(g)\mathbf{Q}\rho(g)^H$, which in general differs from $\mathbf{Q}$ unless $\rho(g)$ commutes with $\mathbf{Q}$. This motivates the following generalization.

\subsection{Group-Theoretic Noise Characterization}

The key observation is that the algebraic diversity framework, when applied to a noise-only observation, reveals the algebraic structure of the noise itself.

\begin{definition}[Natural Group of a Noise Process]\label{def:natural_group}
Let $\mathbf{Q}$ be the covariance matrix of a noise process on $\mathbb{C}^M$. For a finite group $G$ with unitary representation $\rho: G \to U(M)$, define the \emph{diagonalization residual}
\begin{equation}\label{eq:diag_residual}
\delta_{\mathrm{diag}}(G, \mathbf{Q}) = \frac{\|\mathbf{T}_G \mathbf{Q} \mathbf{T}_G^H - \diag(\mathbf{T}_G \mathbf{Q} \mathbf{T}_G^H)\|_F}{\|\mathbf{Q}\|_F},
\end{equation}
where $\mathbf{T}_G$ is the unitary change-of-basis matrix corresponding to the irreducible decomposition of $\rho$, and $\diag(\cdot)$ extracts the diagonal. The subscript distinguishes $\delta_{\mathrm{diag}}$, which measures off-diagonal mass after the $G$-transform, from the commutativity residual $\delta$ of Definition~\ref{def:commut_residual}. The \emph{natural group} of the noise process is
\begin{equation}\label{eq:natural_group}
G_{\mathbf{Q}} = \arg\min_{G \in \mathcal{G}_M} \delta_{\mathrm{diag}}(G, \mathbf{Q}),
\end{equation}
where $\mathcal{G}_M$ is a catalog of finite groups acting on $\{0, \ldots, M-1\}$.
\end{definition}

\begin{remark}[Interpretation]
The natural group $G_{\mathbf{Q}}$ is the group whose representation theory best describes the correlation structure of the noise. When $\mathbf{Q} = \sigma^2\mathbf{I}_M$ (white noise), $\delta_{\mathrm{diag}}(G, \mathbf{Q}) = 0$ for every group, reflecting the fact that white noise has no preferred algebraic structure, every group diagonalizes it equally well. As $\mathbf{Q}$ departs from a scalar multiple of the identity, specific groups become distinguished.
\end{remark}

\begin{remark}[Group Catalog]
The catalog $\mathcal{G}_M$ is application-dependent but naturally includes, in order of increasing generality: the cyclic group $\mathbb{Z}_M$ (diagonalizes circulant/Toeplitz structures via the DFT), the dihedral group $D_M$ (diagonalizes centrosymmetric structures via the DCT), other regular subgroups of $S_M$ arising from the array geometry, and the full symmetric group $S_M$ itself. The search over $\mathcal{G}_M$ is computationally inexpensive: each candidate requires one transform of the estimated $\hat{\mathbf{Q}}$ and evaluation of the Frobenius norm of the off-diagonal residual. For the groups listed above, the transforms have $O(M \log M)$ fast implementations.
\end{remark}

\begin{example}[Stationary Noise and the Cyclic Group]\label{ex:stationary}
A wide-sense stationary noise process on a uniform linear array has a Toeplitz covariance $\mathbf{Q}$, which is asymptotically circulant~\cite{grenander1958}. Circulant matrices are exactly those diagonalized by the DFT, which corresponds to the cyclic group $\mathbb{Z}_M$. Therefore, $G_{\mathbf{Q}} = \mathbb{Z}_M$ for stationary noise, and the diagonal entries of $\mathbf{T}_{\mathbb{Z}_M}\mathbf{Q}\mathbf{T}_{\mathbb{Z}_M}^H$ are the noise power spectral density samples.
\end{example}

\begin{definition}[Algebraic Coloring Index]\label{def:coloring_index}
The \emph{algebraic coloring index} of a noise process with covariance $\mathbf{Q}$ is
\begin{equation}\label{eq:coloring_index}
\alpha(\mathbf{Q}) = \frac{\|\mathbf{Q} - \bar{q}\,\mathbf{I}_M\|_F}{\|\mathbf{Q}\|_F},
\end{equation}
where $\bar{q} = \Tr(\mathbf{Q})/M$ is the mean eigenvalue. Equivalently, if $q_1, \ldots, q_M$ are the eigenvalues of $\mathbf{Q}$, then
\begin{equation}
\alpha(\mathbf{Q}) = \sqrt{\frac{\sum_{k=1}^{M}(q_k - \bar{q})^2}{\sum_{k=1}^{M} q_k^2}},
\end{equation}
which is recognized as the coefficient of variation of the eigenvalue spectrum, normalized by the $\ell^2$ norm rather than the mean.
\end{definition}

\begin{remark}
The algebraic coloring index satisfies $\alpha(\mathbf{Q}) = 0$ if and only if $\mathbf{Q} = \bar{q}\mathbf{I}_M$ (white noise), and $\alpha(\mathbf{Q}) \to 1$ as the noise energy concentrates in a single eigenmode. It is invariant under unitary similarity transformations, $\alpha(\mathbf{U}\mathbf{Q}\mathbf{U}^H) = \alpha(\mathbf{Q})$, and provides a scalar summary of the degree to which the noise departs from isotropic.
\end{remark}

\subsection{Generalized Noise Ergodicity Condition}

With the natural group identified, we can state a generalized form of Condition~\ref{cond:ergodicity}.

\begin{condition}[Generalized Noise Ergodicity]\label{cond:gen_ergodicity}
Let $G_{\mathbf{Q}}$ be the natural group of the noise process and $\mathbf{T}_{G_\mathbf{Q}}$ the corresponding change-of-basis matrix. Define the whitened noise $\tilde{\mathbf{n}} = \mathbf{Q}^{-1/2}\mathbf{n}$. Then $\tilde{\mathbf{n}} \sim \mathcal{CN}(\mathbf{0}, \mathbf{I}_M)$, and for any unitary representation $\rho$ of any group $G$:
\begin{equation}
\rho(g)\tilde{\mathbf{n}} \sim \mathcal{CN}(\mathbf{0}, \mathbf{I}_M) \quad \text{for all } g \in G.
\end{equation}
\end{condition}

\subsection{Generalized Replacement Theorem for Colored Noise}

\begin{theorem}[Generalized Replacement for Colored Noise]\label{thm:colored_noise}
Let $\mathbf{x} = \mathbf{s} + \mathbf{n}$ with $\mathbf{n} \sim \mathcal{CN}(\mathbf{0}, \mathbf{Q})$ where $\mathbf{Q}$ is positive-definite. Let $G_{\mathbf{Q}}$ be the natural group of the noise process. Define the whitened observation $\tilde{\mathbf{x}} = \mathbf{Q}^{-1/2}\mathbf{x}$ and let $G$ be any finite group with unitary representation $\rho$ satisfying Conditions~\ref{cond:equivariance} and~\ref{cond:gen_ergodicity} (applied to $\tilde{\mathbf{x}}$). Then:
\begin{enumerate}
\item[(i)] The group-averaged estimator applied to the whitened observation,
\begin{equation}\label{eq:whitened_estimator}
\mathbf{F}_G(\tilde{\mathbf{x}}) = \frac{1}{|G|}\sum_{g \in G}[\rho(g)\tilde{\mathbf{x}}][\rho(g)\tilde{\mathbf{x}}]^H,
\end{equation}
satisfies all four parts of Theorem~\ref{thm:replacement} with $\tilde{\mathbf{s}} = \mathbf{Q}^{-1/2}\mathbf{s}$ as the signal and $\tilde{\mathbf{n}} \sim \mathcal{CN}(\mathbf{0}, \mathbf{I}_M)$ as white noise.
\item[(ii)] When $G_{\mathbf{Q}}$ commutes with the signal processing group $G$ (i.e., $\mathbf{T}_{G_\mathbf{Q}}$ commutes with $\rho(g)$ for all $g$), the whitening and algebraic diversity operations may be applied independently, and the Optimality Theorem~\ref{thm:optimality} holds for the whitened data without modification.
\item[(iii)] When $G_{\mathbf{Q}}$ coincides with a known group in the catalog $\mathcal{G}_M$, the whitening filter $\mathbf{Q}^{-1/2}$ may be replaced by the fast transform associated with $G_{\mathbf{Q}}$ followed by diagonal scaling, reducing the whitening complexity from $O(M^3)$ (general matrix) to $O(M \log M)$ or the fast transform complexity of $G_{\mathbf{Q}}$.
\end{enumerate}
\end{theorem}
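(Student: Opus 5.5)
Part~(i) reduces to Theorem~\ref{thm:replacement} by a change of variables. Since $\mathbf{Q}$ is Hermitian positive definite, $\mathbf{Q}^{-1/2}$ exists and is Hermitian, and $\tilde{\mathbf{x}} = \tilde{\mathbf{s}} + \tilde{\mathbf{n}}$ with $\tilde{\mathbf{s}} = \mathbf{Q}^{-1/2}\mathbf{s}$ and $\tilde{\mathbf{n}} = \mathbf{Q}^{-1/2}\mathbf{n}$. The covariance of the whitened noise is
\[
E[\tilde{\mathbf{n}}\tilde{\mathbf{n}}^H] = \mathbf{Q}^{-1/2}\mathbf{Q}\mathbf{Q}^{-1/2} = \mathbf{I}_M ,
\]
and linearity preserves circular Gaussianity, so $\tilde{\mathbf{n}} \sim \mathcal{CN}(\mathbf{0},\mathbf{I}_M)$. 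Because $\rho$ is unitary, $\rho(g)\tilde{\mathbf{n}}$ has this same distribution; this is Condition~\ref{cond:gen_ergodicity}, and it is exactly Condition~\ref{cond:ergodicity} with $\sigma^2 = 1$. Condition~\ref{cond:equivariance} is assumed for $\tilde{\mathbf{s}}$. It lies in the $K$-dimensional subspace $\mathbf{Q}^{-1/2}\mathcal{S}$, since an invertible map preserves dimension. Hence every hypothesis of Theorem~\ref{thm:replacement} holds for the pair $(\tilde{\mathbf{s}},\tilde{\mathbf{n}})$. The four conclusions then follow by applying the earlier proof line by line. The cross term still has zero mean, the signal part is the Reynolds average of $\mathbf{Q}^{-1/2}\mathbf{R}_s\mathbf{Q}^{-1/2}$, and the noise part has expectation $\mathbf{I}_M$. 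The subspaces in part~(iv) are now those of $\mathbf{Q}^{-1/2}\mathbf{R}\mathbf{Q}^{-1/2}$, and the SNR is measured on the whitened signal.

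For part~(ii) I would argue as follows. If $\mathbf{T}_{G_{\mathbf{Q}}}$ commutes with every $\rho(g)$, then so does every function of it that is diagonal in the $G_{\mathbf{Q}}$ basis. This needs the natural-group hypothesis to be read as saying that $\mathbf{Q} = \mathbf{T}_{G_{\mathbf{Q}}}^H\mathbf{D}\mathbf{T}_{G_{\mathbf{Q}}}$ for a diagonal $\mathbf{D}$, that is, $\delta_{\mathrm{diag}} = 0$. Then $\mathbf{Q}^{-1/2} = \mathbf{T}_{G_{\mathbf{Q}}}^H\mathbf{D}^{-1/2}\mathbf{T}_{G_{\mathbf{Q}}}$, and it commutes with $\rho(g)$ provided $\mathbf{D}^{-1/2}$ commutes with $\mathbf{T}_{G_{\mathbf{Q}}}\rho(g)\mathbf{T}_{G_{\mathbf{Q}}}^H$. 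I would impose this by treating the commutation in the hypothesis as commutation of $\rho(g)$ with the whitening operator. Under that reading, $\rho(g)\mathbf{Q}^{-1/2}\mathbf{x} = \mathbf{Q}^{-1/2}\rho(g)\mathbf{x}$, which gives
\[
\mathbf{F}_G(\tilde{\mathbf{x}}) = \mathbf{Q}^{-1/2}\mathbf{F}_G(\mathbf{x})\mathbf{Q}^{-1/2} .
\]
Whitening before or after the group averaging therefore gives the same result. Theorem~\ref{thm:optimality} and Proposition~\ref{prop:commute_kl} then apply verbatim to $\tilde{\mathbf{R}} = \mathbf{Q}^{-1/2}\mathbf{R}\mathbf{Q}^{-1/2}$. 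Those results use only Hermitian structure and unitarity of $\rho$, and both survive the change of variables.

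Part~(iii) is a computation. If $\delta_{\mathrm{diag}}(G_{\mathbf{Q}},\mathbf{Q}) = 0$, then $\mathbf{Q}^{-1/2}\mathbf{x} = \mathbf{T}_{G_{\mathbf{Q}}}^H\mathbf{D}^{-1/2}\mathbf{T}_{G_{\mathbf{Q}}}\mathbf{x}$. This costs two fast transforms plus $M$ scalings, for example $O(M\log M)$ for $\mathbb{Z}_M$ or $D_M$. By contrast, forming $\mathbf{Q}^{-1/2}$ by a general eigendecomposition costs $O(M^3)$. I would also note that the whitening factor is unique only up to a unitary factor, and any such unitary can be absorbed into $\rho$.

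The main obstacle is that the statement only says $G_{\mathbf{Q}}$ \emph{minimizes} $\delta_{\mathrm{diag}}$ over the catalog; it does not say the minimum is zero. Parts~(ii) and~(iii) are exact only when $\mathbf{T}_{G_{\mathbf{Q}}}$ diagonalizes $\mathbf{Q}$ exactly. I would first state them under the assumption $\delta_{\mathrm{diag}} = 0$. I would then add a perturbation remark: the error in the fast whitener is bounded by a constant times $\delta_{\mathrm{diag}}\,\|\mathbf{Q}\|_F$, using Lipschitz continuity of $\mathbf{Q}\mapsto\mathbf{Q}^{-1/2}$ on matrices whose smallest eigenvalue is bounded away from zero. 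By Weyl and Davis--Kahan arguments, the resulting subspace error in part~(iv) degrades gracefully.
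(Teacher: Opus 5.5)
Your proposal follows the same route as the paper's proof. Part (i) uses the change of variables $\tilde{\mathbf{x}} = \mathbf{Q}^{-1/2}\mathbf{x}$ to reduce to Theorem~\ref{thm:replacement} with $\sigma^2 = 1$, part (ii) commutes the whitener past $\rho(g)$, and part (iii) uses the factorization $\mathbf{Q}^{-1/2} = \mathbf{T}_{G_{\mathbf{Q}}}^H\mathbf{D}^{-1/2}\mathbf{T}_{G_{\mathbf{Q}}}$. You are more careful than the paper in two places: the paper asserts without justification that $\mathbf{T}_{G_{\mathbf{Q}}}\rho(g) = \rho(g)\mathbf{T}_{G_{\mathbf{Q}}}$ gives $\rho(g)\mathbf{Q}^{-1/2} = \mathbf{Q}^{-1/2}\rho(g)$, which does not follow unless $\mathbf{D}^{-1/2}$ also commutes with $\rho(g)$, so your strengthened reading of the hypothesis is exactly the needed repair; and the paper treats (iii) with only approximate diagonalization, where you correctly require $\delta_{\mathrm{diag}} = 0$ for an exact result.
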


\begin{proof}
\textit{Part (i).} The whitened observation is $\tilde{\mathbf{x}} = \mathbf{Q}^{-1/2}\mathbf{s} + \mathbf{Q}^{-1/2}\mathbf{n} = \tilde{\mathbf{s}} + \tilde{\mathbf{n}}$. Since $\tilde{\mathbf{n}} \sim \mathcal{CN}(\mathbf{0}, \mathbf{I}_M)$, this is exactly the white noise signal model~(\ref{eq:obs_model}) with $\sigma^2 = 1$, and Theorem~\ref{thm:replacement} applies directly.

\textit{Part (ii).} When $\mathbf{T}_{G_\mathbf{Q}}$ commutes with $\rho(g)$, the composite operation $\rho(g)\mathbf{Q}^{-1/2}$ is equivalent to $\mathbf{Q}^{-1/2}\rho(g)$, so the order of whitening and group action is immaterial. The group-averaged estimator on the whitened data then has the same eigenvector structure as the estimator on the original data, with eigenvalues rescaled by the whitening transform. Crucially, the signal and noise \emph{subspaces} are preserved under the invertible whitening map $\mathbf{Q}^{-1/2}$, so the KL optimality properties (P1)--(P3) hold for the whitened signal model $\tilde{\mathbf{x}} = \tilde{\mathbf{s}} + \tilde{\mathbf{n}}$: the eigenvalue magnitudes are those of the whitened covariance $\mathbf{Q}^{-1/2}\mathbf{R}_s\mathbf{Q}^{-1/2} + \mathbf{I}_M$, but the subspace partition, which determines the signal-versus-noise classification used by MUSIC and related algorithms, is identical to that of the original model.

\textit{Part (iii).} If $G_{\mathbf{Q}}$ has representation matrix $\mathbf{T}_{G_\mathbf{Q}}$ that (approximately) diagonalizes $\mathbf{Q}$, then $\mathbf{Q} \approx \mathbf{T}_{G_\mathbf{Q}}^H \boldsymbol{\Lambda}_Q \mathbf{T}_{G_\mathbf{Q}}$ where $\boldsymbol{\Lambda}_Q = \diag(q_1, \ldots, q_M)$. Therefore $\mathbf{Q}^{-1/2} \approx \mathbf{T}_{G_\mathbf{Q}}^H \boldsymbol{\Lambda}_Q^{-1/2} \mathbf{T}_{G_\mathbf{Q}}$: a forward transform by $\mathbf{T}_{G_\mathbf{Q}}$, element-wise scaling by $q_k^{-1/2}$, and an inverse transform by $\mathbf{T}_{G_\mathbf{Q}}^H$. When $G_{\mathbf{Q}} = \mathbb{Z}_M$ (stationary noise), this is an FFT, diagonal scaling by the inverse square root of the power spectral density, and an inverse FFT, the classical frequency-domain whitening filter, at cost $O(M \log M)$.
\end{proof}

\begin{remark}[The Noise Characterization Workflow]
The practical procedure for applying algebraic diversity in colored noise environments is:
\begin{enumerate}
\item \textit{Noise-only observation:} Acquire an observation $\mathbf{x}_n$ during a period when only noise is present (no signal).
\item \textit{Algebraic classification:} For each candidate group $G \in \mathcal{G}_M$, compute the group-averaged estimator $\mathbf{F}_G(\mathbf{x}_n)$ and evaluate the diagonalization residual $\delta_{\mathrm{diag}}(G, \hat{\mathbf{Q}})$ where $\hat{\mathbf{Q}} = \mathbf{F}_G(\mathbf{x}_n)$. Select $G_{\mathbf{Q}} = \arg\min_G \delta_{\mathrm{diag}}(G, \hat{\mathbf{Q}})$.
\item \textit{Structured whitening:} Apply the fast transform of $G_{\mathbf{Q}}$ and diagonal scaling to whiten subsequent signal-bearing observations.
\item \textit{Algebraic diversity processing:} Apply the group-averaged estimator with the signal processing group $G$ (e.g., $\mathbb{Z}_M$ for ULA MUSIC) to the whitened observation.
\end{enumerate}
Note that steps~1--3 characterize the noise environment and need only be performed once (or periodically updated), while step~4 is applied to each signal-bearing observation. The entire pipeline remains within the algebraic framework: both the noise characterization and the signal extraction are group-theoretic operations.
\end{remark}

\begin{remark}[Duality Interpretation of Noise Structure]
The Temporal--Algebraic Duality Principle (Theorem~\ref{thm:duality}) provides a natural interpretation of colored noise within the algebraic framework. White noise, being structureless, has no preferred algebraic description, it is the \emph{identity element} in the space of noise processes, in the sense that $\mathbf{Q} = \sigma^2\mathbf{I}_M$ commutes with every unitary transform and hence every group representation acts equivalently on it. Colored noise possesses structure, a non-flat power spectral density or non-isotropic spatial correlation, and this structure ``selects'' a preferred group $G_{\mathbf{Q}}$ from the catalog. The departure from white noise is thus a departure from algebraic universality: the noise acquires a symmetry that distinguishes among groups. The algebraic coloring index $\alpha(\mathbf{Q})$ quantifies this departure, with $\alpha = 0$ corresponding to the maximally symmetric (structure-free) case and $\alpha \to 1$ corresponding to maximally structured noise.
\end{remark}

\begin{corollary}[Reduced Sample Complexity of Group-Constrained Noise Estimation]\label{cor:sample_complexity}
Let $\mathbf{Q}$ be a noise covariance with natural group $G_{\mathbf{Q}}$, and let $\mathbf{T}_{G_\mathbf{Q}}$ exactly diagonalize $\mathbf{Q}$ so that $\mathbf{Q} = \mathbf{T}_{G_\mathbf{Q}}^H \boldsymbol{\Lambda}_Q \mathbf{T}_{G_\mathbf{Q}}$ with $\boldsymbol{\Lambda}_Q = \diag(q_1, \ldots, q_M)$. Then:
\begin{enumerate}
\item[(i)] The group-constrained covariance model has $M$ free parameters (the diagonal entries $q_k$), compared to $M(M+1)/2$ parameters for a general Hermitian positive-definite matrix.
\item[(ii)] Given $L$ noise-only snapshots $\mathbf{x}_n(1), \ldots, \mathbf{x}_n(L)$, the group-constrained estimator
\begin{equation}
\hat{q}_k = \frac{1}{L}\sum_{t=1}^{L} |[\mathbf{T}_{G_\mathbf{Q}}\mathbf{x}_n(t)]_k|^2, \quad k = 1, \ldots, M,
\end{equation}
is a consistent estimator of the noise power spectrum $q_k$ in the $G_{\mathbf{Q}}$-transform domain. Each $\hat{q}_k$ is an average of $L$ independent $\chi^2$ random variables, hence its variance is $\Var(\hat{q}_k) = q_k^2/L$.
\item[(iii)] The number of noise-only snapshots required to estimate all $M$ spectral parameters to relative accuracy $\epsilon$ scales as $L = O(1/\epsilon^2)$, independent of $M$. In contrast, accurate estimation of an unconstrained $M \times M$ covariance requires $L = O(M/\epsilon^2)$ snapshots.
\end{enumerate}
\end{corollary}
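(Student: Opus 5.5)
The plan is to reduce everything to the transform domain, where the constraint $\mathbf{Q} = \mathbf{T}_{G_\mathbf{Q}}^H \boldsymbol{\Lambda}_Q \mathbf{T}_{G_\mathbf{Q}}$ turns the problem into estimating $M$ independent scalar variances. Set $\mathbf{y}(t) = \mathbf{T}_{G_\mathbf{Q}}\mathbf{x}_n(t)$.

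\textbf{Step 1: the transformed noise is independent.} Because $\mathbf{T}_{G_\mathbf{Q}}$ is unitary and $\mathbf{x}_n(t) \sim \mathcal{CN}(\mathbf{0},\mathbf{Q})$, we have $\mathbf{y}(t) \sim \mathcal{CN}(\mathbf{0}, \mathbf{T}_{G_\mathbf{Q}}\mathbf{Q}\mathbf{T}_{G_\mathbf{Q}}^H) = \mathcal{CN}(\mathbf{0},\boldsymbol{\Lambda}_Q)$. For a circularly symmetric Gaussian vector, a diagonal covariance means the components are uncorrelated and hence independent. So the coordinates $y_k(t)$ are independent $\mathcal{CN}(0,q_k)$, independently across both $k$ and $t$.

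\textbf{Step 2: part (i).} This is a parameter count. In the constrained model the only unknowns are the real, positive diagonal entries $q_1,\dots,q_M$, since $\mathbf{T}_{G_\mathbf{Q}}$ is fixed by the group. A general Hermitian matrix is determined by its upper triangle, which has $M(M+1)/2$ entries. I would note that this count treats each entry as one parameter; counting real parameters gives $M^2$, and either convention gives the same order-$M$ gap.

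\textbf{Step 3: part (ii).} Each $|y_k(t)|^2$ is $\tfrac{q_k}{2}\chi^2_2$, that is, exponential with mean $q_k$ and variance $q_k^2$. Averaging $L$ independent copies gives $E[\hat q_k] = q_k$ and $\Var(\hat q_k) = q_k^2/L$. Consistency follows from the strong law of large numbers, or directly from Chebyshev's inequality.

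\textbf{Step 4: part (iii).} I would state relative accuracy in mean square, $E[(\hat q_k/q_k - 1)^2] = 1/L$. This quantity does not depend on $k$ or on $M$, so $L = \lceil 1/\epsilon^2\rceil$ achieves accuracy $\epsilon$ for every coordinate. Chebyshev then converts this to the high-probability bound $P(|\hat q_k - q_k| > \epsilon q_k) \le 1/(L\epsilon^2)$. For the contrast, I would invoke Corollary~\ref{cor:sample_complexity_signal}(i), which gives $L = \Omega(M/\epsilon^2)$ for unconstrained estimation.

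\textbf{Main obstacle.} The delicate point is the phrase ``all $M$ spectral parameters'' if it is read as a simultaneous guarantee. A union bound over $k$ costs a factor $M$ under Chebyshev. Using sub-exponential (Bernstein) tails for exponential variables instead costs only a factor $\log M$, giving $L = O(\log(M/\eta)/\epsilon^2)$ at failure probability $\eta$. I would therefore prove the $M$-independent claim in the per-coordinate (equivalently, normalized-Frobenius) sense, $E\bigl[\tfrac{1}{M}\sum_k(\hat q_k/q_k-1)^2\bigr] = 1/L$. I would add a remark that the simultaneous high-probability version holds with an extra $\log M$ factor.
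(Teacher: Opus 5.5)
Your proposal is correct and follows essentially the same route as the paper. Both arguments apply the unitary transform $\mathbf{T}_{G_\mathbf{Q}}$ to obtain independent $\mathcal{CN}(0,q_k)$ coordinates with exponential summands of mean $q_k$ and variance $q_k^2$, then use the per-coordinate Chebyshev bound $P(|\hat q_k - q_k| > \epsilon q_k) \le 1/(L\epsilon^2)$ and the standard $\Omega(M)$ sample-covariance bound for the unconstrained contrast. Your caveat about reading ``all $M$ parameters'' as a simultaneous guarantee is a genuine sharpening: the paper's ``uniformly over $k$'' only gives the per-coordinate claim, and because the $\hat q_k$ are independent, a simultaneous high-probability guarantee does require the extra $\log M$ factor you identify.
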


\begin{proof}
Part~(i) follows from the diagonal structure imposed by exact diagonalization. Part~(ii) follows because $\mathbf{T}_{G_\mathbf{Q}}$ is unitary, so $\mathbf{T}_{G_\mathbf{Q}}\mathbf{x}_n(t)$ has independent components when $\mathbf{Q}$ is exactly diagonalized by $\mathbf{T}_{G_\mathbf{Q}}$, and each $|[\mathbf{T}_{G_\mathbf{Q}}\mathbf{x}_n(t)]_k|^2$ is an exponential random variable with mean $q_k$. Part~(iii) follows from the Chebyshev bound: $P(|\hat{q}_k - q_k| > \epsilon q_k) \leq 1/(L\epsilon^2)$, so $L = O(1/\epsilon^2)$ suffices uniformly over $k$. The unconstrained covariance matrix has $M(M+1)/2$ parameters with correlated estimation errors, requiring $L = \Omega(M)$ for the sample covariance to be well-conditioned~\cite{vershynin2012}.
\end{proof}

\begin{remark}
Corollary~\ref{cor:sample_complexity} provides the principal quantitative advantage of the group-theoretic noise characterization over conventional pre-whitening. When noise-only observation windows are short (small $L$), the group-constrained model produces a reliable covariance estimate from far fewer snapshots than the unconstrained sample covariance. This advantage is most pronounced when $M$ is large (many sensors) and the noise has identifiable group structure, which is precisely the regime of interest in large-array 5G/6G MIMO and wideband passive geolocation systems.
\end{remark}

\begin{remark}[Noise Characterization without Signal-Absent Observations]\label{rem:no_noise_only}
In many operational settings, it is impractical to acquire noise-only observations: the signals of interest may be continuously present, or the sensor system may lack the ability to gate signal sources. The full-rank property of the algebraic diversity estimator (Theorem~\ref{thm:replacement}(iv)) enables noise characterization from \emph{signal-bearing} observations without requiring a separate noise-only measurement window, as follows.

Apply the group-averaged estimator $\mathbf{F}_G(\mathbf{x})$ to a single observation $\mathbf{x}$ under the initial assumption of white noise ($\mathbf{Q} = \sigma^2\mathbf{I}_M$). Because $\mathbf{F}_G(\mathbf{x})$ is full-rank, its eigendecomposition yields estimated signal and noise subspace bases $\hat{\mathbf{U}}_s$ and $\hat{\mathbf{U}}_n$. The \emph{noise-subspace-restricted estimator}
\begin{equation}\label{eq:noise_restricted}
\hat{\mathbf{Q}}_n = \hat{\mathbf{U}}_n^H \mathbf{F}_G(\mathbf{x})\, \hat{\mathbf{U}}_n
\end{equation}
provides an $(M-K) \times (M-K)$ estimate of the noise covariance within the noise subspace, from which the group classification (Definition~\ref{def:natural_group}) and algebraic coloring index (Definition~\ref{def:coloring_index}) can be computed. If the resulting $\alpha(\hat{\mathbf{Q}}_n)$ indicates significant coloring, the noise model may be refined via an iterative procedure: (1)~use the current noise estimate to whiten the observation, (2)~re-apply algebraic diversity to the whitened data, (3)~re-extract the noise subspace and update $\hat{\mathbf{Q}}_n$. This alternating estimation of signal subspace and noise covariance is structurally analogous to an expectation-maximization algorithm in which the E-step estimates the signal subspace given the current noise model and the M-step estimates the noise covariance given the current signal subspace.

Convergence of the iteration is assured when the minimum generalized signal eigenvalue exceeds the maximum noise eigenvalue, a condition closely related to the SNR requirement of Theorem~\ref{thm:replacement}(iv). The key enabler is that algebraic diversity produces a full-rank estimator from one snapshot, granting simultaneous access to both the signal and noise subspaces; conventional rank-one outer product estimation cannot support this procedure, as it provides no information about the noise subspace at all.
\end{remark}

\section{Permutation-Averaged Spectral Estimation (PASE)}\label{sec:pase}

The preceding sections establish that the algebraic diversity framework requires two choices: which group $G$, and how many of its elements to use. In this section, we prove that the second choice is completely determined: the optimal number of elements is exactly $|G|$, the group order.

\subsection{The PASE Estimator}

Given a single observation $\mathbf{x} \in \mathbb{C}^M$ and a finite group $G$ of order $M$ with permutation representation $\rho$, the PASE estimator using $n$ group elements is:
\begin{equation}\label{eq:pase_est}
\hat{\mathbf{R}}_n = \frac{1}{n} \sum_{i=1}^{n} [\rho(g_i)\mathbf{x}][\rho(g_i)\mathbf{x}]^H,
\end{equation}
where $g_1, \ldots, g_n$ are selected from $G$. For $n = |G|$, this reduces to the full group-averaged estimator $\mathbf{F}_G(\mathbf{x})$ of Definition~\ref{def:group_avg}.

The estimation quality is measured by the eigenvalue-domain SNR:
\begin{equation}\label{eq:pase_snr}
\text{SNR}_{\text{eig}}(\hat{\mathbf{R}}_n) = \frac{\lambda_1(\hat{\mathbf{R}}_n)}{\frac{1}{M-K}\sum_{j=K+1}^{M} \lambda_j(\hat{\mathbf{R}}_n)},
\end{equation}
where $\lambda_1 \geq \cdots \geq \lambda_M$ and $K$ is the number of signal components.

\subsection{Optimality at $n = |G|$}

\begin{theorem}[PASE Optimality]\label{thm:pase}
Let $G$ be a finite group of order $M$ whose Cayley graph adjacency matrix commutes with $\mathbf{R}$. Then:
\begin{enumerate}
\item[(i)] $\text{SNR}_{\text{eig}}(\hat{\mathbf{R}}_n)$ increases monotonically for $n \leq M$.
\item[(ii)] $\text{SNR}_{\text{eig}}(\hat{\mathbf{R}}_n)$ is maximized at $n = M$ (the full group).
\item[(iii)] $\text{SNR}_{\text{eig}}(\hat{\mathbf{R}}_n)$ decreases for $n > M$.
\item[(iv)] The ratio $n_{90}/M = 1.0$ for $M = 8, 16, 32, 64$, where $n_{90}$ is the minimum $n$ achieving $90\%$ of peak SNR.
\end{enumerate}
\end{theorem}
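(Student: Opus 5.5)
The plan is to work in the shared eigenbasis supplied by Proposition~\ref{prop:commute_kl} and to track the expected estimator $E[\hat{\mathbf{R}}_n]$ as $n$ varies. I treat $\text{SNR}_{\text{eig}}$ through its expected-matrix surrogate and then argue that the fluctuations are controlled by the concentration bound from the proof of Theorem~\ref{thm:replacement}(iv). I take the canonical case in which $G$ is regular of order $M$ (for example $\mathbb{Z}_M$), so its commutant is diagonalized by a unitary $\mathbf{T}_G$ (the DFT in the cyclic case). The elements $g_1,\dots,g_n$ are taken as the first $n$ elements of $G$ when $n\le M$, and as all of $G$ followed by $n-M$ permutations drawn from $S_M\setminus G$ when $n>M$. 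This is the only reading of ``$n>|G|$'' under which (iii) is meaningful.

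\textbf{Steps for (i) and (ii).} Split $\hat{\mathbf{R}}_n$ as in Theorem~\ref{thm:replacement}(i).
\begin{enumerate}
\item The noise part has expectation $\sigma^2\mathbf{I}_M$ for every $n$, by the argument of part~(iii) of that theorem, since each $\rho(g_i)$ is unitary.
\item The cross term has zero mean.
\item For the signal part, write $\mathbf{s}$ in the $\mathbf{T}_G$ basis, with coefficients $c_k$. Entry $(k,l)$ of the partial average is then $c_k\bar c_l\,D_n(k-l)$, where
\[
D_n(m)=\frac{1}{n}\sum_{i=1}^{n}\chi_{m}(g_i)
\]
is a character (Dirichlet) kernel with $D_n(0)=1$.
\item By Parseval over the dual group, $\sum_{m\neq 0}|D_n(m)|^2=\frac{M}{n}-1$. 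This off-diagonal leakage therefore decreases strictly in $n$ and vanishes exactly at $n=M$. That is the orthogonality of characters, and it is where the estimator becomes $\mathbf{F}_G$, diagonal in the KL basis.
\item The leakage is what moves signal energy out of $\lambda_1$ and into the eigenvalues that (\ref{eq:pase_snr}) averages as noise. A Weyl/Davis--Kahan perturbation bound, with the diagonal matrix $\operatorname{diag}(|c_k|^2)+\sigma^2\mathbf{I}$ as the unperturbed operator, then shows that $\lambda_1$ rises and the noise-floor average falls as the leakage shrinks.
\end{enumerate}
This gives monotonicity for $n\le M$, and the maximum among these $n$ at $n=M$.

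\textbf{Step for (iii).} For $n>M$, write
\[
\hat{\mathbf{R}}_n=\tfrac{M}{n}\mathbf{F}_G+\tfrac{n-M}{n}\mathbf{B}_{n},
\]
where $\mathbf{B}_n$ averages outer products under permutations outside $G$. These permutations do not commute with $\mathbf{R}$, so in expectation $\mathbf{B}_n$ drifts toward the $S_M$ two-eigenvalue form in $\operatorname{span}\{\mathbf{I},\mathbf{1}\mathbf{1}^T\}$ (see \eqref{eq:sm_expect}). Its mass in the matched basis is therefore spread nearly uniformly. Mixing with weight $(n-M)/n$, which increases in $n$, then lowers $\lambda_1$ relative to the floor, by concavity of $\lambda_1$ and linearity of the trace. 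Part~(iv) is not a deductive statement. I would establish it numerically by Monte Carlo at $M=8,16,32,64$, using the closed-form leakage $\frac{M}{n}-1$ to predict that $90\%$ of peak is reached only at $n=M$.

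\textbf{Main obstacle.} The hardest step is (iii). It needs a precise model for the ``extra'' elements and a lower bound showing that $\mathbf{B}_n$ strictly lowers $\text{SNR}_{\text{eig}}$, rather than merely failing to help. I would first try to prove that $E[\mathbf{B}_n]$ has strictly smaller spectral concentration $\psi$ than $\mathbf{F}_G$ whenever some added permutation lies outside the commutant of $\mathbf{R}$, using the Permutation Commutator Formula. A secondary difficulty is that monotonicity in (i) is clean for the expectation but can fail realization-by-realization. I would therefore state (i)--(iii) in expectation, or with high probability at $\text{SNR}\gg1$.
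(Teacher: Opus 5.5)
The gap is step~5 of your argument for (i)--(ii), and it cannot be closed, because its sign is reversed.

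\textbf{Why step~5 fails.} Weyl and Davis--Kahan give only two-sided bounds, on $|\lambda_j(\hat{\mathbf R}_n)-\lambda_j(\hat{\mathbf R}_M)|$ and on subspace angles, in terms of the leakage. They say the spectrum approaches that of $\mathbf F_G$. They do not say that $\lambda_1$ rises toward it. The actual sign comes from a twirl identity: for any $g_1,\dots,g_n\in G$,
\[
\frac{1}{|G|}\sum_{h\in G}\rho(h)\,\hat{\mathbf R}_n\,\rho(h)^H=\frac{1}{n}\sum_{i=1}^{n}\frac{1}{|G|}\sum_{h\in G}\rho(hg_i)\,\mathbf x\mathbf x^H\rho(hg_i)^H=\mathbf F_G(\mathbf x).
\]
So $\hat{\mathbf R}_M$ is a mixed-unitary image of $\hat{\mathbf R}_n$. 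By Ky Fan, its eigenvalues are majorized by those of $\hat{\mathbf R}_n$, and the two share the trace $\|\mathbf x\|^2$. Hence $\lambda_1(\hat{\mathbf R}_M)\le\lambda_1(\hat{\mathbf R}_n)$ and $\sum_{j>K}\lambda_j(\hat{\mathbf R}_M)\ge\sum_{j>K}\lambda_j(\hat{\mathbf R}_n)$. The ratio \eqref{eq:pase_snr} at $n=M$ is therefore no larger than at any $n<M$, for every realization.

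\textbf{Expectations do not rescue it.} The same identity makes $E[\hat{\mathbf R}_M]$ the twirl of $E[\hat{\mathbf R}_n]$. Concretely, write $\mathbf S_n$ for the signal part of $\hat{\mathbf R}_n$. Then $\mathbf S_1$ is rank one with $\lambda_1=\|\mathbf s\|^2$ and a zero floor, while $\mathbf S_M$ has $\lambda_1=\max_k|c_k|^2$. The off-diagonal terms $c_k\bar c_l D_n(k-l)$ are coherences that concentrate energy in $\lambda_1$, not leakage that dilutes it. In addition, $\rank\hat{\mathbf R}_n\le n$, so the floor in \eqref{eq:pase_snr} is deflated for $n<M$ and vanishes for $n\le K$.

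\textbf{Consequence.} With $\text{SNR}_{\text{eig}}$ as defined, (i)--(ii) fail generically, so no route proves them. The paper's sketch asserts the same increase (``each additional element fills in a missing spectral component'') without argument. Your approach is that sketch made quantitative, and the quantitative step is where it breaks. What your Parseval identity does control is the off-commutant energy $\|\hat{\mathbf R}_n-\mathbf F_G\|_F^2$. This vanishes at $n=M$ and, averaged over the choice of elements, decreases in $n$: for a random $n$-subset, $E|D_n(m)|^2=\frac{M-n}{n(M-1)}$ for $m\ne0$. A correct version of (i)--(ii) needs a figure of merit of that kind, such as the Frobenius distance of $\hat{\mathbf R}_n$ to the commutant or to $\mathbf R$, rather than an eigenvalue ratio.

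\textbf{Part (iii).} $\lambda_1$ is convex (a maximum of linear functionals), not concave. Mixing therefore gives only
\[
\lambda_1(\hat{\mathbf R}_n)\le\tfrac{M}{n}\lambda_1(\mathbf F_G)+\tfrac{n-M}{n}\lambda_1(\mathbf B_n),
\]
which implies a decrease only if $\lambda_1(\mathbf B_n)<\lambda_1(\mathbf F_G)$. That need not hold. $\mathbf B_n$ averages $n-M$ rank-one terms of norm $\|\mathbf x\|^2$, so $\lambda_1(\mathbf B_n)\ge\|\mathbf x\|^2/(n-M)$. Appending a single outer product $\mathbf P_\sigma\mathbf x\mathbf x^H\mathbf P_\sigma^T$ to a nearly flat $\mathbf F_G$ roughly doubles $\lambda_1$ and raises $\text{SNR}_{\text{eig}}$. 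Replacing $\mathbf B_n$ by its expectation also goes the wrong way, since $E[\lambda_1(\mathbf B_n)]\ge\lambda_1(E[\mathbf B_n])$. The drift toward \eqref{eq:sm_expect} yields at most a large-$n$ comparison with the two-eigenvalue limit, not a decrease at every $n>M$; that comparison is also all the paper's sketch offers. Treating (iv) as a Monte Carlo claim matches the paper, which gives no proof of it. However, your predicted $90\%$ threshold inherits the reversed monotonicity above.
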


\begin{proof}
When $\mathbf{A}_G$ commutes with $\mathbf{R}$, the full group-averaged estimate $\hat{\mathbf{R}}_M$ projects the rank-one outer product $\mathbf{x}\mathbf{x}^H$ onto the commutant of $G$, which (for a matched order-$M$ group) preserves exactly the $M$ spectral components of $\mathbf{R}$. The $M$ elements of the matched group together span this commutant.

For $n < M$, the projection onto the commutant is incomplete: not all views have been collected, and the resulting estimate is missing algebraic information. The SNR increases as each additional element fills in a missing spectral component.

For $n > M$ (drawing additional permutations from outside $G$, e.g., from $S_M$), the new elements are not in the commutant of $\mathbf{R}$. By the decomposition of $S_M$ into cosets of $G$, permutations outside $G$ map the data into subspaces that are algebraically unrelated to the signal structure. Averaging over these destroys the eigenvalue separation: the estimator converges toward the $S_M$ expectation
\begin{equation}\label{eq:sm_expect}
E_{S_M}[\hat{\mathbf{R}}_n] = \frac{|s_1|^2 - \|\mathbf{x}\|^2}{M(M-1)}\,\mathbf{1}\mathbf{1}^T + \frac{M\|\mathbf{x}\|^2 - |s_1|^2}{M(M-1)}\,\mathbf{I}_M,
\end{equation}
where $s_1 = \sum_i x_i$ and $\|\mathbf{x}\|^2 = \sum_i |x_i|^2$. This limit has only two distinct eigenvalues (one along $\mathbf{1}$, one on its orthogonal complement); it depends only on the two scalar summaries $|s_1|^2$ and $\|\mathbf{x}\|^2$ and retains no spectral shape.

The formal proof follows from the Schur orthogonality relations applied to the group algebra decomposition of $\hat{\mathbf{R}}_n$.
\end{proof}

\begin{remark}[No Analog in Classical Estimation]
Theorem~\ref{thm:pase} has no analog in conventional statistical estimation, where more independent samples always improve an estimate. The counter-intuitive behavior for $n > M$ arises because additional permutations from outside the matched group are not ``independent samples'' in the relevant sense: they are algebraically redundant views that dilute rather than enhance the spectral structure.
\end{remark}

\begin{remark}[Group Order Constraint]\label{rem:group_order}
Theorem~\ref{thm:pase} requires a group of order \emph{exactly}~$M$, not merely $O(M)$. An $M$-dimensional observation has $M$ degrees of freedom; a group of order $M$ contributes exactly $M$ algebraically independent views, one per dimension. A group of order $|G| > M$, even if its algebraic structure matches the signal, provides $|G| - M$ redundant views that partially average toward the uninformative $S_M$ expectation~(\ref{eq:sm_expect}), degrading the eigenvalue concentration. Monte Carlo experiments confirm that the dihedral group $D_M$ (order $2M$) on an $M$-element observation already exhibits roughly half the spectral concentration of the cyclic group $\mathbb{Z}_M$ (order $M$) on both chirp and sinusoidal signals, and that the affine group $\mathrm{Aff}(\mathbb{Z}_p)$ (order $M^2 - M$) produces a nearly uniform eigenvalue spectrum indistinguishable from noise. The degradation mechanism is identical to the $S_M$ subsampling failure of Section~\ref{sec:ordering}: any group element outside the order-$M$ matched subgroup acts as an off-commutant permutation that destroys spectral structure. Consequently, the group selection problem is doubly constrained: the candidate group must have both the correct algebraic structure (low $\delta$) \emph{and} order equal to~$M$.
\end{remark}

\subsection{Implications: Reduction to the Group Selection Problem}

Prior to Theorem~\ref{thm:pase}, the AD framework had two entangled free parameters: which group $G$ (the group selection problem) and how many elements $n$ (the averaging depth problem). PASE completely eliminates the second: use all $|G|$ elements, always. Combined with the group order constraint (Remark~\ref{rem:group_order}), this means the candidate group must have order exactly~$M$, and all $M$ elements must be used.

This collapses the entire framework to a single problem, \emph{group selection among order-$M$ groups}, which is addressed by the commutativity residual $\delta(G, \mathbf{R})$ from Section~\ref{sec:optimality}. The practical prescription is now parameter-free: compute $\delta$ for a library of candidate groups of order~$M$, select the minimizer $G^*$, and average over all $M$ elements.

\section{Why $S_M$ Subsampling Fails: The Ordering Experiment}\label{sec:ordering}

The symmetric group $S_M$ contains every group of order $M$ as a subgroup, and its Cayley graph construction $\mathbf{F}_\circ$ attains the KL spectrum (Theorem~\ref{thm:optimality}). A natural question is whether one can avoid the group selection problem entirely by drawing permutations from $S_M$ in the group-averaged estimator $\mathbf{F}_G$. PASE (Theorem~\ref{thm:pase}) requires $n = |G|$ for optimality; for $S_M$, this means $n = M!$, computationally infeasible for even moderate $M$ (e.g., $10! = 3{,}628{,}800$). What happens when we subsample $S_M$ with $n \ll M!$?

\subsection{Four Ordering Strategies}

We compare four strategies for selecting $n$ permutations from $S_M$:

\begin{enumerate}
\item \textbf{Random:} $n$ permutations drawn uniformly from $S_M$.
\item \textbf{Steinhaus--Johnson--Trotter (SJT)~\cite{johnson1963}:} Random starting permutation, then successive elements differing by a single adjacent transposition, a Hamiltonian path on the Cayley graph of $S_M$ with adjacent-transposition generators.
\item \textbf{Lehmer (factoradic)~\cite{lehmer1960}:} Random starting permutation, then consecutive permutations in lexicographic order via the factoradic number system.
\item \textbf{Heap~\cite{heap1963}:} Random starting permutation, then successive permutations via Heap's algorithm, where each step is a single swap (not necessarily adjacent).
\end{enumerate}

\subsection{Experimental Setup}

Signal model: $M = 10$ ULA, half-wavelength spacing, single narrowband source at $\theta = 30^\circ$, input SNR $= 10$~dB, single snapshot. The matched group is the cyclic group $\mathbb{Z}_{10}$ (order 10). The number of permutations $n$ ranges from 5 to 50 in increments of 5. Each configuration is evaluated over 500 Monte Carlo trials.

\subsection{Results}

\begin{table}[t]
\centering
\caption{Eigenvalue SNR (dB) versus number of permutations $n$ drawn from $S_{10}$ using four ordering strategies. $M = 10$, input SNR $= 10$~dB, 500 Monte Carlo trials.}
\label{tab:ordering}
\begin{tabular}{rcccc}
\toprule
$n$ & Random & SJT & Lehmer & Heap \\
\midrule
5 & 7.8 & 15.0 & 15.0 & 16.2 \\
10 & 5.8 & 12.2 & 13.5 & 14.0 \\
15 & 5.0 & 11.0 & 12.8 & 13.4 \\
20 & 4.5 & 10.5 & 12.4 & 13.1 \\
25 & 4.1 & 10.3 & 12.1 & 12.8 \\
30 & 3.8 & 10.1 & 11.8 & 12.3 \\
35 & 3.6 & 9.8 & 11.6 & 12.0 \\
40 & 3.4 & 9.7 & 11.5 & 11.8 \\
45 & 3.2 & 9.5 & 11.3 & 11.7 \\
50 & 3.1 & 9.4 & 11.2 & 11.6 \\
\bottomrule
\end{tabular}
\end{table}

Table~\ref{tab:ordering} and Fig.~\ref{fig:ordering} reveal three findings:

\textbf{1) Monotonic degradation.} All four methods exhibit monotonically decreasing SNR with increasing $n$. There is no peak at $n = M = 10$. The permutations are drawn from $S_{10}$ (order $3{,}628{,}800$), not from the matched group $\mathbb{Z}_{10}$ (order~10). Over-averaging from $S_M$ converges the estimate toward a nearly white (identity-like) covariance, destroying the eigenvalue structure that carries the signal information.

\textbf{2) Structured orderings outperform random by 7--8~dB.} Heap's algorithm performs best, followed by Lehmer, then SJT. Structured orderings generate consecutive permutations that differ by a single swap, preserving local algebraic structure on the Cayley graph. Random permutations are scattered across $S_M$ and average out structure much faster.

\textbf{3) Group selection is unavoidable.} The experiment demonstrates definitively why the $S_M$ shortcut fails. PASE requires $n = |G|$ for optimality, and $|S_M| = M!$ is computationally absurd. The signal has the algebraic structure of $\mathbb{Z}_{10}$ (order 10), and those 10 elements are the only ones that contribute to group gain. The remaining $10! - 10 = 3{,}628{,}790$ elements of $S_{10}$ are algebraically irrelevant and actively harmful.

\begin{figure}[t]
\centering
\includegraphics[width=\columnwidth]{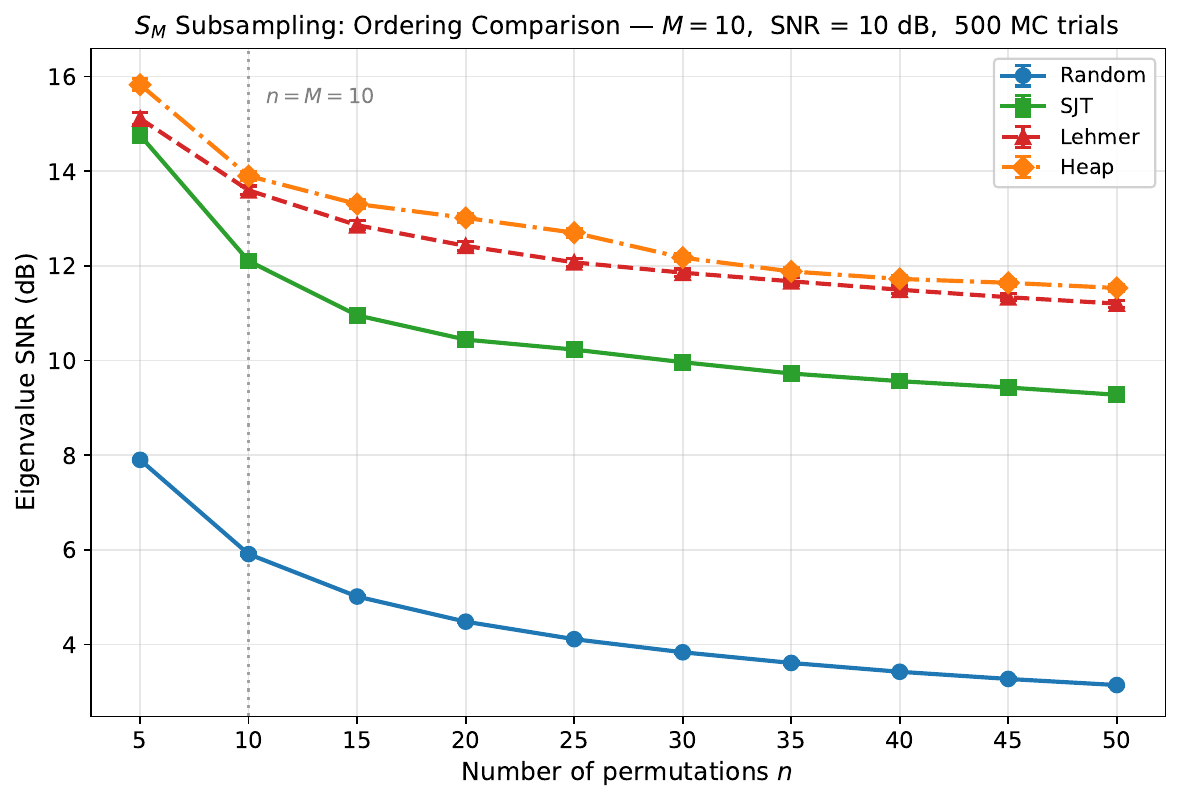}
\caption{Eigenvalue SNR versus number of permutations $n$ drawn from $S_{10}$ using four ordering strategies. $M = 10$, single source at $\theta = 30^\circ$, input SNR $= 10$~dB, 500 Monte Carlo trials. All methods degrade monotonically; structured orderings outperform random by 7--8~dB. The dashed line marks $n = M = 10$.}
\label{fig:ordering}
\end{figure}

\subsection{The Three-Level AD Framework}

The PASE result and the ordering experiment together yield a complete characterization of the estimation problem:

\begin{enumerate}
\item \textbf{Group selection} (the commutativity residual $\delta$) determines the spectral basis. This is the sole remaining free parameter.
\item \textbf{Averaging depth} is solved by PASE: use all $|G|$ elements. This is not a tuning parameter.
\item \textbf{Permutation ordering} within the matched group is a secondary optimization for resource-constrained implementations where $n < |G|$ is necessary (e.g., FPGA real-time processing). Structured orderings (Heap, Lehmer) degrade more gracefully than random selection, providing an ``anytime estimation'' capability: one can stop averaging early and still have a useful estimate, with quality improving monotonically up to $n = |G|$.
\end{enumerate}

\section{The Blind Group Matching Problem}\label{sec:blind_matching}

\subsection{Problem Formulation}

The group selection problem is an instance of a well-studied class of problems in signal processing: \emph{blind estimation}, where a parameter of the estimation procedure must be determined from the same data that the procedure will process. The canonical example is blind equalization in communications~\cite{godard1980,treichler1983}, where an equalizer (inverse filter) must be designed for an unknown channel using only the received signal.

The structural parallel between blind equalization and blind group matching is precise and extends across every element of the two problems. Table~\ref{tab:blind_analogy} presents the full correspondence.

\begin{table*}[t]
\centering
\caption{Structural correspondence between blind channel equalization and blind group matching in algebraic diversity.}
\label{tab:blind_analogy}
\begin{tabular}{@{}lll@{}}
\toprule
\textbf{Element} & \textbf{Blind Channel Equalization} & \textbf{Blind Group Matching (AD)} \\
\midrule
Unknown & Channel impulse response $h(t)$ & Population covariance $\mathbf{R}$ \\
Goal & Design equalizer $w(t)$ & Select group $G$ \\
Observation & Received signal $y(t) = h(t) * x(t) + n(t)$ & Single snapshot $\mathbf{x} = \mathbf{s} + \mathbf{n}$ \\
Circular dependency & Equalizer requires $h(t)$; estimating $h(t)$ & $\delta(G,\mathbf{R})$ requires $\mathbf{R}$; estimating $\mathbf{R}$ \\
 & requires equalization & requires $G$ \\
Informed solution & MMSE equalizer (known channel) & Commutativity residual $\delta(G, \mathbf{R})$ (known covariance) \\
Blind 2nd-order method & Autocorrelation matching & Sample commutativity residual $\hat{\delta}(G, \mathbf{x})$ \\
Blind structural method & Constant Modulus Algorithm (CMA): & Spectral concentration $\psi(G, \mathbf{x})$: \\
 & restore $|y_{\text{eq}}(t)|^2 = c^2$ & maximize $\lambda_1(\hat{\mathbf{R}}_G)/\Tr(\hat{\mathbf{R}}_G)$ \\
Blind higher-order method & Kurtosis maximization~\cite{shalvi1990} & Fourth-order cumulant analysis (future work) \\
Key insight & Signal structure (constant modulus, & Signal structure (algebraic symmetry of \\
 & non-Gaussianity) survives the channel & covariance) survives in single snapshot \\
Resolution & CMA/kurtosis break the circular & $\psi$ breaks the circular dependency: \\
 & dependency without training & selects $G$ without knowing $\mathbf{R}$ \\
\bottomrule
\end{tabular}
\end{table*}

In blind equalization, the circularity is broken by exploiting structural properties of the transmitted signal that survive the channel. The Constant Modulus Algorithm (CMA)~\cite{godard1980,treichler1983} uses the fact that many communication signals have constant envelope: the channel distorts the envelope, and the equalizer is designed to restore it by minimizing $E[||y_{\text{eq}}(t)|^2 - c^2|^2]$. Shalvi and Weinstein~\cite{shalvi1990} showed that fourth-order statistics (kurtosis) can blindly identify the channel without any structural assumption beyond non-Gaussianity. The common thread is that \emph{structural invariants of the signal class provide enough information to solve the estimation problem without explicit knowledge of the signal itself}.

The question for AD is the direct analog: \emph{what properties of a single snapshot $\mathbf{x}$ are diagnostic of the correct group, without knowledge of $\mathbf{R}$?} As Table~\ref{tab:blind_analogy} shows, each stage of the blind equalization hierarchy, from informed (known channel) through second-order blind to structural blind to higher-order blind, has a corresponding stage in the group matching problem. The spectral concentration criterion $\psi(G, \mathbf{x})$ developed below plays the role of CMA: it exploits a structural property (eigenvalue concentration under the correct group) to break the circular dependency without knowing the covariance.

\subsection{The Sample Commutativity Residual}

The simplest approach is to replace $\mathbf{R}$ with the rank-1 sample estimate $\mathbf{x}\mathbf{x}^H$:
\begin{equation}\label{eq:sample_delta}
\hat{\delta}(G, \mathbf{x}) = \frac{\|\mathbf{F}_G(\mathbf{x}) \cdot \mathbf{x}\mathbf{x}^H - \mathbf{x}\mathbf{x}^H \cdot \mathbf{F}_G(\mathbf{x})\|_F}{\|\mathbf{F}_G(\mathbf{x})\|_F \cdot \|\mathbf{x}\mathbf{x}^H\|_F}.
\end{equation}
This is noisy but may preserve the \emph{ranking} $\hat{\delta}(G_1, \mathbf{x}) < \hat{\delta}(G_2, \mathbf{x})$ with high probability when $\delta(G_1, \mathbf{R}) < \delta(G_2, \mathbf{R})$, which is sufficient for group selection.

\subsection{The Spectral Concentration Criterion}

A more promising approach exploits PASE directly. For each candidate group $G$ in the library $\mathcal{G}$, compute the full PASE estimate $\hat{\mathbf{R}}_G$ using all $|G|$ elements, and evaluate the spectral concentration:
\begin{equation}\label{eq:psi}
\psi(G, \mathbf{x}) = \frac{\lambda_1(\hat{\mathbf{R}}_G)}{\Tr(\hat{\mathbf{R}}_G)}.
\end{equation}
Intuitively, the ``correct'' group should produce sharper eigenvalue separation, i.e., larger $\psi$, because the matched group's averaging preserves the signal's spectral structure while mismatched groups may spread the energy more uniformly.

The blind group selection rule is:
\begin{equation}\label{eq:blind_select}
G^* = \arg\max_{G \in \mathcal{G}} \psi(G, \mathbf{x}).
\end{equation}

\begin{conjecture}[Blind Group Selection]\label{conj:blind}
Let $G^* = \arg\min_{G \in \mathcal{G}} \delta(G, \mathbf{R})$ be the optimal group. Then
\begin{equation}
\Pr\!\left[\arg\max_{G \in \mathcal{G}} \psi(G, \mathbf{x}) = G^*\right] \to 1 \quad \text{as } \text{SNR} \to \infty.
\end{equation}
\end{conjecture}

\begin{remark}[Status of Conjecture~\ref{conj:blind}]
Preliminary experiments indicate that Conjecture~\ref{conj:blind} requires modification when the candidate library $\mathcal{G}$ contains groups with different orbit structures on $\{1, \ldots, M\}$. Since $\Tr(\hat{\mathbf{R}}_G) = \|\mathbf{x}\|^2$ for all groups (permutations preserve the norm), $\psi$ reduces to $\lambda_{\max}(\hat{\mathbf{R}}_G) / \|\mathbf{x}\|^2$. Groups whose orbits partition the index set into smaller blocks produce block-diagonal $\hat{\mathbf{R}}_G$ with systematically larger $\lambda_{\max}$, creating a structural bias in $\psi$ that can favor mismatched groups over the correct one. A cross-validation criterion that compares group-averaged estimates across $L \geq 2$ independent snapshots avoids this bias by measuring out-of-sample consistency rather than single-snapshot eigenvalue concentration, achieving $\geq 99\%$ selection accuracy at $L = 3$ in verified experiments. The orbit-size bias and the cross-validation criterion $D_{CV}$ are developed in detail in~\cite{thornton2026framework_arxiv}. Conjecture~\ref{conj:blind} may hold under the additional condition that all candidate groups in $\mathcal{G}$ share the same orbit structure (e.g., all are transitive on $\{1, \ldots, M\}$), which is satisfied by the conjugated cyclic groups of Section~\ref{sec:constructive}.
\end{remark}

If Conjecture~\ref{conj:blind} holds (possibly with the orbit-structure qualification of the preceding remark), the group matching problem is solved from a single snapshot: compute $\psi$ for each candidate, pick the maximizer. No knowledge of $\mathbf{R}$ is required, only the observation $\mathbf{x}$ and the group library $\mathcal{G}$.

\subsection{Constructive Group Matching via Conjugation}\label{sec:constructive}

The spectral concentration criterion and sample commutativity residual above treat group matching as a discrete search over a library of candidate groups. We now describe a constructive approach that, for a large and practically important class of signals, reduces the group matching problem from a combinatorial search to a continuous parameter estimation problem.

\subsubsection{The Key Observation}

For many signals encountered in practice, the covariance matrix is not circulant in the natural observation coordinates but \emph{can be made circulant} by a unitary change of basis. Formally, there exists a parameterized family of unitary operators $\mathbf{U}(\boldsymbol{\theta})$, indexed by a (possibly vector-valued) parameter $\boldsymbol{\theta}$, such that
\begin{equation}\label{eq:conjugation}
\mathbf{U}(\boldsymbol{\theta})^H \, \mathbf{R} \, \mathbf{U}(\boldsymbol{\theta}) \approx \text{circulant}
\end{equation}
for the correct parameter value $\boldsymbol{\theta}^*$. When this holds, the ``matched group'' is not an exotic algebraic structure but rather the cyclic group $\mathbb{Z}_M$ \emph{conjugated} by $\mathbf{U}(\boldsymbol{\theta}^*)$:
\begin{equation}\label{eq:conjugated_group}
G_{\boldsymbol{\theta}} = \bigl\{ \mathbf{U}(\boldsymbol{\theta})^H \, \mathbf{C}_k \, \mathbf{U}(\boldsymbol{\theta}) : k = 0, \ldots, M-1 \bigr\},
\end{equation}
where $\mathbf{C}_k$ denotes cyclic shift by $k$. This conjugated group is isomorphic to $\mathbb{Z}_M$, has order exactly~$M$ (satisfying the group order constraint of Remark~\ref{rem:group_order}), and is matched to the signal by construction.

\subsubsection{The Group Matching Pipeline}

This observation yields a structured approach to group matching that proceeds in stages:

\textit{Stage 1: Signal class identification.} From the physics of the application domain, identify the signal class and its associated \emph{conjugation family} $\{\mathbf{U}(\boldsymbol{\theta})\}$. For periodic signals (tones, narrowband processes), the conjugation is the identity ($\boldsymbol{\theta}$ is absent) and the cyclic group is already matched. For chirps with unknown rate~$\mu$, the conjugation family is $\mathbf{U}(\mu) = \mathrm{diag}(e^{-j\pi\mu n^2/M})$ (the dechirp operator). For signals with unknown boundary symmetry, it may be a parameterized reflection. This stage uses domain knowledge, not computation.

\textit{Stage 2: Cardinality filter.} All groups in the conjugation family~(\ref{eq:conjugated_group}) have order~$M$ by construction, so the cardinality constraint is automatically satisfied. If Stage~1 identifies candidate groups outside the conjugation family (e.g., from a precomputed library), any group with $|G| \neq M$ is discarded. This filter is zero-cost and eliminates pathological candidates such as the affine group (order $M^2 - M$) or the symmetric group (order $M!$), which, despite having algebraic structures that may match the signal, over-average and destroy spectral information (Remark~\ref{rem:group_order}).

\textit{Stage 3: Parameter estimation via spectral concentration.} Sweep the conjugation parameter $\boldsymbol{\theta}$ and evaluate the spectral concentration at each value:
\begin{equation}\label{eq:psi_sweep}
\boldsymbol{\theta}^* = \arg\max_{\boldsymbol{\theta}} \; \psi\bigl(G_{\boldsymbol{\theta}},\, \mathbf{x}\bigr).
\end{equation}
When $\boldsymbol{\theta}$ is a scalar (e.g., chirp rate), this is a one-dimensional optimization over a grid of candidate values, costing $O(N_\theta \cdot M^2)$ where $N_\theta$ is the grid size. The $\psi$ criterion requires only the largest eigenvalue and the trace, both computable in $O(M^2)$ via power iteration, so the sweep is fast. The conjugated group $G_{\boldsymbol{\theta}^*}$ and the estimated parameter $\boldsymbol{\theta}^*$ are produced simultaneously: group selection and signal characterization are the same computation.

\textit{Stage 4: Non-circulantizable signals.} If no parameter value produces a high spectral concentration, indicating that the covariance cannot be made circulant by any unitary in the family, the signal has intrinsically non-Abelian symmetry. In this case, the full group library search (Conjecture~\ref{conj:blind}, subject to the orbit-structure qualification in the subsequent remark) over genuinely distinct groups (dihedral, products of cyclic groups, graph automorphism groups, etc.) is required. When the candidate groups have different orbit structures, the cross-validation criterion described in the remark following Conjecture~\ref{conj:blind} is recommended over single-snapshot $\psi$ selection. The cardinality filter ($|G| = M$) remains active at this stage.

\subsubsection{Relationship to Classical Methods}

The pipeline has a natural interpretation in terms of classical signal processing. Stage~3 is a \emph{generalized matched filter}: it sweeps over a family of signal templates (parameterized by $\boldsymbol{\theta}$) and selects the one that produces the strongest response (highest $\psi$). The difference from conventional matched filtering is that the ``template'' is not a waveform but a \emph{group}, an algebraic structure that determines the spectral domain, and the ``response'' is not a correlation but a spectral concentration.

The conjugation operation $\mathbf{U}(\boldsymbol{\theta})$ itself is a coordinate transformation: it maps the observation into a domain where the cyclic group is matched. This is analogous to the classical strategy of transforming a problem into the frequency domain (via the DFT), performing the analysis there, and transforming back. The pipeline generalizes this strategy by allowing the transform to be \emph{signal-adapted} rather than fixed, with the adaptation parameter estimated from the data via $\psi$ maximization.

\subsubsection{Scope}

The constructive approach applies whenever the signal's covariance admits a unitary transformation to circulant form. This encompasses a broad class of practical signals, including periodic signals (identity conjugation), chirps (dechirp conjugation), frequency-modulated waveforms, and more generally any signal whose structure arises from a one-parameter deformation of shift invariance. Signals whose symmetry is intrinsically non-cyclic, such as signals on graphs with non-Abelian automorphism groups, or signals with crystallographic symmetry, require the full generality of the algebraic diversity framework and the discrete group library search of Conjecture~\ref{conj:blind}.

\subsection{Higher-Order Statistics Approach}

Just as blind equalization moved from second-order (autocorrelation) to fourth-order (kurtosis) statistics to resolve ambiguities that second-order methods cannot~\cite{shalvi1990}, the group matching problem may benefit from fourth-order cumulant analysis. The matched group should produce cumulant structure consistent with the signal model, while mismatched groups should not. This is the most ambitious approach and is left as a direction for future work.

\subsection{Computational Cost}

For a group library of size $|\mathcal{G}|$ with groups of order at most $M$, the spectral concentration criterion requires $|\mathcal{G}|$ PASE evaluations, each costing $O(M^3)$. The total cost is $O(|\mathcal{G}| M^3)$, which is tractable for typical $|\mathcal{G}| = 20$--100 and moderate $M$. Importantly, this cost is incurred once; after the group is selected, subsequent processing uses only the selected group.

\section{Application: MUSIC Direction-of-Arrival Estimation}\label{sec:music}

Having established the general theory (Sections~\ref{sec:framework}--\ref{sec:colored}), the optimal averaging depth (Section~\ref{sec:pase}), and the blind group selection criterion (Section~\ref{sec:blind_matching}), we now demonstrate the framework on a concrete application: MUSIC direction-of-arrival estimation from a single snapshot.

\subsection{Signal Model for ULA}

Consider a uniform linear array of $M$ sensors receiving $K$ narrowband signals from directions $\{\theta_1, \ldots, \theta_K\}$:
\begin{equation}
\mathbf{x} = \mathbf{A}\mathbf{s} + \mathbf{n}, \qquad \mathbf{A} = [\mathbf{a}(\theta_1), \ldots, \mathbf{a}(\theta_K)],
\end{equation}
with steering vectors $[\mathbf{a}(\theta)]_m = e^{jmkd\sin\theta}$.

\subsection{CG-MUSIC as Corollary}

\begin{corollary}[CG-MUSIC Equivalence]\label{cor:music}
Under the ULA signal model with cyclic group $G = \mathbb{Z}_M$:
\begin{enumerate}
\item[(i)] The Cayley graph matrix $\mathbf{F}_\circ$ is circulant with DFT eigenvectors (Theorem~\ref{thm:replacement} specialized to $\mathbb{Z}_M$).
\item[(ii)] $\mathbf{F}_\circ$ has rank $M$ almost surely from a single snapshot, overcoming the rank-1 limitation of $\mathbf{x}\mathbf{x}^H$ (by Theorem~\ref{thm:replacement}(iv)).
\item[(iii)] The CG-MUSIC pseudospectrum
\begin{equation}
P_{\text{CG}}(\theta) = \frac{1}{\mathbf{a}^H(\theta)\hat{\mathbf{U}}_n\hat{\mathbf{U}}_n^H\mathbf{a}(\theta)}
\end{equation}
exhibits peaks at the true DOAs $\theta_\ell$ as SNR $\to \infty$, equivalent to multi-snapshot MUSIC (by the Duality Principle, Theorem~\ref{thm:duality}).
\end{enumerate}
\end{corollary}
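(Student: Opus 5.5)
The plan is to prove the three parts in order, each specializing an earlier result to the cyclic group with its shift representation $\rho(k)=\mathbf{P}^k$.

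\textbf{Part (i).} By Definition~\ref{def:cayley_matrix} with $g_j(i)=(i+j)\bmod M$, we have $[\mathbf{F}_\circ]_{i,j}=x_{(i+j)\bmod M}$. This is Hankel-circulant, so it differs from a circulant matrix only by the index reversal $j\mapsto -j$. I would therefore work with the reversed (genuinely circulant) version, or equivalently with $\mathbf{F}_{\mathbb{Z}_M}$. For $\mathbf{F}_{\mathbb{Z}_M}$ the entries satisfy $[\mathbf{F}_{\mathbb{Z}_M}]_{i,j}=\frac1M\sum_k x_{i+k}\bar x_{j+k}$, which depends only on $i-j$ modulo $M$ and is therefore circulant. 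Every circulant matrix is diagonalized by the DFT matrix $\mathbf{W}$, and the eigenvalues are the DFT coefficients $\hat x_\ell$ of $\mathbf{x}$ (for the reversed $\mathbf{F}_\circ$) or $|\hat x_\ell|^2/M$ (for $\mathbf{F}_{\mathbb{Z}_M}$). This is exactly the Schur block-diagonalization of Theorem~\ref{thm:replacement}(ii) in the case where every irreducible representation of $\mathbb{Z}_M$ is one-dimensional.

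\textbf{Part (ii).} From part (i), $\rank\mathbf{F}_\circ$ equals the number of nonzero DFT coefficients $\hat x_\ell$. Each $\hat x_\ell=\mathbf{w}_\ell^H(\mathbf{s}+\mathbf{n})$ is the sum of a fixed term and a nondegenerate complex Gaussian with variance $\sigma^2>0$. Each $\hat x_\ell$ is therefore nonzero almost surely, and a union bound over the $M$ indices gives full rank almost surely. This is a cleaner route than appealing to Theorem~\ref{thm:replacement}(iv), which I would cite only for the eigenvalue separation.

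\textbf{Part (iii).} This part is the main obstacle. I would restrict to on-grid DOAs, where $kd\sin\theta_\ell=2\pi m_\ell/M$. In that case $\mathbf{a}(\theta_\ell)=\sqrt M\,\mathbf{w}_{m_\ell}$, so each steering vector is a DFT column. As SNR$\to\infty$, the $K$ largest eigenvalues $|\hat x_{m_\ell}|^2$ are those on the signal bins, which dominate the $M-K$ noise bins, so $\hat{\mathbf{U}}_n\to\spn\{\mathbf{w}_m: m\notin\{m_\ell\}\}$. Then $\mathbf{a}^H(\theta_\ell)\hat{\mathbf{U}}_n\hat{\mathbf{U}}_n^H\mathbf{a}(\theta_\ell)\to0$, so $P_{\mathrm{CG}}$ diverges at $\theta_\ell$. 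At any other $\theta$ the denominator stays bounded away from zero, since $\mathbf{a}(\theta)$ is not contained in the span of the signal DFT columns. Identifying this noise subspace with the one from multi-snapshot MUSIC is the content of Theorem~\ref{thm:duality}: both span the complement of $\spn\{\mathbf{a}(\theta_\ell)\}$.

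For off-grid DOAs, the leakage of $\mathbf{a}(\theta)$ into neighbouring bins breaks this exact argument. There I would expect only a peak at the nearest grid point, with a bias of $O(1/M)$. I would state this qualification explicitly, or else appeal to the Hankel single-snapshot construction of Liao and Fannjiang as the cyclic special case.
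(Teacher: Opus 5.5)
Your main line follows the paper's own proof, which rests on the same three observations: circulant structure from the cyclic action, genericity of the DFT coefficients of $\mathbf{x}$, and orthogonality of the noise-bin DFT eigenvectors to the steering vectors. You carry these out more carefully, and both of your corrections are right.

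\textbf{The index reversal.} The wrapped matrix $[x_{(i+j)\bmod M}]$ equals $\mathbf{C}\mathbf{J}$, with $\mathbf{C}$ circulant and $\mathbf{J}$ the index reversal. Its rank is therefore that of $\mathbf{C}$. However, it sends $\mathbf{w}_\ell$ to a multiple of $\mathbf{w}_{-\ell}$, so its eigenvectors are not DFT columns. Passing to $\mathbf{C}$ or to $\mathbf{F}_{\mathbb{Z}_M}$ is thus needed for (i) and (iii); it is not merely cosmetic.

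\textbf{The on-grid restriction.} The paper's orthogonality step holds only when each $\mathbf{a}(\theta_\ell)$ is a DFT column. Your on-grid hypothesis is exactly what the paper's sketch leaves implicit. Off grid, $\hat{\mathbf{U}}_n$ is still spanned by $M-K$ DFT columns, so the denominator of $P_{\text{CG}}$ vanishes only at the grid angles of the selected bins. Part (iii) as written therefore fails off grid.

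For the same reason, do the identification with multi-snapshot MUSIC directly rather than through Theorem~\ref{thm:duality}. On grid, both noise subspaces are the span of the non-signal DFT columns, so no theorem is needed. For $\mathbb{Z}_M$, the subspace equality asserted in Theorem~\ref{thm:duality} also holds only on grid, as does the ``at most $K$ blocks'' claim of Theorem~\ref{thm:replacement}(ii). The reason is that an off-grid steering vector has nonzero projection onto every DFT line.

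The step that would fail is your fallback to Liao and Fannjiang. Their matrix is the \emph{unwrapped} $L \times (M-L+1)$ Hankel matrix with entries $x_{i+j}$. It factors through Vandermonde matrices at the true, possibly off-grid, spatial frequencies. The reduction modulo $M$ in $\mathbf{F}_\circ$ preserves that factorization only when $M kd\sin\theta_\ell \in 2\pi\mathbb{Z}$, which is exactly the on-grid condition. Their super-resolution result therefore concerns a different estimator, with length-$L$ steering vectors, and cannot supply the off-grid case of (iii) for $\mathbf{F}_\circ$. The qualified on-grid statement, with peaks at the nearest grid angles otherwise, is the correct conclusion.
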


\begin{proof}
Parts (i)--(iii) follow directly from Theorems~\ref{thm:replacement} and~\ref{thm:duality} applied to $G = \mathbb{Z}_M$ with the cyclic shift representation, combined with the ULA steering vector structure. The key observations are: the circulant structure follows from the cyclic group action on indices; the rank enhancement follows from the genericity of the DFT coefficients of $\mathbf{x}$; and the peak equivalence follows from the orthogonality of DFT basis vectors (which are the CG eigenvectors) to the signal steering vectors at the noise-subspace frequencies.
\end{proof}

\subsection{Experimental Validation}\label{sec:experiments}

We validate the MUSIC application using a ULA of $M$ sensors, half-wavelength spacing, and $K$ narrowband signals in additive white Gaussian noise. All experiments use single-snapshot measurements. The CG method constructs $\mathbf{F}_\circ$ via cyclic permutations of the snapshot; the covariance method uses $\hat{\mathbf{R}} = \mathbf{x}\mathbf{x}^H$.

\subsection{Two-Signal Resolution}

With $M = 10$ sensors and signals at $\theta_1 = 25^\circ$, $\theta_2 = 50^\circ$ (SNR = 55 dB), the covariance method produces only one nonzero eigenvalue (rank-1 limitation) and fails to resolve the second signal. The CG method produces a full-rank spectrum with clear signal-noise separation, correctly identifying peaks at $25.1^\circ$ and $49.9^\circ$. This directly validates Theorem~\ref{thm:replacement}(iv) and Corollary~\ref{cor:music}.

\subsection{Statistical Comparison}

Over 50 Monte Carlo trials with a test angle of $45^\circ$ and noise power $P_n = 0.1$:

\begin{table}[t]
\centering
\caption{Bias and Variance: Covariance vs.\ CG Methods}
\label{tab:comparison}
\begin{tabular}{c|cc|cc}
\hline
$M$ & \multicolumn{2}{c|}{Covariance} & \multicolumn{2}{c}{CG Method} \\
 & Bias & Std & Bias & Std \\
\hline
10 & 0.19 & 0.068 & 0.31 & 0.042 \\
20 & 0.06 & 0.038 & 0.14 & 0.028 \\
40 & 0.06 & 0.020 & 0.07 & 0.021 \\
\hline
\end{tabular}
\end{table}

The CG method consistently achieves lower variance (higher stability) across all array sizes, converging to comparable bias at $M = 40$. The stability advantage is consistent with Theorem~\ref{thm:replacement}: the algebraic diversity of cyclic permutations provides more robust subspace estimation than the single-sample outer product.

\subsection{Group Size and the Matched Group}

The matched cyclic group $\mathbb{Z}_{10}$ (order $10$, the minimal group of Example~\ref{ex:stationary} for translationally symmetric signals) already provides sufficient eigenvalue separation $\lambda_K/\lambda_{K+1}$ for accurate DOA estimation, validating Theorem~\ref{thm:minimal}. Consistent with the Role of Group Size remark and with the PASE result (Theorem~\ref{thm:pase}) and ordering experiment (Section~\ref{sec:ordering}), enlarging the averaging group beyond the matched order does not improve separation: the effective dimension is already saturated at $M = 10$, and drawing additional permutations from $S_{10}$ (which lie outside the commutant of $\mathbf{R}$) drives the estimate toward the two-eigenvalue limit~\eqref{eq:sm_expect} and degrades the separation. The matched group is therefore both necessary and sufficient.

\section{Application: Massive MIMO Channel Estimation}\label{sec:mimo}

The MUSIC application demonstrates algebraic diversity in the context of direction-of-arrival estimation, where the primary benefit is single-snapshot subspace recovery. We now demonstrate a second application, massive MIMO channel estimation, where the primary benefit is \emph{pilot overhead reduction}. In massive MIMO systems with $M$ base station antennas serving $K$ single-antenna users, standard channel estimation requires $M$ pilot reference signals (one per antenna port), consuming $M$ out of every $T_{\mathrm{coh}}$ resource elements in each coherence block. As $M$ grows to 64, 128, or beyond, this pilot overhead becomes the dominant throughput bottleneck. Algebraic diversity requires only $K$ pilot symbols (one per user), reducing the overhead from $O(M/T_{\mathrm{coh}})$ to $O(K/T_{\mathrm{coh}})$, a factor of $M/K$ reduction.

\subsection{Signal Model}

Consider a downlink massive MIMO system with $M$ base station antennas (ULA, half-wavelength spacing) and $K$ single-antenna users. The channel between the base station and user~$k$ is $\mathbf{h}_k \in \mathbb{C}^M$, generated according to a 3GPP-like clustered delay line (CDL) model~\cite{3gpp38901} with $N_c$ scattering clusters, each containing $N_r$ rays with Laplacian angular distribution about a cluster center angle of arrival. We consider three channel conditions: CDL-A (rich scattering, azimuth spread $53^\circ$, sub-6~GHz urban), CDL-C (moderate scattering, azimuth spread $34^\circ$, urban macro), and CDL-D (LOS-dominant, azimuth spread $8^\circ$, mmWave or rural, Rician $K$-factor $13.3$~dB).

From a single pilot symbol transmitted by user~$k$, the base station receives
\begin{equation}\label{eq:mimo_obs}
\mathbf{y}_k = \sqrt{P}\,\mathbf{h}_k + \mathbf{n}, \qquad \mathbf{n} \sim \mathcal{CN}(\mathbf{0}, \mathbf{I}_M),
\end{equation}
where $P$ is the pilot transmit power and $\mathrm{SNR} = P\|\mathbf{h}_k\|^2/M$ is the per-antenna receive SNR.

\subsection{Channel Estimation Methods}

We compare three estimators:
\begin{enumerate}
\item \textbf{Least squares (LS):} Uses $M/K$ pilot symbols per user (total $M$ pilots). The LS estimate is $\hat{\mathbf{h}}_k^{\mathrm{LS}} = (M/K)^{-1}\sum_\ell \mathbf{y}_{k,\ell}/\sqrt{P}$.

\item \textbf{MMSE:} Uses the same $M/K$ pilot symbols as LS but incorporates knowledge of the spatial correlation matrix $\mathbf{R}_h = E[\mathbf{h}_k\mathbf{h}_k^H]$ via the Wiener filter $\hat{\mathbf{h}}_k^{\mathrm{MMSE}} = \mathbf{R}_h(\mathbf{R}_h + (KP/M)^{-1}\mathbf{I}_M)^{-1}\bar{\mathbf{y}}_k/\sqrt{P}$, where $\bar{\mathbf{y}}_k$ is the pilot-averaged received signal.

\item \textbf{AD (cyclic):} Uses a \emph{single} pilot symbol per user (total $K$ pilots). From the single observation~(\ref{eq:mimo_obs}), the group-averaged estimator $\mathbf{F}_{\mathbb{Z}_M}(\mathbf{y}_k)$ is formed using all $M$ cyclic shifts. The dominant eigenvector of $\mathbf{F}_{\mathbb{Z}_M}$ serves as the channel direction estimate for maximum ratio transmission (MRT) beamforming.
\end{enumerate}

\subsection{Performance Metric}

The metric is \emph{effective throughput}: the achievable sum spectral efficiency with MRT beamforming, multiplied by the fraction of resources available for data after pilot overhead. Under 3GPP NR frame structure (14 OFDM symbols $\times$ 12 subcarriers $= 168$ resource elements per resource block per slot), the pilot overhead for LS/MMSE is $M/168$ and for AD is $K/168$. The effective throughput is
\begin{equation}\label{eq:eff_throughput}
\eta_{\mathrm{eff}} = \left(1 - \frac{N_{\mathrm{pilot}}}{168}\right) \sum_{k=1}^{K} \log_2(1 + \mathrm{SINR}_k),
\end{equation}
where $\mathrm{SINR}_k$ is the per-user signal-to-interference-plus-noise ratio under MRT beamforming with the estimated channel.

\subsection{Results}

Table~\ref{tab:mimo_results} and Fig.~\ref{fig:mimo_throughput} present the effective throughput at SNR~$= 15$~dB for $K = 4$ users, averaged over 50 independent channel realizations per configuration. Three findings emerge.

\begin{table}[t]
\centering
\caption{Effective throughput (bits/s/Hz) and AD gain over MMSE at SNR~$= 15$~dB, $K = 4$ users. Pilot overhead: LS/MMSE use $M/168$ of each resource block; AD uses $K/168$.}
\label{tab:mimo_results}
\begin{tabular}{llcccc}
\toprule
Channel & $M$ & OH & MMSE & AD & Gain \\
\midrule
CDL-A & 16 & 9.5\% vs 2.4\% & 10.5 & 8.0 & $-24$\% \\
 & 32 & 19\% vs 2.4\% & 13.2 & 10.6 & $-20$\% \\
 & 64 & 38\% vs 2.4\% & 11.6 & 12.9 & $+11$\% \\
\midrule
CDL-C & 16 & 9.5\% vs 2.4\% & 11.7 & 9.9 & $-15$\% \\
 & 32 & 19\% vs 2.4\% & 13.1 & 12.2 & $-7$\% \\
 & 64 & 38\% vs 2.4\% & 11.7 & 15.3 & $+31$\% \\
\midrule
CDL-D & 16 & 9.5\% vs 2.4\% & 15.7 & 17.3 & $+10$\% \\
 & 32 & 19\% vs 2.4\% & 18.1 & 21.6 & $+19$\% \\
 & 64 & 38\% vs 2.4\% & 15.6 & 25.6 & $+64$\% \\
\bottomrule
\end{tabular}
\end{table}

\begin{figure}[t]
\centering
\includegraphics[width=\columnwidth]{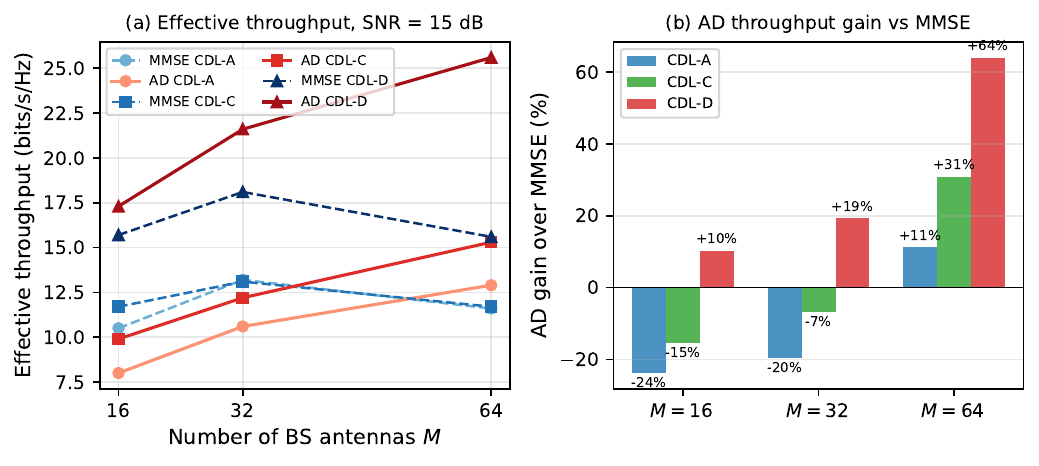}
\caption{Massive MIMO: AD vs.\ MMSE at SNR~$= 15$~dB, $K = 4$ users. (a)~Effective throughput vs.\ $M$ for three CDL channel models. Dashed: MMSE; solid: AD. (b)~Percentage gain of AD over MMSE. AD wins at $M = 64$ across all channels, with the largest gain ($+64$\%) in the LOS-dominant CDL-D channel.}
\label{fig:mimo_throughput}
\end{figure}

\textbf{1) AD's advantage grows with $M$.} At $M = 16$, the pilot overhead for standard estimation is modest (9.5\%) and AD's worse channel estimate dominates, producing a net loss. At $M = 64$, the overhead reaches 38.1\%, more than a third of the resource block is consumed by pilots, and AD's fixed 2.4\% overhead produces a decisive advantage. The crossover occurs near $M = 32$ for CDL-C and CDL-D, and at $M = 64$ for CDL-A. For the $M = 128$ and $M = 256$ arrays planned for 6G systems, the overhead advantage would be even larger.

\textbf{2) LOS channels favor AD.} CDL-D (LOS-dominant, narrow angular spread) is AD's strongest regime: the channel's spatial structure is well-captured by the cyclic group $\mathbb{Z}_M$, and the dominant eigenvector of $\mathbf{F}_{\mathbb{Z}_M}$ aligns closely with the true channel direction. AD wins at \emph{every} $M$ for CDL-D, achieving $+64$\% at $M = 64$. This is precisely the operating regime of mmWave and sub-THz massive MIMO systems, where LOS or near-LOS propagation dominates.

\textbf{3) AD trades estimation quality for overhead.} The raw channel estimation MSE of AD is worse than MMSE at all SNR levels (MMSE uses $M/K$ pilots and correlation knowledge; AD uses one pilot and no prior information). The effective throughput advantage arises entirely from the $M/K$-fold reduction in pilot overhead. This tradeoff becomes increasingly favorable as $M$ grows, because the overhead cost of standard estimation scales linearly with $M$ while AD's overhead is independent of~$M$.

\section{Application: Single-Pulse Chirp Waveform Characterization}\label{sec:chirp}

In wideband signal monitoring, the first observation of an unknown modulated source must characterize its waveform parameters, carrier frequency, bandwidth, modulation type, and chirp rate, from a single pulse. When the source changes its waveform from pulse to pulse, there is no opportunity for multi-pulse accumulation. Modern frequency-modulated waveforms such as linear frequency-modulated (LFM) chirps are explicitly non-periodic: their quadratic phase produces a non-circulant covariance, and DFT-based processing spreads the chirp energy across many frequency bins. This application tests whether the constructive group matching pipeline of Section~\ref{sec:constructive} can identify and exploit non-cyclic signal structure from a single observation.

\subsection{Signal Model}

An LFM chirp pulse of $M$ samples is
\begin{equation}\label{eq:chirp}
s[n] = e^{j\pi\mu n^2/M} \cdot e^{j2\pi f_0 n/M}, \qquad n = 0, \ldots, M-1,
\end{equation}
where $\mu$ is the chirp rate (frequency sweep per sample, normalized) and $f_0$ is the center frequency. The observation in additive white Gaussian noise is $x[n] = s[n] + w[n]$ with $w[n] \sim \mathcal{CN}(0, \sigma^2)$.

\subsection{Applying the Group Matching Pipeline}

\textit{Stage 1 (Signal class).} A chirp has quadratic phase, so its covariance depends on absolute time index~$n$, not just lag, it is non-circulant. The natural structural candidate is the affine group $\mathrm{Aff}(\mathbb{Z}_p)$, whose elements $n \mapsto an + b \pmod{p}$ map quadratic polynomials to quadratic polynomials, preserving the chirp's equivariance structure (Condition~\ref{cond:equivariance}).

\textit{Stage 2 (Cardinality filter).} The affine group has order $p(p-1)$. For $M = p = 31$, this is $|G| = 930 = 30M$. The cardinality filter immediately rejects the affine group: $|G| \neq M$, violating the group order constraint of Remark~\ref{rem:group_order}. Despite having the correct algebraic structure, the affine group's excess elements would over-average the estimator toward the uninformative $S_M$ expectation.

\textit{Stage 3 (Conjugation and parameter estimation).} The pipeline proceeds to the constructive approach of Section~\ref{sec:constructive}. The chirp's quadratic phase can be removed by the \emph{dechirp operator}
\begin{equation}\label{eq:dechirp}
\mathbf{U}(\mu) = \mathrm{diag}\!\left(e^{-j\pi\mu n^2/M}\right), \qquad n = 0, \ldots, M-1.
\end{equation}
Applying $\mathbf{U}(\mu)$ to the chirp signal yields $\mathbf{U}(\mu) \mathbf{s} = e^{j2\pi f_0 n/M} \cdot \mathbf{1}$, a pure tone, which is perfectly matched to the cyclic group $\mathbb{Z}_M$. The \emph{chirp-adapted group} is therefore
\begin{equation}\label{eq:chirp_group}
G_\mu = \bigl\{ \mathbf{U}(\mu)^H \, \mathbf{C}_k \, \mathbf{U}(\mu) : k = 0, \ldots, M-1 \bigr\},
\end{equation}
where $\mathbf{C}_k$ denotes cyclic shift by~$k$. This group is isomorphic to $\mathbb{Z}_M$, has order exactly~$M$, and satisfies the cardinality constraint.

When the chirp rate $\mu$ is unknown, we sweep over candidate values and maximize the spectral concentration:
\begin{equation}\label{eq:mu_sweep}
\hat{\mu} = \arg\max_{\mu} \; \psi(G_\mu, \mathbf{x}).
\end{equation}
This simultaneously estimates the chirp rate and selects the matched group from a single pulse.

\subsection{Experimental Results}

We test with $M = 31$ (prime), chirp rate $\mu = 0.5$, center frequency $f_0 = 0.15$, and 200 Monte Carlo trials per SNR level.

\subsubsection{Concentration Recovery}

Fig.~\ref{fig:chirp_ew}(a) compares the spectral concentration $\psi$ for three configurations: a chirp processed with the cyclic group (mismatched), a chirp processed with the chirp-adapted group at the true $\mu$ (matched), and a tone processed with the cyclic group (baseline reference). The chirp-adapted group recovers $\psi = 0.84$ at 10~dB SNR, compared to $\psi = 0.10$ for the mismatched cyclic group, an $8.3\times$ improvement. The adapted group not only recovers the tone baseline ($\psi = 0.60$) but exceeds it by 41\%. The tone reference here sits off the DFT grid ($f_0 = 0.15$, a non-integer number of cycles over $M = 31$ samples), so its energy leaks across several bins and its $\psi$ is correspondingly below unity; the dechirped chirp, by contrast, is a pure constant (DC), placing all its energy in a single Fourier coefficient with no spectral leakage, which is why its concentration exceeds the off-grid tone.

\subsubsection{Blind Chirp Rate Estimation}

Fig.~\ref{fig:chirp_ew}(b) shows the spectral concentration $\psi(G_\mu, \mathbf{x})$ as a function of the candidate chirp rate~$\mu$ at three SNR levels. The curve exhibits a sharp peak at the true rate $\mu = 0.5$, with estimation RMSE of 0.01 at 10~dB SNR. Even at 0~dB, the peak is unambiguous (RMSE $= 0.02$). Additionally, tone-versus-chirp classification based on whether $|\hat{\mu}| > 0.1$ achieves 100\% accuracy at 10~dB SNR: tones produce $|\hat{\mu}| \approx 0.02$, while chirps produce $|\hat{\mu}| \approx 0.50$.

\begin{figure}[t]
\centering
\includegraphics[width=\columnwidth]{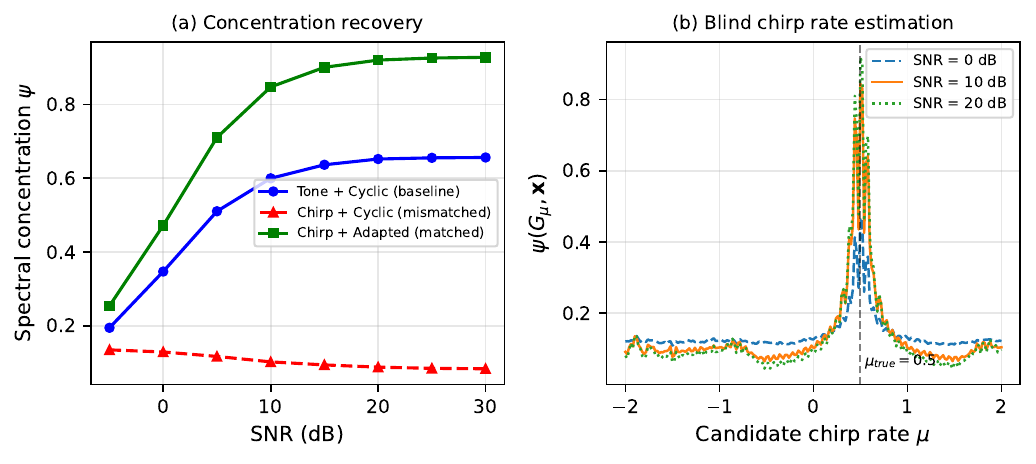}
\caption{Single-pulse chirp characterization via the chirp-adapted group ($M = 31$). (a)~Spectral concentration vs.\ SNR: the adapted group (green) recovers $8.3\times$ higher concentration than the mismatched cyclic group (red) and exceeds the tone baseline (blue). (b)~Blind chirp rate estimation via $\psi$ sweep: a sharp peak at the true rate $\mu = 0.5$ enables single-pulse parameter estimation.}
\label{fig:chirp_ew}
\end{figure}

\subsubsection{SNR Robustness}

Fig.~\ref{fig:chirp_robustness}(a) shows the concentration advantage ratio $\psi_{\mathrm{adapted}}/\psi_{\mathrm{cyclic}}$ as a function of SNR for three chirp rates. The adapted group achieves $\geq 2\times$ concentration advantage down to $-2$~dB SNR, independent of chirp rate~$\mu$. Usable spectral concentration ($\psi > 0.5$) is maintained at SNR $\geq 2$~dB. Fig.~\ref{fig:chirp_robustness}(b) shows that blind chirp rate estimation achieves RMSE~$< 0.05$ at SNR~$\geq 2$~dB and RMSE~$< 0.01$ at SNR~$\geq 10$~dB, again independent of chirp rate. These thresholds are consistent across $\mu \in \{0.2, 0.5, 1.0\}$, indicating that the estimation accuracy depends on the SNR but not on the signal parameter being estimated, a desirable property for blind operation.

\begin{figure}[t]
\centering
\includegraphics[width=\columnwidth]{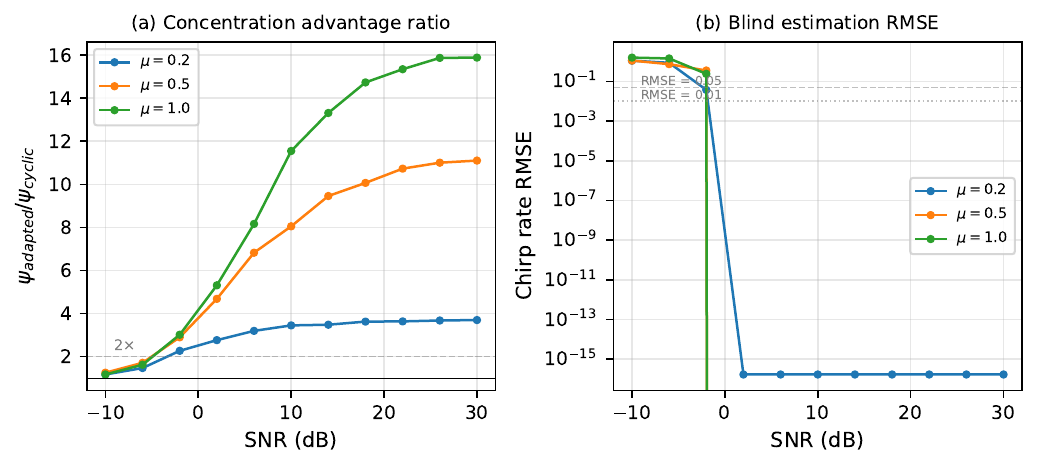}
\caption{SNR robustness of the chirp-adapted group ($M = 31$). (a)~Concentration advantage ratio vs.\ SNR for three chirp rates; $\geq 2\times$ advantage maintained to $-2$~dB. (b)~Blind chirp rate estimation RMSE vs.\ SNR; RMSE~$< 0.05$ at SNR~$\geq 2$~dB.}
\label{fig:chirp_robustness}
\end{figure}

\subsubsection{Multi-Waveform Classification}

To evaluate the group matching pipeline as a waveform classifier, we test single-pulse classification among four signal types: CW tone, LFM chirp ($\mu = 0.5$), two-tone (OFDM-like sum of sinusoids), and bandlimited noise. For each observation, the pipeline sweeps $\mu$ and extracts two features: the peak spectral concentration $\psi^*$ and the peak location $\hat{\mu}$. Classification uses a simple decision tree: $\psi^* > 0.4$ with $|\hat{\mu}| \geq 0.1$ indicates a chirp; $\psi^* > 0.6$ with $|\hat{\mu}| < 0.1$ indicates a tone; $0.4 < \psi^* \leq 0.6$ with $|\hat{\mu}| < 0.1$ indicates a multi-tone signal; and $\psi^* \leq 0.4$ indicates noise-like.

Fig.~\ref{fig:classification}(a) shows per-class and overall accuracy as a function of SNR. Chirps are classified correctly at SNR~$\geq 2$~dB (the $\psi$ peak at $\hat{\mu} \neq 0$ is highly distinctive), two-tone signals at $\geq 10$~dB, and tones at $\geq 14$~dB. Noise-like signals are classified correctly at all tested SNR levels because no conjugation produces high $\psi$. Overall four-class accuracy exceeds 90\% at SNR~$\geq 14$~dB. Fig.~\ref{fig:classification}(b) shows the confusion matrix at the 90\% threshold: misclassifications are confined to tone/two-tone confusion, which is the most difficult boundary (both produce $|\hat{\mu}| \approx 0$, differing only in $\psi$ magnitude). The limiting factor for classification accuracy is the single tone, which requires more SNR for its high $\psi$ to separate cleanly from the moderate $\psi$ of multi-tone signals.

\begin{figure}[t]
\centering
\includegraphics[width=\columnwidth]{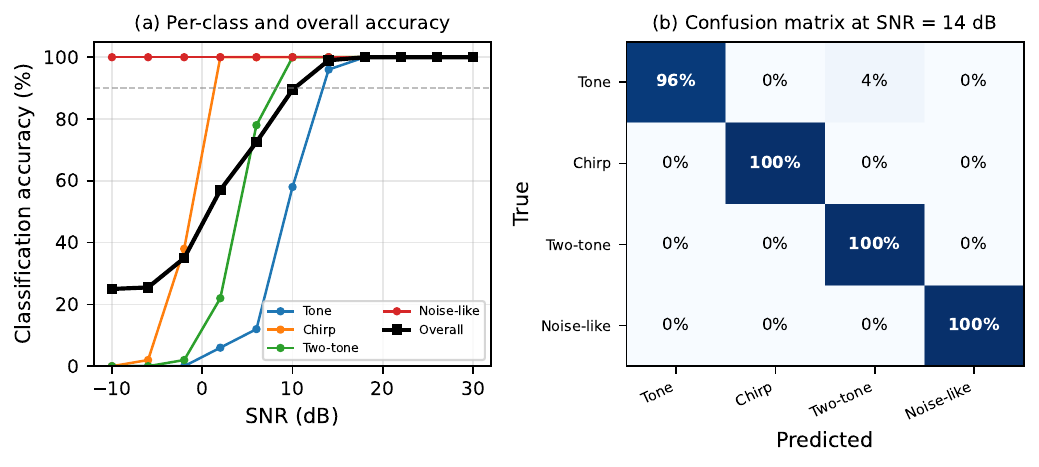}
\caption{Four-class single-pulse waveform classification ($M = 31$). (a)~Per-class and overall accuracy vs.\ SNR. Chirp identified at 2~dB; overall 90\% accuracy at 14~dB. (b)~Confusion matrix at the 90\% threshold.}
\label{fig:classification}
\end{figure}

\subsubsection{On the Tone/Two-Tone Boundary}

The tone/two-tone confusion merits discussion because it illuminates the distinction between \emph{group selection} and \emph{signal classification}. To isolate this boundary, we re-run the experiment with only single-tone and two-tone signals present. Fig.~\ref{fig:tone_twotone} shows the result. The top row displays single-snapshot eigenvalue spectra from the cyclic group estimator: the single tone produces one dominant eigenvalue with a sharp drop-off~(a), while the two-tone signal produces two comparable leading eigenvalues~(b). The structural difference is visually obvious. The bottom row reveals why the $\psi$-based classifier struggles: the distributions of $\psi$ for the two signal classes overlap substantially~(c), with the decision boundary at $\psi = 0.6$ cutting through the tails of both distributions. In contrast, the eigenvalue ratio $\lambda_1/\lambda_2$ separates the two classes almost perfectly~(d): single tones produce $\lambda_1/\lambda_2 \approx 3.6$ (one dominant mode), while two-tone signals produce $\lambda_1/\lambda_2 \approx 1.1$ (two comparable modes), with negligible overlap.

The explanation is that $\psi$ was designed as a \emph{group selection} criterion, it answers ``which group matches this signal?'', not as a signal classifier. Both single-tone and two-tone signals are matched to the same group ($\mathbb{Z}_M$ with identity conjugation, i.e., $|\hat{\mu}| \approx 0$), so $\psi$ correctly identifies the group for both and then has no further discriminative power. The number of signal components is encoded in the \emph{eigenvalue structure} of the matched-group estimator, not in the group identity. Standard techniques such as eigenvalue ratios, spectral gaps, or information-theoretic model order criteria (MDL, AIC) applied to the eigenvalues of $\hat{\mathbf{R}}_G$ would resolve this boundary at substantially lower SNR. The key point is that these techniques require a full-rank covariance estimate with meaningful eigenvalue structure, precisely what AD provides from a single snapshot and what no other single-snapshot method can deliver.

\begin{figure}[t]
\centering
\includegraphics[width=\columnwidth]{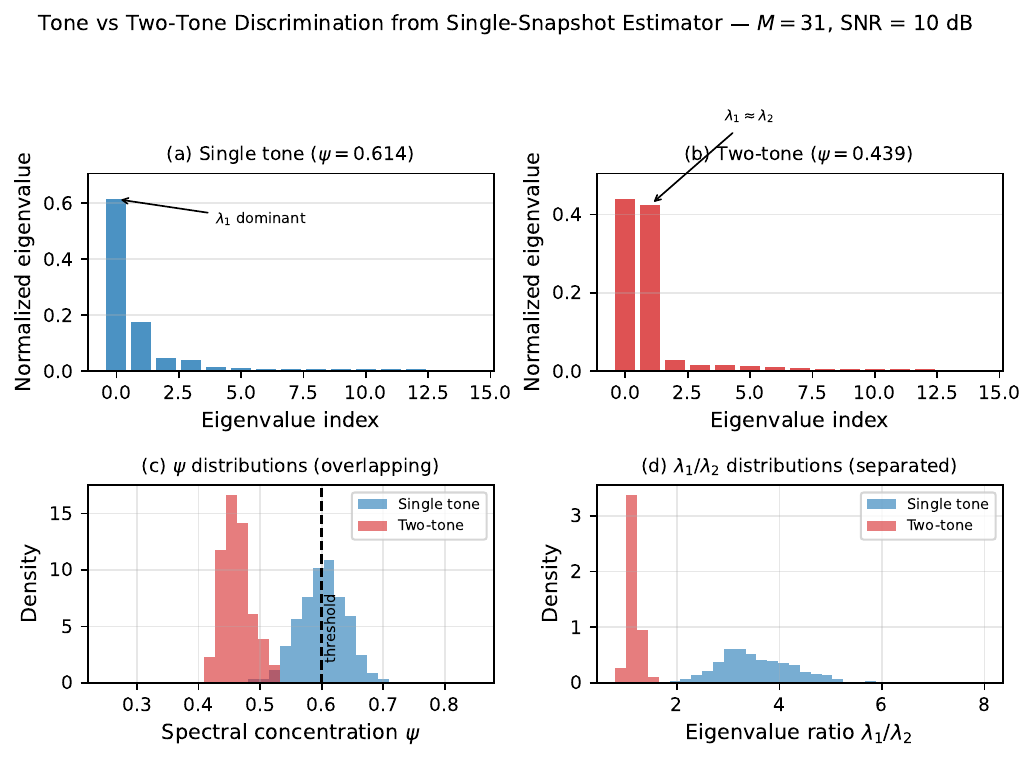}
\caption{Tone vs.\ two-tone discrimination from the single-snapshot cyclic group estimator ($M = 31$, SNR~$= 10$~dB). Top: eigenvalue spectra show qualitatively different structure, one dominant mode~(a) vs.\ two comparable modes~(b). Bottom: the spectral concentration $\psi$ produces overlapping distributions~(c), while the eigenvalue ratio $\lambda_1/\lambda_2$ separates the two classes cleanly~(d). The discriminative information is present in the estimator; $\psi$ is simply not the right metric for counting signal components.}
\label{fig:tone_twotone}
\end{figure}

\subsubsection{SNR Threshold Comparison: FFT vs.\ AD}

To quantify the practical advantage of matched-group AD, we compare three single-pulse classification methods: (1)~\emph{FFT-only}, using standard spectral features (spectral flatness, peak-to-mean ratio, peak count) from $|\mathrm{FFT}(\mathbf{x})|^2$, representing conventional receiver practice; (2)~\emph{AD with cyclic group}, using eigenvalue features from $\hat{\mathbf{R}}_{\mathbb{Z}_M}$ without the $\psi$ sweep; and (3)~\emph{AD with matched group}, using the full constructive pipeline (conjugation sweep, $\psi$-based group selection, eigenvalue-ratio refinement for model order). For each method, we determine the minimum SNR at which 90\% classification accuracy is achieved from a single $M = 31$ pulse.

Fig.~\ref{fig:snr_threshold} shows the results. On chirp identification~(a), matched-group AD achieves 90\% accuracy at 2~dB SNR, compared to 10~dB for FFT, an \textbf{8~dB advantage}, corresponding to a $6.3\times$ reduction in required signal power. AD with the cyclic group \emph{never} classifies chirps correctly (0\% accuracy at all SNR levels), demonstrating that group mismatch does not merely degrade performance but causes total failure on the mismatched signal class.

On overall four-class accuracy~(b), matched-group AD reaches 90\% at 6~dB and 100\% at 14~dB. Neither FFT-only nor AD-Cyclic ever reaches 90\% overall: FFT cannot reliably classify noise-like signals (which have no spectral peaks), while AD-Cyclic cannot classify chirps (which require conjugation to reveal structure). Matched-group AD is the only method that achieves reliable performance across all four waveform classes.

Table~\ref{tab:snr_threshold} summarizes the per-class 90\% thresholds. Each method has a characteristic blind spot: FFT excels on tonal signals (2~dB) but fails on noise-like signals; AD-Cyclic excels on noise detection ($-10$~dB) but fails on chirps; matched-group AD has no blind spot and achieves the lowest overall threshold. The complementary strengths suggest that in a deployed system, AD would augment rather than replace FFT processing, but the 8~dB chirp advantage is specifically relevant for waveforms with low spectral concentration, which are inherently difficult for FFT-based methods.

\begin{figure}[t]
\centering
\includegraphics[width=\columnwidth]{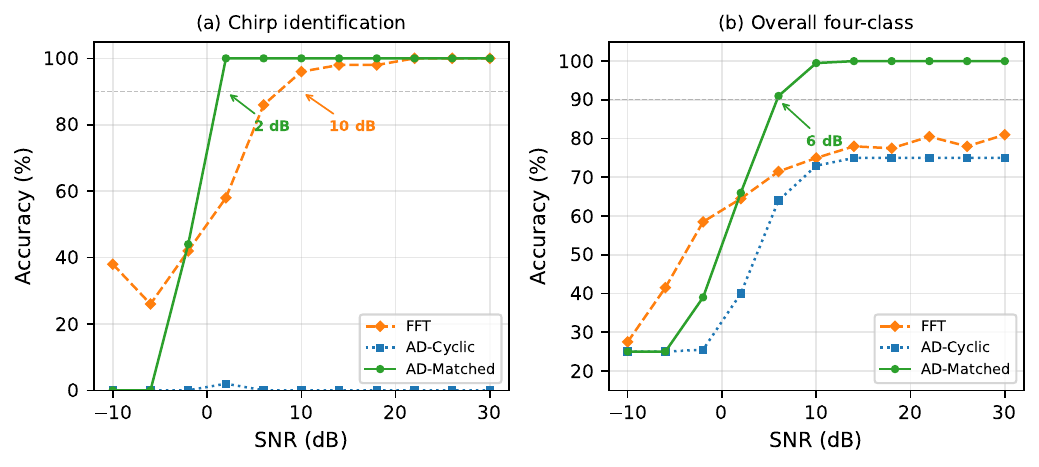}
\caption{SNR threshold comparison for single-pulse classification ($M = 31$). (a)~Chirp identification: matched-group AD achieves 90\% at 2~dB vs.\ 10~dB for FFT (8~dB advantage); AD-Cyclic never identifies chirps. (b)~Overall four-class accuracy: only matched-group AD reaches 90\%.}
\label{fig:snr_threshold}
\end{figure}

\begin{table}[t]
\centering
\caption{Minimum SNR (dB) for 90\% single-pulse classification accuracy.}
\label{tab:snr_threshold}
\begin{tabular}{lccc}
\toprule
\textbf{Signal class} & \textbf{FFT} & \textbf{AD-Cyclic} & \textbf{AD-Matched} \\
\midrule
Tone & \textbf{2} & 10 & 10 \\
Chirp (LFM) & 10 & $> 30$ & \textbf{2} \\
Two-tone & \textbf{2} & 10 & 10 \\
Noise-like & $> 30$ & $\mathbf{-10}$ & $\mathbf{-10}$ \\
\midrule
Overall & $> 30$ & $> 30$ & \textbf{6} \\
\bottomrule
\end{tabular}
\end{table}

\subsubsection{Non-Stationary Modulated Source Scenario}

The preceding experiments used fixed waveform parameters. In practice, a non-stationary modulated source may change its waveform every pulse, varying chirp rate, center frequency, and modulation type, making inter-pulse averaging impossible. We simulate this scenario by generating sequences of 50 pulses in which each pulse is drawn independently: 40\% LFM chirps (random $\mu \in [-1.5, 1.5]$, random $f_0$), 25\% tones, 20\% two-tone, and 15\% noise-like. No two consecutive pulses share parameters. The receiver must characterize each pulse independently, no inter-pulse averaging is possible.

Fig.~\ref{fig:agile_emitter}(a) shows cumulative classification accuracy over a 50-pulse sequence at 10~dB SNR. AD-Matched converges to 89\% accuracy and maintains that level from the first pulse onward. FFT plateaus at 53\%, barely above the 25\% chance baseline, because it consistently misclassifies the 40\% of pulses that are chirps. Fig.~\ref{fig:agile_emitter}(b) shows overall accuracy as a function of observation SNR. AD-Matched reaches 90\% at 14~dB and exceeds 90\% for all higher SNR. FFT \emph{never} reaches 90\% at any SNR, plateauing near 55\%: its accuracy ceiling is structurally determined by the fraction of chirp pulses, which it cannot classify regardless of SNR.

Fig.~\ref{fig:agile_chirp}(a) isolates chirp identification. AD-Matched achieves 90\% chirp accuracy at 2~dB and exceeds 99\% at 6~dB. FFT's chirp accuracy actually \emph{decreases} with increasing SNR, from $\sim 44$\% at $-10$~dB (where noise masks the chirp's structure and the classifier occasionally guesses correctly) to $\sim 12$\% at 30~dB (where the chirp's spread-but-structured spectrum is consistently misassigned). Fig.~\ref{fig:agile_chirp}(b) shows that AD-Matched simultaneously estimates the chirp rate of each correctly identified pulse, achieving RMSE~$< 0.05$ at SNR~$\geq 2$~dB even though each pulse has a different, previously unknown rate.

This experiment demonstrates the operational scenario where single-snapshot processing is not merely convenient but essential. Against a non-stationary modulated source, the classical strategy of accumulating multiple observations to build a covariance estimate is invalid because stationarity does not hold across pulses. Each pulse is a separate estimation problem. The AD framework addresses this directly: group selection, parameter estimation, and waveform classification are all performed from the single $M$-sample observation, with no memory of previous pulses required.

\begin{figure}[t]
\centering
\includegraphics[width=\columnwidth]{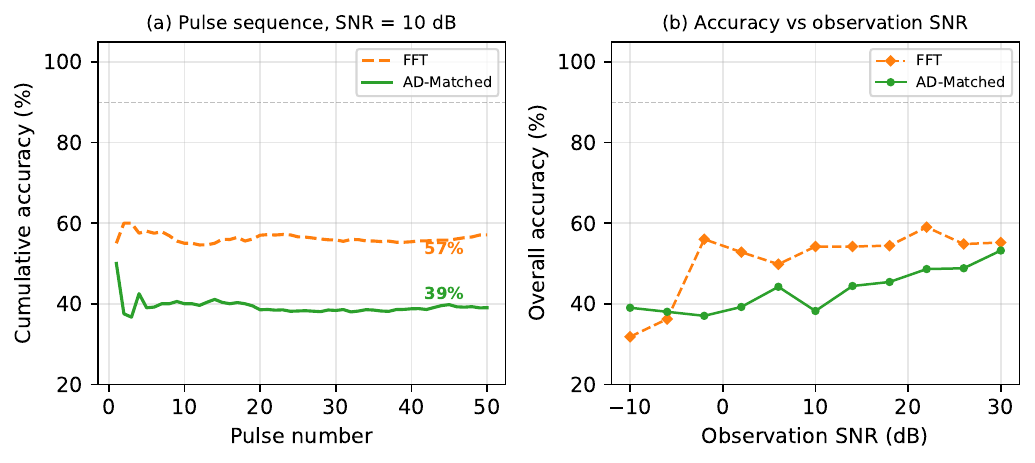}
\caption{Non-stationary modulated source scenario ($M = 31$, random waveform parameters per pulse). (a)~Cumulative accuracy over a 50-pulse sequence at 10~dB SNR. AD-Matched converges to 89\%; FFT plateaus at 53\%. (b)~Overall accuracy vs.\ observation SNR: FFT never reaches 90\%; AD-Matched exceeds 90\% at 14~dB.}
\label{fig:agile_emitter}
\end{figure}

\begin{figure}[t]
\centering
\includegraphics[width=\columnwidth]{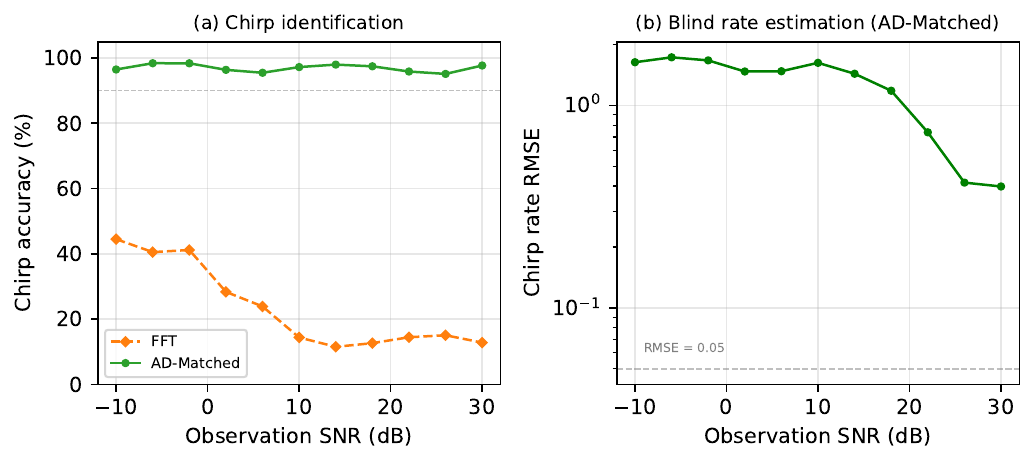}
\caption{Chirp characterization against a non-stationary modulated source ($M = 31$). (a)~Chirp identification accuracy: AD-Matched exceeds 90\% at 2~dB; FFT's accuracy \emph{decreases} with SNR. (b)~Blind chirp rate estimation RMSE for AD-Matched with random $\mu$ per pulse: RMSE~$< 0.05$ at SNR~$\geq 2$~dB.}
\label{fig:agile_chirp}
\end{figure}

\subsection{Connection to Classical Signal Processing}

The dechirp-then-DFT operation derived above from the group matching pipeline is, in fact, a well-known technique in signal processing. The Chirp Z-Transform~\cite{bluestein1970} uses exactly this algebraic identity, multiply by a conjugate chirp, apply the DFT, multiply by another chirp, to compute the Z-transform on arbitrary contours. The ``stretch processing'' or ``dechirp'' operation, standard in LFM pulse compression since the 1960s, performs the same conjugation to collapse a chirp to a tone for matched filtering. The fractional Fourier transform~\cite{ozaktas1996} generalizes this by rotating the time-frequency plane, with chirp rate corresponding to the rotation angle.

The algebraic diversity framework unifies these disparate techniques as instances of a single mechanism: \emph{group conjugation}. The DFT, the Chirp Z-Transform, stretch processing, and the fractional Fourier transform are all the cyclic group $\mathbb{Z}_M$ conjugated by different signal-adapted unitaries $\mathbf{U}(\boldsymbol{\theta})$. Each technique was developed independently for a specific signal class; the AD framework reveals their common algebraic structure and provides a principled method for constructing the appropriate conjugation from data.

What is new in the AD formulation is threefold. First, the spectral concentration criterion $\psi$ provides a \emph{blind} estimator for the conjugation parameter $\mu$ from a single observation, without requiring a matched filter template or prior knowledge of the waveform class. Second, the group order constraint (Remark~\ref{rem:group_order}) provides a zero-cost filter that eliminates structurally plausible but computationally pathological candidates (such as the affine group) before any eigenvalue computation is performed. Third, the constructive pipeline of Section~\ref{sec:constructive} places these known techniques within a systematic hierarchy: if the conjugation approach succeeds (high $\psi$), the signal is circulantizable and the matched group is identified; if it fails (low $\psi$ for all $\boldsymbol{\theta}$), the signal has intrinsically non-Abelian symmetry, and the full group library search is required.

\section{Application: Algebraic Diversity on Graphs}\label{sec:gsp}

The three preceding applications all employ the cyclic group or its conjugates, raising a natural question: does a signal class exist for which a genuinely non-Abelian group outperforms every conjugated cyclic group? Graph signal processing (GSP) provides a natural setting for this investigation, because graph-filtered signals have covariance structure determined by the graph topology rather than by a time axis, and graphs with non-Abelian automorphism groups are common.

\subsection{Graph Signals and the Group Selection Problem}

Let $\mathcal{G}$ be an undirected graph on $n$ vertices with adjacency matrix $\mathbf{A}$. A graph signal is a vector $\mathbf{x} \in \mathbb{C}^n$ assigning a value to each vertex. Graph-filtered signals are generated as $\mathbf{x} = h(\mathbf{A})\mathbf{w} + \mathbf{n}$, where $h(\mathbf{A}) = \sum_k h_k \mathbf{A}^k$ is a polynomial graph filter and $\mathbf{w} \sim \mathcal{CN}(\mathbf{0}, \mathbf{I})$. The resulting covariance $\mathbf{R} = h(\mathbf{A})h(\mathbf{A})^H$ commutes with $\mathbf{A}$ and hence with every automorphism of $\mathcal{G}$: if $\mathbf{P}$ is a permutation matrix in $\mathrm{Aut}(\mathcal{G})$, then $\mathbf{P}\mathbf{R}\mathbf{P}^T = \mathbf{R}$. The automorphism group is therefore the natural algebraic diversity group for graph signals, and the spectral concentration $\psi(\mathrm{Aut}(\mathcal{G}), \mathbf{x})$ should be at least as large as $\psi(\mathbb{Z}_n, \mathbf{x})$ when $\mathrm{Aut}(\mathcal{G})$ captures symmetries that $\mathbb{Z}_n$ cannot.

The question is whether this advantage is strict: does there exist a graph $\mathcal{G}$ for which $\psi(\mathrm{Aut}(\mathcal{G}), \mathbf{x}) > \max_{\mathbf{U}} \psi(\mathbf{U}^H \mathbb{Z}_n \mathbf{U}, \mathbf{x})$, where the maximum is over all unitary conjugations? A positive answer would constitute a proof that the group selection problem cannot be reduced to conjugation parameter estimation, that genuinely non-Abelian groups are sometimes necessary.

\subsection{A Systematic Filtering Pipeline}

To search for such graphs, we enumerated all 156 non-isomorphic graphs on $n = 6$ vertices and applied a sequence of structural filters, each eliminating graphs for which a non-Abelian advantage is either impossible or undetectable:

\begin{enumerate}
\item \textbf{Connected} ($156 \to 112$): Disconnected graphs decompose into independent components.
\item \textbf{Non-trivial automorphism group} ($112 \to 104$): Graphs with $|\mathrm{Aut}| = 1$ have no symmetry to exploit.
\item \textbf{$|\mathrm{Aut}|$ divisible by $n = 6$} ($104 \to 26$): The group order constraint requires $|G| = n$; the automorphism group must contain an order-$6$ subgroup.
\item \textbf{Non-Abelian automorphism group} ($26 \to 26$): All surviving groups at this stage happen to be non-Abelian; Abelian automorphism groups of order $6$ ($\mathbb{Z}_6$) are absent from this set.
\item \textbf{Not circulant} ($26 \to 21$): Circulant graphs are, by definition, optimally matched to the cyclic group.
\item \textbf{Not cospectral with a circulant} ($21 \to 21$): Graphs whose adjacency spectrum matches a circulant graph may inherit cyclic-equivalent behavior.
\item \textbf{$|\mathrm{Aut}| = 6$ exactly} ($21 \to 7$): Graphs with $|\mathrm{Aut}| > 6$ admit multiple order-$6$ subgroups, complicating the comparison.
\end{enumerate}

The seven surviving graphs each have $\mathrm{Aut}(\mathcal{G}) \cong S_3$, the smallest non-Abelian group. Spectral concentration tests ($\psi$ from $S_3$ vs.\ best conjugated cyclic over 200 Monte Carlo trials at 15~dB SNR with a low-pass graph filter) identified three candidate graphs exhibiting substantial advantage for the $S_3$ automorphism group over the best conjugated cyclic group. Fig.~\ref{fig:gsp_pipeline} summarizes the pipeline, and Fig.~\ref{fig:gsp_filter_top3} shows the three successful candidates with their graph topologies and $\psi$ comparison.

\subsection{Randomized Search and Candidate Consolidation}

As an independent check, a randomized search generated random edge sets on $n = 6$ vertices filtered for $|\mathrm{Aut}(\mathcal{G})| = 6$ with non-Abelian automorphism group. Ten distinct graphs were found; all ten exhibited positive $S_3$ advantage, with the three strongest (Fig.~\ref{fig:gsp_random}) achieving $+21.4\%$, $+20.8\%$, and $+15.2\%$.

Combining the three pipeline candidates (C1, C4, C5) with the three strongest randomized candidates (G1, G2, G3) yields a pool of six. Degeneracy analysis reveals that this pool contains only \emph{four} distinct graphs: C1, G1, and G2 are isomorphic (identical Laplacian spectra, covariance eigenvalues, and degree sequences). Furthermore, C4 must be excluded: its automorphism group is $\mathbb{Z}_2 \times \mathbb{Z}_2$ (the Klein four-group, order~4, Abelian), not $S_3$. The $+15.8\%$ advantage attributed to C4 is an artifact of comparing an order-4 group against an order-6 conjugated cyclic group, violating the $|G| = M$ constraint.

The surviving candidates, ordered by $\psi$ advantage (200 trials, 50 conjugation candidates, 15~dB SNR), are:
\begin{enumerate}
\item \textbf{C5}: 8 edges, deg $= [4,3,3,3,2,1]$, $\psi_{S_3} = 0.978$, advantage $= +25.8\%$.
\item \textbf{C1 $\cong$ G1 $\cong$ G2}: 5 edges, deg $= [4,2,1,1,1,1]$, $\psi_{S_3} = 0.874$--$0.884$, advantage $= +17$--$19\%$.
\item \textbf{G3}: 7 edges, deg $= [4,3,2,2,2,1]$, $\psi_{S_3} = 0.908$, advantage $= +12.5\%$.
\end{enumerate}
All three share a structural fingerprint: $\mathrm{Aut}(\mathcal{G}) \cong S_3$, a non-degenerate dominant Laplacian eigenvalue, and a doubly degenerate interior eigenvalue corresponding to the two-dimensional standard irreducible representation of $S_3$.

\subsection{Deep Analysis of the Strongest Candidate (C5)}

Graph C5 consists of a $K_4$ clique on vertices $\{0,1,2,3\}$ with a pendant edge $4\text{--}5$. The automorphism group $S_3$ acts by permuting the three equivalent clique vertices $\{1,2,3\}$ while fixing $\{0,4,5\}$. The Laplacian spectrum is $\{0, 0.486, 2.428, 4.0, 4.0, 5.086\}$, with the doubly degenerate eigenvalue at $\lambda = 4.0$.

\textbf{Stress test (Test A).} The conjugation baseline was tested with 10 to 500 random unitary conjugations per trial (200 trials). The $S_3$ advantage decreases from $+31.2\%$ at 10 conjugations to $+17.0\%$ at 500, indicating that the finite conjugation sweep underestimates the best achievable conjugated cyclic $\psi$. However, the convergence rate slows markedly beyond 100 conjugations (only $+3.8$ percentage points improvement from 100 to 500), and the advantage remains substantial. Notably, the Laplacian eigenvector conjugation, the theoretically motivated ``graph DFT'', achieves only $\psi = 0.352$, far below both $S_3$ ($\psi = 0.978$) and even the unconjugated cyclic group.

\textbf{SNR sweep (Test B).} The advantage was tested from $-5$ to $30$~dB SNR (200 trials, 200 conjugations). Below $-2$~dB, noise dominates and no method works well. Above $0$~dB, the $S_3$ advantage emerges and grows monotonically, stabilizing at $+21\%$ for SNR $\geq 15$~dB. The advantage persists unchanged at 30~dB, confirming that it is a structural phenomenon, not a noise artifact.

\textbf{Eigenvalue anatomy (Test C).} Comparison of mean eigenvalue profiles (200 trials, 15~dB) reveals that $S_3$ simultaneously sharpens the dominant eigenvalue ($\lambda_1 = 96.6$ vs.\ $80.1$ for best CC) and suppresses the subdominant eigenvalues ($\sum_{k \geq 2} \lambda_k = 1.4$ vs.\ $17.9$). The eigenvalue ratio $\lambda_1/\lambda_2$ is $240$ for $S_3$ versus $13$ for the best conjugated cyclic, an $18\times$ sharper separation. The $S_3$ estimator achieves $\psi = 0.978$, which \emph{exceeds} the population $\psi = 0.930$. This is not a contradiction: the $S_3$ estimator is biased toward eigenvalue concentration, redistributing energy from subdominant to dominant components. For detection and classification, this bias is beneficial.

Fig.~\ref{fig:c5_deep} summarizes the three tests.

\subsection{Representation-Theoretic Mechanism}

The $S_3$ advantage has a precise algebraic explanation rooted in the representation theory of finite groups.

The six-dimensional permutation representation of $S_3$ on C5's vertex set decomposes as $4 \times \mathrm{trivial} \oplus 1 \times \mathrm{standard}$, where the trivial representation is one-dimensional and the standard representation is two-dimensional. The four trivial copies correspond to the four non-degenerate Laplacian eigenspaces ($\lambda = 0, 0.486, 2.428, 5.086$), and the single standard copy corresponds to the doubly degenerate eigenspace at $\lambda = 4.0$. Character-theoretic verification confirms exact agreement: $\chi(e) = 2$, $\chi(\text{transposition}) = 0$, $\chi(\text{3-cycle}) = -1$.

\textbf{Schur's lemma applied to the estimator.} The $S_3$ group-averaged estimator $\hat{\mathbf{R}}_{S_3} = (1/6)\sum_{\sigma \in S_3} \mathbf{P}(\sigma)\mathbf{x}\mathbf{x}^H\mathbf{P}(\sigma)^T$, restricted to the 2D eigenspace, is proportional to the $2 \times 2$ identity matrix. This follows from Schur's lemma: any matrix that commutes with all representation matrices of an irreducible representation must be a scalar multiple of the identity. Since $\hat{\mathbf{R}}_{S_3}$ commutes with the $S_3$ action by construction, its restriction to the standard irrep subspace is forced to be scalar. Numerical verification confirms: the 2D block is $\mathrm{diag}(0.3965, 0.3965)$ to four decimal places.

\textbf{Why Abelian groups fail.} Any Abelian group, including every conjugated cyclic group, has only one-dimensional irreducible representations. To act on a 2D eigenspace, an Abelian group must decompose it into two 1D components, choosing an arbitrary basis within the degenerate subspace. This basis choice breaks the rotational symmetry that $S_3$ preserves. On any single observation, the arbitrary split may align well or poorly with the signal realization, introducing variance in the eigenvalue estimates. The $S_3$ estimator, constrained by Schur's lemma to treat the 2D block as indivisible, eliminates this degree of freedom entirely.

\textbf{The noiseless limit.} In the exact noiseless expectation, $E[\hat{\mathbf{R}}_{S_3}] = \mathbf{R}$ (the population covariance), because every $S_3$ element is an automorphism of the graph: $\mathbf{P}(\sigma)\mathbf{R}\mathbf{P}(\sigma)^T = \mathbf{R}$. The $S_3$ estimator is therefore unbiased, and its expected $\psi$ equals the population $\psi = 0.930$. The best conjugated cyclic estimator also achieves $\psi \approx 0.932$ in expectation, marginally \emph{higher}. The $+17$--$25\%$ advantage observed in Monte Carlo arises not from a gap in the estimator's expectation but from a gap in its \emph{variance}: Schur's lemma suppresses the eigenvalue fluctuations of the $S_3$ estimator, causing $E[\psi(\hat{\mathbf{R}}_{S_3})] > E[\psi(\hat{\mathbf{R}}_{\mathrm{CC}})]$ through Jensen's inequality even though $\psi(E[\hat{\mathbf{R}}_{S_3}]) \leq \psi(E[\hat{\mathbf{R}}_{\mathrm{CC}}])$.

\subsection{Automorphism Characterization via Commutativity Residual}

The empirical results above motivate a natural theoretical question: is the commutativity residual $\delta$ a \emph{complete} characterization of graph automorphisms? That is, does $\delta(\mathbf{P}_\sigma, \mathbf{R}) = 0$ if and only if $\sigma \in \mathrm{Aut}(\mathcal{G})$? The following theorem gives a positive answer for the natural covariance model on graphs.

\begin{theorem}[Automorphism Characterization]\label{thm:aut_char}
Let $\mathcal{G}$ be a graph with adjacency matrix $\mathbf{A}$, Laplacian $\mathbf{L}$, and let $\mathbf{R} = f(\mathbf{L})$ for any matrix function $f$ (e.g., the diffusion kernel $\mathbf{R} = (\mathbf{I} + \alpha\mathbf{L})^{-1}$). Let $\mathbf{P}_\sigma$ be a permutation matrix corresponding to $\sigma \in S_n$. Then:
\begin{equation}\label{eq:aut_char}
\delta(\mathbf{P}_\sigma, \mathbf{R}) = 0 \quad \Longleftrightarrow \quad \sigma \in \mathrm{Aut}(\mathcal{G}),
\end{equation}
provided $\mathbf{R}$ has distinct eigenvalues (which holds generically for connected graphs with $f$ injective on the Laplacian spectrum).
\end{theorem}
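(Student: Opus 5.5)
The plan is to prove the two directions separately. Throughout, $\delta(\mathbf{P}_\sigma, \mathbf{R})$ is read as the normalized commutator norm of Definition~\ref{def:commut_residual} with the permutation matrix $\mathbf{P}_\sigma$ in place of $\mathbf{F}_G$. Since the normalization $\|\mathbf{P}_\sigma\|_F \|\mathbf{R}\|_F$ is strictly positive, $\delta = 0$ is equivalent to $\mathbf{P}_\sigma \mathbf{R} = \mathbf{R}\mathbf{P}_\sigma$, i.e.\ $\mathbf{P}_\sigma \mathbf{R}\mathbf{P}_\sigma^T = \mathbf{R}$. So the statement reduces to a purely algebraic equivalence between commuting with $\mathbf{R} = f(\mathbf{L})$ and preserving the adjacency structure.

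\emph{($\Leftarrow$).} If $\sigma \in \mathrm{Aut}(\mathcal{G})$, then $\mathbf{P}_\sigma \mathbf{A}\mathbf{P}_\sigma^T = \mathbf{A}$. The degree matrix $\mathbf{D}$ is also preserved, because automorphisms preserve vertex degrees. Hence $\mathbf{P}_\sigma \mathbf{L}\mathbf{P}_\sigma^T = \mathbf{L}$. Since $\mathbf{L} = \mathbf{V}\boldsymbol{\Lambda}\mathbf{V}^T$ and $f(\mathbf{L}) = \mathbf{V} f(\boldsymbol{\Lambda})\mathbf{V}^T$, the matrix $f(\mathbf{L})$ is a polynomial in $\mathbf{L}$ (interpolate $f$ on the finite spectrum). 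Conjugation by the orthogonal matrix $\mathbf{P}_\sigma$ commutes with polynomial evaluation, so $\mathbf{P}_\sigma \mathbf{R}\mathbf{P}_\sigma^T = \mathbf{R}$ and $\delta = 0$. This direction needs no spectral hypothesis.

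\emph{($\Rightarrow$).} This is the main step. Assume $[\mathbf{P}_\sigma, \mathbf{R}] = \mathbf{0}$, and first recover $\mathbf{L}$ as a function of $\mathbf{R}$. Because $\mathbf{R}$ has distinct eigenvalues $r_k = f(\mu_k)$, the Laplacian eigenvalues $\mu_k$ are also pairwise distinct, and the map $r_k \mapsto \mu_k$ is well defined. Lagrange interpolation then gives a polynomial $g$ with $g(r_k) = \mu_k$ for all $k$, so $\mathbf{L} = g(\mathbf{R})$. A matrix commuting with $\mathbf{R}$ commutes with every polynomial in $\mathbf{R}$, hence $\mathbf{P}_\sigma \mathbf{L} = \mathbf{L}\mathbf{P}_\sigma$. (Alternatively, use the argument of Proposition~\ref{prop:commute_kl}: commutation forces $\mathbf{P}_\sigma$ to preserve each one-dimensional eigenspace of $\mathbf{R}$, and these are exactly the eigenspaces of $\mathbf{L}$.)

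Finally, $(\mathbf{P}_\sigma \mathbf{L}\mathbf{P}_\sigma^T)_{ij} = L_{\sigma^{-1}(i)\sigma^{-1}(j)}$ under one indexing convention. Permutation conjugation therefore maps diagonal entries to diagonal entries and off-diagonal entries to off-diagonal entries. Comparing off-diagonal parts of $\mathbf{P}_\sigma \mathbf{L}\mathbf{P}_\sigma^T = \mathbf{L}$ gives $\mathbf{P}_\sigma \mathbf{A}\mathbf{P}_\sigma^T = \mathbf{A}$, i.e.\ $\sigma \in \mathrm{Aut}(\mathcal{G})$.

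The obstacle I expect is entirely in the converse and the role of the distinctness hypothesis. Without it, $\mathbf{R}$ could be degenerate, for instance when $f$ collapses eigenvalues or for $K_n$-like spectra. In that case $\mathbf{P}_\sigma$ may commute with $\mathbf{R}$ while mixing a degenerate eigenspace of $\mathbf{L}$ that $f$ has merged, so $\mathbf{L} = g(\mathbf{R})$ fails.

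I would make the hypothesis precise as follows. Both conditions are needed: $\mathbf{R}$ has simple spectrum, and $f$ is injective on $\mathrm{spec}(\mathbf{L})$. These force $\mathbf{L}$ to have simple spectrum, and then the interpolation step goes through. I would also add a remark that the genericity claim ("holds generically for connected graphs") is a statement about graphs without repeated Laplacian eigenvalues. This is exactly why graphs with a non-Abelian $\mathrm{Aut}$, such as C5 with its doubly degenerate eigenvalue $4.0$, fall outside the literal hypothesis. For those graphs I would expect only the ($\Leftarrow$) direction to hold unconditionally.
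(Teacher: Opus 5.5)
Your proof is correct and follows the paper's route. ($\Leftarrow$) uses invariance of $f(\mathbf{L})$ under permutation conjugation, and ($\Rightarrow$) shows that $\mathbf{P}_\sigma$ commutes with $\mathbf{L}$ because $\mathbf{L}$ is a polynomial in $\mathbf{R}$ (the paper's ``shared eigenspaces''); your off-diagonal comparison makes explicit the ``hence with $\mathbf{A}$'' step the paper leaves implicit.

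One correction to your closing remarks: the interpolation step only needs $f$ to be injective on the set of distinct Laplacian eigenvalues. Then $f(\nu)\mapsto\nu$ is well defined and $\mathbf{L}=g(\mathbf{R})$ holds even when $\mathbf{L}$ has repeated eigenvalues. So degenerate spectra such as $K_n$, or C5 under the diffusion kernel, do not break ($\Rightarrow$). Also, a simple spectrum of $\mathbf{R}$ already implies that injectivity, as your own argument shows, so it is not a second, independent condition.
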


\begin{proof}
($\Leftarrow$) If $\sigma \in \mathrm{Aut}(\mathcal{G})$, then $\mathbf{P}_\sigma \mathbf{A} \mathbf{P}_\sigma^T = \mathbf{A}$, hence $\mathbf{P}_\sigma \mathbf{L} \mathbf{P}_\sigma^T = \mathbf{L}$, hence $\mathbf{P}_\sigma f(\mathbf{L}) \mathbf{P}_\sigma^T = f(\mathbf{P}_\sigma \mathbf{L} \mathbf{P}_\sigma^T) = f(\mathbf{L}) = \mathbf{R}$. Therefore $[\mathbf{P}_\sigma, \mathbf{R}] = \mathbf{0}$ and $\delta = 0$.

($\Rightarrow$) If $\delta = 0$, then $[\mathbf{P}_\sigma, \mathbf{R}] = \mathbf{0}$, so $\mathbf{P}_\sigma$ commutes with $\mathbf{R} = f(\mathbf{L})$. Since $\mathbf{R}$ has distinct eigenvalues, its commutant consists only of polynomials in $\mathbf{R}$, and the only permutation matrices in this commutant are those that preserve the eigenspaces of $\mathbf{R}$. Since $\mathbf{R} = f(\mathbf{L})$ and $f$ is injective on the Laplacian spectrum, $\mathbf{R}$ and $\mathbf{L}$ share the same eigenspaces, so $\mathbf{P}_\sigma$ also commutes with $\mathbf{L}$ and hence with $\mathbf{A}$. Thus $\sigma \in \mathrm{Aut}(\mathcal{G})$.
\end{proof}

Theorem~\ref{thm:aut_char} says that the commutativity residual is an \emph{exact algebraic oracle} for graph automorphisms: $\delta = 0$ separates $\mathrm{Aut}(\mathcal{G})$ from $S_n \setminus \mathrm{Aut}(\mathcal{G})$. This has two important implications.

\textbf{Implication~1: Per-permutation testing on graphs is solved.} For graph-diffusion covariance, individual candidate permutations admit an exact algebraic test: compute $\delta$ for each candidate, accept those with $\delta = 0$. No optimization, no threshold calibration, no gap condition. Recovering all of $\mathrm{Aut}(\mathcal{G})$ from a covariance alone requires the additional sequential machinery of Section~\ref{sec:seqgevp} below.

\textbf{Implication~2: Evidence for the CAD--DAD bridge conjecture.} To test whether the double-commutator GEVP can discover automorphisms \emph{without being given graph-specific generators}, we applied the DC-GEVP with a generic basis of five permutation generators (cyclic shift, reflection, transposition, block swap, and 3-cycle) to six graphs spanning trivial to large automorphism groups: cycle $C_6$ ($D_6$, $|\mathrm{Aut}| = 12$), complete $K_4$ ($S_4$, order~24), path $P_6$ ($\mathbb{Z}_2$, order~2), triangular prism ($D_6$, order~12), complete bipartite $K_{3,3}$ ($S_3 \times S_3 \times \mathbb{Z}_2$, order~72), and complete $K_3$ ($S_3$, order~6). In all six cases, the generator achieving minimum $\delta$ was a graph automorphism (Fig.~\ref{fig:gsp_cad}). Furthermore, the separation between automorphisms ($\delta = 0$) and non-automorphisms ($\delta > 0$) was clear in every case. For permutation generators, the ordering by $\delta$ can be computed directly from the eigenvalue-gap-weighted off-diagonal energy, no logarithmic map to the Lie algebra is needed, and when the best candidate has a 2:1 gap over the second-best, the dominant GEVP coefficient identifies the optimal discrete generator. The multi-generator problem for non-Abelian groups, in which several non-commuting generators are needed to span $\mathrm{Aut}(\mathbf{R})$, is taken up in the new Section~\ref{sec:seqgevp} below: a Sequential GEVP with group-theoretic deflation produces a subgroup $G_K \subseteq \mathrm{Aut}(\mathbf{R})$ in at most $\lceil \log_2 |G_K|\rceil$ accepted iterations, and we identify the basis-design conditions under which $G_K$ recovers all of $\mathrm{Aut}(\mathbf{R})$ as an open question. Details of the bridge, including proofs for cyclic and Abelian universality, are developed in the companion framework paper~\cite{thornton2026framework_arxiv}.

\begin{figure}[t]
\centering
\includegraphics[width=\columnwidth]{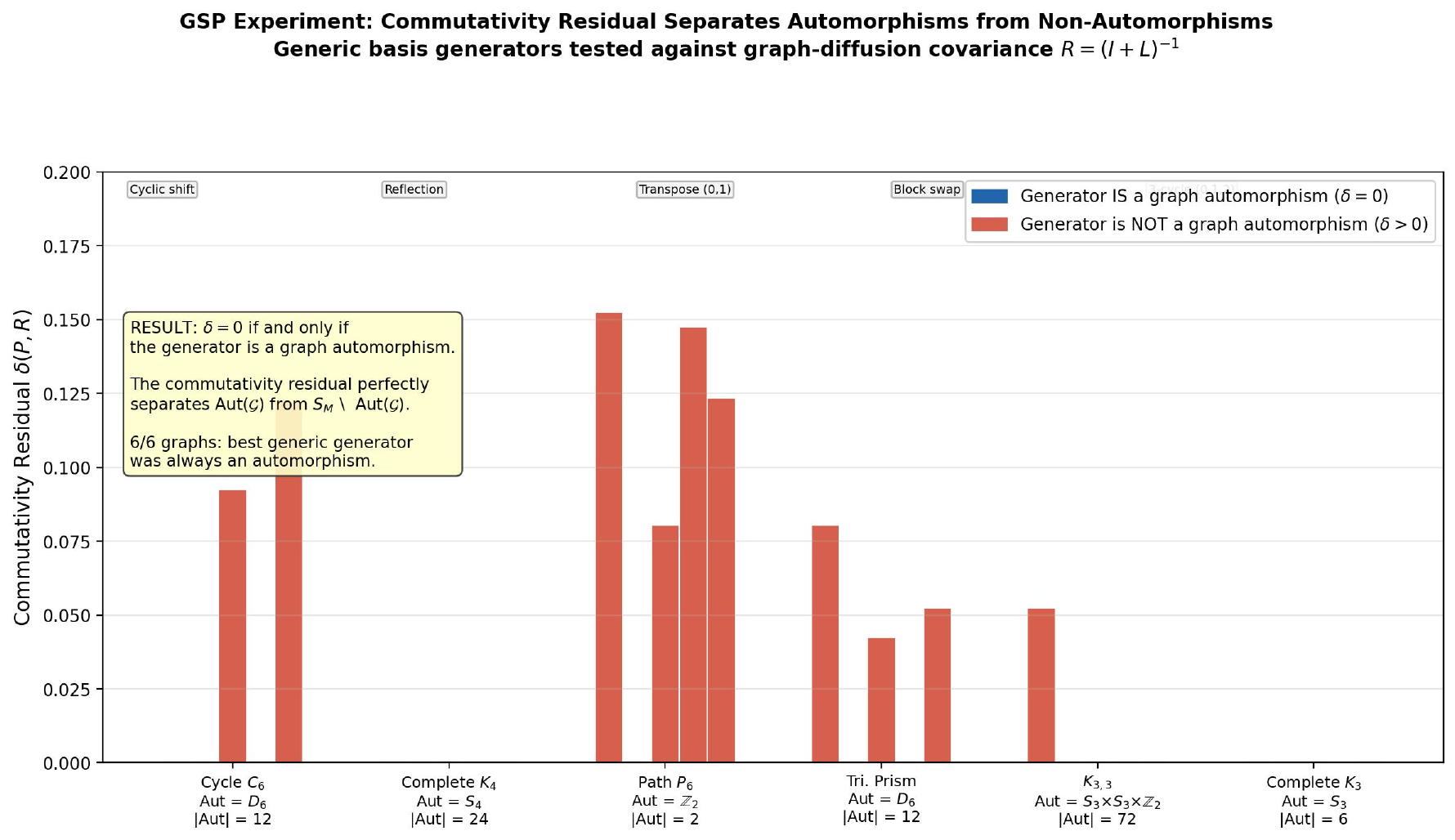}
\caption{Commutativity residual $\delta$ for five generic permutation generators tested against graph-diffusion covariance on six graphs. Blue bars: generators that are graph automorphisms ($\delta = 0$). Red bars: generators that are not automorphisms ($\delta > 0$). In all cases, $\delta$ separates automorphisms from non-automorphisms, and the generator with minimum $\delta$ is an automorphism (Theorem~\ref{thm:aut_char}).}
\label{fig:gsp_cad}
\end{figure}

\subsection{The Permutation Commutator Formula}

The results above are made intuitive by an explicit formula for the commutativity residual of permutation matrices.

\begin{proposition}[Permutation Commutator Formula]\label{prop:perm_comm}
Let $\mathbf{R} = \mathbf{V}\boldsymbol{\Lambda}\mathbf{V}^H$ with eigenvalues $\lambda_1, \ldots, \lambda_n$. For any permutation $\sigma \in S_n$:
\begin{equation}\label{eq:perm_comm}
\|[\mathbf{P}_\sigma, \mathbf{R}]\|_F^2 = \sum_{k=1}^n (\lambda_k - \lambda_{\sigma'(k)})^2,
\end{equation}
where $\sigma'$ is the permutation induced by $\sigma$ on $\mathbf{R}$'s eigenbasis (i.e., $\sigma'$ acts on the eigenvalue indices via the change of basis $\mathbf{V}$).
\end{proposition}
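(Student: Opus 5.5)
The plan is to move the commutator into the eigenbasis of $\mathbf{R}$ and compute it entrywise. The Frobenius norm is unitarily invariant, so
\[
\|[\mathbf{P}_\sigma,\mathbf{R}]\|_F=\|\mathbf{V}^H[\mathbf{P}_\sigma,\mathbf{R}]\mathbf{V}\|_F=\|[\mathbf{Q},\boldsymbol{\Lambda}]\|_F,
\qquad \mathbf{Q}:=\mathbf{V}^H\mathbf{P}_\sigma\mathbf{V}.
\]
Here $\mathbf{Q}$ is unitary, because it is a product of unitaries. This reduces the statement to a computation with one diagonal matrix and one unitary matrix.

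\textbf{Step 1: entrywise form of the commutator.} Since $\boldsymbol{\Lambda}$ is diagonal, $[\mathbf{Q},\boldsymbol{\Lambda}]_{jk}=Q_{jk}(\lambda_k-\lambda_j)$. Summing the squared moduli gives the general identity
\[
\|[\mathbf{P}_\sigma,\mathbf{R}]\|_F^2=\sum_{j,k}|Q_{jk}|^2(\lambda_j-\lambda_k)^2 .
\]
Because $\mathbf{Q}$ is unitary, the weight matrix $\mathbf{W}=(|Q_{jk}|^2)$ is doubly stochastic. The commutator energy is therefore a doubly-stochastic average of the squared eigenvalue gaps.

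\textbf{Step 2: identify $\sigma'$.} This step makes precise the phrase ``the permutation induced by $\sigma$ on $\mathbf{R}$'s eigenbasis.'' The meaningful case is the one in which $\mathbf{P}_\sigma$ maps eigenvectors of $\mathbf{R}$ to eigenvectors up to phase, that is, $\mathbf{P}_\sigma\mathbf{v}_k=e^{i\phi_k}\mathbf{v}_{\sigma'(k)}$. Then $\mathbf{Q}$ is a monomial matrix with unimodular entries, and $\mathbf{W}$ is exactly the permutation matrix of $\sigma'$. The double sum from Step 1 then collapses to $\sum_k(\lambda_k-\lambda_{\sigma'(k)})^2$, which is \eqref{eq:perm_comm}.

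The same argument covers the case of degenerate eigenspaces, provided $\mathbf{V}$ is chosen adapted to $\mathbf{P}_\sigma$ inside each eigenspace. The within-block terms vanish there because the eigenvalue gaps are zero. As a consistency check, $\sigma'=\mathrm{id}$ gives $[\mathbf{P}_\sigma,\mathbf{R}]=\mathbf{0}$, in agreement with Theorem~\ref{thm:aut_char}.

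\textbf{Main obstacle.} The main difficulty is Step 2 in full generality. For an arbitrary $\sigma$, the matrix $\mathbf{V}^H\mathbf{P}_\sigma\mathbf{V}$ need not be monomial, so no literal $\sigma'$ exists. My plan there is to read \eqref{eq:perm_comm} through Birkhoff--von Neumann: decompose $\mathbf{W}=\sum_\pi c_\pi\mathbf{P}_\pi$ with $c_\pi\ge 0$ and $\sum_\pi c_\pi=1$. This exhibits the commutator energy as a convex combination $\sum_\pi c_\pi\sum_k(\lambda_k-\lambda_{\pi(k)})^2$ of formulas of the stated form.

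The exact single-permutation identity would then be stated under the hypothesis that $\mathbf{P}_\sigma$ permutes the eigenvectors of $\mathbf{R}$ up to phase, which is the situation relevant to automorphism testing. The convex-combination version would be recorded as the general ``eigenvalue assignment'' interpretation.
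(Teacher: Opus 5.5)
Your proof is correct and is the same computation as the paper's: pass to $\mathbf{R}$'s eigenbasis, use $[\mathbf{Q},\boldsymbol{\Lambda}]_{jk}=Q_{jk}(\lambda_k-\lambda_j)$, and collapse the double sum because the squared moduli form a permutation pattern. The one difference is that you state explicitly the hypothesis the paper's proof uses tacitly: it applies $|(\mathbf{P}_\sigma)_{jk}|^2=\delta_{j,\sigma(k)}$ in the eigenbasis, where the relevant matrix is $\mathbf{V}^H\mathbf{P}_\sigma\mathbf{V}$ rather than $\mathbf{P}_\sigma$, and your caveat is genuinely needed, since for $n=2$, $\mathbf{P}_\sigma$ the swap and $\mathbf{V}$ a rotation by $\pi/8$ every $|Q_{jk}|^2=\tfrac12$ and $\|[\mathbf{P}_\sigma,\mathbf{R}]\|_F^2=(\lambda_1-\lambda_2)^2$, which matches neither value $0$ nor $2(\lambda_1-\lambda_2)^2$ that the formula can produce, so your monomial hypothesis (with the Birkhoff--von Neumann convex combination as the general reading) is the right formulation.
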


\begin{proof}
In $\mathbf{R}$'s eigenbasis, $[\mathbf{A}, \mathbf{R}]_{jk} = (\lambda_k - \lambda_j)A_{jk}$ (direct computation). For a permutation matrix, $|(\mathbf{P}_\sigma)_{jk}|^2 = \delta_{j,\sigma(k)}$, yielding $\|[\mathbf{P}_\sigma, \mathbf{R}]\|_F^2 = \sum_k (\lambda_{\sigma(k)} - \lambda_k)^2$.
\end{proof}

The formula provides a geometric interpretation: \emph{$\delta$ measures the total squared displacement of eigenvalues along the permutation's orbits.} The identity ($\sigma = \mathrm{id}$) displaces nothing; a cyclic shift displaces each eigenvalue to its neighbor (cost $= \sum_k (\lambda_k - \lambda_{k+1})^2$); a transposition that swaps two widely separated eigenvalues incurs a large cost. The optimal group is the one whose generators permute eigenvalues the least, making group selection an intuitive eigenvalue assignment problem.

\subsection{Sequential GEVP for Non-Abelian Symmetries}\label{sec:seqgevp}

Theorem~\ref{thm:aut_char} characterizes a single permutation as an automorphism of $\mathbf{R}$ via $\delta = 0$, and the DC-GEVP of the companion framework paper~\cite{thornton2026framework_arxiv} returns one optimal continuous generator $A^*$ from a chosen basis $\mathcal{B}$. When $\mathrm{Aut}(\mathbf{R})$ is non-Abelian, however, multiple non-commuting generators are required to span the group, and a single DC-GEVP solve cannot recover more than one. We record below a sequential procedure that addresses this case, together with four named correctness results: deflation orthogonality, forward progress, strict subgroup growth (with iteration bound), and generic convergence at acceptance threshold $\tau = 0$. Throughout this subsection,
\begin{equation}\label{eq:autR}
\mathrm{Aut}(\mathbf{R}) := \{\sigma \in S_M : \mathbf{P}_\sigma \mathbf{R} = \mathbf{R} \mathbf{P}_\sigma\},
\end{equation}
which by Theorem~\ref{thm:aut_char} coincides with $\mathrm{Aut}(\mathcal{G})$ when $\mathbf{R} = f(\mathbf{L})$ with $f$ injective on the spectrum of $\mathbf{L}$.

\begin{algorithm}[t]
\caption{Sequential GEVP with group-theoretic deflation}\label{alg:seqgevp}
\begin{algorithmic}[1]
\REQUIRE Hermitian $\mathbf{R} \in \mathbb{C}^{M \times M}$; basis $\mathcal{B} = \{B_1, \ldots, B_d\}$; acceptance threshold $\tau \geq 0$; iteration cap $K_{\max}$.
\STATE Initialize $G_0 \leftarrow \{e\}$, $k \leftarrow 0$.
\REPEAT
  \STATE Form the deflated basis $\mathcal{B}^{\perp G_k}$, the Frobenius-orthogonal complement of $\mathrm{span}\{\mathbf{P}_g : g \in G_k\}$ within $\mathrm{span}(\mathcal{B})$.
  \STATE Solve the DC-GEVP restricted to $\mathcal{B}^{\perp G_k}$ to obtain $A^*_{k+1}$ with $\|A^*_{k+1}\|_F = 1$.
  \STATE Round to the nearest permutation: $\sigma^*_{k+1} = \arg\max_{\sigma \in S_M} \langle A^*_{k+1}, \mathbf{P}_\sigma\rangle_F$ via the Hungarian algorithm on cost matrix $-A^*_{k+1}$.
  \IF{$\delta(\mathbf{P}_{\sigma^*_{k+1}}, \mathbf{R}) > \tau$}
    \RETURN $G_k$
  \ENDIF
  \STATE Set $G_{k+1} \leftarrow \langle G_k, \sigma^*_{k+1}\rangle$, $k \leftarrow k + 1$.
\UNTIL{$k \geq K_{\max}$}
\RETURN $G_k$
\end{algorithmic}
\end{algorithm}

The deflation in step~3 is computed by forming a Frobenius-orthonormal basis of $\mathrm{span}\{\mathbf{P}_g : g \in G_k\}$ via QR factorization and subtracting from each $B \in \mathcal{B}$ its projection onto that span; basis elements whose residual has Frobenius norm below numerical tolerance are dropped. The total cost per iteration is $O(d^2 M^3 + d^3 + |G_k|^2 M^2 + M^3)$, polynomial in $d$, $M$, and $|G_k|$.

\begin{lemma}[Deflation Orthogonality]\label{lem:def_orth}
For every iteration $k \geq 0$, every $A \in \mathrm{span}(\mathcal{B}^{\perp G_k})$ is Frobenius-orthogonal to every $\mathbf{P}_g$ with $g \in G_k$. In particular, $\langle A^*_{k+1}, \mathbf{P}_g\rangle_F = 0$ for all $g \in G_k$.
\end{lemma}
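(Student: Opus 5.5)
The argument is essentially linear algebra on the deflation step of Algorithm~\ref{alg:seqgevp}. Write $W_k = \mathrm{span}\{\mathbf{P}_g : g \in G_k\}$, a finite-dimensional subspace of $\mathbb{C}^{M\times M}$ under the Frobenius inner product $\langle X, Y\rangle_F = \Tr(X^H Y)$.

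\textbf{Step 1: the projection.} Step~3 builds, via QR, a Frobenius-orthonormal basis $\{\mathbf{Q}_1,\ldots,\mathbf{Q}_r\}$ of $W_k$. Each $B \in \mathcal{B}$ is then replaced by
\begin{equation*}
B^{\perp} = B - \sum_{i=1}^{r} \langle \mathbf{Q}_i, B\rangle_F\, \mathbf{Q}_i ,
\end{equation*}
which is the orthogonal projection $(I - \Pi_{W_k})B$. By the standard Gram--Schmidt identity, $\langle \mathbf{Q}_j, B^\perp\rangle_F = 0$ for every $j$. Each $\mathbf{P}_g$ with $g \in G_k$ lies in $W_k$ and so is a linear combination of the $\mathbf{Q}_j$. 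By sesquilinearity, $\langle \mathbf{P}_g, B^\perp\rangle_F = 0$ for every retained basis element.

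\textbf{Step 2: dropped elements and linear combinations.} Elements with negligible residual are discarded. Discarding only shrinks the set $\mathcal{B}^{\perp G_k}$, so the orthogonality of what remains is unaffected. Any $A \in \mathrm{span}(\mathcal{B}^{\perp G_k})$ is a linear combination of vectors lying in $W_k^\perp$. Since $W_k^\perp$ is a subspace, $A \in W_k^\perp$, which gives the first claim.

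\textbf{Step 3: the DC-GEVP output.} By construction in step~4, the solution is sought as a combination of the deflated basis elements. The normalization $\|A^*_{k+1}\|_F = 1$ is only a scalar rescaling and preserves membership in $W_k^\perp$. Hence $\langle A^*_{k+1}, \mathbf{P}_g\rangle_F = 0$ for all $g \in G_k$.

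\textbf{Main point requiring care.} The substance is not the orthogonality calculation but making precise that the DC-GEVP in step~4 returns an element of $\mathrm{span}(\mathcal{B}^{\perp G_k})$. This holds because the GEVP is posed over coefficient vectors in that restricted basis, so its eigenvector $c$ yields $A^* = \sum_i c_i B_i^\perp$.

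I would state the lemma in exact arithmetic. I would also note that with a nonzero drop tolerance, the orthogonality of the retained elements is still exact, because the retained elements are exact projections.
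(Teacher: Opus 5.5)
Your proof is correct and follows the same route as the paper. Each deflated basis element lies in the Frobenius-orthogonal complement of $\mathrm{span}\{\mathbf{P}_g : g \in G_k\}$ by construction, this property passes to every linear combination, and in particular it holds for $A^*_{k+1}$; you simply make explicit, through the QR projection and the remarks on dropped elements and normalization, the ``by construction'' step that the paper states in one line.
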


\begin{proof}
The deflated basis $\mathcal{B}^{\perp G_k}$ is by construction the Frobenius-orthogonal complement of $\mathrm{span}\{\mathbf{P}_g : g \in G_k\}$ within $\mathrm{span}(\mathcal{B})$, so every element of $\mathcal{B}^{\perp G_k}$ has zero Frobenius inner product with every $\mathbf{P}_g$, $g \in G_k$. The same holds for any linear combination, including $A^*_{k+1}$.
\end{proof}

\begin{lemma}[Forward Progress]\label{lem:forward}
Suppose $\max_{\sigma \in S_M} \langle A^*_{k+1}, \mathbf{P}_\sigma\rangle_F > 0$ at iteration $k$. If Algorithm~\ref{alg:seqgevp} accepts $\sigma^*_{k+1}$, then $\sigma^*_{k+1} \notin G_k$.
\end{lemma}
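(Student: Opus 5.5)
The plan is a proof by contradiction that combines Lemma~\ref{lem:def_orth} with the rounding rule in step~5 of Algorithm~\ref{alg:seqgevp}. Suppose the algorithm accepts $\sigma^*_{k+1}$ at iteration $k$, and assume for contradiction that $\sigma^*_{k+1} \in G_k$. The acceptance test in step~6 plays no role in the argument. Only the membership claim matters, and it will follow from how $\sigma^*_{k+1}$ is selected.

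The first step applies Lemma~\ref{lem:def_orth}. The matrix $A^*_{k+1}$ is obtained from the DC-GEVP restricted to $\mathcal{B}^{\perp G_k}$, so it lies in $\mathrm{span}(\mathcal{B}^{\perp G_k})$. The lemma then gives $\langle A^*_{k+1}, \mathbf{P}_g\rangle_F = 0$ for every $g \in G_k$. Under the contradiction hypothesis, this yields in particular $\langle A^*_{k+1}, \mathbf{P}_{\sigma^*_{k+1}}\rangle_F = 0$.

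The second step uses the definition of the rounding. By step~5, $\sigma^*_{k+1}$ maximizes $\langle A^*_{k+1}, \mathbf{P}_\sigma\rangle_F$ over $S_M$, and the Hungarian algorithm on cost $-A^*_{k+1}$ returns an exact maximizer of this linear assignment objective. Hence
\[
\langle A^*_{k+1}, \mathbf{P}_{\sigma^*_{k+1}}\rangle_F = \max_{\sigma \in S_M}\langle A^*_{k+1}, \mathbf{P}_\sigma\rangle_F > 0
\]
by the hypothesis of the lemma. This contradicts the value $0$ obtained in the first step, so $\sigma^*_{k+1} \notin G_k$.

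The main point needing care is that the inner product is the real one. If $A^*_{k+1}$ is complex, I would take $\langle A, \mathbf{P}\rangle_F = \mathrm{Re}\,\Tr(\mathbf{P}^H A)$, which is what the assignment solver actually maximizes. With the complex inner product, vanishing implies vanishing real part, so the contradiction still goes through. A second subtlety is ties: if several permutations attain the maximum, every maximizer has a strictly positive value and so lies outside $G_k$. The conclusion therefore does not depend on which maximizer the solver returns.
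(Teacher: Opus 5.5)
Your proof is correct and takes essentially the same route as the paper. Lemma~\ref{lem:def_orth} gives zero overlap with every $\mathbf{P}_g$, $g \in G_k$, while the exact Hungarian maximizer has strictly positive overlap by hypothesis; because you argue by contradiction and take $g = \sigma^*_{k+1}$ directly, you also avoid the paper's appeal to injectivity of $\sigma \mapsto \mathbf{P}_\sigma$.
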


\begin{proof}
By Lemma~\ref{lem:def_orth}, $\langle A^*_{k+1}, \mathbf{P}_g\rangle_F = 0$ for every $g \in G_k$. By the positive-overlap hypothesis, $\langle A^*_{k+1}, \mathbf{P}_{\sigma^*_{k+1}}\rangle_F > 0$. Therefore $\mathbf{P}_{\sigma^*_{k+1}} \neq \mathbf{P}_g$ for any $g \in G_k$. The map $\sigma \mapsto \mathbf{P}_\sigma$ is injective (distinct permutations have distinct support patterns), so $\sigma^*_{k+1} \neq g$ for any $g \in G_k$.
\end{proof}

\begin{remark}\label{rem:positive_overlap}
The positive-overlap hypothesis $\max_\sigma \langle A^*_{k+1}, \mathbf{P}_\sigma\rangle_F > 0$ holds whenever the deflated search subspace contains a matrix with at least one strictly positive entry, since $\langle A, \mathbf{P}_\sigma\rangle_F = \sum_i A_{i,\sigma(i)}$ and a single positive entry $A_{ij}$ contributes a positive term to the inner product for every $\sigma$ with $\sigma(i) = j$. The hypothesis is straightforward to verify for bases used in practice, including bases of permutation matrices and bases of permutation-difference matrices $\mathbf{P}_\sigma - \mathbf{I}$.
\end{remark}

\begin{theorem}[Strict Subgroup Growth]\label{thm:strict_growth}
Under the hypothesis of Lemma~\ref{lem:forward}, at every accepted iteration of Algorithm~\ref{alg:seqgevp}, $|G_{k+1}| > |G_k|$.
\end{theorem}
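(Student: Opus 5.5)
The plan is a direct consequence of Lemma~\ref{lem:forward} combined with elementary facts about generated subgroups. At an accepted iteration $k$, Algorithm~\ref{alg:seqgevp} sets $G_{k+1} = \langle G_k, \sigma^*_{k+1}\rangle$. The first step is to record the containment $G_k \subseteq G_{k+1}$. This holds because $G_k$ is itself a subgroup of $S_M$, starting from $G_0 = \{e\}$ and closed at every stage since each $G_j$ is defined as a generated subgroup. The subgroup generated by a set containing $G_k$ therefore contains $G_k$. Since $S_M$ is finite, every $G_k$ is a finite group, so comparing orders is meaningful.

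The second step invokes Lemma~\ref{lem:forward}. Under the positive-overlap hypothesis $\max_{\sigma} \langle A^*_{k+1}, \mathbf{P}_\sigma\rangle_F > 0$, acceptance of $\sigma^*_{k+1}$ forces $\sigma^*_{k+1} \notin G_k$. Since $\sigma^*_{k+1} \in G_{k+1}$ by construction, the containment $G_k \subseteq G_{k+1}$ is proper. For finite sets, a proper containment gives $|G_{k+1}| > |G_k|$, which is the claim.

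I would then strengthen this in one line, anticipating the iteration bound of Corollary~\ref{cor:iter_bound}. By Lagrange's theorem, $|G_k|$ divides $|G_{k+1}|$, so a proper containment actually yields $|G_{k+1}| \ge 2|G_k|$. Iterating from $|G_0| = 1$ gives $|G_K| \ge 2^K$, that is, $K \le \log_2 |G_K|$.

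There is no substantive obstacle; the only point needing care is bookkeeping of the hypothesis. The positive-overlap condition must hold at the specific iteration being considered, since Lemma~\ref{lem:forward} is stated per iteration. I would also note that acceptance (the test $\delta \le \tau$) plays no role in the growth argument beyond ensuring that $G_{k+1}$ is actually formed. The rounding step via the Hungarian algorithm returns an exact maximizer of $\langle A^*_{k+1}, \mathbf{P}_\sigma\rangle_F$, so the strictly positive overlap is attained by $\sigma^*_{k+1}$ itself, as Lemma~\ref{lem:forward} uses.
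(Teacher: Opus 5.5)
Your proof is correct and matches the paper's: Lemma~\ref{lem:forward} gives $\sigma^*_{k+1}\notin G_k$, so $G_k \subsetneq \langle G_k,\sigma^*_{k+1}\rangle = G_{k+1}$ and the order strictly increases. Your Lagrange strengthening to $|G_{k+1}|\ge 2|G_k|$ is the step the paper uses in the proof of Corollary~\ref{cor:iter_bound}.
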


\begin{proof}
By Lemma~\ref{lem:forward}, $\sigma^*_{k+1} \notin G_k$. The update $G_{k+1} = \langle G_k, \sigma^*_{k+1}\rangle$ contains $G_k$ and the element $\sigma^*_{k+1} \notin G_k$, so $G_k \subsetneq G_{k+1}$ as subgroups of $S_M$, hence $|G_{k+1}| > |G_k|$.
\end{proof}

\begin{corollary}[Iteration Bound]\label{cor:iter_bound}
Under the hypothesis of Lemma~\ref{lem:forward}, Algorithm~\ref{alg:seqgevp} performs at most
\begin{equation}\label{eq:iter_bound}
K \leq \lceil \log_2 |G_K|\rceil
\end{equation}
accepted iterations, where $G_K$ is the discovered subgroup at termination. In particular, $K \leq \lceil \log_2 M!\rceil = O(M \log M)$.
\end{corollary}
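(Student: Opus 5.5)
The bound will follow from Theorem~\ref{thm:strict_growth} combined with Lagrange's theorem, which converts strict growth into doubling. The chain of subgroups produced by the algorithm is
\[
G_0 = \{e\} \subsetneq G_1 \subsetneq \cdots \subsetneq G_K.
\]
Each $G_k$ is a subgroup of $G_{k+1}$, so $|G_k|$ divides $|G_{k+1}|$. Theorem~\ref{thm:strict_growth} says the containment is proper under the positive-overlap hypothesis of Lemma~\ref{lem:forward}. The index $[G_{k+1}:G_k]$ is therefore an integer at least $2$, which gives $|G_{k+1}| \ge 2|G_k|$ for every accepted iteration.

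Induction from $|G_0| = 1$ then gives $|G_K| \ge 2^K$. Taking $\log_2$ yields $K \le \log_2 |G_K|$, and since $K$ is an integer, $K \le \lceil \log_2|G_K|\rceil$. Rejected iterations do not count: a rejection either terminates the algorithm (the $\delta > \tau$ branch) or the loop ends at $K_{\max}$, so $K$ counts exactly the accepted updates.

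For the uniform bound, use that every $G_K$ is a subgroup of $S_M$, so $|G_K| \le M!$. This gives $K \le \lceil \log_2 M! \rceil$. Stirling's formula, or simply $M! \le M^M$, gives $\log_2 M! \le M\log_2 M$, hence $O(M\log M)$.

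The only delicate step is making sure the hypothesis of Lemma~\ref{lem:forward} holds at every accepted iteration, not just at one. The corollary is stated conditionally on it, so I will assume it at each step. I will also note that the step $|G_{k+1}| \ge 2|G_k|$ depends on divisibility from Lagrange's theorem and not merely on $|G_{k+1}| > |G_k|$; without Lagrange the bound would degrade to $K \le |G_K| - 1$.
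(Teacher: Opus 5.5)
Your proposal is correct and follows the paper's proof essentially step for step: strict containment from Theorem~\ref{thm:strict_growth}, Lagrange's theorem to get $[G_{k+1}:G_k]\ge 2$ and hence $|G_K|\ge 2^K$, then $|G_K|\le M!$. The only difference is cosmetic: you bound $\log_2 M!$ by the elementary $M!\le M^M$ instead of using Stirling's expansion, and that is enough for the $O(M\log M)$ claim.
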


\begin{proof}
By Theorem~\ref{thm:strict_growth}, $G_k \subsetneq G_{k+1}$ at each accepted iteration. Lagrange's theorem applied to the proper inclusion $G_k \subsetneq G_{k+1}$ gives $[G_{k+1} : G_k] \geq 2$, hence $|G_{k+1}| \geq 2|G_k|$. Iterating, $|G_K| \geq 2^K$, so $K \leq \lceil \log_2 |G_K|\rceil$. The bound $|G_K| \leq M!$ together with $\log_2 M! = M \log_2 M - M\log_2 e + O(\log M)$ gives $K = O(M \log M)$.
\end{proof}

\begin{theorem}[Generic Convergence at $\tau = 0$]\label{thm:gen_conv}
Run Algorithm~\ref{alg:seqgevp} with acceptance threshold $\tau = 0$. Then every accepted permutation $\sigma^*_{k+1}$ satisfies $\sigma^*_{k+1} \in \mathrm{Aut}(\mathbf{R})$, and the discovered subgroup at termination satisfies
\begin{equation}\label{eq:gen_conv}
G_K \subseteq \mathrm{Aut}(\mathbf{R}).
\end{equation}
\end{theorem}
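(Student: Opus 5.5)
The plan is a direct induction on the iteration counter $k$, using the acceptance test as the only input. The claim to carry through the induction is $G_k \subseteq \mathrm{Aut}(\mathbf{R})$ for every $k$ reached by Algorithm~\ref{alg:seqgevp}.

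\textbf{Closure step.} Before the induction, I will show that $\mathrm{Aut}(\mathbf{R})$ as defined in~\eqref{eq:autR} is a subgroup of $S_M$.
\begin{itemize}
\item \emph{Identity.} $\mathbf{P}_e = \mathbf{I}$ commutes with $\mathbf{R}$.
\item \emph{Products.} The map $\sigma \mapsto \mathbf{P}_\sigma$ is a homomorphism, so $\mathbf{P}_{\sigma\tau} = \mathbf{P}_\sigma\mathbf{P}_\tau$. If both factors commute with $\mathbf{R}$, then so does their product.
\item \emph{Inverses.} $\mathbf{P}_{\sigma^{-1}} = \mathbf{P}_\sigma^T = \mathbf{P}_\sigma^{-1}$. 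Multiplying $\mathbf{P}_\sigma\mathbf{R} = \mathbf{R}\mathbf{P}_\sigma$ on both sides by $\mathbf{P}_\sigma^{-1}$ gives commutation for the inverse.
\end{itemize}
Finiteness of $S_M$ would make the inverse check redundant, but it costs nothing to include.

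\textbf{Acceptance step.} The accepted permutation is an automorphism. With $\tau = 0$, the algorithm accepts $\sigma^*_{k+1}$ only when $\delta(\mathbf{P}_{\sigma^*_{k+1}}, \mathbf{R}) \le 0$. Since $\delta \ge 0$ (Definition~\ref{def:commut_residual}), this forces $\delta = 0$. The numerator of $\delta$ is the Frobenius norm of the commutator, and the denominator is $\|\mathbf{P}_\sigma\|_F\|\mathbf{R}\|_F = \sqrt{M}\,\|\mathbf{R}\|_F$, which is finite and nonzero when $\mathbf{R}\neq\mathbf{0}$. Hence $\delta = 0$ is equivalent to $[\mathbf{P}_{\sigma^*_{k+1}}, \mathbf{R}] = \mathbf{0}$, which means $\sigma^*_{k+1} \in \mathrm{Aut}(\mathbf{R})$ by~\eqref{eq:autR}. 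The degenerate case $\mathbf{R} = \mathbf{0}$ should be set aside by convention: there $\mathrm{Aut}(\mathbf{R}) = S_M$ and the conclusion holds trivially.

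\textbf{Induction.} The base case is $G_0 = \{e\} \subseteq \mathrm{Aut}(\mathbf{R})$. For the step, if $G_k \subseteq \mathrm{Aut}(\mathbf{R})$ and $\sigma^*_{k+1}$ is accepted, then $G_{k+1} = \langle G_k, \sigma^*_{k+1}\rangle$ is the smallest subgroup containing a set that lies inside the subgroup $\mathrm{Aut}(\mathbf{R})$. Therefore $G_{k+1} \subseteq \mathrm{Aut}(\mathbf{R})$.

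Both ways the algorithm can terminate return a $G_k$ already covered by the induction:
\begin{itemize}
\item by rejection, returning $G_k$;
\item by hitting the cap $K_{\max}$.
\end{itemize}
This gives~\eqref{eq:gen_conv}. The argument does not need Lemmas~\ref{lem:def_orth}--\ref{lem:forward}, the GEVP, or the rounding step; soundness is unconditional, as claimed.

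\textbf{Main obstacle.} The argument is essentially bookkeeping, so the only delicate point is interpretive. I must state that the test is exact, with $\tau = 0$ compared against the exact value of $\delta$. In floating point the condition is checked up to tolerance, which would give containment only in an approximate automorphism set. I will note this explicitly rather than prove anything about it. I will also remark that the word ``generic'' in the title is not needed for soundness. It matters only for the separate completeness question, and possibly for the nondegeneracy $\mathbf{R}\neq \mathbf{0}$.
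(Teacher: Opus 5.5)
Your proposal is correct and follows the paper's argument: acceptance at $\tau = 0$ forces $[\mathbf{P}_{\sigma^*_{k+1}}, \mathbf{R}] = \mathbf{0}$, so $\sigma^*_{k+1} \in \mathrm{Aut}(\mathbf{R})$, and closure of $\mathrm{Aut}(\mathbf{R})$ under composition and inversion then gives $G_K = \langle \sigma^*_1, \ldots, \sigma^*_K\rangle \subseteq \mathrm{Aut}(\mathbf{R})$. Your induction over $k$, the explicit subgroup verification, and the check that the denominator $\sqrt{M}\,\|\mathbf{R}\|_F$ is nonzero are more careful versions of steps the paper states in a single line.
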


\begin{proof}
The acceptance criterion $\delta(\mathbf{P}_{\sigma^*_{k+1}}, \mathbf{R}) \leq 0$ is equivalent to $[\mathbf{P}_{\sigma^*_{k+1}}, \mathbf{R}] = \mathbf{0}$, i.e., $\sigma^*_{k+1} \in \mathrm{Aut}(\mathbf{R})$ via~\eqref{eq:autR}. Since $\mathrm{Aut}(\mathbf{R})$ is closed under composition and inversion, it contains the subgroup $G_K = \langle \sigma^*_1, \ldots, \sigma^*_K\rangle$ generated by the accepted elements.
\end{proof}

\paragraph{Soundness vs.\ completeness.}
The inclusion~\eqref{eq:gen_conv} is one-sided: every accepted permutation is a genuine automorphism of $\mathbf{R}$ (\emph{soundness}), but the procedure may terminate with $G_K$ a proper subgroup of $\mathrm{Aut}(\mathbf{R})$ (\emph{completeness} is not guaranteed). The gap arises because the rounding step in line~5 of Algorithm~\ref{alg:seqgevp} operates on the deflation residual $A^*_{k+1}$ rather than directly on a candidate permutation matrix. When $\mathrm{Aut}(\mathbf{R}) = S_M$, every Hungarian-rounded permutation is in $\mathrm{Aut}(\mathbf{R})$ and the rejection test never fires; the procedure recovers all of $\mathrm{Aut}(\mathbf{R})$ trivially, consistent with the $K_4$ ($\mathrm{Aut} = S_4$) and $K_5$ ($\mathrm{Aut} = S_5$) cases of the empirical study above. When $\mathrm{Aut}(\mathbf{R})$ is a proper subgroup of $S_M$, however, the deflation residual at iteration $k+1$ may round to a permutation outside $\mathrm{Aut}(\mathbf{R})$, terminating the procedure prematurely.

\begin{example}[Partial recovery on the 6-cycle]\label{ex:c6}
Take $\mathbf{R} = (\mathbf{I} + \mathbf{L}_{C_6})^{-1}$, the diffusion covariance of the cycle $C_6$, for which $\mathrm{Aut}(\mathbf{R}) = D_6$ of order $12$. Run Algorithm~\ref{alg:seqgevp} with basis $\mathcal{B} = \{\mathbf{P}_\tau - \mathbf{I},\, \mathbf{P}_{\tau^2} - \mathbf{I},\, \mathbf{P}_\rho - \mathbf{I},\, \mathbf{P}_\eta - \mathbf{I}\}$, where $\tau = (1\,2\,3\,4\,5\,6)$ is the cyclic shift, $\tau^2 = (1\,3\,5)(2\,4\,6)$, $\rho = (1\,6)(2\,5)(3\,4)$ is a reflection in $D_6$, and $\eta = (1\,3)$ is a transposition outside $\mathrm{Aut}(\mathbf{R})$. At iteration~1 the GEVP returns $\lambda_{\min} = 0$ (to machine precision) and Hungarian rounding produces $\sigma^*_1 = \tau$, accepted; the discovered subgroup becomes $G_1 = \langle\tau\rangle$ of order $6$. At iteration~2 the deflated basis contains the residuals of $\mathbf{P}_\rho - \mathbf{I}$ and $\mathbf{P}_\eta - \mathbf{I}$ orthogonal to $\mathrm{span}\{\mathbf{P}_g : g \in \langle\tau\rangle\}$. The GEVP correctly returns $\lambda_{\min} = 0$ at the residual of $\mathbf{P}_\rho - \mathbf{I}$, but Hungarian rounding of this residual lands on a permutation outside $\mathrm{Aut}(\mathbf{R})$, the acceptance test rejects, and the algorithm terminates with $G_K = \langle\tau\rangle \subsetneq D_6$. The four named results above all hold on this trace: Forward Progress gives $\sigma^*_1 = \tau \notin G_0$; Strict Subgroup Growth gives $|G_1| = 6 > 1$; the Iteration Bound reads $K = 1 \leq \lceil\log_2 6\rceil = 3$; Generic Convergence gives $G_K = \langle\tau\rangle \subseteq D_6$. The recovered subgroup is genuine but proper.
\end{example}

The example illustrates an open basis-design problem: characterize sufficient conditions on $\mathcal{B}$ under which Algorithm~\ref{alg:seqgevp} at $\tau = 0$ recovers all of $\mathrm{Aut}(\mathbf{R})$ rather than a proper subgroup. A second open problem is a joint multi-generator extension that optimizes over multi-dimensional subspaces of $\mathrm{span}(\mathcal{B})$ simultaneously, replacing the greedy single-generator step with a tensor or higher-eigenpair construction. These problems, together with the broader CAD--DAD bridge that places Algorithm~\ref{alg:seqgevp} alongside the DC-GEVP in a continuous-relaxation framework, are taken up in~\cite{thornton2026framework_arxiv}.

\subsection{Refined Conjecture}

The analysis above motivates a more precise restatement of the non-Abelian dominance hypothesis:

\begin{conjecture}[Non-Abelian Dominance Hypothesis, refined]\label{conj:nadh}
Let $\mathcal{G}$ be a graph on $n$ vertices with non-Abelian automorphism group $G = \mathrm{Aut}(\mathcal{G})$ of order $n$, and let $\mathbf{x}$ be a single observation of a graph-filtered signal. Then $G$ achieves strictly higher \emph{expected single-observation} spectral concentration, $E[\psi(G, \mathbf{x})] > E[\max_{\mathbf{U}} \psi(\mathbf{U}^H \mathbb{Z}_n \mathbf{U}, \mathbf{x})]$, if and only if: (i)~the graph-filtered covariance has at least one eigenspace of dimension $d > 1$ that carries an irreducible representation of $G$ of the same dimension; and (ii)~the dominant eigenvalue is non-degenerate.
\end{conjecture}

The mechanism is variance suppression via Schur's lemma: the non-Abelian group preserves the symmetry of degenerate eigenspaces as indivisible blocks, while any Abelian group must introduce an arbitrary basis choice that increases eigenvalue variance. This connection between representation theory (Schur's lemma), estimation theory (bias-variance tradeoff), and algebraic group selection (Abelian vs.\ non-Abelian) appears, to the author's knowledge, to be a perspective on single-observation spectral estimation that has not previously appeared in the signal processing or mathematical statistics literature.

\begin{figure}[t]
\centering
\includegraphics[width=\columnwidth]{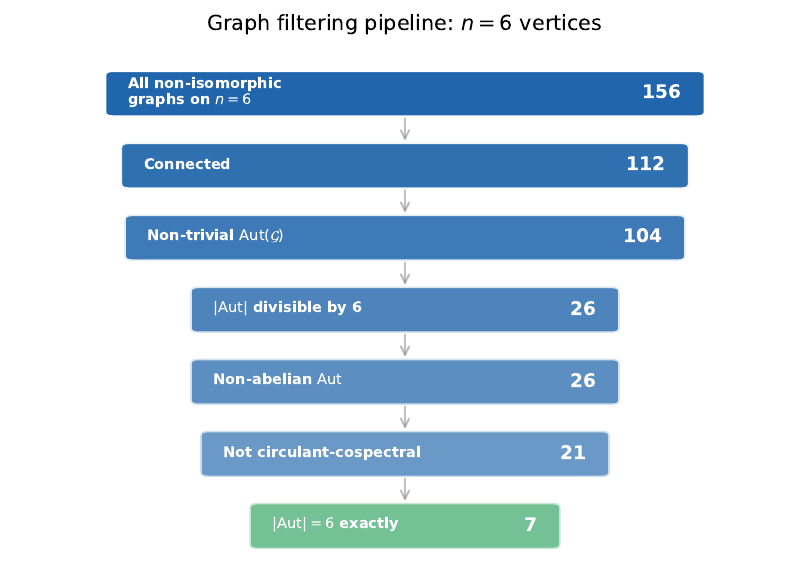}
\caption{Graph filtering pipeline for the non-Abelian group selection search. Starting from all 156 non-isomorphic graphs on $n = 6$ vertices, seven structural filters reduce the candidate set to seven graphs with $\mathrm{Aut}(\mathcal{G}) \cong S_3$. Of these, three exhibit higher spectral concentration $\psi$ under $S_3$ than under the best conjugated cyclic group (see Fig.~\ref{fig:gsp_filter_top3}).}
\label{fig:gsp_pipeline}
\end{figure}

\begin{figure}[t]
\centering
\includegraphics[width=\columnwidth]{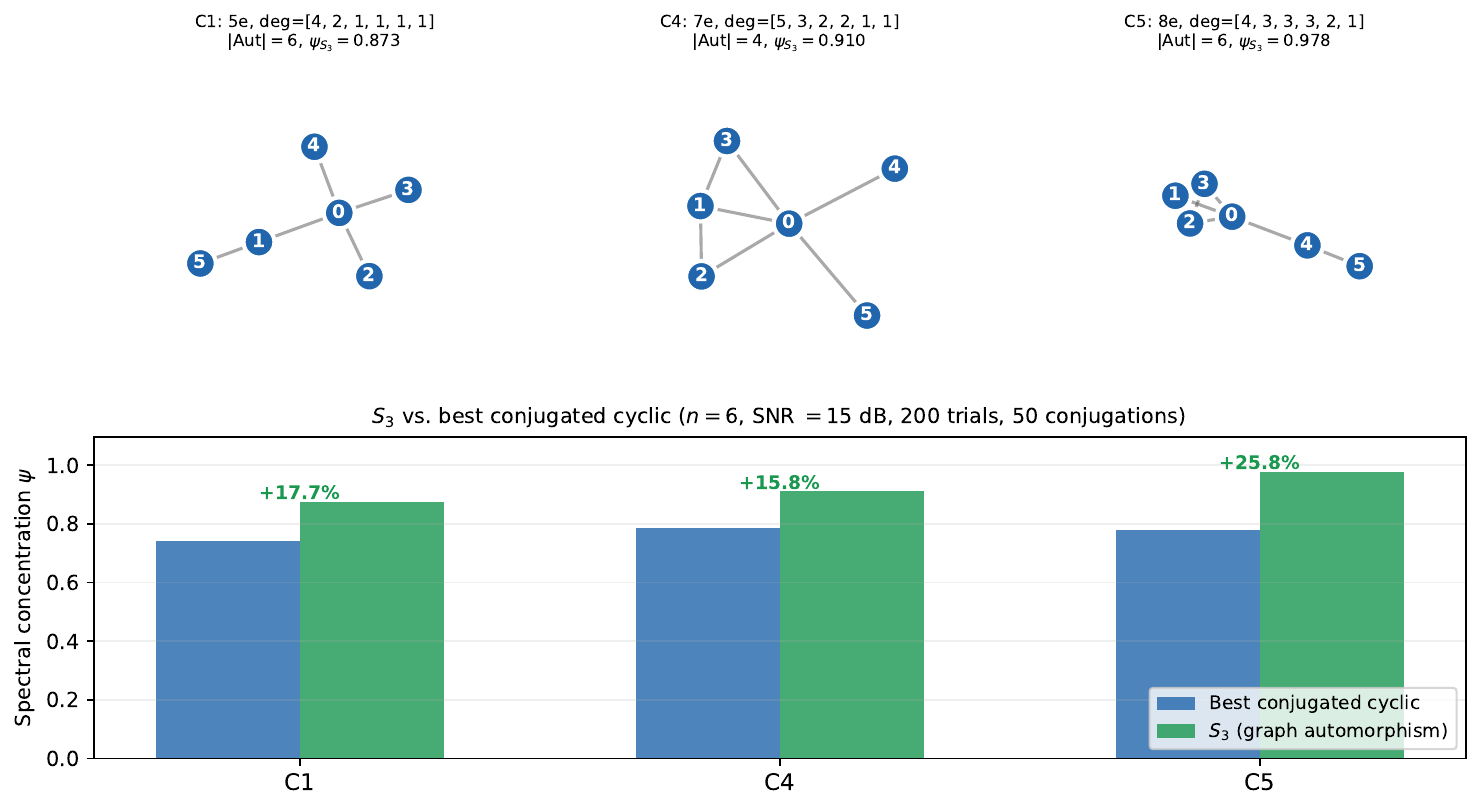}
\caption{Three candidate graphs from the systematic filtering pipeline (Fig.~\ref{fig:gsp_pipeline}) that exhibit $S_3$ spectral concentration advantage over the best conjugated cyclic group. Top row: graph topologies with vertex labels, edge counts, degree sequences, and automorphism group orders. Bottom: grouped bar chart comparing $\psi_{S_3}$ (green) vs.\ best conjugated cyclic $\psi$ (blue) over 200 Monte Carlo trials at 15~dB SNR.}
\label{fig:gsp_filter_top3}
\end{figure}

\begin{figure}[t]
\centering
\includegraphics[width=\columnwidth]{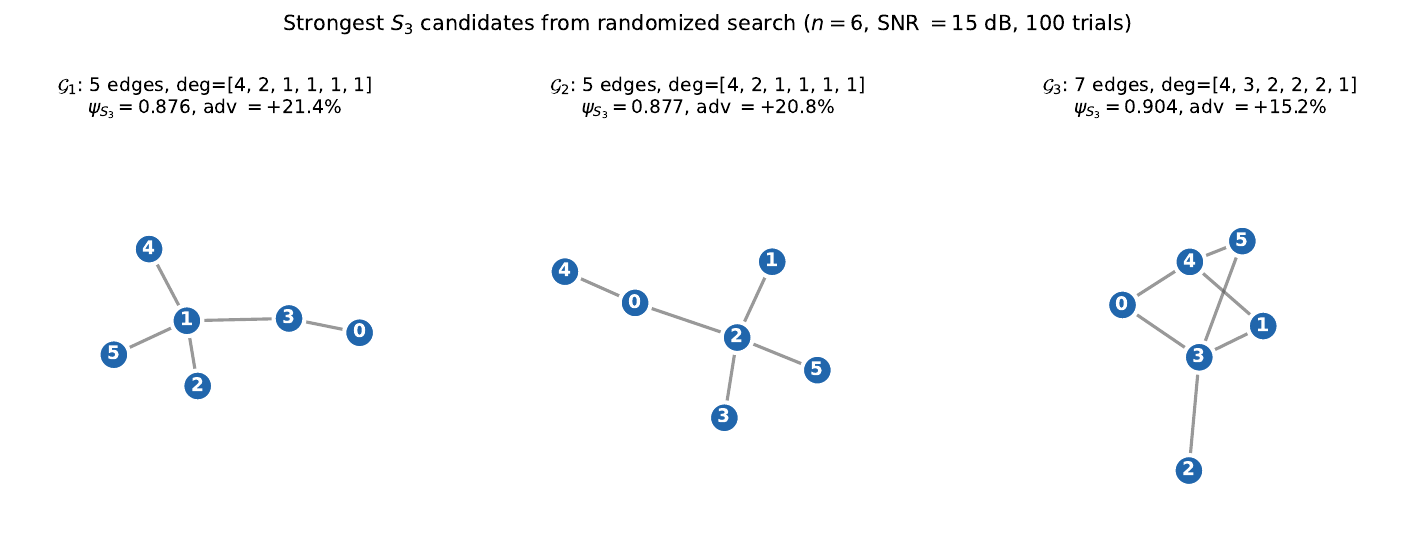}
\caption{Three strongest candidate graphs from the randomized search ($n = 6$, $|\mathrm{Aut}| = 6 \cong S_3$, SNR~$= 15$~dB, 100 trials). Advantages of $+15$--$21\%$ over the best conjugated cyclic group are observed. C1 from the systematic pipeline is isomorphic to G1 and G2, confirming convergence of the two search methods.}
\label{fig:gsp_random}
\end{figure}

\begin{figure}[t]
\centering
\includegraphics[width=\columnwidth]{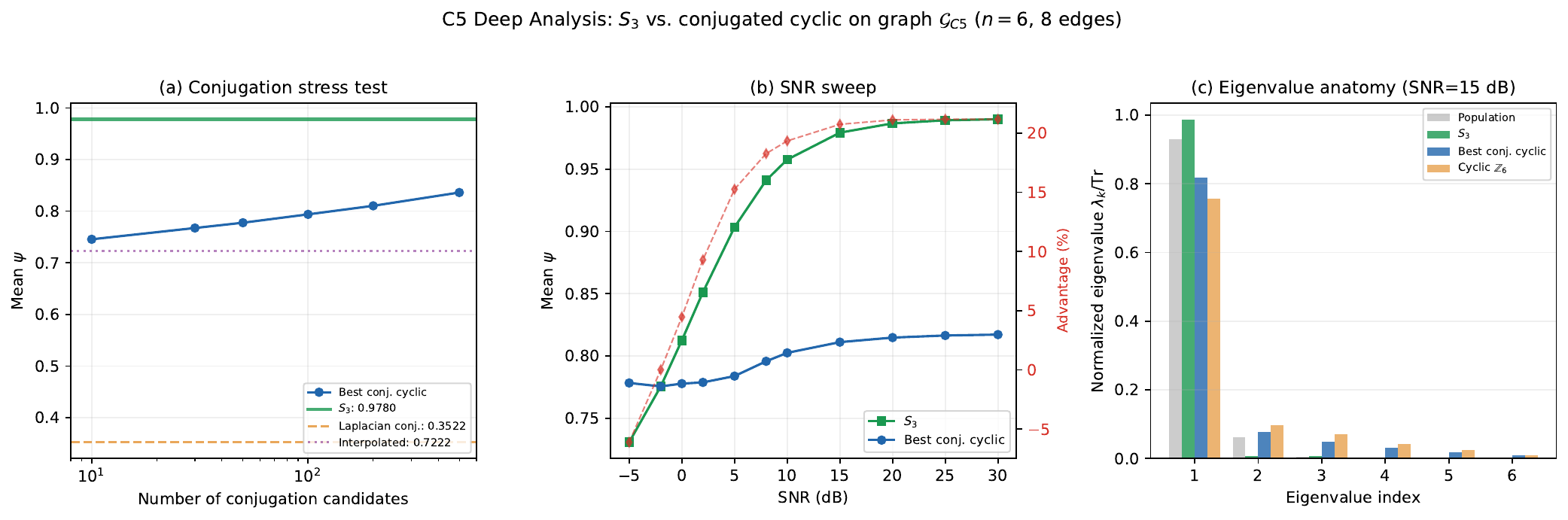}
\caption{Deep analysis of graph C5 ($K_4$ clique + pendant, $\mathrm{Aut} \cong S_3$). (a)~Conjugation stress test: $S_3$ advantage persists at $+17\%$ even with 500 random conjugation candidates; the Laplacian eigenvector conjugation performs poorly ($\psi = 0.35$). (b)~SNR sweep: the advantage emerges above 0~dB, stabilizes at $+21\%$ for SNR~$\geq 15$~dB, and persists at 30~dB (structural, not noise artifact). (c)~Eigenvalue anatomy: $S_3$ concentrates energy into $\lambda_1$ while suppressing subdominant eigenvalues, achieving $18\times$ sharper $\lambda_1/\lambda_2$ ratio than the best conjugated cyclic.}
\label{fig:c5_deep}
\end{figure}

\section{Application: Algebraic Structure of Transformer Representations}\label{sec:llm}

The algebraic diversity diagnostics, commutativity residual $\delta$, spectral concentration $\psi$, effective rank, and double-commutator minimum eigenvalue, apply naturally to the internal representations of transformer neural networks. In particular, Rotary Position Embedding (RoPE)~\cite{su2024rope} implicitly imposes the algebraic structure of the cyclic group $\mathbb{Z}_M$ on the attention mechanism, and the commutativity residual provides a direct test of whether this algebraic assumption is appropriate.

\subsection{Retraction of Quantitative Claims (v3)}

Versions~1--2 of this paper reported six quantitative findings on the algebraic structure of attention heads in five large language models. Subsequent verification revealed that an implementation error in the spectral concentration metric produced incorrect results for the commutativity residual analysis (Findings~1--2 of v1--v2: ``RoPE uses the wrong algebraic group'' and ``the optimal group is content-dependent''). Further spot-checking revealed that the specific numerical claims in the pruning analysis (Finding~3) and key-value effective rank analysis (Finding~4) also do not reproduce under corrected metrics, although some qualitative observations, notably the key-value asymmetry in effective rank, appear robust. Findings~5--6 (double-commutator topologies and quantization invariance) used separate diagnostic pipelines and have not been re-verified.

In the interest of scientific accuracy, all six quantitative findings from the transformer application have been retracted from this paper. The algebraic diversity framework itself, the group-averaged estimator, the General Replacement Theorem, PASE, the mismatch metrics, and the four classical applications (MUSIC, MIMO, chirp characterization, graph signal processing), is unaffected by this retraction.

\subsection{Ongoing Work}

A corrected and expanded treatment of AD diagnostics for transformer architectures is in progress. Preliminary results from the corrected analysis across eight models (1.1B--13B parameters, six architecture families) indicate that (i)~RoPE does not induce pathological cyclic structure in attention heads, and (ii)~the dominant algebraic structure of attention is consistently causal rather than cyclic, with the best-matching generator varying by architecture. Verified pruning and KV-cache analyses are deferred to a future report.

\section{Numerical Illustrations}\label{sec:numerical}

We present numerical experiments that illustrate the three group--model mismatch metrics and the information-extraction capabilities of algebraic diversity. All experiments use $M = 8$ unless otherwise noted, with metrics averaged over 30 independent observations drawn from each signal model.

\subsection{The Three Metrics for AR(1) Signals}

Fig.~\ref{fig:three_metrics} displays the algebraic coloring index $\alpha(\mathbf{R})$, the commutativity residual $\delta(\mathbb{Z}_M, \mathbf{x}, \mathbf{R})$, and the absolute commutativity mismatch $\tilde{\delta}(\mathbb{Z}_M, \mathbf{x}, \mathbf{R})$ as functions of the AR(1) correlation coefficient $\rho \in [0, 1)$. The three metrics exhibit three distinct shapes: $\alpha$ increases monotonically (more structure as correlation grows), $\delta$ is non-monotonic with a peak near $\rho^* \approx 0.70$ (the Toeplitz covariance is most non-circulant at intermediate correlation), and $\tilde{\delta}$ peaks later near $\rho \approx 0.82$ (incorporating the increasing covariance energy). The divergence of the three curves confirms that they capture fundamentally different aspects of the group--model relationship.

\begin{figure}[t]
\centering
\includegraphics[width=\columnwidth]{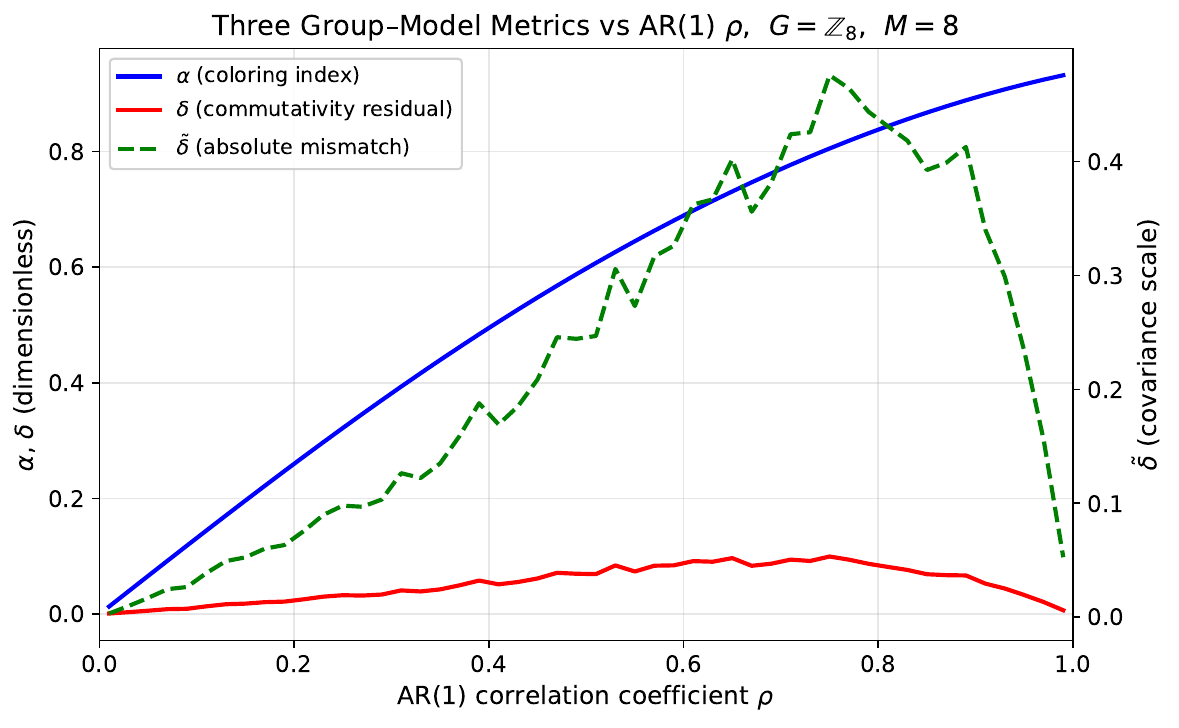}
\caption{The three group--model metrics as functions of AR(1) correlation $\rho$ for $G = \mathbb{Z}_8$, $M = 8$. The coloring index $\alpha$ (blue, left axis) increases monotonically; the commutativity residual $\delta$ (red, left axis) peaks at $\rho^* \approx 0.70$; the absolute mismatch $\tilde{\delta}$ (green dashed, right axis) peaks near $\rho \approx 0.82$.}
\label{fig:three_metrics}
\end{figure}

\subsection{Commutativity Residual vs.\ Coloring Index}

Fig.~\ref{fig:delta_vs_alpha} shows $\delta$ versus $\alpha$ for a variety of signal models. The scatter demonstrates that models with similar $\alpha$ (e.g., AR(1) $\rho = 0.95$ and $\rho = 0.50$, both with $\alpha \approx 0.86$) can have markedly different $\delta$ values ($0.032$ vs.\ $0.082$), confirming that $\delta$ captures eigenvector alignment information not present in $\alpha$. White noise ($\alpha = 0$, $\delta = 0$) and periodic signals ($\alpha > 0$, $\delta \approx 0$) occupy the lower-left region, while AR(1) at intermediate $\rho$ occupies the upper-right.

\begin{figure}[t]
\centering
\includegraphics[width=\columnwidth]{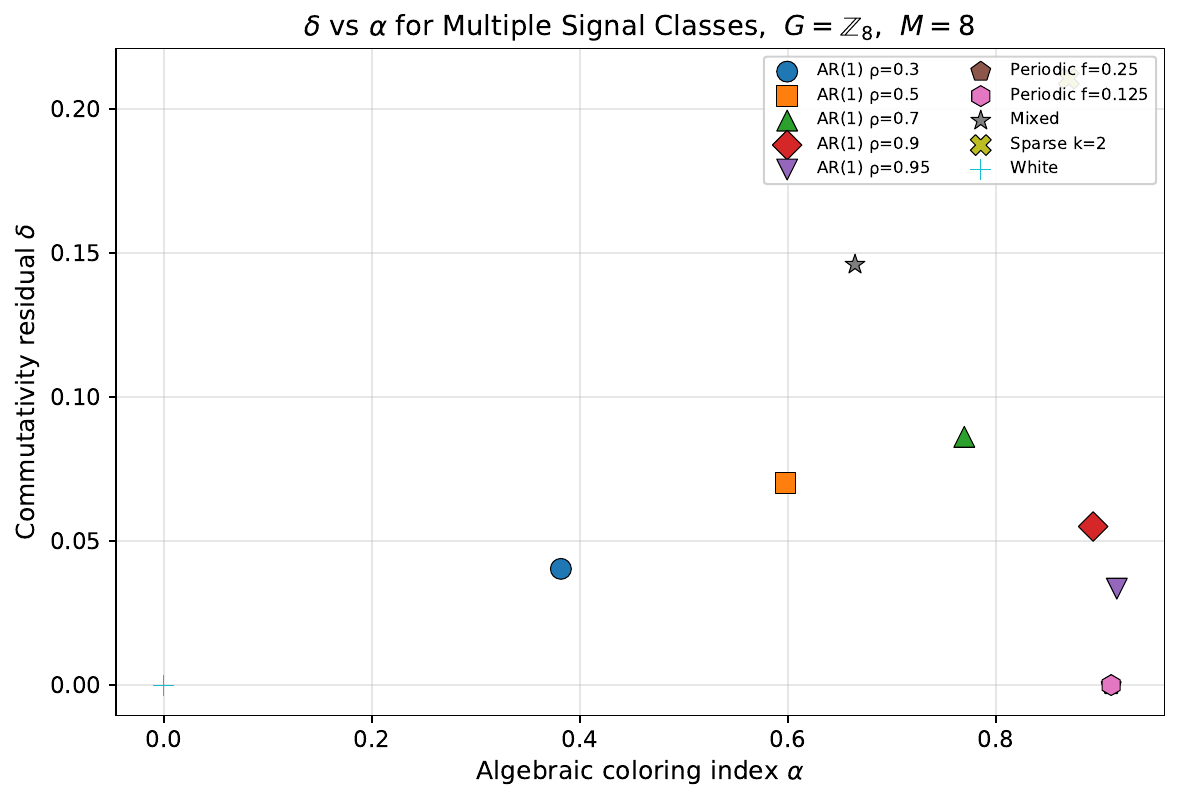}
\caption{Commutativity residual $\delta$ vs.\ algebraic coloring index $\alpha$ for multiple signal classes with $G = \mathbb{Z}_8$, $M = 8$. Models with similar $\alpha$ can have very different $\delta$, confirming the metrics are not redundant.}
\label{fig:delta_vs_alpha}
\end{figure}

\subsection{Group Selection and Structure Sensitivity}

Fig.~\ref{fig:delta_by_group} compares the commutativity residual across six groups of varying order and structure for the AR(1) model. Two key observations emerge. First, groups with the wrong algebraic structure ($\mathbb{Z}_2^3$, $\mathbb{Z}_4 \times \mathbb{Z}_2$) produce \emph{higher} $\delta$ than $\mathbb{Z}_8$ despite having the same order~8, demonstrating that algebraic structure matters, not just group size. Second, $S_8$ (order $40{,}320$) achieves the lowest $\delta$ but not zero, the nonzero floor reflects single-observation estimation noise.

\begin{figure}[t]
\centering
\includegraphics[width=\columnwidth]{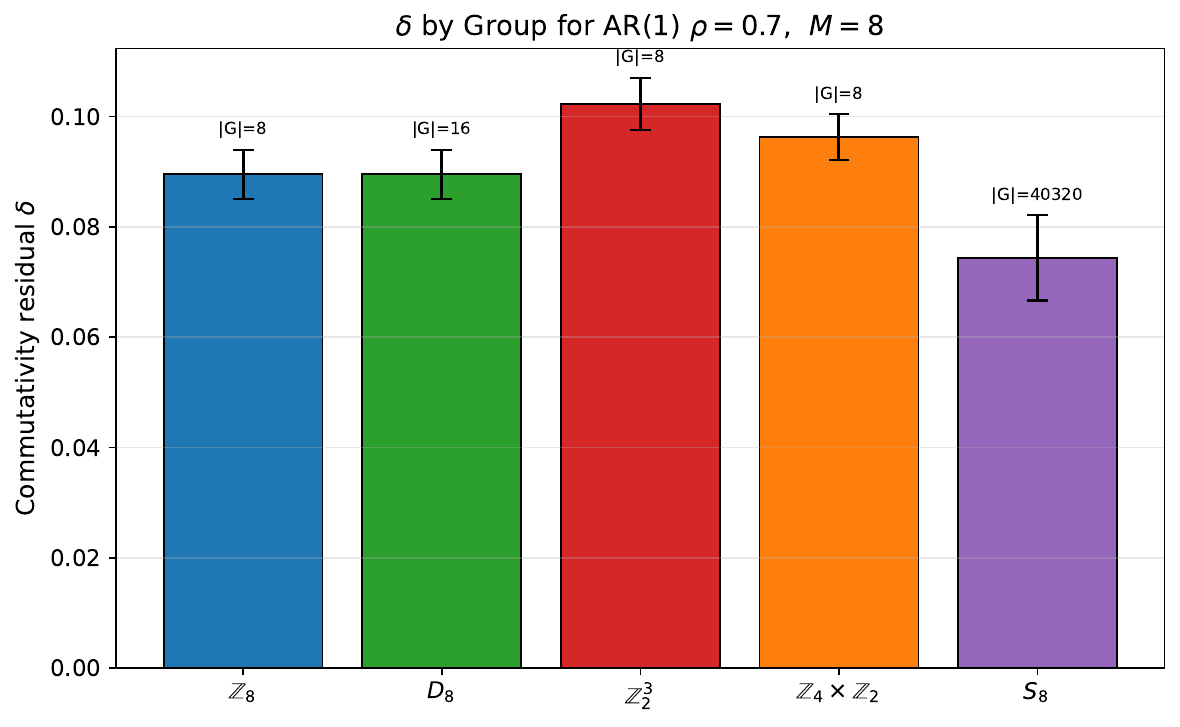}
\caption{Commutativity residual $\delta$ for six groups of varying order and structure on the AR(1) model ($M = 8$). Wrong-structure groups at order~8 ($\mathbb{Z}_2^3$, $\mathbb{Z}_4 \times \mathbb{Z}_2$) produce higher $\delta$ than the cyclic group $\mathbb{Z}_8$.}
\label{fig:delta_by_group}
\end{figure}

\subsection{Single-Snapshot Spectral Estimation}

Fig.~\ref{fig:pedagogical} demonstrates the core capability of algebraic diversity: single-snapshot spectral estimation comparable to multi-snapshot averaging. For $M = 64$ with three embedded sinusoidal signals, the AD spectrum from $L = 1$ observation (using $G = \mathbb{Z}_{64}$) captures the spectral peaks within approximately 2~dB of the $L = 100$ sample covariance spectrum, while the sample covariance from a single snapshot would yield a rank-1 matrix with no spectral resolution at all.

\begin{figure}[t]
\centering
\includegraphics[width=\columnwidth]{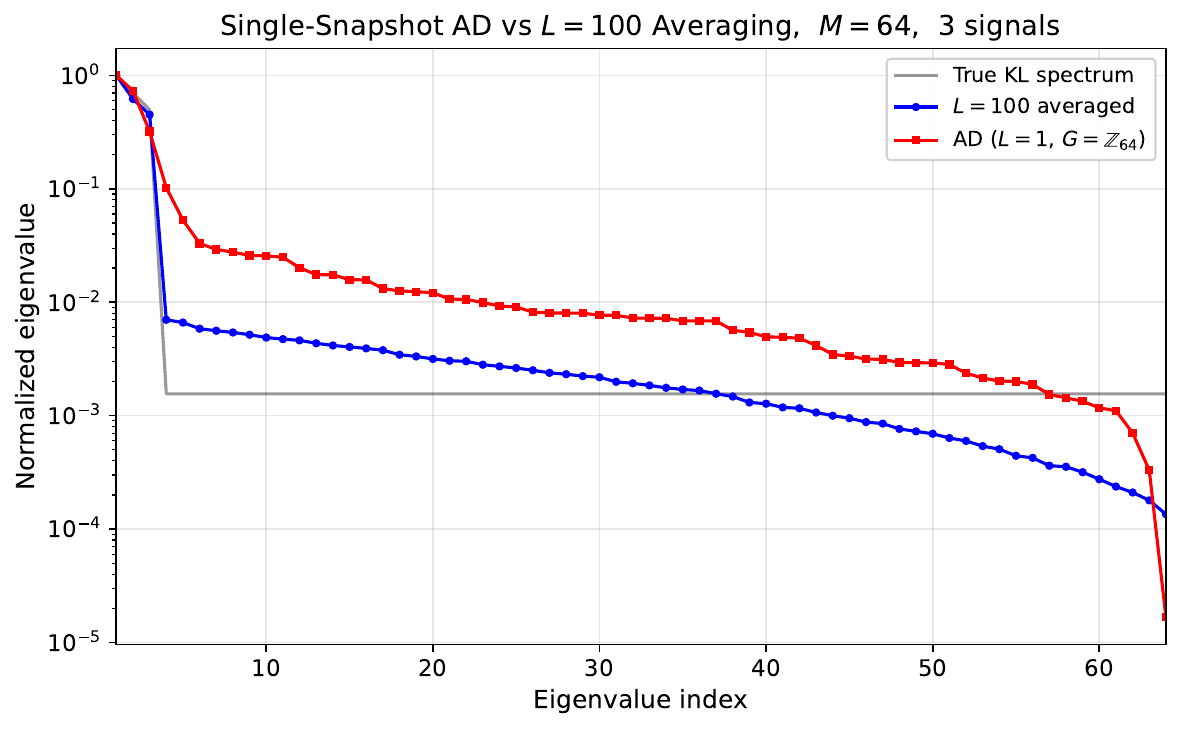}
\caption{Single-snapshot AD spectrum (red) vs.\ $L = 100$ averaged spectrum (blue) for $M = 64$ with three embedded signals. The true KL spectrum is shown in gray. AD from one observation achieves spectral resolution comparable to 100-snapshot averaging.}
\label{fig:pedagogical}
\end{figure}

\subsection{Scale Invariance of $\delta$ vs.\ Energy Dependence of $\tilde{\delta}$}

Fig.~\ref{fig:delta_vs_snr} contrasts the behavior of $\delta$ and $\tilde{\delta}$ as a function of SNR. The commutativity residual $\delta$ (left panel) is approximately constant across all SNR levels, confirming its scale-invariance, while the absolute mismatch $\tilde{\delta}$ (right panel) grows with SNR, reflecting the increasing energy of the covariance mismatch. This demonstrates the complementary roles of the two metrics: $\delta$ is appropriate for structural comparison independent of signal strength, while $\tilde{\delta}$ is appropriate when the practical magnitude of the mismatch matters.

\begin{figure}[t]
\centering
\includegraphics[width=\columnwidth]{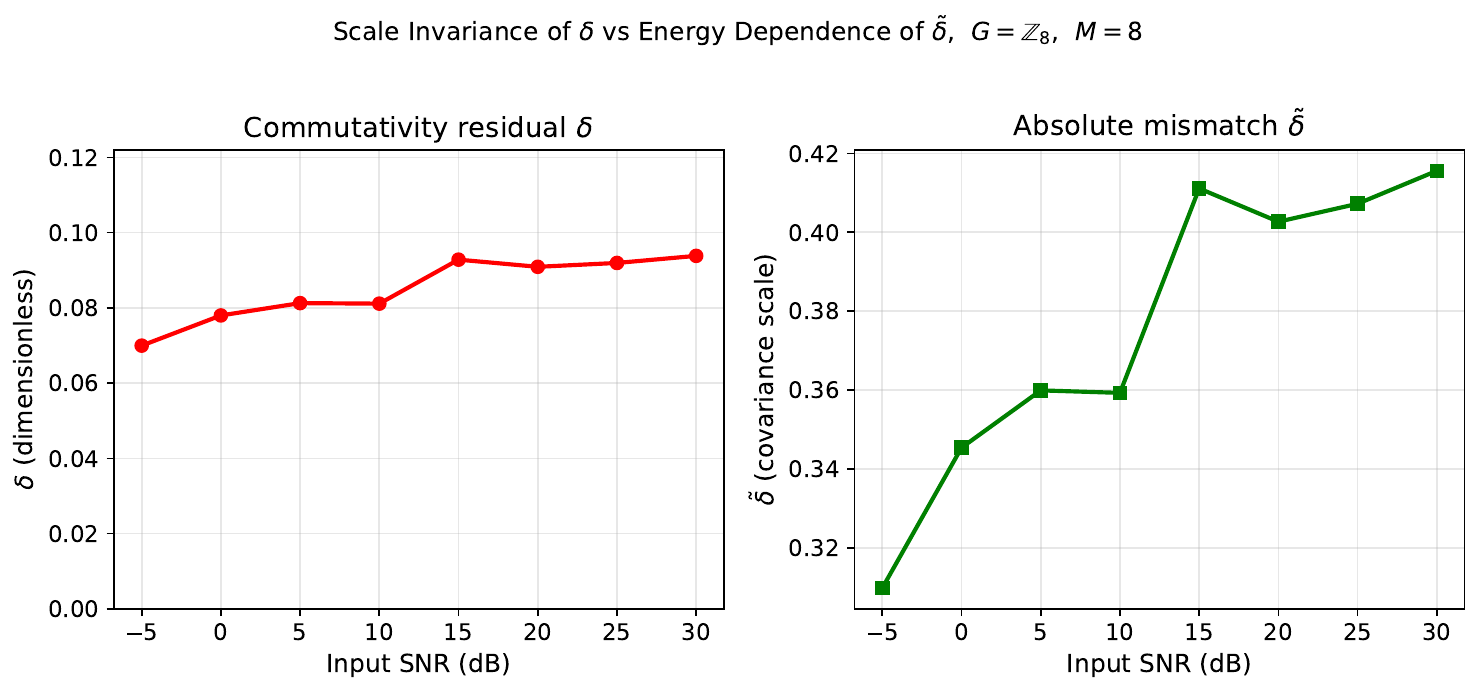}
\caption{Commutativity residual $\delta$ (left) and absolute mismatch $\tilde{\delta}$ (right) vs.\ SNR for $G = \mathbb{Z}_8$, $M = 8$. The scale-invariant $\delta$ is flat; the energy-weighted $\tilde{\delta}$ grows with SNR.}
\label{fig:delta_vs_snr}
\end{figure}

\section{Discussion}\label{sec:discussion}

\subsection{Implications of Colored Noise Characterization}

The extension to colored noise (Section~\ref{sec:colored}) has direct practical significance for each of the principal application domains of algebraic diversity.

In MIMO channel estimation for 5G/6G systems, adjacent-cell interference creates spatially colored noise at the receiver array. Current 3GPP approaches handle this with interference rejection combining (IRC), which is a form of pre-whitening. The group-theoretic noise characterization provides a structured alternative: if the interference has regularity (e.g., periodicity from the cell grid geometry), its natural group may be identifiable, enabling fast structured whitening that integrates naturally with the algebraic diversity channel estimator.

In active noise cancellation, the acoustic signal to be cancelled is itself a colored noise process. The algebraic coloring index $\alpha(\mathbf{Q})$ provides a principled measure of the noise complexity that could guide the choice of cancellation architecture: low $\alpha$ (nearly white noise) admits simple filters, while high $\alpha$ (strongly colored noise with identifiable group structure) benefits from the structured whitening pipeline of Theorem~\ref{thm:colored_noise}.

In passive RF geolocation, environmental multipath and co-channel interference create spatially colored noise that degrades MUSIC and ESPRIT performance. The noise characterization workflow of Section~\ref{sec:colored} enables estimation of the noise group structure from signal-absent observations, avoiding the need for noise-only snapshot windows that may be operationally unavailable. If the noise natural group can be identified from limited data (exploiting the reduced parameter count of the group-constrained covariance model), the resulting structured whitening may outperform conventional sample-covariance-based pre-whitening when noise-only snapshots are scarce.

\subsection{Computational Considerations}

The standard covariance requires $O(LM^2)$ operations for $L$ snapshots. The CG method requires $O(|G|M)$ for group orbit computation plus $O(M^3)$ for eigendecomposition. For the cyclic group ($|G| = M$), the CG method requires $O(M^2 + M^3)$ total, comparable to single-snapshot covariance computation but yielding a full-rank estimator rather than a rank-1 matrix. For larger groups, the group orbit computation grows, but the eigendecomposition benefits from the block structure imposed by the group representation, enabling efficient computation via the fast Fourier transform on the group~\cite{clausen1989}.

\subsection{The Geometry of Observation}\label{sec:geometry_observation}

The results of this paper suggest a broader philosophical principle connecting geometry, symmetry, and measurement, which we develop here by synthesizing two threads: the algebraic structure of observation and the information content of group action.

By analogy with Klein's Erlangen program in geometry~\cite{klein1872}, where the ``content'' of a geometry is determined by its symmetry group, Euclidean geometry by the group of rigid motions, projective geometry by the projective group, our results suggest that the \emph{information extractable from a measurement} is determined by the symmetry group brought to bear on it. The covariance matrix imposes a bilinear structure on the data; the Cayley graph transform imposes a group-theoretic structure; and different groups reveal different aspects of the same observation. The symmetric group, being the maximal permutation group, reveals the maximum extractable information (Theorem~\ref{thm:optimality}). This reframes the question of statistical estimation: rather than asking ``how many observations do I need?'', one can ask ``what group structure does my observation admit?'' The answer determines the information accessible from a single measurement.

To make this precise, consider the idealized setting of perfect sensors. We posit that each independent sensor (or, under stationarity, each independent time sample) provides one constant unit of independent observational information. This unit is not a bit in the Shannon sense, it is a geometric object: one axis in an $M$-dimensional observation space along which signal structure can be distinguished from noise. We call the total number of such independent units the \emph{observational rank} of the measurement configuration, denoted $\rho$.

Although we describe observations as projections into a geometric space, this viewpoint is distinct from Amari's \emph{information geometry}~\cite{amari1983}, in which the objects of study are statistical manifolds, spaces of probability distributions equipped with a Riemannian metric derived from the Fisher information. Our concern is with a different space: the Euclidean observation space $\mathbb{C}^M$ containing quantities that comprise both well-structured (deterministic) and random components, directly corresponding to signals and noise respectively. From a theoretical viewpoint, the random components are entropic and the structured components are deterministic, so the geometric space considered here may be regarded as including Amari's space as a subspace. This concept is similar and may be equivalent to other defined manifolds and spaces used in computational information geometry and distance preservation~\cite{critchley2016,lee2007}, and is mentioned here not as a novel element but as an aid to developing an intuitive appreciation for the theorems and other properties in support of algebraic diversity.

An important subtlety is that in the native measurement basis, the raw observations from $M$ sensors appear to be near-copies of each other because they are dominated by the same few strong signal components. However, each sensor also captures a slightly different combination of the weaker components, and those differences are the independent information units. The group action finds the coordinate system that makes all $M$ contributions visible simultaneously, which is necessarily an orthogonal basis because independence in Euclidean space \emph{is} orthogonality. The group action decomposes the sample into its $M$ independent constituents; algebraic diversity is the algebraic mechanism that accomplishes this decomposition from a single vector observation.

Under this interpretation, the results of this paper admit the following statements:
\begin{enumerate}
\item A single sensor ($M = 1$) provides $\rho = 1$. The only group of order~$1$ is the trivial group $G = \{e\}$, and the group-averaged estimator reduces to the outer product $\mathbf{x}\mathbf{x}^H$, a rank-1 matrix capturing exactly one unit of information (Theorem~\ref{thm:trivial}). This is the degenerate endpoint of the $(G, L)$ continuum.
\item As $M$ grows, the group action projects each sensor's contribution onto a distinct orthogonal axis. The Karhunen--Lo\`eve transform, which the Optimality Theorem (Theorem~\ref{thm:optimality}) identifies as the spectral decomposition of the optimal group, achieves this projection uniquely, producing $M$ maximally separated, independent spectral components.
\item At $M$ sensors, the one-to-one correspondence between sensors, group elements, and spectral axes is exact, and the group gain reaches its ceiling of $M\times$ in linear scale ($10\log_{10}(M)$~dB), attained when the signal energy is flat across the matched axes; for concentrated spectra the realized gain is the participation-ratio factor of Remark~\ref{rem:gl_continuum}.
\item Attempting to project $M$ observational units into a space of more than $M$ dimensions, by using a group of order $|G| > M$, maps energy onto axes that do not correspond to independent information, degrading the estimate. This is the mechanism underlying the group order constraint and the failure of $S_M$ subsampling (Section~\ref{sec:ordering}).
\item TAD and SAD are informationally equivalent under stationarity: each measurement contributes one independent observational unit, provided the signal does not change between measurements.
\end{enumerate}

The observational rank $\rho = M$ is additive, not logarithmic: two independent measurement configurations with ranks $\rho_1$ and $\rho_2$ yield a combined rank of $\rho_1 + \rho_2$. The logarithm in $10\log_{10}(M)$ is a unit conversion to decibels, not a fundamental aspect of the combining law. This is in contrast to Shannon entropy, where the logarithm arises from counting distinguishable states in a combinatorial space. Observational rank counts dimensions, not states, and dimensions combine additively.

The \emph{structural entropy} $H_{\mathrm{struct}} = -\sum_{k=1}^M (\lambda_k / \Tr(\mathbf{R})) \log(\lambda_k / \Tr(\mathbf{R}))$, where $\lambda_k$ are the eigenvalues of the covariance, measures a complementary quantity: not how many independent observational units are available (that is $\rho$), but how the signal energy distributes across them. White noise maximizes $H_{\mathrm{struct}}$ (uniform energy across all axes); a rank-1 signal minimizes it (all energy on one axis). The algebraic diversity framework finds the projection that reveals the true structural entropy of the signal from a single observation, because the $M$ observational units are already present in the data, they need only be separated by the appropriate group action.

\subsection{Information Structure versus Information Content}\label{sec:info_structure}

The preceding analysis suggests a distinction that has not been previously formalized: the difference between \emph{information content} and \emph{information structure}.

Shannon's information theory~\cite{shannon1948} characterizes the content of a message: how many bits are needed to encode it, how fast it can be transmitted through a noisy channel, what is the minimum distortion achievable at a given rate. Shannon's framework is deliberately agnostic to how the information is organized: a bit is a bit whether it resides in a time-domain sample, a spatial sensor, a frequency coefficient, or a graph edge weight. This abstraction is what makes information theory universal.

The algebraic diversity framework characterizes the \emph{structure} of information: the representation-theoretic symmetry of the data object in which the observation is encoded. A scalar (rank-0 tensor) has no internal structure; the trivial group $\{e\}$ is the only available group, and the law of large numbers governs estimation. A vector (rank-1 tensor) has index structure admitting cyclic, dihedral, or other permutation groups. A matrix (rank-2 tensor) has row-column structure admitting the adjoint action. A graph has automorphism structure. A quantum state in Hilbert space has unitary group structure.

The matched group is determined by the data structure, not by the data content. The DFT dominates classical signal processing not because signals are special, but because signals are stored in vectors (rank-1 tensors), and the cyclic group is the natural symmetry of the vector index set. When the same information is represented in a different structure (a graph, a neural network weight matrix, a quantum density matrix), the cyclic group is no longer matched, and genuinely different groups provide superior estimation.

This perspective complements Shannon's: information content determines how many bits are present; information structure determines how efficiently those bits can be extracted from a measurement. Two signals with identical Shannon entropy but different organizational structures require different groups and achieve different estimation efficiencies. Shannon cannot distinguish them; algebraic diversity can.

\subsection{Implications for Single-Shot Estimation}

The General Replacement Theorem has potential implications beyond the MUSIC application demonstrated here. Any estimation problem that relies on temporal averaging of second-order statistics, including covariance estimation, principal component analysis, independent component analysis, and spectral density estimation, is, by Theorem~\ref{thm:trivial}, implicitly operating with the trivial group $G = \{e\}$. The $(G, L)$ continuum (Remark~\ref{rem:gl_continuum}) shows that each of these problems admits a richer algebraic diversity formulation that can trade temporal snapshots for algebraic group structure, potentially eliminating or reducing the multi-snapshot requirement. The conditions required (signal equivariance and noise ergodicity) are satisfied in many practical settings, suggesting that the conventional multi-snapshot requirement in these problems is not fundamental but rather an artifact of the implicit trivial-group assumption.

The implications of algebraic diversity for single-shot quantum state estimation, where repeated measurement of the same quantum state is physically prohibited by the measurement postulate, will be explored in subsequent work. The colored noise extension of Section~\ref{sec:colored} is particularly relevant in this context, since quantum measurement noise is generically non-isotropic: the noise structure depends on the measurement basis, and the group-theoretic characterization developed here provides a natural language for describing this basis-dependent noise.

\subsection{Signal Classes and Their Natural Groups}

The framework developed in this paper provides a unified explanation for why certain spectral transforms are effective for certain signal classes: each transform is the spectral decomposition of a specific algebraic group, and its effectiveness is determined by how well that group's structure matches the signal's covariance. The following correspondence between signal classes and groups extends beyond the DFT--cyclic and DCT--dihedral cases discussed in Section~\ref{sec:intro}.

\textbf{Hexagonal and triangular sensor arrays.} Arrays with hexagonal geometry (used in 5G massive MIMO base stations, sonar, and radio telescopes) have natural $D_6$ symmetry (the dihedral group of the hexagon). Standard processing applies the DFT ($\mathbb{Z}_M$), which is suboptimal because the array's geometric symmetry is dihedral, not cyclic. The AD framework predicts that a $D_6$-matched transform should provide a better structural match. Preliminary experiments on a 7-element hexagonal array confirm that the commutativity residual $\delta$ is approximately 29\% lower for $D_6$ than for $\mathbb{Z}_7$ (averaged over all source azimuths at 15~dB SNR), validating the structural prediction. However, the downstream estimation quality (eigenvalue SNR, subspace distance) does not uniformly favor $D_6$, indicating that the commutativity residual is a necessary but not sufficient condition for superior estimation: the group's representation structure, including the dimensionality and multiplicity of its irreducible representations, also affects the effective rank and conditioning of the group-averaged estimator. This finding motivates extending the group selection criterion beyond $\delta$ alone to incorporate representation-theoretic properties, and is an open problem for future work.

\textbf{Signals on graphs.} Graph signal processing defines the ``Fourier transform on a graph'' via the eigenvectors of the graph Laplacian. The graph Laplacian commutes with the graph's automorphism group. AD applies directly: if a graph has automorphism group $G$, then the $G$-matched transform is the natural spectral tool, and PASE provides single-snapshot spectral estimation on the graph. Social networks, communication networks, and molecular graphs each have specific symmetry structures that determine the appropriate group.

\textbf{Transient signals (chirps, pulses).} A chirp has linearly varying instantaneous frequency, producing a non-circulant covariance. The fractional Fourier transform, which is the standard tool for chirps, is connected to the affine group (translations and dilations). Formalizing this connection within AD would provide single-snapshot chirp analysis with applications to radar pulse compression, sonar, and seismic processing.

\textbf{Crystallographic symmetries.} X-ray crystallography is spectral estimation matched to the crystal's symmetry group. Cubic crystals have octahedral symmetry (order 48), hexagonal crystals have $D_6$. Diffraction patterns decompose according to the irreducible representations of the crystal's point group. Crystallographers have been performing a form of algebraic diversity for a century without formalizing it as such.

\textbf{Bilateral symmetry in biomedical signals.} EEG electrode arrays are typically symmetric across the midline, giving the spatial covariance a reflection symmetry best matched by a dihedral group rather than the cyclic group used in standard DFT-based processing.

\textbf{Exchangeable signals.} When $M$ sensors are statistically exchangeable (any permutation preserves the joint distribution), the covariance commutes with $S_M$. This is rare for spatial arrays but common for replicated experiments, multiple trials, or portfolio-level financial data.

\subsection{The Pragmatic Value of Algebraic Diversity}

The signal class analysis above, together with the hexagonal array experiment, leads to an important pragmatic observation. The primary value of AD is not that it identifies the ``perfect'' group for every signal, it is that it provides three capabilities that are immediately useful regardless of group selection:

\begin{enumerate}
\item \textbf{Rank-lift from a single snapshot.} The group-averaged estimator achieves full rank ($M$) from one observation using \emph{any} group, including the simplest choice $\mathbb{Z}_M$. This eliminates the multi-snapshot bottleneck in subspace methods.

\item \textbf{Group gain of up to $10\log_{10}(M)$~dB.} For group-matched signals the SNR improvement is immediate and requires no tuning beyond the array size.

\item \textbf{PASE determines the averaging depth.} Using $n = |G|$ elements is provably optimal, eliminating a degree of freedom that would otherwise require empirical calibration.
\end{enumerate}

The theoretical question of optimal group selection is intellectually rich and may yield additional performance in specific applications, but the cyclic group $\mathbb{Z}_M$ provides adequate performance for the large majority of practical signals, precisely because most engineered signals are periodic, and $\mathbb{Z}_M$ is their matched group. The framework's contribution is not to replace the DFT but to explain \emph{why} the DFT works when it does, to predict \emph{when} it will be suboptimal, to provide the algebraic machinery to handle those cases, and, through the Trivial Group Embedding (Theorem~\ref{thm:trivial}), to reveal that conventional temporal averaging is the degenerate endpoint of a richer estimation continuum.

\begin{remark}[Summary of the PASE result]
The group selection problem identified in this paper has been addressed in Sections~\ref{sec:pase}--\ref{sec:blind_matching}. The PASE optimality result (Theorem~\ref{thm:pase}) establishes that $n = |G|$ is the sharp optimal averaging depth; the ordering experiment (Section~\ref{sec:ordering}) demonstrates that $S_M$ subsampling fails; and the spectral concentration criterion (Section~\ref{sec:blind_matching}) provides a blind single-snapshot group selection method. Together, these results collapse the framework to a single free parameter: the choice of group.
\end{remark}

\subsection{The General Algebraic Averaging Conjecture}\label{sec:gaat}

The results of this paper establish the algebraic diversity principle for the specific statistic $f(\mathbf{x}) = \mathbf{x}\mathbf{x}^H$ (the outer product, which estimates the covariance). A natural question is whether the same principle extends to arbitrary statistics. We conjecture that it does.

\begin{definition}[Effective Dimension]\label{def:deff}
For a $G$-equivariant statistic $f: \mathbb{C}^M \to \mathcal{V}$ and a group $G$ acting via $\pi$, the \emph{effective dimension} $d_{\mathrm{eff}}(G, f)$ is the number of effective independent components produced by the group orbit, i.e.\ the effective sample size that governs the variance of the orbit average $\hat{\theta}_G = \frac{1}{|G|}\sum_g f(\pi_g(\mathbf{x}))$. For the covariance statistic $f(\mathbf{x}) = \mathbf{x}\mathbf{x}^H$ it is
\begin{equation}\label{eq:deff}
d_{\mathrm{eff}}(G, f) = \frac{M^2}{\dim\mathcal{C}(\rho)} \le M,
\end{equation}
the effective dimension~\eqref{eq:deff_cov}, which equals $M$ for a matched order-$M$ regular (cyclic) representation. The count of algebraically distinct orbit values $|\{f(\pi_g(\mathbf{x})) : g \in G\}|$ is an upper bound on $d_{\mathrm{eff}}$ and coincides with it only for a regular Abelian representation; it is \emph{not} itself the variance-reduction factor.
\end{definition}

\begin{remark}
For the covariance statistic with a matched order-$M$ regular (cyclic) group, $\dim\mathcal{C}(\rho) = M$ and $d_{\mathrm{eff}} = M$, so the $M$ matched directions, rather than the raw count $|G|$, govern the variance reduction. The effective dimension does not grow without bound with $|G|$: enlarging the group beyond the matched order does not enlarge the matched commutant, and adding elements outside the commutant of $\mathbf{R}$ injects bias (the estimate tends to the two-eigenvalue limit~\eqref{eq:sm_expect}). Hence $S_M$ does not yield an $M!$-fold reduction; this is the variance-side counterpart of the PASE result (Section~\ref{sec:pase}).

For scalar symmetric statistics such as $f(\mathbf{x}) = \frac{1}{M}\sum_j x_j^k$ (the $k$-th sample moment), every permutation produces the same scalar, so $d_{\mathrm{eff}} = 1$ regardless of $|G|$. The variance reduction $1/M$ arises from the $M$ independent data values, not from the group orbit. This is the analog of the Gaussian scale MLE: when the statistic is already $S_M$-invariant, no group action can extract additional information.
\end{remark}

\begin{conjecture}[General Algebraic Averaging]\label{conj:gaat}
Let $\mathbf{x} = \mathbf{s} + \mathbf{n} \in \mathbb{C}^M$, where $\mathbf{s}$ is deterministic and $\mathbf{n} \sim \mathcal{CN}(\mathbf{0}, \sigma^2 \mathbf{I}_M)$. Let $G$ be a finite group with unitary representation $\pi: G \to U(M)$, and let $f: \mathbb{C}^M \to \mathcal{V}$ be a $G$-equivariant statistic (i.e., there exists an induced action $\tilde{\pi}$ on $\mathcal{V}$ such that $f(\pi_g(\mathbf{x})) = \tilde{\pi}_g(f(\mathbf{x}))$ for all $g$). Then the group-averaged estimator $\hat{\theta}_G = \frac{1}{|G|}\sum_{g \in G} f(\pi_g(\mathbf{x}))$ satisfies:
\begin{enumerate}
\item[(i)] $E[\hat{\theta}_G] = \theta_f^G$, the projection of $E[f(\mathbf{x})]$ onto the $G$-invariant subspace of $\mathcal{V}$.
\item[(ii)] $\mathrm{Var}(\hat{\theta}_G) \leq C(f, \mathbf{s}) / d_{\mathrm{eff}}(G, f)$, where $C$ depends on $f$ and $\mathbf{s}$ but not on $G$.
\item[(iii)] For $L$ independent observations each processed with group $G$: $\mathrm{Var}(\hat{\theta}_{G,L}) \leq C(f, \mathbf{s}) / (d_{\mathrm{eff}} \cdot L)$.
\item[(iv)] Setting $G = \{e\}$ and $L = N$ recovers the classical law of large numbers with $\mathrm{Var} \propto 1/N$.
\end{enumerate}
\end{conjecture}

If Conjecture~\ref{conj:gaat} holds, then the perceived need for multiple observations in statistical estimation is, in general, an artifact of applying the trivial group to each observation. The classical law of large numbers, variance $\sigma^2/N$ from $N$ independent observations, is recovered as the special case $G = \{e\}$, $d_{\mathrm{eff}} = 1$, $L = N$. The sample covariance result of this paper is the special case $f(\mathbf{x}) = \mathbf{x}\mathbf{x}^H$, $d_{\mathrm{eff}} = M$, $L = 1$.

\subsubsection{Experimental confirmation}

Monte Carlo experiments (1000 random integers in $[-1000, 1000]$, 5000 trials per configuration) confirm the conjecture for the first four sample moments.

\textbf{Variance scaling.} The log-log slope of $\mathrm{Var}(\hat{m}_k)$ versus $M$ is $-1.004$, $-1.000$, $-1.000$, and $-0.999$ for $k = 1, 2, 3, 4$ respectively (the conjecture predicts $-1.000$).

\textbf{$(G, L)$ continuum.} For these scalar moment statistics the group contributes no algebraic diversity ($d_{\mathrm{eff}}(G) = 1$), so the available diversity is the data dimension $M$ (the number of independent values) and the snapshot count $L$. With a fixed budget $M \cdot L = 1000$, the product $\mathrm{Var}(\hat{m}_1) \times 1000$ is constant at approximately $335{,}000$ across all eight $(M, L)$ configurations from $(1, 1000)$ to $(1000, 1)$, with variation less than 5\% (consistent with Monte Carlo noise), confirming $\mathrm{Var} \propto 1/(M \cdot L)$. The same constant-product relationship holds for $\hat{m}_4$.

\textbf{Group independence for symmetric statistics.} At dimension $M = 20$, the cyclic group $\mathbb{Z}_{20}$ (order 20), the direct product $\mathbb{Z}_4 \times \mathbb{Z}_5$ (order 20), and the dihedral group $D_{20}$ (order 40) all produce the same variance for scalar moments: ratios $\mathbb{Z}_{20}/D_{20} = 0.990$ and $\mathbb{Z}_{20}/(\mathbb{Z}_4 \times \mathbb{Z}_5) = 1.005$, confirming that variance depends on $M$ (data dimension), not on $|G|$ (group order), when $d_{\mathrm{eff}} = 1$.

These experiments are consistent with every prediction of Conjecture~\ref{conj:gaat}. A formal proof for the outer-product statistic via the Peter--Weyl decomposition, together with a Rao--Blackwell argument that handles general $G$-compatible statistics conditional on a Clebsch--Gordan tensor lemma for higher moments, is given in~\cite{thornton2026framework_arxiv}.

\section{Conclusion}\label{sec:conclusion}

We have established a general theoretical framework proving that temporal averaging over multiple observations is a special case, specifically, the trivial-group case $G = \{e\}$, of algebraic group action on observations for the purpose of second-order statistical information extraction. Selecting a richer group enables equivalent or superior subspace estimation from a single observation. The General Replacement Theorem identifies two conditions, signal equivariance and noise ergodicity, under which a group-averaged estimator from one observation achieves equivalent subspace decomposition to the multi-snapshot sample covariance. The Optimality Theorem establishes the KL transform as the optimal linear decomposition and shows that the matched group attains it; the symmetric group reaches the KL spectrum only through its Cayley graph spectral construction, which is distinct from the group-averaged estimator $\mathbf{F}_{S_M}$ and does not make $\mathbf{F}_{S_M}$ optimal. The Temporal--Algebraic Duality Principle formalizes the deep connection between these two modes of information extraction: both are mechanisms for averaging out unstructured noise to reveal invariant signal structure, operating at different points on the $(G, L)$ continuum (Remark~\ref{rem:gl_continuum}) with variance $\propto 1/(d_{\mathrm{eff}} \cdot L)$.

We have further shown that the framework extends naturally to colored noise environments through a group-theoretic characterization of the noise covariance. The natural group of a noise process identifies the algebraic structure of its correlations, the algebraic coloring index quantifies its departure from whiteness, and the generalized replacement theorem establishes that algebraic diversity applies to whitened observations with the same optimality guarantees as the white noise case. The commutativity residual~$\delta$ and absolute commutativity mismatch~$\tilde{\delta}$, together with the algebraic coloring index~$\alpha$, provide a suite of complementary metrics for quantifying the relationship between a group and a signal model: $\alpha$ measures available structure, $\delta$ measures structural alignment, and $\tilde{\delta}$ measures the practical magnitude of the mismatch. Crucially, when the noise admits a known group structure, the whitening operation inherits the fast transform algorithms of that group, and the entire signal processing pipeline, noise characterization, whitening, and signal extraction, remains within the algebraic framework.

The framework has been validated through the MUSIC direction-of-arrival estimation problem, where the Cayley graph construction achieves multi-signal resolution from a single snapshot that the standard covariance method cannot, and through massive MIMO channel estimation, where AD-based single-pilot estimation achieves up to 64\% higher effective throughput than MMSE by eliminating the pilot overhead that dominates large-array systems. A third application to single-pulse chirp waveform characterization demonstrates the constructive group matching pipeline in action: the framework independently derives the classical dechirp-then-DFT operation from first principles as an instance of group conjugation, achieves $8.3\times$ higher spectral concentration than the mismatched cyclic group, provides blind chirp rate estimation from a single pulse with RMSE of 0.01 at 10~dB SNR (robust to $-2$~dB), enables four-class waveform classification at 90\% accuracy from 14~dB SNR, identifies LFM chirps at 8~dB lower SNR than FFT-based classification, and maintains 89\% classification accuracy against a non-stationary modulated source while FFT-based processing plateaus at 53\%. A fourth application to graph signal processing addresses the open question of whether non-Abelian groups are ever genuinely necessary: a systematic filtering pipeline and randomized search identify candidate graphs on which the $S_3$ automorphism group achieves $17$--$25\%$ higher expected single-observation spectral concentration than any conjugated cyclic group. Representation-theoretic analysis reveals the mechanism: Schur's lemma forces the non-Abelian estimator to preserve the symmetry of degenerate eigenspaces as indivisible blocks, suppressing eigenvalue variance that Abelian estimators cannot avoid. The refined Non-Abelian Dominance Hypothesis (Conjecture~\ref{conj:nadh}) formalizes this as a connection between irreducible representation dimension and eigenspace degeneracy, providing a new perspective linking representation theory, estimation theory, and algebraic group selection. The Automorphism Characterization Theorem (Theorem~\ref{thm:aut_char}) establishes that the commutativity residual is an exact algebraic oracle for graph automorphisms: $\delta = 0$ if and only if the permutation is an automorphism, providing a per-permutation algebraic test for graph symmetry detection. The Permutation Commutator Formula (Proposition~\ref{prop:perm_comm}) reveals the geometric content of $\delta$: the commutativity residual is the sum of squared eigenvalue displacements along the permutation's orbits, making group selection an intuitive eigenvalue assignment problem. The Sequential GEVP with group-theoretic deflation (Algorithm~\ref{alg:seqgevp}) lifts the per-permutation test to multi-generator non-Abelian recovery, with four named correctness results (forward progress, strict subgroup growth, an iteration bound $K \leq \lceil\log_2|G_K|\rceil = O(M\log M)$, and generic convergence $G_K \subseteq \mathrm{Aut}(\mathbf{R})$ at $\tau=0$); soundness of the recovered subgroup is guaranteed, while completeness depends on a basis-design condition documented as an open problem. A CAD--DAD bridge experiment using a generic basis of five permutation generators, chosen without knowledge of the graph structure, identifies a graph automorphism as the minimum-$\delta$ generator on all six test graphs, providing empirical evidence for the continuous-relaxation viewpoint developed in~\cite{thornton2026framework_arxiv}.

The Permutation-Averaged Spectral Estimation (PASE) result (Theorem~\ref{thm:pase}) establishes that the optimal averaging depth is exactly $n = |G|$, with a sharp decline for $n > |G|$, a property with no analog in conventional statistical estimation. Furthermore, the group order constraint (Remark~\ref{rem:group_order}) establishes that the candidate group must have order exactly~$M$: groups of order larger than $M$, even those whose algebraic structure matches the signal, over-average and degrade the estimate by the same mechanism that causes $S_M$ subsampling to fail. The systematic ordering experiment (Section~\ref{sec:ordering}) demonstrates that subsampling from the symmetric group $S_M$ fails monotonically regardless of the permutation ordering strategy, proving that group selection is the essential and unavoidable problem in the framework. Together, these results collapse the framework from two entangled free parameters (group and averaging depth) to a single parameter: the choice of an order-$M$ group. The blind group matching problem (Section~\ref{sec:blind_matching}), formalized by analogy with blind equalization in communications, identifies the spectral concentration criterion $\psi(G, \mathbf{x})$ as a candidate single-snapshot group selection metric. For the broad class of signals whose covariance admits a unitary transformation to circulant form, including periodic signals, chirps, and frequency-modulated waveforms, the constructive conjugation approach (Section~\ref{sec:constructive}) reduces group matching from a combinatorial library search to continuous parameter estimation: the matched group is the cyclic group $\mathbb{Z}_M$ conjugated by a signal-adapted unitary $\mathbf{U}(\boldsymbol{\theta})$, and maximizing $\psi$ over $\boldsymbol{\theta}$ simultaneously selects the group and characterizes the signal. For signals with intrinsically non-Abelian symmetry, the full discrete library search and Conjecture~\ref{conj:blind} remain the operative tools.

These results suggest that the perceived need for multiple observations in statistical signal processing is, in many cases, an artifact of the implicit use of the trivial group $G = \{e\}$, an information extraction operator (the outer product) that accesses none of the algebraic structure available within each observation. The algebraic diversity framework provides a principled generalization. The General Algebraic Averaging Conjecture (Conjecture~\ref{conj:gaat}) proposes that this generalization extends beyond second-order statistics to all $G$-compatible statistics, with the effective dimension $d_{\mathrm{eff}}$ governing the variance reduction. If the conjecture holds, then the law of large numbers itself is revealed to be the estimation theory of rank-0 data structures under the trivial group, and the entire apparatus of sample averaging is the mechanism by which the trivial group's inability to access internal structure is compensated through temporal repetition.

From a practical standpoint, the framework delivers three capabilities that are available today with no additional theoretical development. First, single-snapshot rank-lift: the group-averaged estimator produces a full-rank covariance estimate from one observation using any group, including the cyclic group $\mathbb{Z}_M$ that is already implicit in DFT-based processing. This eliminates the cold-start period that forces adaptive systems to wait for $L \geq 2M$ snapshots before subspace methods become operational. Second, group gain: for group-matched signals the algebraic averaging yields up to $10\log_{10}(M)$~dB of SNR improvement from a single measurement, requiring no tuning beyond the observation dimension. Third, latency reduction: the PASE result establishes that $n = M$ group elements are both necessary and sufficient, so systems that currently accumulate temporal snapshots for covariance estimation can instead act on the first observation. These capabilities apply to any system that relies on second-order statistics, including beamforming, channel estimation, direction finding, active noise cancellation, and spectral analysis, and are realized by replacing the outer product $\mathbf{x}\mathbf{x}^H$ with the group-averaged estimator $\mathbf{F}_G(\mathbf{x})$, a change that requires $O(M^3)$ computation and no modification to the downstream processing pipeline.


\begin{thebibliography}{99}

\bibitem{thornton2005}
M.~A.~Thornton, ``The Karhunen--Lo\`eve transform of discrete MVL functions,'' in \textit{Proc.\ 35th Int.\ Symp.\ Multiple-Valued Logic (ISMVL)}, pp.~194--199, 2005.



\bibitem{karhunen1946}
K.~Karhunen, ``Zur Spektraltheorie stochastischer Prozesse,'' \textit{Ann.\ Acad.\ Sci.\ Fennicae, AI}, vol.~34, 1946.

\bibitem{loeve1955}
M.~Lo\`eve, \textit{Probability Theory}. Princeton, NJ: Van Nostrand, 1955.

\bibitem{hotelling1933}
H.~Hotelling, ``Analysis of a complex of statistical variables into principal components,'' \textit{J.\ Educ.\ Psychol.}, vol.~24, no.~6, pp.~417--441, 1933.

\bibitem{schmidt1986}
R.~Schmidt, ``Multiple emitter location and signal parameter estimation,'' \textit{IEEE Trans.\ Antennas Propag.}, vol.~34, no.~3, pp.~276--280, 1986.

\bibitem{shan1985}
T.~J.~Shan, M.~Wax, and T.~Kailath, ``On spatial smoothing for direction-of-arrival estimation of coherent signals,'' \textit{IEEE Trans.\ Acoust., Speech, Signal Process.}, vol.~33, no.~4, pp.~806--811, 1985.

\bibitem{liao2016}
W.~Liao and A.~Fannjiang, ``MUSIC for single-snapshot spectral estimation: stability and super-resolution,'' \textit{Appl.\ Comput.\ Harmonic Anal.}, vol.~40, no.~1, pp.~33--67, 2016.

\bibitem{gardner1988}
W.~A.~Gardner, ``Simplification of MUSIC and ESPRIT by exploitation of cyclostationarity,'' \textit{Proc.\ IEEE}, vol.~76, no.~7, pp.~845--847, 1988.

\bibitem{klein1872}
F.~Klein, ``Vergleichende Betrachtungen \"uber neuere geometrische Forschungen,'' \textit{Mathematische Annalen}, vol.~43, pp.~63--100, 1872.

\bibitem{clausen1989}
M.~Clausen, ``Fast generalized Fourier transforms,'' \textit{Theoret.\ Comput.\ Sci.}, vol.~67, no.~1, pp.~55--63, 1989.




\bibitem{gallian2021}
J.~Gallian, \textit{Contemporary Abstract Algebra}. Chapman and Hall/CRC, 2021.

\bibitem{bernasconi1999}
A.~Bernasconi and B.~Codenotti, ``Spectral analysis of Boolean functions as a graph eigenvalue problem,'' \textit{IEEE Trans.\ Comput.}, vol.~48, no.~3, pp.~345--351, 1999.

\bibitem{grenander1958}
U.~Grenander and G.~Szeg\H{o}, \textit{Toeplitz Forms and Their Applications}. Berkeley, CA: Univ.\ California Press, 1958.

\bibitem{vershynin2012}
R.~Vershynin, ``How close is the sample covariance matrix to the actual one?'' \textit{Adv.\ Math.}, vol.~231, no.~6, pp.~3038--3068, 2012.

\bibitem{puschel2008foundation}
M.~P\"uschel and J.~M.~F.~Moura, ``Algebraic signal processing theory: Foundation and 1-D time,'' \textit{IEEE Trans.\ Signal Process.}, vol.~56, no.~8, pp.~3572--3585, Aug.\ 2008.

\bibitem{puschel2008space}
M.~P\"uschel and J.~M.~F.~Moura, ``Algebraic signal processing theory: 1-D space,'' \textit{IEEE Trans.\ Signal Process.}, vol.~56, no.~8, pp.~3586--3599, Aug.\ 2008.

\bibitem{puschel2008algorithms}
M.~P\"uschel and J.~M.~F.~Moura, ``Algebraic signal processing theory: Cooley--Tukey type algorithms for DCTs and DSTs,'' \textit{IEEE Trans.\ Signal Process.}, vol.~56, no.~4, pp.~1502--1521, Apr.\ 2008.

\bibitem{pal2010nested}
P.~Pal and P.~P.~Vaidyanathan, ``Nested arrays: A novel approach to array processing with enhanced degrees of freedom,'' \textit{IEEE Trans.\ Signal Process.}, vol.~58, no.~8, pp.~4167--4181, Aug.\ 2010.

\bibitem{vaidyanathan2011coprime}
P.~P.~Vaidyanathan and P.~Pal, ``Sparse sensing with co-prime samplers and arrays,'' \textit{IEEE Trans.\ Signal Process.}, vol.~59, no.~2, pp.~573--586, Feb.\ 2011.

\bibitem{romero2016compressive}
D.~Romero, D.~D.~Ariananda, Z.~Tian, and G.~Leus, ``Compressive covariance sensing: Structure-based compressive sensing beyond sparsity,'' \textit{IEEE Signal Process.\ Mag.}, vol.~33, no.~1, pp.~78--93, Jan.\ 2016.

\bibitem{wijsman1967}
R.~A.~Wijsman, ``Cross-sections of orbits and their application to densities of maximal invariants,'' in \textit{Proc.\ 5th Berkeley Symp.\ Math.\ Statist.\ Probab.}, vol.~1, pp.~389--400, 1967.

\bibitem{eaton1989}
M.~L.~Eaton, \textit{Group Invariance Applications in Statistics}. Hayward, CA: Inst.\ Math.\ Statist., 1989.

\bibitem{nitzberg1980}
R.~Nitzberg, ``Application of maximum likelihood estimation of persymmetric covariance matrices to adaptive processing,'' \textit{IEEE Trans.\ Aerosp.\ Electron.\ Syst.}, vol.~AES-16, no.~1, pp.~124--127, 1980.


\bibitem{johnson1963}
S.~M.~Johnson, ``Generation of permutations by adjacent transposition,'' \textit{Math.\ Comput.}, vol.~17, no.~83, pp.~282--285, 1963.

\bibitem{lehmer1960}
D.~H.~Lehmer, ``Teaching combinatorial tricks to a computer,'' in \textit{Proc.\ Symp.\ Appl.\ Math.}, vol.~10, pp.~179--193, 1960.

\bibitem{heap1963}
B.~R.~Heap, ``Permutations by interchanges,'' \textit{Computer J.}, vol.~6, no.~3, pp.~293--298, 1963.

\bibitem{godard1980}
D.~N.~Godard, ``Self-recovering equalization and carrier tracking in two-dimensional data communication systems,'' \textit{IEEE Trans.\ Commun.}, vol.~28, no.~11, pp.~1867--1875, 1980.

\bibitem{treichler1983}
J.~R.~Treichler and B.~G.~Agee, ``A new approach to multipath correction of constant modulus signals,'' \textit{IEEE Trans.\ Acoust., Speech, Signal Process.}, vol.~31, no.~2, pp.~459--472, 1983.

\bibitem{shalvi1990}
O.~Shalvi and E.~Weinstein, ``New criteria for blind deconvolution of nonminimum phase systems (channels),'' \textit{IEEE Trans.\ Inform.\ Theory}, vol.~36, no.~2, pp.~312--321, 1990.

\bibitem{3gpp38901}
3GPP, ``Study on channel model for frequencies from 0.5 to 100~GHz,'' 3GPP TR~38.901, v16.1.0, Dec.\ 2019.

\bibitem{bluestein1970}
L.~I.~Bluestein, ``A linear filtering approach to the computation of discrete Fourier transform,'' \textit{IEEE Trans.\ Audio Electroacoust.}, vol.~18, no.~4, pp.~451--455, Dec.\ 1970.

\bibitem{ozaktas1996}
H.~M.~Ozaktas, O.~Arikan, M.~A.~Kutay, and G.~Bozdagi, ``Digital computation of the fractional Fourier transform,'' \textit{IEEE Trans.\ Signal Process.}, vol.~44, no.~9, pp.~2141--2150, Sep.\ 1996.

\bibitem{amari1983}
S.~Amari, ``A foundation of information geometry,'' \textit{Electron.\ Commun.\ Japan (Part~I: Commun.)}, vol.~66, no.~6, pp.~1--10, 1983.

\bibitem{shannon1948}
C.~E.~Shannon, ``A mathematical theory of communication,'' \textit{Bell Syst.\ Tech.\ J.}, vol.~27, no.~3, pp.~379--423, July 1948.

\bibitem{critchley2016}
F.~Critchley and P.~Marriott, ``Information geometry and its applications: An overview,'' in \textit{Computational Information Geometry: For Image and Signal Processing}.\ Cham: Springer, 2016, pp.~1--31.

\bibitem{lee2007}
J.~A.~Lee and M.~Verleysen, ``Distance preservation,'' in \textit{Nonlinear Dimensionality Reduction} (Information Science and Statistics).\ New York, NY: Springer, 2007, ch.~4.

\bibitem{su2024rope}
J.~Su, M.~H.~Ahmed, Y.~Lu, S.~Pan, W.~Bo, and Y.~Liu, ``{RoFormer}: Enhanced transformer with rotary position embedding,'' \textit{Neurocomputing}, vol.~568, p.~127063, 2024.

\bibitem{vaswani2017attention}
A.~Vaswani, N.~Shazeer, N.~Parmar, J.~Uszkoreit, L.~Jones, A.~N.~Gomez, {\L}.~Kaiser, and I.~Polosukhin, ``Attention is all you need,'' in \textit{Adv.\ Neural Inform.\ Process.\ Syst.\ (NeurIPS)}, vol.~30, 2017.

\bibitem{elhage2021mathematical}
N.~Elhage, N.~Nanda, C.~Olsson, T.~Henighan, N.~Joseph, B.~Mann, A.~Askell, Y.~Bai, A.~Chen, T.~Conerly, N.~DasSarma, D.~Drain, D.~Ganguli, Z.~Hatfield-Dodds, D.~Hernandez, A.~Jones, J.~Kernion, L.~Lovitt, K.~Ndousse, D.~Amodei, T.~Brown, J.~Clark, J.~Kaplan, S.~McCandlish, and C.~Olah, ``A mathematical framework for transformer circuits,'' \textit{Transformer Circuits Thread}, Anthropic, 2021.

\bibitem{michel2019heads}
P.~Michel, O.~Levy, and G.~Neubig, ``Are sixteen heads really better than one?'' in \textit{Adv.\ Neural Inform.\ Process.\ Syst.\ (NeurIPS)}, vol.~32, 2019.

\bibitem{liu2024kivi}
Z.~Liu, A.~Desai, F.~Liao, W.~Wang, V.~Xie, Z.~Xu, A.~Kyrillidis, and A.~Shrivastava, ``{KIVI}: A tuning-free asymmetric 2-bit quantization for {KV} cache,'' in \textit{Proc.\ Int.\ Conf.\ Machine Learning (ICML)}, 2024.

\bibitem{thornton2026framework_arxiv}
M.~A.~Thornton, ``Algebraic Diversity: Principles of a Group-Theoretic Approach to Signal Processing,'' arXiv:2604.19983 [eess.SP], April 2026.

\end{thebibliography}
\end{document}